%% file: main.tex
\documentclass[11pt,letterpaper]{article}
\usepackage[margin=1.25in]{geometry}
\usepackage{amsmath,amssymb,amsthm,mathtools,bm}
\usepackage{microtype,graphicx,booktabs,array,tabularx,float,placeins}

\usepackage{tikz,needspace}
\usetikzlibrary{arrows.meta,calc,positioning,shapes.geometric,decorations.pathreplacing}
\definecolor{figbase}{gray}{0.12}
\definecolor{figred}{RGB}{180,0,0}
\definecolor{figgray}{gray}{0.48}
\tikzset{
  >={Latex[length=1.8mm,width=1.2mm]},
  every node/.style={font=\small,inner sep=2pt},
  edge/.style={draw=figgray,line width=.65pt},
  signal/.style={draw=figbase,line width=.9pt,->},
  dot/.style={circle,fill=figbase,draw=figbase,inner sep=1.5pt},
  open/.style={circle,draw=figgray,fill=white,line width=.7pt,inner sep=1.6pt},
  stage/.style={circle,draw=figbase,fill=white,line width=.8pt,minimum size=5.6mm,inner sep=1pt}
}

\usepackage{hyperref}
\usepackage[backend=biber,style=alphabetic,natbib=true,sorting=nyt,
            maxbibnames=99,maxcitenames=3,maxalphanames=4,
            doi=false,isbn=false]{biblatex}
\usepackage[capitalize,noabbrev]{cleveref}

\newtheorem{theorem}{Theorem}[section]
\newtheorem{lemma}[theorem]{Lemma}
\newtheorem{proposition}[theorem]{Proposition}
\newtheorem{corollary}[theorem]{Corollary}
\theoremstyle{definition}

\theoremstyle{remark}
\newtheorem{remark}[theorem]{Remark}

\crefname{theorem}{theorem}{theorems}
\Crefname{theorem}{Theorem}{Theorems}
\crefname{lemma}{lemma}{lemmas}
\Crefname{lemma}{Lemma}{Lemmas}
\crefname{corollary}{corollary}{corollaries}
\Crefname{corollary}{Corollary}{Corollaries}
\crefname{appendix}{appendix}{appendices}
\Crefname{appendix}{Appendix}{Appendices}
\hypersetup{hidelinks,
  pdftitle={On Global Joint Lower Bounds in Zeroth-Order Nonconvex Optimization},
  pdfauthor={Yuxuan Xia and Xiao-Tong Yuan}}
\newcommand{\resultlink}[1]{\hyperref[#1]{\Cref*{#1}}}
\newcommand{\R}{\mathbb R}
\newcommand{\E}{\mathbb E}
\newcommand{\PP}{\mathbb P}
\newcommand{\sph}{\mathbb S}

\newcommand{\norm}[1]{\lVert #1\rVert}
\newcommand{\Lip}{\operatorname{Lip}}
\newcommand{\conv}{\operatorname{conv}}

\newcommand{\cH}{\mathcal H}

\newcommand{\ind}{\mathbf 1}
\allowdisplaybreaks[1]

\title{On Global Joint Lower Bounds in Zeroth-Order Nonconvex Optimization}
\author{Yuxuan Xia\textsuperscript{1,2}\qquad Xiao-Tong Yuan\textsuperscript{1,2}\\[.55em]
  {\small \textsuperscript{1}School of Intelligence Science and Technology, Nanjing University}\\
  {\small Suzhou 215163, China}\\
  {\small \textsuperscript{2}State Key Laboratory for Novel Software Technology, Nanjing University}\\
  {\small Nanjing 210023, China}\\[.25em]
  {\small \texttt{xiayxcn@gmail.com} \qquad \texttt{xtyuan@nju.edu.cn}}}
\date{}

\begin{document}
\maketitle

\begin{abstract}
Must the dimension and accuracy costs of
zeroth-order optimization multiply?
Global upper bounds for smooth stochastic nonconvex
optimization scale as $O(d\epsilon^{-4})$,
with both factors separately necessary.
We show that their product is unavoidable. We establish an $\Omega(d\epsilon^{-4})$ lower bound for smooth stochastic nonconvex optimization, yielding an $\Omega(d\delta^{-1}\epsilon^{-3})$ bound for nonsmooth Goldstein stationarity. Both match the best-known full-space upper bounds, establishing joint minimax optimality in dimension and accuracy. These bounds hold against fully adaptive randomized algorithms with unrestricted queries and outputs, even under arbitrary sample reuse. As a byproduct, we establish noiseless exact-value lower bounds for both smooth and Goldstein stationarity, even against randomized algorithms.
\end{abstract}

\input{sections/main_body}
\label{end:main}

\section{Open Problems}

Recent advances have brought upper and lower bounds for
zeroth-order nonconvex optimization into agreement in several
important regimes. Nevertheless, a complete minimax
characterization remains open, particularly concerning the
initial objective gap, noiseless exact-value oracles, and
the role of nonsmooth geometry.

\paragraph{Optimal dependence on the initial gap.}
What is the optimal dependence on the initial global
objective gap $\Delta$? Our basic lower bound exhibits
a capped dependence, while extensions based on recursive
constructions yield logarithmic growth beyond this range
and enlarge the linear-gap regime under additional parameter
restrictions. Neither closes the gap to general-purpose
upper bounds with linear dependence on $\Delta$.
Intriguingly, noiseless value-oracle results exhibit
logarithmic gap dependence in one dimension and square-root
dependence in two dimensions
\citep{chewi2023complexity,hollender2025computational}.
These contrasting behaviors suggest a possible dimensional
transition. Does linear dependence on $\Delta$ become
minimax-optimal above some dimension threshold?

\paragraph{Complexity of noiseless exact-value nonsmooth optimization.}
The minimax complexity of noiseless zeroth-order nonsmooth nonconvex optimization remains incompletely understood, even when algorithms are randomized and may query anywhere in $\mathbb{R}^d$. Our exact-value lower bound for Goldstein stationarity does not match known upper bounds throughout the parameter range. Is this gap due to the limitations of existing algorithms, or can stronger lower bounds be established against unrestricted randomized scalar queries?

\paragraph{Hardness beyond smooth reductions.}
A separate challenge concerns the intrinsic difficulty
of nonsmooth Goldstein stationarity. Smooth constructions
provide joint zeroth-order lower bounds through a
smooth-to-Goldstein reduction, but the resulting truncation
prevents matching the $\mathcal{O}(d\delta^{-1}\epsilon^{-3})$
upper bound throughout the parameter range.
A similar restriction appears in the smooth-based
first-order lower bound of \citet{cutkosky2023}.
Can genuinely nonsmooth constructions overcome this
restriction while preserving the multiplicative
dependence on dimension and accuracy?

\clearpage
\appendix
\section*{Proof guide}
Each construction or calculation has one primary proof below.
The following references identify the result needed at each step.
\begin{center}
\begin{tabularx}{\linewidth}{@{}lX@{}}
\toprule
Question & Result and proof \\
\midrule
Global stationary localization &
\Cref{lem:base-chain,lem:global-geometry} \\
Fixed Gaussian samples and features &
\Cref{lem:field-regularity,lem:tensor-shift} \\
Adaptive reuse and posterior information &
\Cref{lem:canonical-adaptive-kernel,lem:posterior-information} \\
Global sample Lipschitz budget & \Cref{thm:noise-moment} \\
Sequential cost and a static finite oracle &
\Cref{lem:prefix-simulator,lem:stage-charging,lem:static-coupling} \\
Physical parameters and expected cost &
\Cref{lem:uniform-parameters,lem:expected-cost} \\
Goldstein and exact-value reductions &
\Cref{lem:goldstein-comparison,lem:noiseless-list} \\
Amplification beyond the basic gap window &
\Cref{lem:wide-core,prop:two-level-assembly,prop:fixed-depth-assembly,prop:exact-nesting,prop:nested-entry} \\
\bottomrule
\end{tabularx}
\end{center}

\input{sections/external_tools}
\input{sections/geometry_proofs}
\input{sections/gaussian_proofs}
\input{sections/noise_proofs}
\input{sections/assembly_proofs}
\input{sections/scaling_and_noiseless_proofs}
\input{sections/gap_amplification_proofs}
\input{sections/fixed_depth_proofs}
\input{sections/logarithmic_gap_proofs}
\label{end:appendices}
\clearpage

\printbibliography
\end{document}

%% file: sections/main_body.tex
\section{Introduction}\label{sec:introduction}\label{sec:related-work}

Finding a global minimum of a nonconvex function is generally
intractable. A more attainable goal is stationarity, finding a
point at which first-order information indicates little room
for improvement. For a smooth objective $f:\R^d\to\R$, this
means finding $x$ such that $\|\nabla f(x)\|\le\epsilon$.
For nonsmooth objectives, where gradients may not exist,
Goldstein $(\delta,\epsilon)$-stationarity extends this
criterion by allowing convex combinations of subgradients
within a $\delta$-neighborhood of $x$
\citep{goldstein1977,zhang2020,tian2022,davis2022}.
Both criteria, however, rely on first-order information
that may be unavailable to the algorithm.

Zeroth-order optimization studies precisely this limitation,
the algorithm observes only scalar function values, yet must
find a point satisfying a first-order stationarity criterion.
Unlike a gradient, a single function value reveals no explicit
direction of improvement. Useful directional information must
instead be inferred from multiple evaluations. In stochastic
optimization, the oracle returns values $F(x;\xi)$ of random
sample functions, whose expectation defines the objective
$f(x)=\E_\xi F(x;\xi)$. The algorithm may also evaluate the
same sample function at different points, combining the
resulting information.

This information constraint introduces two distinct costs.
Recovering useful directions becomes harder as the dimension
$d$ grows, while reaching stationarity becomes harder as
the target accuracy $\epsilon$ decreases. Existing zeroth-order
upper bounds typically multiply these costs, but whether
this multiplication is unavoidable remains open. Separate
lower bounds on dimension and accuracy do not suffice,
since an algorithm might learn directional information once
and reuse it throughout optimization. We investigate whether
the product is necessary even for fully adaptive algorithms
with unrestricted query locations, sample reuse, and outputs.

Formally, let $\mathfrak I(d,H,L_0,\Delta)$ denote the class
of stochastic oracle instances whose population objectives
are globally $H$-smooth, whose initial objective gaps
$f(x_0)-\inf_{x\in\R^d}f(x)$ are at most $\Delta$, and whose
sample functions have Lipschitz constants $L(\xi)$ satisfying
$\E_\xi[L(\xi)^2]\le L_0^2$. We study the minimax
scalar-query complexity
\[
\mathcal C_{\mathrm{ZO}}(\epsilon)
:=\inf\left\{T:\exists A\in\mathcal A_T,
\sup_{I\in\mathfrak I}
\mathbb P_{A,I}\bigl(\|\nabla f_I(\widehat x_A)\|>\epsilon\bigr)
\le\tfrac13\right\},
\]
where $\mathcal A_T$ consists of randomized algorithms making
at most $T$ adaptive exact function-value queries, with
unrestricted query locations, sample reuse, and outputs.
The results in \Cref{sec:model} bound worst-case expected cost.
They also bound this fixed-budget complexity, since a budget of
$T$ queries implies expected cost at most $T$.

\paragraph{Development of zeroth-order upper bounds.}
The known upper bounds make the joint optimality question concrete. For smooth stochastic optimization,
\citet{ghadimi2013} established the $O(d\epsilon^{-4})$
benchmark, suppressing the remaining problem parameters.
For Goldstein stationarity, \citet{lin2022} related
randomized smoothing to the stationarity criterion and obtained
gradient-free guarantees. \citet{chen2023} improved the accuracy
dependence through recursive gradient estimation, and
\citet{kornowski2024} reduced the dimension dependence from
$d^{3/2}$ to $d$.
 With $L_0=\Delta=1$ and
$0<\delta,\epsilon\le1$, the corresponding bounds are
\[
 \underbrace{O\!\left(\frac{d^{3/2}}{\delta\epsilon^4}\right)}_
 {\text{Lin et al. (2022)}}
 \quad\longrightarrow\quad
 \underbrace{O\!\left(\frac{d^{3/2}}{\delta\epsilon^3}\right)}_
 {\text{Chen et al. (2023)}}
 \quad\longrightarrow\quad
 \underbrace{O\!\left(\frac{d}{\delta\epsilon^3}\right)}_
 {\text{Kornowski--Shamir (2024)}}.
\]
Restoring the Lipschitz and gap parameters, the last rate is
$O(dL_0^2\Delta/(\delta\epsilon^3))$ in the nontrivial accuracy
regime. 
More fundamentally, the stochastic function-value oracle
allows algorithms to extract information from repeated,
adaptive evaluations of the same sample function, beyond
what independent noisy observations can reveal.

\paragraph{From separate lower bounds to a joint bound.}
\label{subsec:comparison}
Existing lower bounds establish dimension and accuracy costs
in different optimization models. For stochastic convex
zeroth-order optimization, \citet{duchi2015} prove an
$\Omega(d\epsilon^{-2})$ query lower bound for
$\epsilon$-suboptimality under normalized Lipschitz and
domain-radius parameters. For smooth nonconvex stationarity,
\citet{carmon2019} establish an $\Omega(\epsilon^{-2})$
lower bound even with access to higher-order derivatives,
while \citet{arjevani2023} prove an $\Omega(\epsilon^{-4})$
lower bound for stochastic first-order oracles. For nonsmooth Goldstein stationarity, \citet{cutkosky2023}
establish an $\Omega(\Delta L_0^2/(\delta\epsilon^3))$
stochastic first-order lower bound in their parameter regime.
More recently, \citet{kornowski2026stationarity} showed
that the same rate remains necessary even with exact
first-order information, for randomized zero-respecting
algorithms.

These results do not imply a joint zeroth-order lower bound.
The convex result concerns objective suboptimality rather
than stationarity, while the remaining results address
first-order oracles. Even within a common model, separate
bounds of $\Omega(d)$ and $\Omega(\epsilon^{-4})$ imply only
$\Omega(\max\{d,\epsilon^{-4}\})$, not
$\Omega(d\epsilon^{-4})$. An algorithm might recover
directional information once and reuse it throughout
optimization. \citet{kornowski2024} explicitly identify
this gap and propose resolving it through a smooth joint
$\Omega(d\epsilon^{-4})$ zeroth-order lower bound.

Turning to joint zeroth-order lower bounds,
\citet{zhang2026} establish
$\Omega(dL_0^2\Delta/(\delta\epsilon^3))$ scalar
evaluations on Euclidean balls, allowing randomized
adaptive algorithms and repeated use of sampled functions.
Their instances contain a stationary point in the ball,
and the dimension requirement is only logarithmic in
the normalized query bound.

However, the known zeroth-order upper bounds apply globally
over $\R^d$, with the initial objective gap measured against
the global infimum. In contrast, \citet{zhang2026} restrict
both queries and outputs to a Euclidean ball and measure
the gap only within that ball. This distinction is
structural rather than merely spatial. Removing the
restriction simultaneously strengthens the algorithm
and imposes stricter requirements on the hard instance.
Queries outside the hard region may reveal hidden
information prematurely, while unrestricted outputs
may exploit unintended stationary points elsewhere.
Meanwhile, the construction must preserve global
smoothness, sample Lipschitz bounds, and a finite global
objective gap. These requirements are intertwined, as
modifying the function outside the ball may compromise
the information-hiding mechanism or introduce new
stationary points. Indeed, the linear carrier of
\citet{zhang2026} is unbounded below on $\R^d$,
precluding a direct extension of their hard instance.
Thus, matching rates on bounded domains do not settle
joint minimax optimality in the unrestricted global
setting. Whether the same multiplicative hardness
persists under these global requirements remains open.
\Cref{tab:prior-rates} compares the zeroth-order
bounds in their respective models.

\paragraph{Global joint lower bounds.}
We address this unrestricted problem for a finite family of static
sample functions. The algorithm may query anywhere in $\R^d$,
adaptively choose and reuse a sample label, and return any point.
For an $H$-smooth population objective, our basic lower bound is
\begin{equation}\label{eq:intro-main}
 \Omega\!\left(\frac{dL_0^2}{\epsilon^4}
                      \min\{\Delta H,L_0^2\}\right),
\end{equation}
under the dimension and accuracy conditions in \Cref{thm:main}.
The choice $H=\epsilon/\delta$ transfers this result to Goldstein
stationarity, giving the joint
$\Omega(dL_0^2\Delta/(\delta\epsilon^3))$ rate when
$\Delta\epsilon/\delta\le L_0^2$. In this range the dimension and
accuracy dependence of the known upper bound is necessary even
with unrestricted queries and outputs.

We further investigate two extensions aimed at improving the dependence on the initial objective gap. Recursive outer chains
enlarge the linear-gap range at each fixed depth; exact nesting
gives logarithmic growth for arbitrarily large gaps, subject to
its dimension condition. Their ranges and constants are compared
in \Cref{tab:gap-rates}. We also obtain a separate exact-value
lower bound against randomized algorithms by removing the noise
and analyzing the resulting scalar measurements.


\paragraph{Organization.}
\Cref{sec:model} states the model and the basic theorem.
\Cref{sec:proof-overview} explains why the stage and direction costs
multiply. \Cref{subsec:larger-gaps} explains the gap extensions, and
\Cref{sec:consequences} gives the Goldstein and exact-value results.
The appendices follow these dependencies, beginning with formal
statements of the external probabilistic tools
(see \Cref{thm:tool-gaussian,thm:tool-lsi,thm:tool-sphere}).

\section{Oracle Model and Main Results}
\label{sec:model}\label{sec:main-theorem}

Let $\Xi$ be finite and
$f=|\Xi|^{-1}\sum_{\xi\in\Xi}f_\xi$ on $\R^d$.
With initial point $0$, assume
\begin{equation}\label{eq:model-assumptions}
 \inf f>-\infty,\quad f(0)-\inf f\le\Delta,\quad
 \Lip(\nabla f)\le H,\quad
 \frac1{|\Xi|}\sum_{\xi\in\Xi}\Lip(f_\xi)^2\le L_0^2.
\end{equation}
Here $\Lip$ denotes the global Lipschitz constant.
An algorithm may randomize, adaptively query any
$(\xi,x)\in\Xi\times\R^d$, reuse labels, and output any point.
Each exact scalar evaluation costs one query.
Throughout, lower bounds concern worst-case expected query
cost for success probability at least $2/3$ on every
admissible instance. The finite family may depend on the
algorithm; the model also covers fresh-sample access.

\paragraph{Notation.}\label{sec:notation}\label{subsec:oracle}
Norms are Euclidean, matrix norms are operator norms,
logarithms are natural, and $c,C$ denote universal
positive constants unless specified otherwise. Set
\begin{equation}\label{eq:def-B}
 B=\min\{\Delta H,L_0^2\}.
\end{equation}

\begin{theorem}[Basic global stochastic lower bound]
\label{thm:main}
For universal $c,C,c_0>0$, suppose
\begin{equation}\label{eq:main-regime}
 \epsilon^2\le c_0B,\qquad
 d\ge C\frac{B}{\epsilon^2}
       \log\left(2+\frac{BL_0^2}{\epsilon^4}\right).
\end{equation}
Finding $\|\nabla f(\widehat x)\|\le\epsilon$
under \eqref{eq:model-assumptions} requires
\begin{equation}\label{eq:main-rate}
 N\ge c\frac{dL_0^2B}{\epsilon^4}.
\end{equation}
\end{theorem}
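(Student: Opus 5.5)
The proof combines the Arjevani--Carmon--Duchi type probability-$p$ zero-chain, which forces $\Omega(\epsilon^{-4})$ stage cost, with hidden random directions in $\R^d$, each of which must be rediscovered from scalar values at cost $\Omega(d)$ per stage. The aim is to show that these two costs multiply.

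\textbf{Step 1: global chain and rescaling.} Fix a base chain $\bar F_T$ on $\R^T$ with $T\asymp B/\epsilon^2$ coordinates. It should be globally bounded below, with bounded value gap, bounded gradient Lipschitz constant, and bounded Lipschitz constant. Its gradient norm should exceed a constant at every point where the last coordinate is not yet ``activated'' (\Cref{lem:base-chain}). Then embed it as
\[
 f(x)=\lambda\,\bar F_T\bigl(U^\top x/\sigma\bigr)+\text{(radial confinement)},
\]
where $U\in\R^{d\times T}$ has uniformly random orthonormal columns. The confinement term keeps the function bounded below on all of $\R^d$ and creates no new $\epsilon$-stationary points outside the hard region (\Cref{lem:global-geometry}). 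Choosing $\lambda/\sigma\asymp\epsilon$ and $\lambda/\sigma^2\asymp H$ gives gap $\lambda T\lesssim\Delta$, which is where $B=\min\{\Delta H,L_0^2\}$ enters. The condition $\epsilon^2\le c_0B$ guarantees $T\ge1$.

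\textbf{Step 2: noise and scalar hiding.} Each sample function $f_\xi$ reveals the next chain coordinate only with probability $p\asymp\epsilon^2/L_0^2$, while matching $f$ in expectation. The variance budget $\frac1{|\Xi|}\sum\Lip(f_\xi)^2\le L_0^2$ must hold as a global Lipschitz bound on the samples, not just as a gradient variance bound (\Cref{thm:noise-moment}). To make scalar values uninformative about $u_{t+1}$ until it is found, the next-direction dependence is placed in Gaussian-smoothed features, controlled by \Cref{lem:field-regularity,lem:tensor-shift}. A query at $x$ then carries information about $u_{t+1}$ only through $\langle u_{t+1},x\rangle$. For a uniformly random direction on the sphere orthogonal to the already-found ones, each query yields only $O(1/d)$ nats of information, up to the $\log(2+BL_0^2/\epsilon^4)$ factor arising from concentration and union bounds over $N$ queries (\Cref{thm:tool-sphere}). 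Adaptive label reuse is handled by a canonical adaptive kernel and a posterior-information bound (\Cref{lem:canonical-adaptive-kernel,lem:posterior-information}), which bound the mutual information per query even when the same $\xi$ is reused.

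\textbf{Step 3: stage charging.} Advancing one stage requires two things. First, a query must hit a label that reveals the coordinate, which takes about $1/p$ queries in expectation. Second, the direction must be localized, which takes about $d$ nats of information. The key point is that these costs multiply rather than add. Only queries on revealing labels carry information about $u_{t+1}$, and each such query carries $O(1/d)$ nats. Since a $p$-fraction of queries are revealing, one needs about $d/p$ queries per stage. Formalizing this uses a prefix simulator and stage charging (\Cref{lem:prefix-simulator,lem:stage-charging}). Coupling the adaptive process to a static finite oracle (\Cref{lem:static-coupling}) then gives expected cost at least $cTd/p\asymp dL_0^2B/\epsilon^4$. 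The dimension condition $d\gtrsim T\log(\cdot)$ leaves room for $T$ orthogonal hidden directions while keeping spherical concentration valid. The parameter bookkeeping is done by \Cref{lem:uniform-parameters,lem:expected-cost}, and Markov's inequality converts the expected cost into the $2/3$-success statement.

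The main obstacle is Step 3's multiplicativity. One must rule out the possibility that the algorithm learns directional information cheaply on non-revealing labels, or reuses a revealing label at many points to extract $d$ nats from a single revealing sample. The proof must show that one revealing label queried $k$ times yields at most $O(k/d)$ information, and that non-revealing labels yield none. Reuse is handled by the posterior-information lemma, and the noise construction of \Cref{thm:noise-moment} is what makes non-revealing labels independent of $u_{t+1}$.
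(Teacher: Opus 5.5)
\textbf{Gap 1: Bernoulli revealing does not survive label reuse.} Your Step~3 multiplicativity fails in this oracle model. You take the factor $L_0^2/\epsilon^2$ from Arjevani-style Bernoulli revealing: a fraction $p\asymp\epsilon^2/L_0^2$ of labels carry the successor term scaled by $1/p$. Here, however, every label is a fixed function that the algorithm chooses and may reuse.
\begin{itemize}
\item At stage $k$ the algorithm can test labels at a point with $w_k\ge1/2$. There a revealing label differs from the known prefix by $-p^{-1}\Psi(w_k)\Phi(w_{k+1})\ne0$.
\item After about $1/p$ tests it holds one revealing label $\xi^*$, and it can query only $\xi^*$ from then on.
\item Since $\Phi$ is strictly increasing, each value of $f_{\xi^*}$ yields, up to known prefix terms, the exact measurement $\langle u_{k+1},y\rangle$. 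So $O(d)$ further queries recover $u_{k+1}$.
\end{itemize}
A stage therefore costs about $1/p+d$, and the total is $T(d+L_0^2/\epsilon^2)$, not the product. The claim that ``a $p$-fraction of queries are revealing'' is false because the algorithm picks labels. Your own target, $O(k/d)$ progress from $k$ queries on one revealing label, is exactly compatible with this additive strategy.

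Citing \Cref{lem:canonical-adaptive-kernel,lem:posterior-information} does not repair this. Those results concern the specific channel $\mu_v+G_j(\phi)$, whose Gaussian field uses the same tensor-power features $\phi$ as the hidden mean (\Cref{lem:tensor-shift}; this is not a smoothing). Any same-label combination that cancels the noise therefore cancels the signal, and every query, reused or not, carries Fisher information at most $20m/\sigma^2$. The paper has no Bernoulli gating at all. The factor $L_0^2/\epsilon^2$ is the noise variance $\sigma^2$, which is affordable because \Cref{thm:noise-moment} gives $\E\Lip(N_j)^2=O(\sigma^2)$ uniformly in $T$. \Cref{thm:reuse} then charges $\Omega(m\sigma^2)$ queries per stage. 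Your count is also internally inconsistent: needing $d$ nats at $O(1/d)$ nats per revealing query would give $d^2/p$, not $d/p$.

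\textbf{Gap 2: a single chain frame cannot demand $\Omega(d)$ per stage.} This gap in Step~1 is independent of the first. Against unrestricted queries the chain must be evaluated at a bounded image $y$ of the query, as in \eqref{eq:soft}. Radial confinement alone does not stop a large-norm random query from activating future coordinates. Stationarity needs $w_i>1$ for all $T$ coordinates, so the radius must satisfy $R>\sqrt T$. Activating stage $k+1$ then needs only correlation about $1/R\asymp1/\sqrt T$ between the query direction and $u_{k+1}$. That costs on the order of $s/R^2\asymp d/T$ nats, or $d/T$ exact linear measurements, not $d$. Summed over all stages this is only $\Theta(d)$, so even with a correct noise channel a chain-only instance yields $\Omega(dL_0^2/\epsilon^2)$ and loses the factor $B/\epsilon^2$.

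The paper obtains the per-stage $\Omega(d)$ cost from a second Haar frame $V$, used in two ways:
\begin{itemize}
\item A workspace well $h(x-a(w))$ whose center $a=A\sum_ig_i(w)v_i$ shifts by $Av_k$ when stage $k$ activates.
\item A past-centered gate correction that cancels $b_{k+1}$ unless $\|x-a_{<k}-Av_k\|<\sqrt2\beta$.
\end{itemize}
Passing each gate then requires a $15/16$-angle guess of a fresh $v_k$, and \eqref{eq:overview-certificate} forces even an unqueried output through the terminal gate. These two ingredients, feature-matched reusable Gaussian noise and constant-angle workspace gates, are what the prefix simulator and stage charging actually charge. Your construction contains neither.
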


The rate is linear in $\Delta H$ when
$\Delta H\le L_0^2$; the cap reflects the gradient
budget of the basic construction and is extended later.
\Cref{sec:sequential} proves the bound. The parameter selection
satisfies all dimension and budget conditions in \eqref{eq:main-regime}
(see \Cref{lem:uniform-parameters}, Eq.~\eqref{eq:app-uniform-prefactor}).

\paragraph{Goldstein stationarity.}
For locally Lipschitz $f$, let $\partial_C f$ denote
the Clarke subdifferential and define
\begin{equation}\label{eq:goldstein-def}
 \partial_\delta^G f(x)
 =\conv\!\bigcup_{\|y-x\|\le\delta}\partial_C f(y),
 \qquad
 g_\delta(x)=\inf_{v\in\partial_\delta^Gf(x)}\|v\|.
\end{equation}
A Goldstein $(\delta,\epsilon)$-stationary point
satisfies $g_\delta(x)\le\epsilon$
\citep{goldstein1977,clarke1990}.

\begin{theorem}[Stochastic Goldstein lower bound]
\label{thm:goldstein}
Let $B_G=\min\{\Delta\epsilon/\delta,L_0^2\}$.
Under \eqref{eq:main-regime} with $B$ replaced by
$B_G$ (for suitable universal constants), finding a
Goldstein $(\delta,\epsilon)$-stationary point requires
\begin{equation}\label{eq:goldstein-rate}
 N=\Omega\left(
 dL_0^2\min\left\{
 \frac{\Delta}{\delta\epsilon^3},
 \frac{L_0^2}{\epsilon^4}\right\}\right).
\end{equation}
This holds for globally Lipschitz, possibly nonsmooth
population objectives with global gap at most $\Delta$
and sample Lipschitz moment at most $L_0^2$.
\end{theorem}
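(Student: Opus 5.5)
The plan is to derive \Cref{thm:goldstein} from \Cref{thm:main} by a smooth-to-Goldstein reduction, taking $H=\epsilon/\delta$ up to a constant. Under this choice $B=\min\{\Delta H,L_0^2\}$ becomes $B_G$ up to constants. The hard instances of \Cref{thm:main} are then themselves admissible Goldstein instances: they are finite averages of Lipschitz sample functions, with sample moment $L_0^2$ and global gap at most $\Delta$. Their population objective is $H$-smooth and globally Lipschitz, with Lipschitz constant bounded by $(|\Xi|^{-1}\sum\Lip(f_\xi)^2)^{1/2}\le L_0$ via Jensen. Hence any algorithm finding a Goldstein $(\delta,\epsilon)$-stationary point on the larger class also runs on these instances. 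It remains to show that its output is an approximately $\epsilon$-stationary point in the smooth sense, so that \Cref{thm:main} applies with a constant rescaling of $\epsilon$.

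The key comparison (the content of \Cref{lem:goldstein-comparison}) is elementary. For $f$ with $\Lip(\nabla f)\le H$, the Clarke subdifferential is $\{\nabla f(y)\}$, and $\|\nabla f(y)-\nabla f(x)\|\le H\delta$ for $\|y-x\|\le\delta$. So every element $v$ of $\partial_\delta^G f(x)$, being a convex combination of such gradients, satisfies $\|v-\nabla f(x)\|\le H\delta$. Therefore $\|\nabla f(x)\|\le g_\delta(x)+H\delta$. Choosing $H=\epsilon/\delta$ gives $\|\nabla f(\widehat x)\|\le2\epsilon$ whenever $g_\delta(\widehat x)\le\epsilon$. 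Applying \Cref{thm:main} with accuracy $2\epsilon$ and smoothness $H=\epsilon/\delta$ then yields
\[
N\ge c\,\frac{dL_0^2\min\{\Delta\epsilon/\delta,L_0^2\}}{16\,\epsilon^4}
= \Omega\!\left(dL_0^2\min\left\{\frac{\Delta}{\delta\epsilon^3},\frac{L_0^2}{\epsilon^4}\right\}\right),
\]
which is \eqref{eq:goldstein-rate}.

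The main point needing care is the parameter regime. Condition \eqref{eq:main-regime} must be checked at accuracy $2\epsilon$ with $B=B_G$, that is, $4\epsilon^2\le c_0B_G$ together with the logarithmic dimension condition. Both follow from the hypothesis ``\eqref{eq:main-regime} with $B$ replaced by $B_G$'' after adjusting $c_0$ and $C$ by absolute factors. The factor of $2$ changes the argument of the logarithm by at most a constant factor, which is absorbed into $C$. The probability notion ($2/3$ success, expected cost) is unchanged because the reduction acts instance by instance on the output. If one prefers avoiding the factor $2$, one can instead take $H=\epsilon/(2\delta)$ and target accuracy $\epsilon$ with Goldstein accuracy $\epsilon/2$; either way only constants move.
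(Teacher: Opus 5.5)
Your proposal is correct and matches the paper's proof: set $H=\epsilon/\delta$ and use the comparison $\|\nabla f(x)\|\le g_\delta(x)+H\delta$ from \Cref{lem:goldstein-comparison}. This turns a Goldstein $(\delta,\epsilon)$-stationary output into a $2\epsilon$-stationary point of the same smooth hard instance, and applying \Cref{thm:main} at tolerance $2\epsilon$ only costs the factor $16$ and adjustments of $c_0$ and $C$. The paper leaves implicit your check that the smooth instances lie in the nonsmooth class (population Lipschitz constant at most $L_0$ by Jensen), but that check is exactly what the reduction requires.
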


The implication \eqref{eq:goldstein-reduction} in
\Cref{subsec:goldstein} reduces this task to smooth stationarity;
substituting its tolerance gives the stated rate and hypotheses
(see \Cref{lem:goldstein-comparison} and the
\hyperref[proof:goldstein-bound]{proof of \Cref*{thm:goldstein}}).
The same reduction gives the larger-gap extensions
(see \Cref{cor:large-gap-goldstein}).

\paragraph{Exact-value lower bounds.}
Here the oracle evaluates a single fixed function;
randomization, queries, and outputs remain unrestricted.

\begin{theorem}[Exact-value lower bound]
\label{thm:noiseless}
Let $B$ be as in \eqref{eq:def-B}. If
\[
 \epsilon^2\le c_0B,\qquad
 d\ge C\frac{B}{\epsilon^2}
       \log\left(2+\frac{B}{\epsilon^2}\right),
\]
then finding an $\epsilon$-stationary point of a
globally $L_0$-Lipschitz, $H$-smooth function with
global initial gap at most $\Delta$ requires
\begin{equation}\label{eq:noiseless-rate}
 N=\Omega(dB/\epsilon^2)
\end{equation}
exact evaluations, even for randomized algorithms.
\end{theorem}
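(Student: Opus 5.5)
We obtain \Cref{thm:noiseless} from the same global construction that underlies \Cref{thm:main}, with the noise removed. The basic instance is a chain of $K\asymp B/\epsilon^2$ stages. It is built from hidden unit directions $u_1,\dots,u_K$ drawn uniformly, subject to the orthogonality structure of \Cref{lem:base-chain}, and it is assembled as in \Cref{lem:global-geometry}. Two properties are then needed.

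First, $f$ is globally $H$-smooth and $L_0$-Lipschitz with $f(0)-\inf f\le\Delta$. Each stage consumes gap $\asymp\epsilon^2/H$, so the total gap is $K\epsilon^2/H\lesssim\Delta$ when $B=\Delta H$. When $B=L_0^2$, the cap comes from the gradient budget instead.

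Second, every point with $\|\nabla f(x)\|\le\epsilon$ has nontrivial correlation with the last hidden direction $u_K$. Equivalently, stationarity is localized to the end of the chain; this is what \Cref{lem:global-geometry} provides, including for points far from the hard region. In the noiseless model there is a single function, so $|\Xi|=1$ and no sample-reuse machinery is needed; \Cref{lem:noiseless-list} records the resulting exact-value oracle.

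The core is an information argument showing that each stage costs $\Omega(d)$ exact evaluations. Condition on the directions already discovered, $u_1,\dots,u_{k-1}$. A query $x$ whose component orthogonal to the discovered span has correlation with $u_k$ below a threshold $\tau\asymp\sqrt{\log(\cdot)/d}$ returns a value computable from the discovered directions alone. This is the zero-chain or robust-zero-chain property of the stage function. By concentration on the sphere (\Cref{thm:tool-sphere}), an adaptive randomized query exceeds the threshold with probability at most $\exp(-c\,d\tau^2)$. A single query can therefore not reveal $u_k$.

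A fast exponential tail is not enough, however, since it would only give $\Omega(K)$ queries. To get a factor $d$ per stage, I would instead use an exact-value resolution argument. Each scalar evaluation reveals at most one linear functional of the unknown direction in a meaningful sense. The plan is to design the stage activation so that triggering progress requires $|\langle x,u_k\rangle|\gtrsim1$, whereas a random direction has $|\langle x,u_k\rangle|\approx\|x\|/\sqrt d$. Find progress then needs an essentially exhaustive search over $\Omega(d)$ near-orthogonal candidate directions.

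Concretely, I would bound the posterior information about $u_k$ after $n$ queries using \Cref{lem:posterior-information}. The posterior of $u_k$ given the transcript is uniform on a sphere of codimension at most $n$ (the queried directions are quotiented out). Hence the probability that some query has large overlap stays below $1/2$ until $n\gtrsim d$. Summing over the $K$ stages, using a union bound together with the stage-charging decomposition of \Cref{lem:stage-charging} in its noiseless form, gives expected cost $\Omega(dK)=\Omega(dB/\epsilon^2)$.

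The dimension condition $d\ge C(B/\epsilon^2)\log(2+B/\epsilon^2)$ serves two purposes. It ensures that the $K$ orthogonal directions fit, and it makes the union bound over $K$ stages and the $\log K$ concentration slack affordable. This is why the log term involves $B/\epsilon^2$ rather than $BL_0^2/\epsilon^4$: no noise-averaging factor appears.

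The main obstacle is the per-stage $\Omega(d)$ claim against unrestricted randomized exact queries. Exact values could leak fine information about $u_k$ through the smooth transition region, and queries at very large norm could amplify small overlaps. The first thing I would try is to make the stage function depend on $x$ only through a clipped inner product that is exactly constant below the threshold, and to normalize by the radial scaling built into \Cref{lem:global-geometry}. Then each query reveals either nothing new or a discovery event, and the transcript before discovery is a deterministic function of discovered quantities. This reduces the argument to a search-over-sphere bound: the probability of discovery within $n$ adaptive queries is at most $n\cdot O(1/d)$.
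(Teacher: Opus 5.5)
Your outer accounting matches the paper: gap $\asymp\epsilon^2/H$ per stage, the Lipschitz cap, stage charging, and the role of the dimension condition. The gap is exactly where you flag it, the per-stage $\Omega(d)$ cost, and neither of your two mechanisms delivers it.

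\emph{Threshold on the chain direction.} Suppose progress at stage $k$ means $|\langle y,u_k\rangle|\gtrsim1$ for the radially compressed $y$. The certificate needs $w_i>1$ for all $T$ coordinates, so $\|y\|$ must be allowed up to $R\asymp\sqrt T$. The requirement is then only a normalized correlation of order $1/\sqrt T$. Grant that each query reveals one linear functional of $u_k$; the known component has $\E\|P_nu_k\|^2=n/s$. Then the point $y=\tfrac R2P_nu_k/\|P_nu_k\|$ typically succeeds once $n\asymp d/T$. The total is therefore $O(d)$, not $\Omega(dT)$. Your claim that a large overlap stays unlikely until $n\gtrsim d$ holds only for a constant-angle target.

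\emph{Clipped dependence.} Suppose instead the dependence on $u_k$ is exactly constant below a threshold. Then the barrier $\partial_kB_T\le-1$ disappears there. The minimizer of the known truncated objective can be taken in $\operatorname{span}(u_{<k})$, since projecting $y$ onto that span lowers $P$ and keeps $w_{<k}$. It therefore has $w_k=0$ and is a publicly computable stationary point of $f$. So the no-leakage regime you want is incompatible with the barrier that localizes stationarity.

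\emph{Information tool.} \Cref{lem:posterior-information} is unavailable here: with exact measurements $I(u_k;\mathcal T_n)=\infty$. The uniform-residual fact you quote must come from conditional rotational invariance, as in \Cref{lem:noiseless-list}.

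The missing idea is the paper's separation of roles between two independent Haar frames. The chain frame $U$ is never charged. It is only protected against premature activation by spherical concentration, which is where $d\gtrsim T\log T$ enters, and it is disclosed for free at crossings.

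The cost is carried by a second frame $V$ in separate workspace coordinates. The well $h(x-a(w))$ moves by $Av_k$ when stage $k$ activates. The gate correction cancels the successor term $b_{k+1}$ unless $\|x-a_{<k}-Av_k\|<\sqrt2\beta$. The offset is measured from the already known center $a_{<k}$, and $\sqrt2\beta\ll A$, so a hit yields $\langle\widehat v,v_k\rangle>15/16$, a genuine constant-angle target. Meanwhile the well's gradient excludes stationary points away from its center. By \eqref{eq:noiseless-invert}, its exact value is equivalent to at most one linear measurement of $v_k$; this is leakage, but controlled leakage.

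\Cref{lem:noiseless-list} then gives $\Omega(d)$ tests per stage, applied to the scripted-failure candidate list. Its ingredients are the energy identity $\E\|P_nv_k\|^2=n/s$, Markov's inequality, and a cap bound on the uniform residual. The stage-charging and $U$-projection coupling arguments then give $N=\Omega(dT)$.
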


\begin{corollary}[Exact-value Goldstein lower bound]
\label[corollary]{cor:exact-goldstein}
Let $B_G=\min\{\Delta\epsilon/\delta,L_0^2\}$.
Under the accuracy and dimension conditions of
\Cref{thm:noiseless} with $B=B_G$, finding a Goldstein
$(\delta,\epsilon)$-stationary point of a globally
$L_0$-Lipschitz function with global gap at most
$\Delta$ requires
$\Omega(dB_G/\epsilon^2)$ exact evaluations.
\end{corollary}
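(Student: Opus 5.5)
The corollary should follow from \Cref{thm:noiseless} by the same smooth-to-Goldstein reduction used for \Cref{thm:goldstein}, now applied to the single exact-value function. Choose the smoothness level $H=\epsilon/\delta$, up to a universal constant. Then $B=\min\{\Delta H,L_0^2\}$ becomes $B_G=\min\{\Delta\epsilon/\delta,L_0^2\}$, again up to constants. Since $H$-smooth functions are locally Lipschitz, the hard instances of \Cref{thm:noiseless} are admissible for the Goldstein problem: they are globally $L_0$-Lipschitz with global gap at most $\Delta$. It therefore suffices to show that a Goldstein $(\delta,\epsilon')$-stationary point of such an instance, with $\epsilon'$ a constant fraction of $\epsilon$, is a $c\epsilon$-stationary point in the smooth sense. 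Any algorithm solving the Goldstein task with $N$ queries would then solve the smooth task with $N$ queries and the same success probability. The bound $N=\Omega(dB_G/\epsilon^2)$ follows from \eqref{eq:noiseless-rate}, after rescaling $\epsilon$ by a constant and adjusting $c_0,C$ in the accuracy and dimension conditions.

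The key step is the comparison. For $H$-smooth $f$ we have $\partial_C f(y)=\{\nabla f(y)\}$ and $\|\nabla f(y)-\nabla f(x)\|\le H\|y-x\|\le H\delta$ for $\|y-x\|\le\delta$. Every element $v\in\partial^G_\delta f(x)$ is a convex combination of such gradients, so $\|v-\nabla f(x)\|\le H\delta$. Hence
\[
\|\nabla f(x)\|\le g_\delta(x)+H\delta .
\]
With $H=\epsilon/\delta$, a point with $g_\delta(x)\le\epsilon$ satisfies $\|\nabla f(x)\|\le 2\epsilon$. This is presumably the content of \Cref{lem:goldstein-comparison} and the implication \eqref{eq:goldstein-reduction}, which I would invoke directly. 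I would then apply \Cref{thm:noiseless} with accuracy $2\epsilon$ and $H=\epsilon/\delta$. The resulting $B'=\min\{\Delta\epsilon/\delta,L_0^2\}=B_G$ exactly, and the rate $dB'/(2\epsilon)^2$ is of order $dB_G/\epsilon^2$.

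The only delicate point, which I expect to be the main obstacle, is bookkeeping the constants between the hypotheses. \Cref{thm:noiseless} at accuracy $2\epsilon$ requires $4\epsilon^2\le c_0B_G$ and $d\ge C\frac{B_G}{4\epsilon^2}\log(2+\frac{B_G}{4\epsilon^2})$. These are implied by the corollary's conditions once $c_0$ is replaced by $c_0/4$, and the dimension condition is implied with the same $C$ since the right side only decreases. So the hypothesis "with $B=B_G$ for suitable universal constants" suffices.

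One must also check that taking $H=\epsilon/\delta$ does not violate any hidden requirement on $H$ in \Cref{thm:noiseless}. There appears to be none beyond those encoded in $B$, so the reduction is legitimate for every $\delta>0$. The Goldstein class does not constrain smoothness, so the hard family is admissible regardless of $H$. Randomized algorithms are handled automatically, because the reduction maps outputs pointwise and preserves the query count and success probability.
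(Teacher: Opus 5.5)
Your proposal is correct and follows the same route as the paper: take $H=\epsilon/\delta$, use the comparison $\|\nabla f(x)\|\le g_\delta(x)+H\delta$ from \Cref{lem:goldstein-comparison} to turn a Goldstein $(\delta,\epsilon)$-stationary output into a $2\epsilon$-stationary one, and apply \Cref{thm:noiseless} at tolerance $2\epsilon$ with $B=B_G$. Your handling of the constants also matches the paper's: replace $c_0$ by $c_0/4$ in the accuracy condition, note that the dimension condition only weakens at tolerance $2\epsilon$, and observe that the smooth hard class is contained in the Lipschitz Goldstein class.
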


In the unsaturated regime, the smooth and Goldstein
rates are respectively
$\Omega(d\Delta H/\epsilon^2)$ and
$\Omega(d\Delta/(\delta\epsilon))$.
The smooth rate matches the standard exact-value
upper bound. \Cref{subsec:noiseless} reduces scalar replies to
linear measurements. Its list estimate and scaling then give the
expected-cost bound (see \Cref{lem:noiseless-list} and the
\hyperref[proof:exact-bound]{proof of \Cref*{thm:noiseless} and
\Cref*{cor:exact-goldstein}}).

\paragraph{Gap amplification.}
The next results extend the dependence on $\Delta$
under the same oracle and success conventions.

\begin{theorem}[Two-level]\label{thm:extended-gap}
There are universal $c,c_e,C_*,C>0$ such that
\begin{equation}\label{eq:extended-gap-regime}
 \epsilon\le c_eL_0,\quad
 \Delta H\ge C_*L_0^2,\quad
 d\ge C\frac{L_0^2}{\epsilon^2}
       \log\left(2+\frac{L_0^2}{\epsilon^2}\right)
\end{equation}
imply
\begin{equation}\label{eq:extended-gap-rate}
 N\ge c\frac{dL_0^2}{\epsilon^4}
       \min\left\{\Delta H,\frac{L_0^4}{\epsilon^2}\right\}.
\end{equation}
\end{theorem}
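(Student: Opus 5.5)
We would prove \Cref{thm:extended-gap} by a two-level construction that stacks many independent copies of the basic hard instance of \Cref{thm:main} along an outer chain. Each inner copy runs in the saturated regime $B=L_0^2$. The outer chain forces the algorithm to solve them one after another, so their costs add.

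\textbf{Inner cell and its cost.} Fix the inner scale with $\Delta_{\mathrm{in}}H=L_0^2$. By \Cref{thm:main}, any algorithm that drives the gradient of one cell below $\epsilon$ pays $\Omega(dL_0^4/\epsilon^4)$ queries. Such a cell releases gap about $\Delta_{\mathrm{in}}=L_0^2/H$. The cells are glued in sequence, in the spirit of \Cref{lem:base-chain} and \Cref{lem:global-geometry}. We take $K\asymp\min\{\Delta H/L_0^2,\,L_0^2/\epsilon^2\}$ cells, placed along a bounded carrier so that the global infimum stays finite and the total gap is at most $K\Delta_{\mathrm{in}}\le\Delta$. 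The second term in the minimum is forced by the sample Lipschitz budget. The cell carriers must have gradient at least $\epsilon$ throughout, otherwise unintended stationary points appear. Their contributions to $\E L(\xi)^2$ must also stay $O(L_0^2)$, which is the role played by \Cref{thm:noise-moment}. These two requirements limit the number of distinguishable outer stages to $O(L_0^2/\epsilon^2)$. That limit is exactly where the cap $L_0^4/\epsilon^2$ in \eqref{eq:extended-gap-rate} comes from. The hypothesis $\Delta H\ge C_*L_0^2$ guarantees $K\ge 2$, so the construction is genuinely two-level.

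\textbf{Sequential charging.} Each cell gets fresh independent hidden Gaussian directions, as in \Cref{lem:field-regularity} and \Cref{lem:tensor-shift}. The outer chain is arranged so that cell $k+1$ is informationally inactive until the algorithm has "solved" cell $k$, meaning it has reached that cell's exit region. Any point outside the final exit region has gradient norm above $\epsilon$. We then reuse the prefix simulator and stage charging: \Cref{lem:prefix-simulator} and \Cref{lem:stage-charging} show that the queries spent before exiting cell $k$ can be simulated without the hidden directions of later cells. The per-cell lower bound then applies conditionally to each stage, via the posterior information control of \Cref{lem:posterior-information} and \Cref{lem:canonical-adaptive-kernel}. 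Summing over the stages gives $N\gtrsim K\cdot dL_0^4/\epsilon^4$. This equals $\frac{dL_0^2}{\epsilon^4}\min\{\Delta H,L_0^4/\epsilon^2\}$. For the dimension condition, each cell needs $d\gtrsim (L_0^2/\epsilon^2)\log(\cdot)$ as in \eqref{eq:main-regime} with $B=L_0^2$. The outer chain only needs $O(K)$ additional coordinates, with $K\lesssim L_0^2/\epsilon^2$, so the condition in \eqref{eq:extended-gap-regime} suffices after adjusting constants. Finally, \Cref{lem:static-coupling} converts fresh-sample access into the static finite oracle, and \Cref{lem:uniform-parameters} together with \Cref{lem:expected-cost} fix the physical parameters and pass from the fixed-budget bound to expected cost.

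\textbf{Main obstacle.} The hardest step is the outer gluing. We need global $H$-smoothness, a finite infimum, and no stationary points other than at the final exit. At the same time, queries far along the chain must reveal nothing about unsolved cells, despite unrestricted query locations and arbitrary sample reuse. Our first attempt would be a smooth outer staircase with a bounded, monotone profile in a dedicated coordinate block, and inner cells activated by smooth cutoffs. The hidden field would be scaled so that its contribution to the sample Lipschitz moment is $O(\epsilon^2)$ per cell. That scaling makes the sum over $K\lesssim L_0^2/\epsilon^2$ cells fit within $L_0^2$, which we would verify directly via \Cref{thm:noise-moment}. Presumably this is the content of \Cref{lem:wide-core} and \Cref{prop:two-level-assembly}.
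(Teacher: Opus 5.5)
Your outer architecture matches the paper: $K$ hidden outer stages, each gated by a fresh inner block on shared inner coordinates, with $t,\sigma^2\asymp L_0^2/\epsilon^2$ and $K\asymp\min\{\Delta H/L_0^2,L_0^2/\epsilon^2\}$. But the two steps that carry the weight do not work as written.

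\textbf{The per-block interface.} \Cref{thm:main} is the wrong interface. It bounds the cost of \emph{outputting} an $\epsilon$-stationary point of one cell. Your charging instead needs the cost of first \emph{entering} the region where the successor becomes visible. An algorithm can reach that region, and start working on the next cell, long before any stationary output.

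That region also cannot be the basic terminal gate of radius $\sqrt2\beta$. Switching from one block to the next changes the inner energy by order $t$, so each outer stage must carry energy of order $t$. The successor term being hidden therefore has amplitude $\asymp t$, and its spatial cutoff must vary over a distance $\asymp\sqrt t$ to keep the gradient $O(\sqrt t)$ and the Hessian $O(1)$.

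The paper therefore proves a new coarse first-entry bound (\Cref{lem:wide-coarse-entry}) for the cylinder $\|x-p_x\|<8\sqrt t$. Until half the inner stages are paid, the remaining center $A\sum_{i>j}v_i$ has a Haar direction and norm $A\sqrt{t-j}$, so entry requires a constant-angle guess. It then rebuilds the block (\Cref{lem:wide-core}) with an exact quadratic core and a common public exterior. Outside the cylinder, every response is then a public affine transform of at most one source response. The core also lets the signs $d_i\le0$, $\chi_i'\le0$ in \Cref{prop:two-level-assembly} keep the gate correction from cancelling the block gradient. None of this follows from \Cref{thm:main}, and it is exactly the step you defer.

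\textbf{The Lipschitz accounting.} Your accounting contradicts the per-cell cost you claim. A saturated cell alone uses a constant fraction of the budget: after scaling, its deterministic gradient is of order $L_0$, and its field needs $\sigma^2\asymp L_0^2/\epsilon^2$. If each cell's field contributes only $O(\epsilon^2)$ to $\E L(\xi)^2$, then $\sigma^2=O(1)$. \Cref{thm:reuse} then yields only order $m\sigma^2=O(d)$ queries per stage, so each cell is charged order $dt$. The total, of order $Kdt=\frac d{\epsilon^2}\min\{\Delta H,L_0^4/\epsilon^2\}$, is no better than the saturated rate of \Cref{thm:main}.

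What is needed is that the budgets do not add over cells at all. The paper achieves this with three ingredients:
\begin{itemize}
\item It selects blocks by convex ownership weights, $\sum_i\omega_i=1$. The inner gradient is then a convex combination, and the noise enters through $\E\max_{i\le K}\Lip(N_{i,j})^2\le C\sigma^2(m+t+\log(2Kt))/m$, which pays only $\log K$ via the Gaussian maximum.
\item The weight derivatives act on $g_i-g_{K+1}$. This difference is uniformly $O(t)$ on the unbounded workspace only because all blocks share one public exterior.
\item The outer chain is scaled by $t$ in both values and coordinates. Its gap is then $O(Kt)$ while its gradient stays $O(\sqrt K)$.
\end{itemize}
Together these give $\E\Lip(F_j)^2=O(K+t+\sigma^2)$, which is what makes the product $Kt\sigma^2$ affordable.
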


\begin{corollary}[Fixed-depth]
\label[corollary]{cor:fixed-depth}
For every fixed integer $p\ge2$, there are positive
$c_p,c_{e,p},C_p,C_{*,p}$ such that
\[
 \epsilon\le c_{e,p}L_0,\quad
 \Delta H\ge C_{*,p}L_0^2,\quad
 d\ge C_p\frac{L_0^2}{\epsilon^2}
       \log\left(2+\frac{L_0^2}{\epsilon^2}\right)
\]
imply
\begin{equation}\label{eq:fixed-depth-rate}
 N\ge c_p\frac{dL_0^2}{\epsilon^4}
       \min\left\{\Delta H,\,
       L_0^2\left(\frac{L_0^2}{\epsilon^2}\right)^{p-1}
       \right\}.
\end{equation}
\end{corollary}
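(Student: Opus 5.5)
The plan is to prove \Cref{cor:fixed-depth} by induction on the depth $p$, starting from \Cref{thm:main} and \Cref{thm:extended-gap}. Each step wraps an outer chain around the hard instance of the previous depth. \Cref{thm:main} is the depth-one case, with cap $L_0^2$, and \Cref{thm:extended-gap} is the case $p=2$, with cap $L_0^4/\epsilon^2$. For the inductive step I would copy the two-level assembly of \Cref{prop:two-level-assembly}, which \Cref{prop:fixed-depth-assembly} presumably generalizes. Take the depth-$(p-1)$ hard instance, built with slightly shrunken constants, as the ``core''. Place $M\asymp L_0^2/\epsilon^2$ copies of it along a descending outer chain in the spirit of \Cref{lem:base-chain,lem:wide-core}. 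Each copy is reached only after the previous one has been solved to accuracy $\epsilon$, and each uses a fresh hidden orthonormal block of directions, so copies cannot share information. The outer chain must descend by roughly $\Delta_{p-1}$ per link, where $\Delta_{p-1}H\asymp L_0^2(L_0^2/\epsilon^2)^{p-2}$ is the largest gap for which the depth-$(p-1)$ bound is still linear. The total gap is then $M\Delta_{p-1}$, and $H\cdot M\Delta_{p-1}\asymp L_0^2(L_0^2/\epsilon^2)^{p-1}$, which is exactly the new cap.

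I would carry out the steps in this order.
\begin{enumerate}
\item \textbf{Geometry.} Check that the assembled objective is globally $H$-smooth with $\inf f>-\infty$, with constants depending on $p$. The only $\epsilon$-stationary points should be those at the end of the chain or inside an unsolved core, as in \Cref{lem:global-geometry}.
\item \textbf{Noise budget.} Show that the sample Lipschitz moment stays at most $L_0^2$. Only one core is active on any region, so the moment bound of \Cref{thm:noise-moment} does not accumulate with depth.
\item \textbf{Dimension.} Split $d$ into $O_p(1)$ blocks, so that the condition $d\ge C_p(L_0^2/\epsilon^2)\log(\cdot)$ of the statement is inherited with a $p$-dependent constant.
\item \textbf{Cost accounting.} Argue through the prefix simulator and stage charging, \Cref{lem:prefix-simulator,lem:stage-charging}. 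Conditioned on the transcript up to entering link $j$, the hidden parameters of the later cores are still distributed as in the prior. A static coupling (\Cref{lem:static-coupling}) then shows that the expected cost of traversing each link is at least the depth-$(p-1)$ bound $c_{p-1}dL_0^2\Delta_{p-1}H/\epsilon^4$. Summing over the $M$ links gives $c_p dL_0^2\cdot L_0^2(L_0^2/\epsilon^2)^{p-1}/\epsilon^4$.
\item \textbf{Small-gap case.} When $\Delta H$ is below the cap, use only $\lceil \Delta H/(\Delta_{p-1}H)\rceil$ links. The threshold $\Delta H\ge C_{*,p}L_0^2$ guarantees that at least one full core fits. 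This gives the $\min$ in \eqref{eq:fixed-depth-rate}.
\end{enumerate}

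I expect the cost accounting to be the main obstacle. The difficulty is that fully adaptive algorithms with sample reuse may query deeper cores before finishing earlier ones, or may exploit correlations through reused labels. Here I would lean on the posterior-information bound \Cref{lem:posterior-information}. Out-of-order queries should reveal essentially nothing about a core's hidden directions until the outer chain has been followed to it, in the same way as in the two-level proof. The constants then degrade by a $p$-dependent factor at each level, and this is acceptable because $p$ is fixed.
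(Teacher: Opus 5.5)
Your outline has the right overall shape: recursive outer chains, energy multiplied by the chain length, $O_p(1)$ coordinate blocks, and a fallback to lower depth. The inductive step as you set it up does not go through, however.

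\textbf{Step 4 charges the wrong quantity.} You charge each link the depth-$(p-1)$ \emph{stationarity} lower bound. But the successor child is exposed as soon as a query enters whatever gate region switches it on. An algorithm can enter that region without ever producing an $\epsilon$-stationary point of the current child, so a stationarity bound for an isolated block says nothing about the cost of reaching the gate. The paper stresses exactly this point.

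There is a second problem with using the theorem as a black box. The depth-$(p-1)$ statement is a worst-case expected-cost bound for algorithms that succeed with probability $2/3$ on every instance. The restriction of an algorithm to one link is not such an algorithm. Nor does the statement give a conditional tail bound under the hard prior at each birth, which is what the counting argument of \Cref{lem:gap-renewal} needs. Summing per-link expected costs does not substitute for this.

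\textbf{The gate is unspecified, and step 1 is too weak.} You never say what the gate is. ``Solved to accuracy $\epsilon$'' is not a $C^2$ cutoff whose correction provably creates no new stationary points. Your step 1 also allows $\epsilon$-stationary points ``inside an unsolved core.'' That alone would destroy the multiplication, since outputting one of them after paying for a single core would already succeed.

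\textbf{What is missing is the invariant the paper inducts on,} namely \eqref{eq:depth-interface}. It has three parts:
\begin{itemize}
\item[(i)] a geometric property: a small child gradient forces low energy, $g_i\le\theta E$;
\item[(ii)] a distributional first-entry bound for the sublevel set $\{g_i<2\theta E\}$, holding conditionally at each birth;
\item[(iii)] uniform closeness $\|g_i-\bar g\|_\infty\le C_dE$ to a public copy. This is what keeps the ownership-weight derivative bound \eqref{eq:linear-owner-derivatives} free of a factor $K$ once the common public exterior is no longer available.
\end{itemize}
The gate is then the energy cutoff $\chi_i=\psi(g_i/E)$. Its guard identity $\nabla(g_i+\lambda a\chi_i)=\bigl(1+\lambda a\psi'(g_i/E)/E\bigr)\nabla g_i$, valid for $a\le0\le\lambda$ (\Cref{lem:energy-cutoff}), rules out spurious stationary points.

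All three properties must then be re-proved for the assembled level:
\begin{itemize}
\item[(i)] via strong convexity of the terminal outer phase, \eqref{eq:depth-phase-energy};
\item[(ii)] by showing that reaching the new low-energy set requires $\lfloor K/(4C_b)\rfloor$ child entries, \eqref{eq:depth-paid-target}, and then applying the renewal tail \eqref{eq:depth-renewal-tail};
\item[(iii)] via \eqref{eq:depth-difference}.
\end{itemize}

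\textbf{Step 5 is also off.} The hypothesis $\Delta H\ge C_{*,p}L_0^2$ does not let a full depth-$(p-1)$ core fit when $p\ge3$; that needs $\Delta H\gtrsim L_0^2(L_0^2/\epsilon^2)^{p-2}$. In the intermediate range you must invoke the depth-$(p-1)$ corollary itself, as the paper does.
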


The two-level construction is the $p=2$ case.
Fixed depth enlarges the linear gap window, with
constants depending on $p$; exact nesting instead
gives a logarithmic dependence for larger gaps.

\begin{theorem}[Logarithmic gap]
\label{thm:logarithmic-gap}
There are universal positive $c,c_e,C_0,C$ and
$C_1>1$ such that, if
$\epsilon\le c_eL_0$, $\Delta H\ge C_0L_0^2$, and
\begin{equation}\label{eq:logarithmic-gap-regime}
 K=1+\left\lfloor
 \frac{\log(\Delta H/(C_0L_0^2))}{\log C_1}
 \right\rfloor,\qquad
 d\ge C\left[
 \frac{L_0^2}{\epsilon^2}
 \log\left(2+\frac{L_0^2}{\epsilon^2}\right)
 +\log(2K)\right],
\end{equation}
then
\begin{equation}\label{eq:logarithmic-gap-rate}
 N\ge c\frac{dL_0^4}{\epsilon^4}
       \left[1+\log\frac{\Delta H}{C_0L_0^2}\right].
\end{equation}
\end{theorem}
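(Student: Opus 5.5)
We plan to prove \Cref{thm:logarithmic-gap} by exact nesting: $K$ copies of the basic hard instance of \Cref{thm:main} are placed at geometrically separated scales so that each level consumes a constant fraction of the gap. Reaching an $\epsilon$-stationary point should then force the algorithm to solve all $K$ levels in sequence. Each level costs $\Omega(dL_0^4/\epsilon^4)$, which is the saturated rate of \eqref{eq:main-rate} with $B=L_0^2$.

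Concretely, we would first fix a saturated core: the wide core of \Cref{lem:wide-core}, i.e.\ the basic construction with $B=L_0^2$. It has gap $\Theta(L_0^2/H)$ at its own smoothness scale and gradient norm above $\epsilon$ everywhere except near a hidden exit region determined by a secret direction. The level-$k$ copy would be placed with exit value shifted down by roughly $C_0L_0^2C_1^{k}/H$, while smoothness $H$ and the sample Lipschitz moment $L_0^2$ are kept uniform. The copies would be glued by a descending carrier whose gradient exceeds $\epsilon$, so that the only low-gradient points are inside the deepest core. The global gap is then $\sum_{k<K}C_0L_0^2C_1^k/H\lesssim \Delta$, which is exactly the definition of $K$ in \eqref{eq:logarithmic-gap-regime}. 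This step should follow \Cref{prop:exact-nesting} and \Cref{prop:nested-entry}. In particular, the entry point of level $k+1$ must be a function of the level-$k$ secret only, so that it cannot be located before level $k$ is solved. We would then check the global conditions \eqref{eq:model-assumptions}: finite infimum, $\Lip(\nabla f)\le H$ across the gluing regions, and the averaged sample Lipschitz bound. Because the copies have disjoint supports in parameter space, a per-sample Lipschitz constant is at most the maximum over levels, not the sum.

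For the information argument, we would draw independent secret directions $u_1,\dots,u_K$ on the sphere. The sequential-cost framework of \Cref{lem:prefix-simulator} and \Cref{lem:stage-charging} would be rerun with the levels playing the role of stages: a prefix simulator answers every query correctly until the algorithm has accumulated enough posterior information about $u_k$ (\Cref{lem:posterior-information}). The per-level lower bound $c\,dL_0^4/\epsilon^4$ then comes from the static coupling (\Cref{lem:static-coupling}) applied to each level. A union bound over the $K$ levels of the failure events of the concentration tools (\Cref{thm:tool-gaussian,thm:tool-sphere}) accounts for the extra $\log(2K)$ term in the dimension condition. Summing the costs gives $N\ge cK\,dL_0^4/\epsilon^4$. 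Since $K\asymp 1+\log(\Delta H/(C_0L_0^2))$, this yields \eqref{eq:logarithmic-gap-rate}. The success probability of $2/3$ would be converted into an expected-cost bound as in \Cref{lem:expected-cost}.

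The main obstacle is preventing leakage across levels. Queries far from the current level, such as deep in a later core or on the carrier, must reveal essentially nothing about $u_{k+1},\dots,u_K$. They also must not let the algorithm skip ahead, because the value offsets of the deeper levels are deterministic. My first attempt would be to make the deeper cores rotationally invariant in distribution until their entry secret is known, and to charge any premature query to the posterior-information budget of \Cref{lem:canonical-adaptive-kernel}. If that fails, I would weaken the conclusion to requiring only a constant fraction of the levels to be solved sequentially, which still gives the logarithmic rate.
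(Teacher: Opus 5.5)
\textbf{Gap 1: joining the levels.} The step that carries the real difficulty, joining level $k+1$ to level $k$, is left as a black box, and the mechanism you sketch is not the one behind \Cref{prop:exact-nesting}. You keep every copy at a uniform smoothness and Lipschitz scale and join them by a ``descending carrier''. Nothing in that picture forces the offsets $C_0L_0^2C_1^k/H$ to grow geometrically. Suppose same-scale copies could be chained with a hidden next entry, no spurious small gradients, and a per-sample Lipschitz constant that is a maximum over levels. Then constant offsets would suffice, and you would get a bound linear in $\Delta$ for every gap, which the paper leaves open. Indeed, a hidden outer chain over $K$ blocks, as in the two-level construction, already costs order $K$ in the squared Lipschitz budget (\Cref{prop:depth-loss}).

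The paper uses no carrier. The child replaces the parent's exact quadratic core $\{\|X-p_j\|<\sqrt t\}$, which contains every small-gradient point of the parent. Because the child's collared body has radius $B_0\sqrt t$, it must be contracted by $B_0$ in location and divided by $B_0$ in value. The public collar of \Cref{lem:wide-core} then gives the exact sample-by-sample identity \eqref{eq:log-collar} on an open overlap. Consequently there is no interpolation term, and each point is owned by one block, so $\Lip$ is a maximum, as in \eqref{eq:log-moment}. Outside $\mathcal C_j$ no response depends on levels $j,\ldots,K$ at all. The deeper offsets are public constants, and a query landing in a deeper region is simply a first-entry candidate of the current level, so your rotational-invariance device is not needed. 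Each contraction multiplies the local Hessian by $B_0$. The final common expansion by $B_0^{K-1}$ restores the Hessian bound while multiplying the gap by $B_0^{K-1}$. That, not a choice of offsets, is the source of the logarithm and of $C_1=B_0$.

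\textbf{Gap 2: the per-level cost is a first-entry bound.} The per-level cost does not follow from \Cref{thm:main} or \Cref{lem:static-coupling}, which bound the probability of outputting a stationary point of a block. The child sits in the parent's core ball, far larger than the parent's stationary region. An algorithm could therefore enter it and start reading the child before ever producing a parent stationary point. What you need is the first-entry bound of \Cref{lem:wide-coarse-entry}. Entering $\{\|x-A\sum_iv_i\|<8\sqrt t\}$ requires either completing half of the $t$ inner stages or a constant-accuracy guess of the residual direction of $A\sum_{i>j}v_i$, which gives $n_*\gtrsim tm\sigma^2$.

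This also repairs your information step. With one secret direction per level and ``levels as stages'', \Cref{thm:reuse} yields only $\Omega(m\sigma^2)=\Omega(dL_0^2/\epsilon^2)$ per level. The factor $t\asymp L_0^2/\epsilon^2$ comes from the stages inside each block's entry cost. Moreover, the simulator switches at geometric entry events, not at an information threshold.

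The levels are then accumulated by \Cref{lem:gap-renewal}, using fresh conditional laws at each birth. These come from the ownership formula \eqref{eq:log-owner}, with a query that crosses several levels processed in order. This argument tolerates failure probability $1/16$ per level. Union-bounding the spherical projection failures over the $K$ levels, as you suggest, would instead require $d\gtrsim t\log K$, since each failure probability is only of order $e^{-cd/t}$. The additive $\log(2K)$ in the hypothesis comes instead from the Gaussian maximum over the $K$ blocks' noise Lipschitz constants.
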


\Cref{subsec:larger-gaps} describes the common first-entry
interface. Two-level amplification combines the scaled budgets
with admissibility conditioning (see the
\hyperref[proof:two-level-bound]{two-level proof},
Eqs.~\eqref{eq:linear-scaled-budgets} and \eqref{eq:linear-conditioned-success}).
Fixed-depth amplification iterates the same interface
(see \Cref{prop:fixed-depth-assembly}, Eq.~\eqref{eq:depth-rate-calculation}).
Exact nesting instead combines geometric insertion with repeated
first-entry costs (see \Cref{prop:exact-nesting,prop:nested-entry} and
the \hyperref[proof:logarithmic-bound]{logarithmic-gap proof}).

\subsection{Comparison of the bounds}
Table~\ref{tab:prior-rates} compares dimension and accuracy dependence
in the respective oracle models. \Cref{tab:gap-rates} compares the
gap dependence together with each theorem's conditions.

\input{sections/comparison_table}

\input{sections/gap_table}

\paragraph{The basic bound and its extensions.}
\Cref{thm:main} covers the small-gap regime
$\epsilon^2\lesssim\Delta H\le L_0^2$ as well as the capped regime.
The two-level and fixed-depth theorems are stated for
$\Delta H\ge C_*L_0^2$, with the constant depending on the depth
in the latter case. They therefore do not replace all the basic
theorem's parameter range. On their common large-gap range, the
extensions improve the asymptotic bound after the basic cap is reached.
The strongest applicable lower bound is the one to use there.
The basic construction provides the geometric and
information-theoretic foundations for both extensions
and the exact-value lower bounds.

\paragraph{Depth and accuracy.}
At fixed depth $p$, the linear-gap window extends to
$\Delta H\le L_0^2(L_0^2/\epsilon^2)^{p-1}$.
The accuracy condition and lower-bound constant depend on $p$.
For the present recursion, enforcing a fixed Lipschitz budget while
letting $p$ grow eventually makes the public starting point
$\epsilon$-stationary. \Cref{subsec:larger-gaps} derives this restriction by comparing
the origin gradient with the Lipschitz budget after rescaling
(see \Cref{prop:depth-loss}, Eqs.~\eqref{eq:depth-origin-gradient}
and \eqref{eq:depth-lipschitz-lower}).

\paragraph{Logarithmic amplification without a gap cutoff.}
\Cref{thm:logarithmic-gap} instead applies to every
$\Delta H\ge C_0L_0^2$, provided the dimension condition in the last
row holds. Its constants are independent of the gap. At fixed
$L_0/\epsilon$, the extra dimension needed as the gap grows is
$O(\log\log(\Delta H/L_0^2))$. For each fixed $p$, this logarithmic
bound eventually exceeds the saturated fixed-depth rate, up to their
constant factors; in the unsaturated fixed-depth window, the latter
has the stronger linear gap dependence.

\subsection{Proof Overview}
\label{sec:difficulties}

Our construction takes a different geometric route from the bounded-domain approach of \citet{zhang2026}, requiring accompanying analytical tools that go beyond a straightforward adaptation of their methods. The basic lower bound combines sequential optimization
hardness with the information cost of recovering hidden
directions. Building on the smooth zero-chain of
\citet{carmon2019}, we construct a sequence of
$T=\Theta(B/\epsilon^2)$ stages, each associated with
a fresh hidden direction. The central challenge is to
ensure that these stages remain sequentially hard even
under unrestricted adaptive scalar queries.

We must rule out three shortcuts available
to fully adaptive algorithms.
First, it may bypass the intended stage order by
querying later stages or outputting an unqueried
stationary point.
Second, repeated or correlated sample evaluations
may cancel the noise and reveal hidden directions
at negligible cost.
Third, queries at earlier stages may disclose future
directions, allowing the algorithm to amortize
information costs across stages.

We rule out these shortcuts through three complementary
mechanisms.
Global geometric constraints ensure that approximate
stationarity anywhere in $\R^d$ certifies completion of
all stages, even for unqueried outputs
\eqref{eq:overview-certificate}.
A Gaussian feature channel matched to the nonlinear mean
prevents sample reuse from bypassing the one-stage
information bound (\Cref{thm:reuse}).
Finally, exact prefix-based simulation keeps future
directions hidden conditional on the past, allowing
the stagewise costs to accumulate
(\Cref{lem:prefix-simulator,lem:stage-charging}).
The construction also preserves the prescribed global
Lipschitz moment bound for the sample functions without low-cost cross-label noise cancellation.
Together, these mechanisms yield
\begin{equation}\label{eq:product-plan}
\underbrace{\text{scalar queries}}_{N}
\;\gtrsim\;
\underbrace{\text{hidden stages}}_{T}
\times
\underbrace{\text{queries per stage}}_{dL_0^2/\epsilon^2}
\;=\;
\Omega\!\left(\frac{dL_0^2B}{\epsilon^4}\right).
\end{equation}
The gap-amplification results reuse the mechanisms, while
the exact-value lower bounds follow from a separate
analysis of noiseless scalar measurements.

\FloatBarrier

\section{Construction and Proof of the Joint Bound}\label{sec:proof-overview}\label{subsec:roadmap}
We first construct a normalized hard instance and then rescale it.
Split the variables into chain coordinates $z\in\R^r$ and workspace
coordinates $x\in\R^m$, where $r+m=d$ and $r,m\asymp d$.
Independent 
Haar  orthonormal frames\footnote{A Haar orthonormal frame consists of uniformly random,
mutually orthogonal unit vectors, with a distribution invariant
under rotations. Given the preceding vectors, the next remains
uniformly distributed in their orthogonal complement.
In our setting, preservation of this property under adaptive
query histories is established later through the prefix simulation.}
$U=(u_1,\ldots,u_T)$ and $V=(v_1,\ldots,v_T)$ hide the chain
directions and the workspace directions, respectively. The first frame
prevents premature activation; the second supplies the scalar
estimation cost. Keeping their roles separate is essential for the
conditional law at the start of each stage.

The proof separates geometry from information. A small
gradient requires reaching the end of the chain and the final
workspace center. Informationally, reaching that center requires
learning a new $v_k$ before seeing the next workspace direction.
The geometric certificate alone does not bound how many centers
a query can discover. The information bound makes the total cost
proportional to the number of stages.

\subsection{Global geometry and stage isolation}\label{sec:geometry}

\paragraph{Key idea.}
The zero-chain provides $T$ stages, all of which must
be completed to reach a small-gradient point.
The bounded chain map \eqref{eq:soft} controls the hidden projections
regardless of query distance, while radial confinement
prevents stationary outputs from escaping to infinity.
A workspace well with a moving center introduces a hidden direction
at each stage, creating a separate information cost.
Gate correction suppresses successor information until
the current workspace gate is reached.

\paragraph{Smooth zero-chain and radial confinement.} We start with the smooth zero-chain of \citet{carmon2019},
which enforces sequential coordinate activation for
zero-respecting algorithms, each successor term remains
inactive until its preceding coordinate crosses a threshold.
The length-$T$ chain is defined by

\begin{equation}\label{eq:chain-profiles}
\Psi(t)=
\begin{cases}
0,&t\le1/2,\\
\exp(1-(2t-1)^{-2}),&t>1/2,
\end{cases}
\qquad
\Phi(t)=\sqrt e\int_{-\infty}^t e^{-u^2/2}\,du,
\end{equation}
\begin{equation}\label{eq:chain-terms}
\begin{gathered}
b_1(w)=-\Phi(w_1),\qquad  b_i(w)=\Psi(-w_{i-1})\Phi(-w_i)
-\Psi(w_{i-1})\Phi(w_i),\qquad
B_T(w)=\sum_{i=1}^T b_i(w).
\end{gathered}
\end{equation}
Here $\Psi$ vanishes on $(-\infty,1/2]$, and $\Phi$ is a smooth
increasing Gaussian primitive. Thus $|w_{i-1}|\le1/2$ makes the entire
successor term vanish. The derivatives also have a consistent sign (see Eq.~\eqref{eq:app-chain-sign}).
At the first coordinate with $|w_i|\le1$, after earlier coordinates
have passed that threshold, $\partial_iB_T\le-1$ (see Eq.~\eqref{eq:app-chain-barrier-proof}). This supplies
the chain's obstruction to stopping early.

In particular, the coordinate barrier implies
\[
 \|\nabla B_T(w)\|<1\quad\Longrightarrow\quad |w_i|>1
 \quad\text{for every }i.
\]
Together with sequential activation,
this suffices to enforce sequential progress for
zero-respecting algorithms. However, unrestricted
queries may activate future coordinates directly,
bypassing this sequential structure.
We hide the chain directions using the Haar frame $U$.
However, evaluating the chain directly at $U^\top z$
would allow unrestricted queries to amplify these
projections beyond the activation threshold simply
by increasing $\|z\|$. We prevent this by mapping
the chain-coordinate input into a bounded region:
\begin{equation}\label{eq:soft}
 R=64\sqrt T,\qquad
 \alpha=(1+\|z\|^2/R^2)^{-1/2},\qquad
 y=\alpha z,\qquad w=U^\top y.
\end{equation}

Now $\|y\|<R$, and
$w_i=\langle u_i,y\rangle=\alpha\langle u_i,z\rangle$
is the coordinate of the bounded input $y$ along
the hidden chain direction $u_i$. By \eqref{eq:chain-profiles}, $|w_i|\le1/2$ implies
$\Psi(\pm w_i)=0$ and hence $b_{i+1}(w)=0$. For each $i$, conditioned on the preceding Haar
columns $U_{<i}=(u_1,\ldots,u_{i-1})$, the direction
$u_i$ remains uniform in their orthogonal complement.
For a fixed bounded query $y$, spherical concentration
then ensures that $|w_i|=|\langle u_i,y\rangle|\le1/2$
with high probability, provided the remaining dimension
is sufficiently large (see \Cref{thm:tool-sphere,lem:static-coupling},
Eq.~\eqref{eq:app-pbad}). 
The extension from fixed to adaptive queries is
established later through the prefix simulation,
with the failure probability bounded in
\eqref{eq:bad-projection}.

The bounded chain map \eqref{eq:soft} alone is insufficient: its derivative
vanishes at large distances, potentially creating
spurious small-gradient points. We therefore add
radial confinement:
\begin{equation}\label{eq:confinement}
 P(z)=\frac{R^2}{5}(\alpha^{-1}-1),\qquad
 F_{\rm base}(z)=P(z)+B_T(w).
\end{equation}
Indeed, $\nabla P(z)=y/5$, whose norm stays bounded
away from zero as $\|z\|\to\infty$, preventing the
bounded chain map from introducing easy stationary outputs
at large distances.

These two modifications work together. For $\|z\|\ge R$,
the outward radial derivative of $P$ dominates the
chain contribution, excluding small-gradient points
outside the ball (see Eq.~\eqref{eq:app-base-radial-barrier}). For $\|z\|<R$, the first coordinate
with $|w_i|\le1$ satisfies $\partial_i B_T\le-1$ (see Eq.~\eqref{eq:app-chain-barrier-proof}),
and the radial correction cannot cancel this gradient (see Eq.~\eqref{eq:app-base-component-barrier}).
A further sign argument excludes $w_i<-1$ (see Eq.~\eqref{eq:app-base-negative-coordinate}).
Together these properties give the all-space localization
(see \Cref{lem:base-chain}):
\begin{equation}\label{eq:base-localization}
 \|\nabla F_{\rm base}(z)\|\le1/5
 \quad\Longrightarrow\quad w_i>1\quad(1\le i\le T).
\end{equation}

We need the stronger condition $w_i>1$, rather than
merely $w_i>1/2$: crossing $1/2$ allows the successor
term to become active, but does not overcome the
zero-chain gradient barrier. Indeed, as long as some
coordinate satisfies $|w_i|\le1$, the original
zero-chain cannot have a small gradient. Thus,
every small-gradient output must cross the positive
threshold $1$ in all $T$ coordinates. This establishes
the geometric necessity of completing the entire
chain; the information-theoretic analysis will
establish the query cost of doing so.

\begin{remark}
\itshape
The stronger conclusion $w_i>1$ is not needed merely
to activate the stage gates, which depend on $|w_i|$.
Its positive sign becomes useful in the later
gap-amplification constructions, where one-sided
switches and convexity on the region
$\{w:w_i\ge1,\ \forall i\}$ enable localization
near the terminal minimum.

\end{remark}

\begin{remark}[Global regularity of the base objective]
\itshape
The construction also preserves the global regularity
needed for the prescribed function class.
The gap and global derivative bounds are
(see \Cref{lem:base-chain}, Eqs.~\eqref{eq:app-base-gap}
and \eqref{eq:app-base-regularity-proof})
\begin{equation}\label{eq:base-regularity}
\begin{gathered}
F_{\rm base}(0)-\inf F_{\rm base}\le12T,\\
\sup_z\|\nabla F_{\rm base}(z)\|\le37\sqrt T,\qquad
\sup_z\|\nabla^2F_{\rm base}(z)\|\le161.
\end{gathered}
\end{equation}
Thus, the chain length increases the objective gap
only linearly, while the smoothness remains bounded
independently of $T$. These bounds provide the
geometric regularity required for the subsequent
rescaling.
\end{remark}

So far, \textbf{the zero-chain has provided $T$ stages}, all of which must
be completed to reach a small-gradient point. To obtain a
multiplicative lower bound, we must make each stage individually
costly \textbf{and} prove that these costs accumulate under unrestricted
queries. We next address the per-stage difficulty, setting aside
for now the challenge of multiplication, which is
addressed in \Cref{sec:sequential}.
To this end, we couple chain activation to a workspace well
whose center moves along a fresh hidden direction $v_k$.
Locating the new center requires estimating $v_k$,
thereby introducing a separate information cost at each stage,
as enforced by the gate construction below.

\paragraph{Smooth activation and the workspace well.} To move the well without discontinuities, we need smooth
activation functions that are exactly zero before a stage and exactly
one after activation, with vanishing first and second derivatives
at the endpoints. Let $S_5$ be the clamped quintic interpolant
$10u^3-15u^4+6u^5$ on $[0,1]$, extended as zero for
$u\le0$ and one for $u\ge1$. This choice is $C^2$ across
the joins. Define

\begin{equation}\label{eq:smooth-activation}
\Gamma(t)=S_5(4t-1),\qquad
g_i(w)=\Gamma(|w_i|).
\end{equation}
 Although $|w_i|$ is not differentiable at zero, $g_i$ is
identically zero in a neighborhood of zero; the flat endpoints of
$S_5$ ensure that $g_i$ is globally $C^2$. Its first and second
derivatives vanish in the fully inactive and active regions (see Eq.~\eqref{eq:app-activation-derivatives}).

Using the activations \eqref{eq:smooth-activation}, we define the workspace well
$h(x-a(w))$, where
\begin{equation}\label{eq:workspace-well}
h(q)=4(\sqrt{1+\|q\|^2}-1),\qquad
a(w)=A\sum_{i=1}^T g_i(w)v_i,\qquad
a_{<i}(w)=A\sum_{j<i}g_j(w)v_j,
\end{equation}
and $A>0$ is a displacement scale to be fixed below.
For fixed $w$, the well is minimized at $x=a(w)$, with
\[
\nabla_x h(x-a(w))
=\frac{4(x-a(w))}{\sqrt{1+\|x-a(w)\|^2}}.
\]
Its gradient is globally bounded, and
\begin{equation}\label{eq:well-localization}
\begin{aligned}
\|\nabla_x h(x-a(w))\|\le\eta<4
&\quad\Longrightarrow\quad
(16-\eta^2)\|x-a(w)\|^2\le\eta^2\\
&\quad\Longrightarrow\quad
\|x-a(w)\|\le\frac{\eta}{\sqrt{16-\eta^2}}.
\end{aligned}
\end{equation}
The middle inequality follows by squaring the displayed gradient
formula and multiplying by $1+\|x-a(w)\|^2$.
Thus, a small workspace gradient requires proximity to
the moving center $a(w)$.

By \eqref{eq:workspace-well}, as $g_k$ changes from
zero to one, the center shifts by $Av_k$.
The gate construction below makes locating this new center
necessary, thereby imposing a constant-angle estimation task
for the hidden direction $v_k$ at each stage.

\paragraph{Gate correction.}
The workspace well \eqref{eq:workspace-well} introduces a direction-estimation task,
but does not by itself ensure that the algorithm \textbf{must} solve
it before proceeding. Activating a chain coordinate alone must not reveal its
successor. We cancel the successor contribution outside
the workspace gate, so any query that can expose the next
stage must first approach the current hidden center.

The sum $F_{\rm base}+h(x-a)$ still has a defect. Its chain term
$b_{k+1}$ can depend on $u_{k+1}$ as soon as $w_k$ is activated.
For example, on the branch $w_k>1/2$, \eqref{eq:chain-terms} gives
\[
b_{k+1}(w)
=-\Psi(w_k)\Phi(\langle u_{k+1},y(z)\rangle),
\]
even if the workspace query has not approached the new center.
Such values could reveal the next stage before the current
estimation problem is solved. 
We remove precisely this dependence until the query
reaches the appropriate workspace region:

\begin{equation}\label{eq:mean-definition}
\begin{gathered}
F_{\rm new}(z,x)=
\underbrace{P(z)+B_T(w)}_{F_{\rm base}(z)}
+\underbrace{h(x-a(w))}_{\text{moving well}}
-\underbrace{\sum_{i=1}^T[1-\chi(x-a_{<i}(w))]c_i(w)}_{\text{gate correction}},\\
c_i(w)=b_i(w)\tau(\|g_{\ge i}(w)\|),\qquad
g_{\ge i}(w)=(g_i(w),\ldots,g_T(w)).
\end{gathered}
\end{equation}

To localize the correction to nearly inactive suffixes
and the appropriate workspace region, we introduce
two smooth cutoffs. 
Using the previously defined $S_5$, define
\begin{equation}\label{eq:cutoff-profiles}
\tau(t)=1-S_5(2t/\rho_0-1),\qquad
\chi(q)=1-S_5(\|q\|^2/\beta^2-1).
\end{equation}
In \eqref{eq:mean-definition}, the suffix cutoff $\tau$ restricts the correction to
nearly inactive suffixes. The spatial cutoff satisfies
$\chi=1$ on the radius-$\beta$ ball and $\chi=0$ outside
the radius-$\sqrt2\beta$ ball. Thus the factor $1-\chi$
makes the correction vanish in the inner ball; where
$\tau=1$ and $\chi=0$, it cancels the corresponding chain term.

Using the past center $a_{<i}(w)$ ensures that
the $i$-th correction depends only on preceding
workspace directions $v_1,\ldots,v_{i-1}$,
without exposing $v_i,\ldots,v_T$.

Write $a_k=a_{<k}+Ag_kv_k$ for the center through stage $k$.
The stage-$k$ gate is defined by
\begin{equation}\label{eq:workspace-gate}
|w_k|\ge\tfrac12,\qquad
\|x-a_{<k}-Av_k\|<\sqrt2\beta.
\end{equation}

Suppose the future activations vanish, namely
$g_{k+1}=\cdots=g_T=0$. We show that accessing
the successor chain term, which depends on the
hidden direction $u_{k+1}$, requires hitting the
stage-$k$ gate \eqref{eq:workspace-gate}.

If $|w_k|<1/2$, the zero-chain itself gives
$b_{k+1}=0$. Otherwise, $g_k=1$ by
\eqref{eq:smooth-activation}, while
$\tau(\|g_{\ge k+1}\|)=\tau(0)=1$ implies
$c_{k+1}=b_{k+1}$. Since
$a_k=a_{<k}+Av_k$, a query outside the workspace
gate satisfies $\chi(x-a_k)=0$, yielding
\[
\underbrace{b_{k+1}}_{\text{from }B_T}
-\underbrace{[1-\chi(x-a_k)]b_{k+1}}_{\text{gate correction}}
=\chi(x-a_k)b_{k+1}=0.
\]

Thus, the successor contribution vanishes whenever
the gate is missed (see Eq.~\eqref{eq:app-failed-mean}).

Conversely, entering the gate yields
\[
\left\|\frac{x-a_{<k}}{A}-v_k\right\|
<\frac{\sqrt2\beta}{A}.
\]
Hence, locating the new center provides a
constant-accuracy estimate of $v_k$ once the
scales are fixed. The gate correction therefore
makes access to successor information contingent
on resolving the current-stage estimation task,
without requiring the algorithm to move continuously
between query locations.

\Cref{fig:coupled-geometry} collects the coupling between the two
coordinate blocks, with future activations set to zero.
Radial compression bounds the chain projections. A fully activated $w_k$
moves the workspace center by $Av_k$, but the successor still
appears only through $\chi(x-a_k)b_{k+1}$. Thus chain activation
alone does not expose the next hidden direction: the query must
also locate the workspace gate. The arrows show functional
dependence and the change of center, not a required query path.
\begin{figure}[!htbp]
\centering
\input{figures/coupled_geometry}
\caption{Hidden chain geometry and workspace authentication (schematic).}
\label{fig:coupled-geometry}
\end{figure}

\paragraph{Global regularity and stationarity.}
We now fix $\rho_0=3/32$, $\beta=16$, and $A=128$.
The condition $A>2\sqrt2\beta$ separates successive
workspace gates and ensures constant accuracy
in the direction estimate above.

We next verify that the gate correction does not
introduce spurious small-gradient points.
At any point, at most one correction contributes
nonzero derivatives through order two.\footnote{
An active correction at index $i$ requires
$\|g_{\ge i}\|<\rho_0$, since $\tau$ and its first
two derivatives vanish beyond $\rho_0$.
If another correction at $j>i$ were active,
the zero-chain property would require
$|w_{j-1}|>1/2$, hence $g_{j-1}=1$ by
\eqref{eq:smooth-activation}, contradicting
$\|g_{\ge i}\|<\rho_0<1$ (see Eq.~\eqref{eq:app-single-correction-support}).
}
For such a correction, the suffix cutoff requires
$\|g_{\ge i}\|\le\rho_0$, while the spatial correction
factor $1-\chi(x-a_{<i})$ can contribute only when
$\|x-a_{<i}\|>\beta$. By the orthonormality of the $v_j$,
\[
\|a-a_{<i}\|=A\|g_{\ge i}\|\le A\rho_0=12,
\qquad \|x-a_{<i}\|>\beta.
\]
The reverse triangle inequality therefore gives
\[
\|x-a\|\ge\|x-a_{<i}\|-\|a-a_{<i}\|
>\beta-A\rho_0=4.
\]
Consequently, the workspace well has gradient
magnitude greater than $16/\sqrt{17}$.
The correction contributes at most $3$ to the
workspace gradient by the cutoff derivative bounds
(see Eqs.~\eqref{eq:app-correction-amplitude} and \eqref{eq:app-spatial-cutoff-bounds}). Hence,
\[
\|\nabla_xF_{\rm new}\|
\ge\|\nabla h(x-a)\|
-\left\|\sum_i c_i\nabla\chi(x-a_{<i})\right\|
>\frac{16}{\sqrt{17}}-3>\frac12.
\]
Thus, the correction cannot create small-gradient
points: any sufficiently small-gradient point must
lie where the correction vanishes.

To recover the chain coordinates, one more step is needed.
A gradient norm at most $e_*$ first forces $\|x-a\|<1$,
where all correction derivatives vanish
(see \Cref{lem:global-geometry}). The chain rule then gives
\[
 \|\nabla F_{\rm base}\|
 \le(1+\|D_za\|)\|\nabla F_{\rm new}\|
 \le961e_*\le\frac15.
\]
The first inequality accounts for the moving center; the second
uses $\|D_za\|\le960$ and $e_*=1/4805$.
The base certificate \eqref{eq:base-localization} now gives $w_i>1$.
Hence all $g_i=1$ and $D_za=0$, which improves the base-gradient
bound to $e_*$ itself. These are the two localization passes (see
Eqs.~\eqref{eq:app-first-base-pass} and \eqref{eq:app-second-base-pass}).
Together with the workspace-well bound \eqref{eq:well-localization}, they give the
stationarity certificate (see Eqs.~\eqref{eq:stationary-certificate}
and \eqref{eq:app-terminal-distance}):

\begin{equation}\label{eq:overview-certificate}
\|\nabla F_{\rm new}\|\le e_*
\quad\Longrightarrow\quad
w_i>1\ (\forall i),\qquad
\left\|x-A\sum_i v_i\right\|
\le\frac{e_*}{\sqrt{16-e_*^2}}
\underbrace{<\sqrt{2}\beta}_{\text{terminal gate radius}}.
\end{equation}

The objective is also coercive, with initial gap $O(T)$,
global gradient bound $O(\sqrt T)$, and Hessian bound $O(1)$
(see \Cref{lem:global-geometry}, Eqs.~\eqref{eq:app-gap-identity},
\eqref{eq:app-full-gradient-bound}, and \eqref{eq:app-Hstar}).
The cutoff profiles have the required regularity
(see \Cref{lem:chain-derivatives,lem:global-geometry}).

Finally, \eqref{eq:overview-certificate} turns any
successful stationary-point output into a test of the stage-$T$ gate \eqref{eq:workspace-gate}
(a virtual test requiring no additional oracle call),
even if the algorithm guesses that output after
$N$ adaptive queries without ever querying the
output point.\footnote{The final output need not be an uninformed random guess.
An algorithm may make $N$ adaptive queries without hitting
the terminal gate, acquire information about the hidden
directions $u_i$ and $v_i$, and then output a new point
without querying it. Such an output may be substantially
more accurate than a random guess, so its probability
of hitting the terminal gate cannot be bounded by
the initial guessing probability alone.
The sequential information argument in
\Cref{sec:sequential} proves that this probability
remains small for insufficient query budgets,
even when the final output depends on the entire
adaptive query history.
}
Thus, even unrestricted algorithms must satisfy the
terminal gate condition to succeed on our hard instance,
regardless of their query history.

\subsection{Information bounds under sample reuse}\label{sec:gaussian}

The preceding construction provides $T$ stages
and associates each stage with a fresh hidden-direction
estimation task, whose success is certified by
hitting the stage gate \eqref{eq:workspace-gate}.
We now establish the query cost of passing such
a gate under reusable scalar observations.

\paragraph{Key idea.}
Reusing a sample label could allow an algorithm to cancel
noise and recover exact information about the hidden
direction. We prevent this by representing the hidden mean
and the Gaussian field through the same feature map.
Canceling the noise features then also cancels the hidden
mean. After subtracting the part determined by earlier replies,
the residual response has the Gaussian form \eqref{eq:normalized-response}.
Its information content is bounded, forcing
$\Omega(m\sigma^2)$ queries to identify one direction.

At stage $k$, suppose $g_{k+1}=\cdots=g_T=0$. The workspace-well
definition \eqref{eq:workspace-well} then gives
$a=a_{<k}+Ag_kv_k$. In coordinates centered at the preceding
stages, the one-stage response for label $j$ is
\begin{equation}\label{eq:stage-response}
\begin{gathered}
q=x-a_{<k}(w),\qquad g=g_k(w),\qquad v=v_k,\\
Y=\mu_v(q,g)+Q_j(q,g),\qquad \mu_v(q,g)=h(q-Agv).
\end{gathered}
\end{equation}
Here $\mu_v$ is the translated workspace well, not another potential.
The field $Q_j$ is centered, so $\mu_v$ is the mean of $Y$ at a
fixed query. Under sample reuse this is not a conditional-mean assertion. Reuse requires the conditional
response law derived below (see \Cref{lem:canonical-adaptive-kernel},
Eq.~\eqref{eq:app-conditional-kernel}).

To construct the one-stage field $Q_j$ in \eqref{eq:stage-response}, we first express $\mu_v$
through tensor features. Since $\|v\|=1$, setting
$S=1+\|q\|^2+A^2g^2$ gives
\[
\begin{aligned}
\mu_v(q,g)
&=4\left(\sqrt{S-2Ag\langle q,v\rangle}-1\right)\\
&=4(\sqrt S-1)
+4\sqrt S\sum_{\ell\ge1}
(-1)^\ell\binom{1/2}{\ell}
\left\langle\frac{2Agq}{S},v\right\rangle^\ell.
\end{aligned}
\]
The expansion holds globally because $0\le g\le1$ and
\[
 \frac{2Ag\|q\|}{1+\|q\|^2+A^2g^2}
 \le\frac{Ag}{\sqrt{1+A^2g^2}}
 \le\frac{A}{\sqrt{1+A^2}}<1.
\]
The first inequality uses
$2\|q\|\sqrt{1+A^2g^2}\le\|q\|^2+1+A^2g^2$;
the second uses monotonicity in $g\in[0,1]$.
Using the tensor identity
$\langle q,v\rangle^\ell
=\langle q^{\otimes\ell},v^{\otimes\ell}\rangle$
and the feature definitions (see Eq.~\eqref{eq:feature}),
we obtain
\begin{equation}\label{eq:mean-feature}
\mu_v(q,g)=4(\sqrt S-1)
+\langle\theta(v),\phi(q,g)\rangle_{\cH},
\qquad Q_j(q,g)=
\underbrace{G_j(\phi(q,g))}_{\text{shared-feature Gaussian noise}}.
\end{equation}
The tensor representation has the required derivative bound
for $\theta$, and the sample fields admit one simultaneous
differentiable version (see \Cref{lem:tensor-shift,lem:field-regularity}).

Here $G_j$ is an isonormal Gaussian process on $\cH$.
Both the hidden mean and the one-stage field in
\eqref{eq:mean-feature} use the same feature vector $\phi(q,g)$.

The nonlinear features are necessary here. 
Affine noise in $q$ would disappear from a same-sample
second difference,\footnote{For affine noise $Q_j(q)=\langle\xi_j,q\rangle+c_j$,
three queries using the same sample label $j$ yield
\[
Q_j(q+s)-2Q_j(q)+Q_j(q-s)=0,
\]
while the corresponding second difference of
$h(q-Agv)$ need not vanish and may still depend
on $v$. Thus, only three oracle calls suffice to
obtain a noise-free, potentially informative
second difference. 
} while the
one-stage mean $\mu_v$ in \eqref{eq:stage-response} could still reveal $v$. Instead, the power-series
expansion supplies a noise feature for every hidden mean feature.
The expansion is exact on the whole workspace, so its use does not
require a bound on the algorithm's query radius.

Indeed, suppose a linear combination of queries with one label
cancels its feature vectors. After subtracting the public term
$4(\sqrt S-1)$, that same combination cancels the hidden mean:
\[
 \sum_a c_a\phi(q_a,g_a)=0
 \quad\Longrightarrow\quad
 \sum_a c_a\langle\theta(v),\phi(q_a,g_a)\rangle=0.
\]
Thus canceling the field does not leave a noiseless equation for
$v$. This identity motivates the field design. The regression
argument below extends it from a fixed linear combination to a
fully adaptive sequence of evaluations.

For each sample label $j$, the Gaussian field
$Q_j(q,g)=G_j(\phi(q,g))$ is defined jointly
over all admissible query points $(q,g)$, using
the same realization of $G_j$ across repeated
evaluations. Orthogonal Gaussian regression gives its conditional
response law under adaptive queries (see
\Cref{lem:canonical-adaptive-kernel}, Eq.~\eqref{eq:app-conditional-kernel}). 
Condition on the realized history. Suppose the next
query uses sample label $j$, with feature
$\phi=\phi(q,g)$.

If $j$ has not been queried before, the entire
feature is new, so $\psi=\phi$. Otherwise, let
$\phi_a$ denote its previously queried features and decompose
\[
\phi=\sum_a c_a\phi_a+\psi,
\qquad
\psi\perp\operatorname{span}\{\phi_a\}.
\]
Thus, $\psi$ represents the part of the current
feature not covered by previous queries with
the same label. Here the coefficients $c_a$ represent the projection
onto the previously queried features. Let $Y$ and $Y_a$ be the current and previous one-stage responses \eqref{eq:stage-response}, and
let $S$ and $S_a$ be their respective public quantities.
Subtracting the public terms and the corresponding
previous responses gives the same-label residual response
$\widetilde Y$:
\[
\begin{aligned}
\widetilde Y
=\bigl[Y-4(\sqrt S-1)\bigr]
  -\sum_a c_a
   \bigl[Y_a-4(\sqrt{S_a}-1)\bigr]
=\langle\theta(v),\psi\rangle+G_j(\psi).
\end{aligned}
\]
The same-label residual response $\widetilde Y$ has conditional law
(see \Cref{lem:canonical-adaptive-kernel}, Eq.~\eqref{eq:app-cond-mean}):
\[
\widetilde Y\mid\text{past},v
\sim\mathcal N\bigl(
\langle\theta(v),\psi\rangle,\|\psi\|^2
\bigr).
\]
If $\psi=0$, the response is already determined by the history.
Otherwise, normalization gives
\begin{equation}\label{eq:normalized-response}
\frac{\widetilde Y}{\|\psi\|}
=\left\langle\theta(v),\frac{\psi}{\|\psi\|}\right\rangle+Z,
\qquad
Z\mid\text{past},v\sim\mathcal N(0,1).
\end{equation}

When $\psi\ne0$, the normalized residual response
\eqref{eq:normalized-response} contains a standard Gaussian term
not determined by earlier replies. Together with the derivative bound
below, this lets us uniformly
control the information gained from each additional query,
regardless of how often the same sample label is reused.

To quantify this per-query information bound, we
control the conditional likelihood score of the
normalized residual response \eqref{eq:normalized-response}. Conditioning on the realized
history fixes the chosen feature, even though the
query may depend on $v$ through previous responses.
Thus, the conditional mean depends on $v$ only
through $\theta(v)$. The tensor coefficients give the following
derivative control (see \Cref{lem:tensor-shift},
Eq.~\eqref{eq:app-parameter-deriv}):
\begin{equation}\label{eq:theta-deriv}
\|D\theta(v)\|_{\rm op}^2\le20m/\sigma^2,
\end{equation}

where $D\theta(v)$ is the Fr\'echet derivative,
$\|\cdot\|_{\rm op}$ denotes the operator norm,
and $\sigma$ controls the Gaussian noise amplitude.

For $\psi\ne0$, the conditional score of \eqref{eq:normalized-response} is $Z$ times the tangent
gradient of the normalized mean (see Eq.~\eqref{eq:app-conditional-score}).
Since $\E[Z^2\mid\text{past},v]=1$, this gives
(see Eq.~\eqref{eq:app-conditional-score-budget}):
\[
\E\!\left[
\|\nabla_v\log p_v(\text{reply}\mid\text{past})\|^2
\mid\text{past},v
\right]
=
\frac{\|D\theta(v)^*\psi\|^2}{\|\psi\|^2}
\le\|D\theta(v)\|_{\rm op}^2
\le\frac{20m}{\sigma^2}.
\]
Here $p_v$ denotes the conditional response density,
and the adjoint is restricted to the tangent space
of the sphere. The middle inequality is the operator-norm bound;
the last is \eqref{eq:theta-deriv}. If $\psi=0$, the reply is
history-determined and contributes zero score, so no division is needed.
The same bound applies to all adaptive
queries, regardless of sample reuse or the distance
between query points.

We now pass from a single response to the full
transcript $\mathcal T_n$ of $n$ queries, including the algorithm's
independent random seed. At a fixed seed and history, query and
label choices are fixed. The conditional-likelihood chain rule therefore gives
(see \Cref{thm:tool-adaptive-score})
\[
\nabla_v\log p_v(\mathcal T_n)
=
\sum_{t=1}^n
\nabla_v\log p_v(Y_t\mid\mathcal T_{t-1}),
\]
where $\mathcal T_{t-1}$ is the earlier part of the same transcript,
before query $t$. Each conditional score has zero
conditional mean, so these scores form a
martingale difference sequence (see Eq.~\eqref{eq:app-score-martingale}). Their cross
terms therefore vanish in expectation (see Eq.~\eqref{eq:app-total-score}), yielding
\[
\E\|\nabla_v\log p_v(\mathcal T_n)\|^2
=
\sum_{t=1}^n
\E\|\nabla_v\log p_v(Y_t\mid\mathcal T_{t-1})\|^2
\le\frac{20mn}{\sigma^2}.
\]

Finally, suppose $v$ is uniform on the unit sphere in $s$
dimensions. The transcript changes this uniform prior into a
posterior distribution. Its density is proportional to the
likelihood $p_v(\mathcal T_n)$; the normalizing factor does not
depend on $v$. Thus the posterior's logarithmic derivative is
the same score that we have just bounded. The spherical logarithmic
Sobolev inequality (see \Cref{thm:tool-lsi}) bounds the posterior's
entropy relative to the uniform prior by this squared derivative.
Averaging over transcripts gives
\begin{equation}\label{eq:transcript-information}
I(v;\mathcal T_n)
\le\frac{1}{2(s-2)}
\E\|\nabla_v\log p_v(\mathcal T_n)\|^2
\le\frac{10mn}{(s-2)\sigma^2}.
\end{equation}
Thus, the information acquired about $v$ grows at most linearly
with the number of queries, even under adaptive sample reuse.
The same argument admits a common likelihood and a regular
posterior even with deterministic replies (see
\Cref{lem:posterior-information}, Eq.~\eqref{eq:app-mutual-info}).

To connect \eqref{eq:transcript-information} to optimization, write the stage gate \eqref{eq:workspace-gate} in the coordinates \eqref{eq:stage-response}: a hit has $g=1$ and $\|q-Av\|<\sqrt2\beta$. The algorithm
knows $q$ before receiving the reply. Since $\sqrt2\beta<A$,
a hit implies $q\ne0$, and its unit guess $\widehat v=q/\|q\|$
satisfies
\begin{align*}
 \|\widehat v-v\|
 &\le\left\|\frac q{\|q\|}-\frac qA\right\|
       +\left\|\frac qA-v\right\|
 =\frac{|A-\|q\||+\|q-Av\|}{A}
 \le\frac{2\|q-Av\|}{A},\\
 \langle\widehat v,v\rangle
 &=1-\tfrac12\|\widehat v-v\|^2
 >1-\frac{4\beta^2}{A^2}=\frac{15}{16}.
\end{align*}
The first line uses the triangle and reverse triangle inequalities;
the second uses unit norms and $A=128$, $\beta=16$. For $q=0$
choose any fixed unit vector, since that query cannot hit the gate.
Thus every successful spatial test produces a transcript-measurable
direction guess. Spherical cap concentration and binary data processing
can now be applied to these actual guesses.

Lists are needed because every query can attempt a gate hit,
and the final output supplies another attempt. To see the cost,
compare the actual experiment with one in which $v$ is independent
of the transcript. In the independent experiment, the transcript
fixes the list. A union bound over the $15/16$ spherical caps,
together with the list-size condition below, makes its success
probability exponentially small in $s$.
Binary data processing therefore gives, for a universal
$c'>0$,
\[
 c's\,\PP(\text{some list guess succeeds})
 \le I(v;\mathcal T_n)+\log2
 \le\frac{10mn}{(s-2)\sigma^2}+\log2.
\]
The first inequality says that making an exponentially unlikely
event occur with constant probability requires order-$s$
information; the second is \eqref{eq:transcript-information}.
When $s\asymp m$, constant success probability consequently
requires $n=\Omega(m\sigma^2)$. Keeping the constants gives the
following theorem.

\begin{theorem}[Reusable scalar information bound]\label{thm:reuse}
Let $v$ be uniform on the unit sphere of a known $s$-dimensional
subspace of $\R^m$, $s\ge4$. After $n$ adaptive evaluations of
$\mu_v+Q_j$, with arbitrary label openings and reuse, any list of
$M_{\rm list}$ transcript-measurable unit guesses satisfies
\begin{equation}\label{eq:reuse-bound}
 \PP\left(\max_{a\le M_{\rm list}}\langle v,\widehat v_a\rangle
                    \ge15/16\right)
 \le\frac{64mn}{s^2\sigma^2}+\frac9{4s},
 \qquad \log M_{\rm list}\le s/8.
\end{equation}
Initial auxiliary information must be independent of $v$ and its
fields. In particular, for $s\asymp m$, recovery costs
$\Omega(m\sigma^2)$ evaluations.
\end{theorem}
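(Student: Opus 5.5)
The plan is to follow the two-step route sketched before the statement: first bound the mutual information between $v$ and the transcript $\mathcal T_n$, then convert information into a success-probability bound through a Fano-type binary data-processing inequality applied to the list. For the information step, condition on the algorithm's seed and the realized history. The next query then has a fixed label $j$ and feature $\phi$. By \Cref{lem:canonical-adaptive-kernel}, the same-label residual $\widetilde Y$ obtained by orthogonal projection onto the earlier features of label $j$ is $\mathcal N(\langle\theta(v),\psi\rangle,\|\psi\|^2)$ given past and $v$, or it is deterministic when $\psi=0$. Its conditional score is $Z\,D\theta(v)^*\psi/\|\psi\|$, restricted to the tangent space of the sphere in the known subspace. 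Its conditional second moment is therefore at most $\|D\theta(v)\|_{\rm op}^2\le 20m/\sigma^2$ by \eqref{eq:theta-deriv}. The chain rule for adaptive likelihoods (\Cref{thm:tool-adaptive-score}) writes the transcript score as a martingale sum, so cross terms vanish and the total Fisher information is at most $20mn/\sigma^2$.

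Next I would identify the posterior density with respect to the uniform measure on $\sph^{s-1}$ as $p_v(\mathcal T_n)/\int p_{v'}(\mathcal T_n)\,dv'$. Its log-gradient is then the transcript score. The spherical logarithmic Sobolev inequality (\Cref{thm:tool-lsi}, constant $s-2$) bounds the posterior-to-prior KL divergence by $\frac{1}{2(s-2)}$ times the relative Fisher information. Averaging over transcripts gives \eqref{eq:transcript-information}, namely $I(v;\mathcal T_n)\le 10mn/((s-2)\sigma^2)$. The regularity issues (differentiability of the likelihood, a common dominating measure even with deterministic replies) I would take from \Cref{lem:posterior-information}. Auxiliary initial information independent of $v$ and the fields can be absorbed into the seed.

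For the conversion step, let $E$ be the event that some guess $\widehat v_a$, $a\le M_{\rm list}$, has $\langle v,\widehat v_a\rangle\ge15/16$. Under the product law of $v$ and the transcript, \Cref{thm:tool-sphere} bounds each cap's probability by $\exp(-c s)$, with $c$ explicit, for example $(15/16)^2/2$ from the standard cap bound $e^{-st^2/2}$. A union bound with $\log M_{\rm list}\le s/8$ gives $q:=\PP_{\rm ind}(E)\le e^{-c's}$. Binary data processing gives $p\log(1/q)\le I(v;\mathcal T_n)+\log 2$ with $p=\PP(E)$, so $p\le (I+\log2)/(c's)$. Using $s-2\ge s/2$ for $s\ge4$, this becomes $p\le 20mn/(c's^2\sigma^2)+\log2/(c's)$.

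The main obstacle is purely quantitative: making $c'$ large enough to reach the stated constants $64$ and $9/4$. This requires $c'\ge 5/16$ for the first term and $\log 2/c'\le 9/4$. I would get this by checking the cap exponent $\log(1/q)\ge s\bigl(\tfrac12(15/16)^2-\tfrac18\bigr)-O(1)\approx 0.314\,s$, which is just enough for the first term. For $s\ge4$ a slightly sharper cap tail, such as $(1-t^2)^{(s-1)/2}$, or a careful treatment of the additive $O(1)$, would be needed, and the leftover slack goes into the $9/(4s)$ term. The final claim then follows by taking $s\asymp m$: constant success probability forces $n\gtrsim s^2\sigma^2/m\asymp m\sigma^2$.
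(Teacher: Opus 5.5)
Your proposal is correct and is essentially the paper's proof. The paper likewise takes the information bound $I(v;\mathcal T_n)\le 10mn/((s-2)\sigma^2)$ from \Cref{lem:posterior-information}, then applies the cap and union bound under the product law, and finishes with binary data processing. The quantitative obstacle you anticipate does not arise: the cap tail $e^{-225s/512}$ has no prefactor, so $\log(1/q)\ge 225s/512-s/8=161s/512$ exactly, and $10240/161<64$ and $512\log 2/161<9/4$ give the stated constants with no sharper tail needed. The only extra care the paper takes is the degenerate case $q=0$, where finite mutual information forces $p=0$.
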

Spherical-cap concentration and binary data processing give the
stated list bound (see the \hyperref[proof:reuse-list]{proof of
\Cref*{thm:reuse}}, Eq.~\eqref{eq:app-list-recovery}).
The underlying sample construction, conditional regression under
reuse, and transcript information are verified separately
(see \Cref{lem:field-regularity,lem:canonical-adaptive-kernel,lem:posterior-information}).

The two ingredients are now explicit: \eqref{eq:overview-certificate}
forces the terminal gate, while \Cref{thm:reuse} gives the
$\Omega(m\sigma^2)$ cost for the one-stage response
\eqref{eq:stage-response}. Their sequential combination is proved
in \Cref{sec:sequential}.

\subsection{Global regularity and finite-sample realization}\label{sec:regularity}\label{sec:assembly}

Before establishing the multiplicative lower bound, we must
verify that the construction can be realized as a valid
stochastic optimization instance. 
In particular, we need
to construct the sample functions actually queried by
the algorithm, ensure that their population mean agrees
with the hard objective $F_{\rm new}$ up to scaling,
and preserve the information hardness of $T$ independent
stages without accumulating an additional factor of $T$
in the global Lipschitz budget.
Crucially, this realization must also preserve the
Gaussian information hardness under adaptive queries
and sample reuse.
\paragraph{Key idea.}
Independent stagewise noise must not cause the global
sample Lipschitz budget to grow with the number of stages.
Suffix cutoffs ensure that the joint derivative field
has bounded complexity, giving the global estimate
$\E\Lip(N_j)^2=O(\sigma^2)$ (see \Cref{thm:noise-moment},
Eq.~\eqref{eq:app-global-chaining}).
Antithetic sample pairs preserve the population mean
exactly, while hidden label permutations prevent
premature cancellation of their noise.

For sample label $j$ and stage $i$, let $Q_{j,i}$ be an independent
copy of the one-stage field $Q_j$ in \eqref{eq:mean-feature}.
Each $Q_{j,i}$ remains fixed under label reuse. Their combined
noise field is

\begin{equation}\label{eq:multi-noise}
N_j(w,x)=\sum_{i=1}^{T}
\underbrace{\tau(\|g_{>i}\|)}_{\text{suffix cutoff}}
\underbrace{Q_{j,i}(x-a_{<i},g_i)}_{\text{stagewise Gaussian noise}}.
\end{equation}

Here $g_{>i}=(g_{i+1},\ldots,g_T)$ contains the later activations
from \eqref{eq:smooth-activation}, and $a_{<i}$ is the preceding
center in \eqref{eq:workspace-well}. The suffix cutoff $\tau$ is
the same function as in \eqref{eq:cutoff-profiles}; in the noise
field it acts on $g_{>i}$, whereas the correction $c_i$ in
\eqref{eq:mean-definition} uses $g_{\ge i}$.

The two factors suppress noise from different stages.
If later stages become sufficiently active,
$\|g_{>i}\|\ge\rho_0$, the suffix cutoff vanishes
by \eqref{eq:cutoff-profiles}, removing the noise
from stage $i$. Conversely, when stage $i$ is
inactive, $g_i=0$, every tensor feature contains
a positive power of $g_i$, so $\phi(q,0)=0$ and
hence $Q_{j,i}(q,0)=G_{j,i}(0)=0$.
Thus, historical noise is switched off, while
future inactive stages contribute no noise values.

To control the remaining fields, fix $(w,x)$
and let $j_0$ be the smallest index satisfying
$\|g_{>j_0}\|<\rho_0$; such an index exists since
$g_{>T}$ is empty. For every $i<j_0$,
$\|g_{>i}\|\ge\rho_0$, so the corresponding
cutoff and its derivatives through order two
vanish. The remaining activations satisfy
\begin{equation}\label{eq:noise-suffix-control}
\begin{gathered}
\sum_{i\ge j_0}g_i^2
=g_{j_0}^2+\|g_{>j_0}\|^2\le1+\rho_0^2,\\
\left\|a_{<k}-a_{<j_0}\right\|
=\left\|A\sum_{j_0\le i<k}g_iv_i\right\|
\le A\sqrt{1+\rho_0^2}\qquad(k\ge j_0).
\end{gathered}
\end{equation}
The first bound uses $0\le g_{j_0}\le1$; the second uses
orthonormality in the center definition \eqref{eq:workspace-well}.
Hence, although many remaining stages may have
small nonzero activations, their total squared
activation is bounded independently of $T$,
and their moving centers stay within a uniformly
bounded distance of the common center $a_{<j_0}$.

To establish a global Lipschitz bound for $N_j$,
we need to control its gradient uniformly over
all query locations:
\begin{equation}\label{eq:lip-gradient}
\Lip(N_j)=\sup_{w,x}\|\nabla N_j(w,x)\|.
\end{equation}
The fields have a simultaneous differentiable version and a
controlled derivative process (see
\Cref{lem:field-regularity,thm:noise-moment}). Conditional on the hidden frames,
$N_j$ is a sum of Gaussian fields, so its gradients across query
locations form a jointly
Gaussian process. We control the directional derivatives of this
same field $N_j$ in \eqref{eq:multi-noise}. Rather than bounding the
Lipschitz constant of each stage separately
and summing the results, we directly control
the supremum of this combined process.

For the derivative estimate, independence is useful only after the
stage contributions are combined in their common Hilbert space.
In particular, a field with zero value can still have a nonzero
activation derivative:
\[
 Q(q,g)=gG(q)
 \quad\Longrightarrow\quad Q(q,0)=0,
 \qquad \partial_gQ(q,0)=G(q).
\]
Thus, we must account for derivative contributions from all stages,
including those with zero noise value. The squared activation bound
\eqref{eq:noise-suffix-control}, together with $\sum_i h_i^2=1$ for a unit activation
perturbation $h$, controls the sum of the squared derivative
coefficients.\footnote{
For example, let
$N(g)=\frac{\sigma}{\sqrt m}\sum_{i=1}^T g_iZ_i$,
where $Z_i\overset{\rm i.i.d.}{\sim}\mathcal N(0,1)$.
For any unit direction $u\in\mathbb R^T$,
$\sum_i u_i^2=1$, and independence gives
\[
\E|D_gN(g)[u]|^2
=\frac{\sigma^2}{m}
\E\left(\sum_{i=1}^T u_iZ_i\right)^2
=\frac{\sigma^2}{m}\sum_{i=1}^T u_i^2
=\frac{\sigma^2}{m}.
\]
Thus, the variance of each unit directional derivative
does not acquire an additional factor of $T$,
since independent Gaussian contributions combine
through squared coefficients.
}
 The joint estimate includes the suffix cutoffs and moving
centers (see \Cref{thm:noise-moment},
Eq.~\eqref{eq:app-joint-feature-budget}).

The remaining difficulty is the supremum over the unbounded
workspace. Using the center-shift bound \eqref{eq:noise-suffix-control}, write
\begin{equation}\label{eq:workspace-compactification}
 x'=x-a_{<j_0},\qquad
 s_c=\frac1{\sqrt{1+\|x'\|^2}},\qquad
 e_c=\frac{x'}{\sqrt{1+\|x'\|^2}}.
\end{equation}
This workspace compactification differs from the bounded chain map \eqref{eq:soft}. It satisfies $s_c^2+\|e_c\|^2=1$ and covers every query
location in a bounded hemisphere. Locations tending to infinity
approach its boundary $s_c=0$; no query is excluded. The compactified feature formulas have denominators uniformly
bounded away from zero, so their first two derivatives remain
bounded up to that boundary (see Eqs.~\eqref{eq:app-compact-coordinates},
\eqref{eq:app-degree-feature}, \eqref{eq:app-compact-denominator},
and \eqref{eq:app-perstage-Hilbert}).

Each directional derivative
$DN_j(w,x)[u]=\langle\nabla N_j(w,x),u\rangle$
is a scalar Gaussian random variable. By the
duality of the Euclidean norm,
\[
\|\nabla N_j(w,x)\|
=\sup_{\|u\|=1}\langle\nabla N_j(w,x),u\rangle.
\]
Consequently, by \eqref{eq:lip-gradient},
\[
\Lip(N_j)
=\sup_{\substack{w,x\\\|u\|=1}}
DN_j(w,x)[u].
\]

Expressing the gradient norm as a supremum of
directional derivatives turns the global Lipschitz
bound into the supremum of a scalar Gaussian
process, allowing us to apply Gaussian chaining.
The feature estimates therefore control both
the variance of each Gaussian directional derivative
and the variance of its increments between nearby
parameters.
To bound this supremum, approximate the derivative process on
progressively finer grids.
There are $T$ possible first surviving suffixes. For a fixed suffix,
the activations, workspace coordinates \eqref{eq:workspace-compactification}, and unit direction lie in
a bounded set of dimension $O(m+T)$. At mesh size $2^{-n}$ this
set has at most $\exp[C(m+T)(n+1)]$ representative points.
The feature derivative bounds give standard deviation at most
$C\sigma 2^{-n}/\sqrt m$ for the difference between a mesh point
and its nearest representative on the preceding mesh. Thus finer meshes contain more points, but their
Gaussian increments become geometrically smaller.

The $L^2$ norm of a maximum of $K$ centered Gaussian variables is
at most $C\sqrt{\log(2K)}$ times their largest standard deviation
(see \Cref{thm:tool-gaussian}). Applying this at each mesh and adding
the increments gives
\[
 \bigl(\E\Lip(N_j)^2\bigr)^{1/2}
 \le\frac{C\sigma}{\sqrt m}
 \sum_{n\ge0}2^{-n}
       \sqrt{\log(2T)+(m+T)(n+1)}
 \le C\sigma\sqrt{\frac{m+T+\log(2T)}m}.
\]
The logarithm counts the $T$ suffixes and the mesh points.
The last inequality uses $\sqrt{a+b}\le\sqrt a+\sqrt b$ and
$\sum_{n\ge0}2^{-n}\sqrt{n+1}<\infty$.
This is the chaining argument: it uses the correlations between
nearby derivatives instead of adding all stagewise Lipschitz
constants. The same estimate passes from the finite meshes to the actual
infinite-series sample fields (see \Cref{thm:noise-moment},
Eq.~\eqref{eq:app-global-chaining}). Squaring the bound gives
\begin{equation}\label{eq:noise-moment}
 \E\Lip(N_j)^2
 \le C\sigma^2\frac{m+T+\log(2T)}m
 =O(\sigma^2),\qquad m\ge T.
\end{equation}
The estimates are uniform over the hidden frames.
This bounds the largest slope over the whole query space and
allows us to use noise of scale $\sigma$ at every stage
without multiplying the sample Lipschitz budget by $T$.

We now realize the noise using finitely many sample functions.
A finite collection of mean-zero fields need not have zero
empirical mean. To obtain the desired mean for every realization,
we use each field with both signs. Take independent copies
$N_1,\ldots,N_M$ and define
\begin{equation}\label{eq:scaled-samples}
 f_{j,\pm}(Z,X)=\lambda\left[
 F_{\rm new}(Z/b,X/b)
 \pm N_j(U^\top y(Z/b),X/b)\right].
\end{equation}
The scales $b,\lambda>0$ are chosen in \eqref{eq:physical-scales}.
Cancellation within each pair gives
\[
 \frac1{2M}\sum_{j=1}^M(f_{j,+}+f_{j,-})
 =\lambda F_{\rm new}(Z/b,X/b).
\]
Thus the finite population mean is exactly the rescaled hard
objective, independently of the realized Gaussian coefficients.
The sample-moment bound is preserved as well
(see \Cref{lem:finite-population}).

A known pair would let the algorithm cancel its noise using two
queries at the same point. We therefore permute the $2M$ public
labels and keep their pairing hidden. Every label still denotes
one fixed sample function and may be reused arbitrarily. Until both members of an antithetic pair are queried, its
observations couple to an oracle assigning independent fields to
distinct labels and retaining each field under reuse
(see \Cref{lem:static-coupling}).
We apply the Gaussian information bounds to the unconditioned
independent-label experiment, then add the coupling error; we do
not condition that experiment on avoiding a collision.

The hidden permutation also controls adaptive label selection.
Before a collision, each opened label has only one opposite-sign
partner, hidden among the unopened labels by the uniform
permutation. In $N\le M$ queries there are at most $N$ opened
labels and at least $M$ unopened labels. Each opening therefore
has conditional collision probability at most $N/M$; reusing a
label opens no new partner. A union bound gives $N^2/M$.
Deferred matching preserves this conditional symmetry
(see Eq.~\eqref{eq:app-partner}). Increasing $M$ reduces this error without
altering the pairwise cancellation or the sample-moment estimate.
The next subsection chooses $M$ and the remaining parameters.

\subsection{Sequential query complexity}\label{sec:sequential}

The stationarity certificate \eqref{eq:overview-certificate}
forces every chain coordinate past $1$ and the output into the
stage-$T$ gate \eqref{eq:workspace-gate}. It does not require the
algorithm to traverse the stages sequentially.
To obtain a multiplicative lower bound, we must show
that unrestricted adaptive queries cannot reveal future
directions prematurely, and that unusually short stages
cannot collectively complete the chain with high probability.

\paragraph{The prefix simulator.}
Although the static objective has a stagewise structure,
unrestricted queries may reveal future hidden directions,
so the assumptions of the single-stage information bound
cannot be directly guaranteed at the start of each stage.
To overcome this, the prefix simulator maintains a current
stage $k$ and answers using the revealed prefix and the
current-stage information, setting future activations to zero.
This ensures that each new stage retains the conditional
randomness required by \Cref{thm:reuse}.
The simulator is an auxiliary experiment; the actual objective
remains fixed. The coupling below shows that the simulated
replies agree with the static oracle along the algorithm's
queries, except on a controlled future-projection event,
thereby transferring the stagewise information lower bound
back to the original construction.

Figure~\ref{fig:prefix-simulator} separates the two operations.
The static oracle evaluates the future activations at the query;
the simulator sets them to zero. When the actual future activations
are zero, the two replies agree using the same retained sample fields.
The gate determines whether the next query stays at stage $k$ or
starts stage $k+1$.

The two response rules follow from setting
$g_{k+1}=\cdots=g_T=0$.

On a failed test of the stage gate \eqref{eq:workspace-gate},
the simulator uses the mean \eqref{eq:mean-definition} with its
successor term cancelled (see Eq.~\eqref{eq:app-failed-mean}):
\begin{equation}\label{eq:prefix-failed-mean}
\begin{aligned}
P(z)+\sum_{i\le k}b_i(w)
-\sum_{i\le k}[1-\chi(x-a_{<i})]c_i(w)
+h(x-a_{<k}-Ag_kv_k).
\end{aligned}
\end{equation}
The workspace-well term from \eqref{eq:workspace-well} retains information about $v_k$,
whereas the successor chain term is canceled:
\[
b_{k+1}-[1-\chi(x-a_k)]b_{k+1}
=\chi(x-a_k)b_{k+1}=0
\qquad\text{if }|w_k|<\frac12
\text{ or }\|x-a_k\|\ge\sqrt2\beta.
\]
Indeed, the simulator inherits the zero-chain and
gate-cutoff properties of $F_{\rm new}$,
either $|w_k|<1/2$, so $b_{k+1}=0$,
or $\|x-a_k\|\ge\sqrt2\beta$, so $\chi(x-a_k)=0$.
Thus, the response contains no information about
the successor chain direction $u_{k+1}$.

At the first gate hit, $g_k=1$. For $i\le k$,
$\|g_{\ge i}\|\ge g_k=1>\rho_0$, so the suffix cutoff gives
$c_i=0$. The zero future activations give
$c_{k+1}=b_{k+1}$ when $k<T$. The crossing mean is therefore
(see Eq.~\eqref{eq:app-crossing-mean}):
\begin{equation}\label{eq:prefix-crossing-mean}
P(z)+\sum_{i\le k}b_i(w)
+h(x-a_k)+\chi(x-a_k)b_{k+1}(w).
\end{equation}
The last term is absent when $k=T$.
Otherwise, the simulator uses $u_{k+1}$ to
evaluate $b_{k+1}$, while keeping future
activations zero. It then reveals $v_k$ and
$u_{k+1}$ and advances to stage $k+1$.
These free disclosures strengthen the algorithm, so a lower bound
with them also applies to the original scalar oracle
(see \Cref{lem:prefix-simulator}).

\begin{figure}[!ht]
\centering
\resizebox{\linewidth}{!}{\input{figures/prefix_simulator}}
\caption{Prefix simulation. Top: query-dependent future activations
versus prescribed zeros; the replies agree when the actual future
activations vanish. Both rows retain the same $Q_{j,k}$ under label
reuse. Bottom ($g_k=1$): a failed gate keeps stage $k$; a hit reveals
$v_k,u_{k+1}$ after the reply. The last gate marks completion.}
\label{fig:prefix-simulator}
\end{figure}

For sample label $j$, the simulator obtains
the prefix noise by setting $g_i=0$ for $i>k$
in the combined noise field \eqref{eq:multi-noise}. Since
$Q_{j,i}(q,0)=0$ and $\tau(0)=1$, this gives
\begin{equation}\label{eq:prefix-noise}
\begin{aligned}
N_j^{(k)}(w,x)
={}\sum_{i<k}
\tau\bigl(\|(g_{i+1},\ldots,g_k)\|\bigr)
Q_{j,i}(x-a_{<i},g_i)
+Q_{j,k}(x-a_{<k},g_k).
\end{aligned}
\end{equation}
The first sum contains the Gaussian fields of
earlier stages, whose contributions can be
removed after revealing the completed-stage
information. Subtracting these contributions
and the known prefix terms from the outside-gate
response leaves
\[
h(x-a_{<k}-Ag_kv_k)
+Q_{j,k}(x-a_{<k},g_k).
\]
With the substitutions in \eqref{eq:stage-response}, this is
the same one-stage response used by \Cref{thm:reuse}; its field is
the stage-$k$ copy $Q_{j,k}$ from \eqref{eq:multi-noise}.
The Gaussian fields are independent across
stages, whereas repeated queries with the same
sample label reuse the same field. Thus,
previous-stage observations do not replace
the information needed to estimate the current
hidden direction $v_k$.

To compare the prefix response with the static oracle, suppose
that the undisclosed chain coordinates satisfy
\[
|\langle u_i,y(z)\rangle|<\frac14
\qquad\text{for all }i>k.
\]
The cutoff definition gives $g_i=\Gamma(|w_i|)=0$ for every $i>k$.
Substituting these zeros into the center, noise, and chain terms gives
\[
\begin{aligned}
a(w)&=a_{<k}+Ag_kv_k=a_k,\\
Q_{j,i}(\cdot,g_i)&=0 &&(i>k),\\
b_i(w)&=0 &&(i\ge k+2),\\
b_{k+1}-[1-\chi(x-a_k)]c_{k+1}
&=\chi(x-a_k)b_{k+1}.
\end{aligned}
\]
The first identity uses the center definition; the second uses
$Q(q,0)=0$; the third is the zero-chain cutoff. In the last identity,
$c_{k+1}=b_{k+1}\tau(0)=b_{k+1}$. The suffix weights also reduce to
those in $N_j^{(k)}$. Thus the static and simulated replies agree
term by term, with the same Gaussian fields.

It remains to show that the required projection
condition holds with high probability along the
algorithm's adaptive queries. Since the prefix
simulator does not reveal undisclosed chain
directions, they remain conditionally Haar-distributed
given the preceding simulated transcript, by deferred revelation (see \Cref{lem:static-coupling}).
Together with $\|y(z)\|\le R$, conditional spherical concentration
gives the following bound for every query
(see \Cref{thm:tool-sphere}):

\[
\Pr\left(
|\langle u_i,y(z)\rangle|\ge\frac14
\,\middle|\,\text{past}
\right)
\le
2\exp\left(-\frac{r-T}{32R^2}\right),
\]
where the conditioning fixes the preceding simulation history. A union bound over the
queries and the final output controls the
probability of any premature future activation.

The responses therefore agree term by term, apart from the
exceptional projection event (see \Cref{lem:prefix-simulator,lem:static-coupling},
Eq.~\eqref{eq:app-pbad}).

\paragraph{Information available when a stage starts.}

We now verify that each stage of the prefix simulator
satisfies the assumptions of \Cref{thm:reuse},
so that the established information lower bound
applies to every stage.
At the start of stage $k$, the algorithm knows $v_1,\ldots,v_{k-1}$
but has not received a reply involving $v_k$. It knows that $v_k$
is orthogonal to those directions; within the remaining
$s_k=m-k+1$ dimensions, every unit direction is still equally likely.
The same distinction applies to sample reuse. A label used earlier
retains its old fields, but its current-stage field $Q_{j,k}$ has
not appeared in any previous reply. Independence across stages
makes this field independent of the earlier observations
(see \Cref{lem:prefix-simulator},
Eqs.~\eqref{eq:app-birth-kernel} and \eqref{eq:app-birth-Haar}).

We must also bound the chance of finding the gate after several
failed queries. For this, compare the actual stage with an auxiliary
run that always uses the failed-gate mean \eqref{eq:prefix-failed-mean}
and prefix noise \eqref{eq:prefix-noise}, even at a point that would
hit the gate. Use the same algorithmic randomness and
the same sample fields in both runs. Before the first hit, their
replies and hence their next query points are identical. Therefore,
if the actual stage hits within $\ell_0$ tests, one of the first
$\ell_0$ auxiliary query points must hit. These points form the
candidate list in \Cref{thm:reuse}. That theorem starts from the
uniform direction at stage entry and already allows each query to
depend on earlier replies. It therefore bounds the entire list,
including the dependence created by earlier failed queries
(see \Cref{lem:stage-charging}).

\paragraph{Accumulating the stage costs.}
The auxiliary query list turns stage completion into the
direction-estimation event bounded by \Cref{thm:reuse}.
Since its residual dimension is $s_k\asymp m$, that theorem
bounds the probability of completing within $\ell_0$ tests by
$C\ell_0/(m\sigma^2)+C/m$. Choose $\ell_0$ to be a sufficiently
small constant multiple of $m\sigma^2$. The first term is then
small, and the dimension assumption makes the second small.
With the integer choice (see Eq.~\eqref{eq:app-stage-budget}), this gives
\[
 \PP\bigl(\text{stage $k$ completes within $\ell_0$ tests}
       \mid\text{its starting history}\bigr)\le\frac1{16}.
\]
Call such a completed stage fast. For any fixed set
$\mathcal I$ of stages, repeated conditioning at their starting
histories gives
\[
 \PP(\text{all stages in }\mathcal I\text{ are fast})
 \le16^{-|\mathcal I|}.
\]
No independence between stage durations is required. If all
stages complete in $N+1\le T\ell_0/4$ tests, at least
$\lceil T/2\rceil$ are fast. A union bound over these sets gives
\begin{equation}\label{eq:chain-completion-probability}
 \PP(\text{all stages complete})
 \le\binom{T}{\lceil T/2\rceil}16^{-\lceil T/2\rceil}
 \le2^T16^{-T/2}=2^{-T}.
\end{equation}
The extra test is the final output: it need not be a queried point,
but stationarity still forces it to pass the terminal gate.
This proves how the one-stage costs accumulate without assuming
independent stage durations. The conditional histories, list sizes and integer budgets obey
these same bounds (see \Cref{lem:stage-charging},
Eqs.~\eqref{eq:app-fast-stage} and \eqref{eq:app-fast-count}).

\paragraph{Transfer to unrestricted queries.}
The simulator sets undisclosed chain activations to zero. We must
compare its answers with the actual static oracle, which evaluates
the complete function at every requested point. Since
$\|y(z)\|<R$, Haar concentration bounds the chance that any query
has a large projection onto an undisclosed chain direction:
\begin{equation}\label{eq:bad-projection}
 p_{\rm bad}\le2(N+1)T\exp\left[-\frac{r-T}{32R^2}\right].
\end{equation}
Outside this event, future activations vanish. The failure and
crossing identities \eqref{eq:prefix-failed-mean} and
\eqref{eq:prefix-crossing-mean}, together with $Q(q,0)=0$, identify
the simulated and static replies at every query, including label reuse
(see \Cref{lem:static-coupling}).
This argument permits arbitrary spatial jumps; it uses the bounded
coordinate $y(z)$, not a restriction on the query radius.

For $N+1\le T\ell_0/4$, combine the chain bound
\eqref{eq:chain-completion-probability}, the projection error
\eqref{eq:bad-projection}, and the hidden-pair collision bound
(see Eq.~\eqref{eq:app-partner}). The union bound gives
\begin{equation}\label{eq:master}
 \PP(\text{successful output by }N)
 \le
 \underbrace{2^{-T}}_{\substack{\text{fast completion}\\\text{of the chain}}}
 +\underbrace{p_{\rm bad}}_{\substack{\text{premature future}\\\text{activation}}}
 +\underbrace{\frac{N^2}{M}}_{\substack{\text{opening both labels}\\\text{of a noise pair}}}.
\end{equation}
The static and simulated replies coincide outside the last two
events, so success there requires completing the chain
(see \Cref{lem:static-coupling}, Eq.~\eqref{eq:app-master}). This applies to every admissible
adaptive algorithm, ruling out even unanticipated
strategies for circumventing the lower bound.

\paragraph{Scaling and admissible finite populations.}
Rescale the same samples \eqref{eq:scaled-samples}. Let $H_*,e_*$
be the normalized Hessian bound and stationarity threshold
(see \Cref{lem:global-geometry}). Set
\begin{equation}\label{eq:physical-scales}
 b=\frac{\epsilon H_*}{e_*H},\qquad
 \lambda=\frac{\epsilon^2H_*}{e_*^2H}.
\end{equation}
The population mean then has smoothness $H$, and physical
$\epsilon$-stationarity implies normalized $e_*$-stationarity.
The sufficient gap and sample-moment budgets are
\begin{equation}\label{eq:physical-budgets}
 \frac{\lambda H_*}{b^2}=H,\qquad
 12\lambda T\le\Delta,\qquad
 \frac\lambda b(37\sqrt T+9147+\sqrt{C_N}\sigma)\le L_0/2,
\end{equation}
where $C_N$ is the global noise-moment constant
(see \Cref{thm:noise-moment}). Up to universal constants, the last
two conditions read
\[
 \epsilon^2T\lesssim\Delta H,\qquad
 \epsilon^2(T+\sigma^2+1)\lesssim L_0^2.
\]
The first limits the stages by the available gap. The second
allocates the Lipschitz budget between the mean and the noise.
We allocate the gap and Lipschitz budgets by choosing
\begin{equation}\label{eq:parameter-selection}
 T\asymp B/\epsilon^2,\qquad
 \sigma^2\asymp L_0^2/\epsilon^2,\qquad m\asymp d,
\end{equation}
These choices satisfy the dimension, list and probability
conditions (see \Cref{lem:uniform-parameters}).

The reusable scalar bound \Cref{thm:reuse} uses the unconditioned Gaussian law.
We impose the sample budget afterwards. Put
\[
 S_M=\frac1{2M}\sum_{j,\pm}\Lip(f_{j,\pm})^2.
\]
The $L^2$ triangle inequality and \eqref{eq:physical-budgets} give
\[
 (\E S_M)^{1/2}
 \le\frac\lambda b(37\sqrt T+9147+\sqrt{C_N}\sigma)
 \le L_0/2
 \quad\Longrightarrow\quad
 \PP(S_M\le L_0^2)\ge\frac34,
\]
where the implication is Markov's inequality
(see \Cref{lem:finite-population}).

Choose $N_0\asymp dT\sigma^2$ with a sufficiently small constant
and $M\ge12N_0^2$. The parameter conditions make
\eqref{eq:master} at most $11/48$, so
\[
 \PP(\text{success by }N_0\mid S_M\le L_0^2)
 \le\frac{11/48}{3/4}=\frac{11}{36}<\frac13.
\]
An algorithm succeeding with probability at least $2/3$ on every
admissible population must therefore have cost greater than $N_0$
with probability greater than $1/3$ under this conditioned law.
Using $\E[\mathrm{cost}]\ge N_0\PP(\mathrm{cost}>N_0)$ and then
averaging over populations,
\[
 \sup_{\text{admissible populations}}\E[\mathrm{cost}]
 \ge\frac{N_0}{3}
 \gtrsim dT\sigma^2
 \asymp\frac{dL_0^2B}{\epsilon^4}.
\]
The first step is the cost-tail bound, and the last substitutes
\eqref{eq:parameter-selection} (see \Cref{lem:expected-cost},
Eqs.~\eqref{eq:app-cost-tail} and \eqref{eq:app-expected-cost}).
This proves \Cref{thm:main}.

\section{Gap Amplification}\label{subsec:larger-gaps}

The base construction has $T$ paid stages, but its squared
Lipschitz budget also contains $T$. This causes the cap
$B=\min\{\Delta H,L_0^2\}$. To enlarge the useful gap range,
we reuse a base instance as a block and require entry into its
terminal region before revealing a successor block.

A stationary-point lower bound for an isolated block would not
suffice: an algorithm could enter the region carrying its successor
before producing a stationary point. The required interface is a
lower bound on the first entry itself.

Figure~\ref{fig:depth-comparison} previews the two assemblies.
In (a), $p$ counts recursive chain levels; $\chi$ tests a child's
energy before exposing its successor, and the bottom two rows give
$p=2$. In (b), a child replaces its parent's formula inside a smaller
region; dashed circles stand for the intermediate levels up to $K$.
Thus chain depth $p$ and spatial nesting depth $K$ are different
parameters. The constructions below explain their respective budgets.

\begin{figure}[!htbp]
\centering
\input{figures/depth_comparison}
\caption{Fixed-depth recursion (a) and general $K$-level spatial nesting (b); schematic.}
\label{fig:depth-comparison}
\end{figure}

\paragraph{The common block.}
We use the one-sided chain and activation in \eqref{eq:wide-source},
which retain the same current-stage scalar channel
(see \Cref{lem:one-sided-source}).
We need a block that is hard to enter and easy to join to its
successor. The first requirement preserves the query cost; the
second permits a new block to replace its terminal minimum.
The block of length $t$ has nonnegative mean $g$, initial energy
$\Theta(t)$, gradient bound $O(\sqrt t)$, and Hessian bound $O(1)$.
Its sample functions $G_j$ have two useful regions. Around the
hidden minimizer $p$, every sample is the same quadratic:
\[
 G_j(X)=\frac\mu2\|X-p\|^2
 \qquad (\|X-p\|\le\sqrt t),
\]
where $\mu>0$ is universal. Near the outer boundary every sample
instead equals one public quadratic, independent of the hidden
directions. The inner region can hold a successor; the outer
region provides a common boundary on which to join it.

These changes preserve first-entry hardness because, before entry
into a larger coarse neighborhood of $p$, a modified response is
computable from at most one original scalar response and known
public terms. Thus an algorithm entering the modified block would
also enter the original target at the same query cost.
The inner and outer regions preserve this response reduction
(see \Cref{lem:wide-core,lem:wide-coarse-entry}).

This entry costs $\Omega(dt\sigma^2)$ scalar queries
(see \Cref{lem:wide-coarse-entry}). The target is deliberately larger than the stationary
region. Its workspace radius is $8\sqrt t$ around
$A\sum_{i=1}^t v_i$. For this estimate, condition additionally
on the current direction, and let $k\le\lfloor t/2\rfloor$ count
the resulting prefix. Conditional on this prefix and the reference
history, the remaining center has a Haar-uniform direction and
norm $A\sqrt{t-k}$. Hence
\[
 \frac{8\sqrt t}{A\sqrt{t-k}}
 \le\frac{\sqrt2}{16}<\frac18.
\]
Thus early entry requires a constant-angle guess for this remaining
direction; otherwise half the inner stages have already been paid.
The reference run always reports a coarse miss, so the estimate
does not condition on previous misses. It includes the crossing
candidate and an arbitrary final output
(see \Cref{lem:wide-coarse-entry}).

\paragraph{Two-level amplification.}
An outer chain of length $K$ selects independent inner blocks,
using shared inner coordinates. Its successor correction prevents
the next block from being revealed before entry into the current
one. The important distinction is between accumulating energy
and summing gradients. Multiplying the outer chain by $t$ and
stretching its coordinates by $t$ multiplies its gap by $t$,
but the factors $t$ and $1/t$ cancel in its first derivative.
Its gap is therefore $O(Kt)$ and its gradient remains $O(\sqrt K)$.

The inner blocks are selected by nonnegative weights $\omega_i$
with $\sum_i\omega_i=1$, rather than all being added. Holding the
outer coordinates fixed, their contribution obeys
\[
 \left\|\nabla_Z\sum_i\omega_i g_i(Z)\right\|
 =\left\|\sum_i\omega_i\nabla g_i(Z)\right\|
 \le\sum_i\omega_i\|\nabla g_i(Z)\|\le C\sqrt t.
\]
The equality uses the weights' independence of $Z$; the
inequalities use the triangle inequality and the block gradient
bound. Derivatives of the weights act on differences between
blocks. Their common public exterior makes these differences
uniformly $O(t)$ even on the unbounded workspace. The suffix
selection and its derivative estimates then avoid a factor $K$;
these are the additional checks (see Eqs.~\eqref{eq:linear-owner-derivatives}
and \eqref{eq:linear-mean-mixture}).
Together with the first-entry cost, this gives
\[
 \text{gap}=O(Kt),\qquad
 \E\Lip(F_\xi)^2=O(t+K+\sigma^2),\qquad
 N=\Omega(dKt\sigma^2).
\]
The scalar assembly preserves global regularity and first-entry
simulation, and rescaling enforces the physical budgets
(see \Cref{prop:two-level-assembly} and the
\hyperref[proof:two-level-bound]{proof of \Cref*{thm:extended-gap}}).
Taking $t,K,\sigma^2$ of order
$L_0^2/\epsilon^2$ gives the larger linear-gap window of
\Cref{thm:extended-gap}. The first-entry costs accumulate even for a query crossing
several gates at once (see \Cref{lem:gap-renewal}).

\paragraph{Fixed-depth amplification.}
For a fixed depth $p\ge3$, the next level needs an energy test
rather than a neighborhood of one minimizer. Normalize a child to
$g\ge0$, put $E=g(0)$, and use a nonincreasing cutoff
$\chi=\psi(g/E)$, equal to one at sufficiently low energy.
For derivatives in the child's coordinates, the successor
coefficient $c\ge0$ is constant. Its correction is $-c\chi$, so
\[
 \nabla(g-c\chi)
 =\left(1-\frac cE\psi'(g/E)\right)\nabla g,
 \qquad 1-\frac cE\psi'(g/E)\ge1.
\]
The multiplier is at least one because $\psi'\le0$ and $c\ge0$,
so this correction cannot cancel the child gradient.
The recursive block is constructed so that a small child gradient
forces low energy, where $\chi=1$ and $\nabla\chi=0$.
Consequently a small modified gradient still forces the energy
test to be passed; the successor correction cannot create an
earlier stopping point.

This guard lets us repeat the two-level selection at every fixed
depth. Each added level multiplies the energy range by its chain
length while adding that length to the derivative budget.
The guard, recursive regularity and first-entry argument survive
this fixed number of iterations; rescaling yields
\Cref{cor:fixed-depth} with $p$-dependent constants
(see \Cref{lem:energy-cutoff,prop:fixed-depth-assembly}).

The depth cannot be sent to infinity at fixed accuracy: the
origin gradient stays fixed while the Lipschitz budget grows
(see \Cref{prop:depth-loss}, Eqs.~\eqref{eq:depth-origin-gradient}
and \eqref{eq:depth-lipschitz-lower}):
\[
 \|\nabla g_p(0)\|=\Phi'(0),\qquad
 \Lip(g_p)^2\ge c\sum_{l=2}^pK_l\ge c(p-1),
\]
where $c>0$ is universal and $K_l\ge1$ is the chain length at
level $l$. After uniform rescaling to meet the sample budget,
\[
 \|\nabla\widetilde g(0)\|
 \le\frac{CL_0}{\sqrt{\sum_{l=2}^pK_l}}
 \le\frac{CL_0}{\sqrt{p-1}}.
\]
At a sufficiently large depth, the public starting point itself
would therefore be $\epsilon$-stationary. This explains why the
accuracy condition in \Cref{cor:fixed-depth} depends on $p$;
the enlarged linear windows do not give a gap-uniform linear bound
at fixed $L_0/\epsilon$.

\paragraph{Logarithmic amplification.}
A different assembly replaces the parent's quadratic core with
an independent child. The equality needed for the join is simple.
For a fixed universal $B_0>1$, the child's public outer collar is
$c_t+\mu\|Y\|^2/(2B_0)$, where $c_t$ is a public constant.
Center the child at the parent's minimum $p$, contract its
locations by $B_0$, and divide its values by $B_0$. On the
overlap with the parent core, its rescaled value is exactly
\[
 B_0^{-1}\left[c_t+\frac{\mu}{2B_0}
                  \|B_0(X-p)\|^2-c_t\right]
 =\frac\mu2\|X-p\|^2.
\]
Substitution uses the child's public collar, the constants cancel,
and the two powers of $B_0$ cancel the two value factors.
Thus the samples themselves agree on an open overlap, and their
available derivatives agree there as well. No interpolation term
is introduced at the join.

The child replaces the whole region in which the parent could
have a small gradient. A parent minimum therefore ceases to be
an alternative stopping point. At any location the formula uses
one block, plus constants inherited from its ancestors; those
constants contribute no gradient. This is the mechanism that
lets energy accumulate without retaining all historical gradients.
The recursive sample formula preserves the exact overlap and
stationary localization (see \Cref{prop:exact-nesting}).

Each contraction increases the local Hessian by $B_0$.
After $K$ levels, a final expansion of both locations and values
by $B_0^{K-1}$ restores a uniform Hessian bound. The first
derivative is preserved by this common expansion, while the
Hessian is divided by its scale. A level-$j$ block consequently
has its original gradient and a Hessian factor $B_0^{j-K}\le1$.
Thus all levels can keep the same local entry cost, while the
energy budget grows as $tB_0^{K-1}$. The affordable number of
levels is therefore $K=\Theta(1+\log(\Delta H/L_0^2))$.
Adding their first-entry costs gives \Cref{thm:logarithmic-gap}.
The physical budgets, dimension condition and expected cost
remain valid after this scaling (see the
\hyperref[proof:logarithmic-bound]{proof of \Cref*{thm:logarithmic-gap}}).

A query may jump directly into a deeper child. Its location and
label are fixed before that child's first response is exposed,
so it still supplies a first-entry candidate for fresh hidden
data. Processing successive crossings before returning the scalar
answer lets the conditional first-entry costs accumulate
(see \Cref{prop:nested-entry,lem:gap-renewal}). It does not require the algorithm to move
continuously between blocks.

The fixed-depth construction enlarges the interval on which the
bound is linear in the gap. Exact nesting instead gives continued
logarithmic growth beyond a fixed interval. Their hypotheses and
rates are compared in \Cref{tab:gap-rates}; when several apply,
the valid lower bound is their maximum.

\section{Goldstein and Exact-Value Consequences}\label{sec:consequences}
\subsection{Goldstein stationarity}\label{subsec:goldstein}

The smooth hard instances also belong to the nonsmooth class.
To use them, we must relate a Goldstein output to ordinary gradient
stationarity. Smoothness gives the comparison
(see \Cref{lem:goldstein-comparison}, Eq.~\eqref{eq:app-Goldstein-ineq})
\begin{equation}\label{eq:goldstein-ineq}
 g_\delta(x)\le\|\nabla f(x)\|
 \le g_\delta(x)+H\delta.
\end{equation}
The first inequality uses $\nabla f(x)\in\partial_\delta^Gf(x)$.
For the second, every gradient in the defining ball is within
$H\delta$ of $\nabla f(x)$, and the same is true of their convex
combinations and limits.

Choose $H=\epsilon/\delta$ and apply the smooth lower bound at
gradient tolerance $2\epsilon$. Then
\begin{equation}\label{eq:goldstein-reduction}
 g_\delta(\widehat x)\le\epsilon
 \quad\Longrightarrow\quad
 \|\nabla f(\widehat x)\|\le2\epsilon.
\end{equation}
Thus a Goldstein algorithm would solve the smooth task on the same
instance, with exactly the same scalar oracle and query cost.
Substitution in \Cref{thm:main} proves \Cref{thm:goldstein}; in the
uncapped range $\Delta\epsilon/\delta\le L_0^2$, its rate is
\[
 \Omega\!\left(\frac{dL_0^2\Delta}{\delta\epsilon^3}\right).
\]
The same substitution transfers the two-level, logarithmic and
fixed-depth bounds (see \Cref{cor:large-gap-goldstein,cor:depth-goldstein}). These reductions preserve the
corresponding gap windows, dimension conditions, and dependence
of constants on depth.

\subsection{Exact-value bounds for randomized algorithms}\label{subsec:noiseless}

Without the Gaussian fields, the same geometry still requires
$T=\Theta(B/\epsilon^2)$ stages. What changes is the information
cost of one stage: a scalar reply now gives one noiseless linear
measurement rather than a noisy Gaussian observation.

At a failed stage-$k$ gate, subtract the known terms to obtain
\[
 \mu_{v_k}(q,g)=4(\sqrt{1+\|q-Agv_k\|^2}-1),
 \qquad q=x-a_{<k},\quad g=g_k.
\] For $g>0$, squaring the displayed
identity gives
\begin{equation}\label{eq:noiseless-invert}
 \langle q,v_k\rangle
 =\frac{1+\|q\|^2+A^2g^2-(1+\mu_{v_k}(q,g)/4)^2}{2Ag}.
\end{equation}
For $g=0$, the response is independent of $v_k$. Each scalar
query can therefore be answered using at most one adaptive linear
measurement of the current hidden direction.

Fix a stage and write its residual dimension as $s$. Let $P_i$
project onto the span of the first $i$ measured directions.
Conditionally on their transcript $\mathcal T_i$, $P_iv$ is known
and $(I-P_i)v$ has uniform orientation in the unmeasured subspace.
Granting an extra orthogonal measurement whenever a query is
redundant only strengthens the oracle. Spherical symmetry gives
\begin{equation}\label{eq:noiseless-measured-energy}
 \E\!\left[\|P_{i+1}v\|^2-\|P_iv\|^2\mid\mathcal T_i\right]
 =\frac{1-\|P_iv\|^2}{s-i},
 \qquad \E\|P_nv\|^2=\frac ns.
\end{equation}
Thus $\PP(\|P_nv\|>1/2)\le4n/s$. Outside this event, a guess
with $\langle v,\widehat v\rangle\ge15/16$ must obtain at least
$7/16$ of that inner product from the still-uniform residual.
Applying spherical-cap concentration to the remaining uniform
direction gives the list estimate (see \Cref{lem:noiseless-list})
\begin{equation}\label{eq:noiseless-list-main}
 \PP\!\left(\max_{a\le M_{\rm list}}
          \langle v,\widehat v_a\rangle\ge15/16\right)
 \le\frac{4n}{s}
      +M_{\rm list}e^{-49(s-n)/512}
 \le\frac1{16},
\end{equation}
for $n,M_{\rm list}\le s/256$ and $s\ge512$.
This controls randomized algorithms because their independent
coins can be included in the conditioning. The conditional energy increment gives the stated expectation
(see Eqs.~\eqref{eq:app-noiseless-projection-increment}
and \eqref{eq:app-noiseless-measured-energy}).

At every new stage, $s_k=m-k+1=\Omega(d)$. Applying this list estimate to the prescribed outside-gate tests
gives a per-stage cost $\ell_{\rm det}=\Theta(d)$
(see \Cref{lem:stage-charging,lem:noiseless-list}). The same counting and projection
coupling therefore yield
\begin{equation}\label{eq:noiseless-master-main}
 \begin{gathered}
 \PP(\text{stationary success by }N)
 \le2^{-T}+2(N+1)T\exp\left[-\frac{r-T}{32R^2}\right],\\
 R=64\sqrt T,\qquad N+1\le T\ell_{\rm det}/4.
 \end{gathered}
\end{equation}
There is no sample-pair collision or random sample-moment event
in this exact-value model. The dimension condition and expected-cost conversion give the
claimed rate (see the \hyperref[proof:exact-bound]{proof of
\Cref*{thm:noiseless}}):
\[
 N=\Omega(dT)
 =\Omega\!\left(\frac d{\epsilon^2}
                    \min\{\Delta H,L_0^2\}\right).
\]
Finally, choosing $H=\epsilon/\delta$ and tolerance $2\epsilon$
gives \Cref{cor:exact-goldstein}
(see \Cref{lem:goldstein-comparison}).

%% file: sections/comparison_table.tex
\begin{table}[!htbp]
\centering
\caption{Comparison of zeroth-order nonconvex optimization bounds.}
\label{tab:prior-rates}
\vspace{0.8em}

\small
\setlength{\tabcolsep}{2pt}
\renewcommand{\arraystretch}{1.25}

\begin{tabular}{@{}cccccc@{}}
\toprule
Work & Oracle & Class & Upper bound & Lower bound & Domain \\
\midrule
\citet{ghadimi2013}
 & Stoch. & S
 & $O(d\epsilon^{-4})$ & --- & Global \\
\citet{lin2022}
 & Stoch. & NS
 & $O(d^{3/2}\delta^{-1}\epsilon^{-4})$ & --- & Global \\
\citet{chen2023}
 & Stoch. & NS
 & $O(d^{3/2}\delta^{-1}\epsilon^{-3})$ & --- & Global \\
\citet{kornowski2024}
 & Stoch. & NS
 & $O(d\delta^{-1}\epsilon^{-3})$ & --- & Global \\
\citet{cartis2012}
 & Exact & S
 & $O(d\epsilon^{-2})$ & --- & Global \\
\citet{dingyuan2026}$^{\dagger}$
 & Exact & NS
 & $(1+(\delta\epsilon)^{-1})e^{O(d\log(1/\epsilon))}$
 & $e^{\Omega(d)}$ & Global \\
\citet{zhang2026}
 & Stoch. & NS
 & --- & $\Omega(d\delta^{-1}\epsilon^{-3})$ & Ball \\
\midrule
\textbf{Ours (\resultlink{thm:main})}
 & \textbf{Stoch.} & \textbf{S}
 & --- & $\boldsymbol{\Omega(d\epsilon^{-4})}$
 & \textbf{Global} \\
\textbf{Ours (\resultlink{thm:goldstein})}
 & \textbf{Stoch.} & \textbf{NS}
 & --- & $\boldsymbol{\Omega(d\delta^{-1}\epsilon^{-3})}$
 & \textbf{Global} \\
\textbf{Ours (\resultlink{thm:noiseless})}
 & \textbf{Exact} & \textbf{S}
 & --- & $\boldsymbol{\Omega(d\epsilon^{-2})}$
 & \textbf{Global} \\
\textbf{Ours (\resultlink{cor:exact-goldstein})}
 & \textbf{Exact} & \textbf{NS}
 & --- & $\boldsymbol{\Omega(d\delta^{-1}\epsilon^{-1})}$
 & \textbf{Global} \\
\bottomrule
\end{tabular}

\par\vspace{0.4em}
\begin{minipage}{\linewidth}
\footnotesize\raggedright
\emph{Notes.}
S/NS: smooth/Goldstein stationarity;
Stoch./Exact: reusable stochastic/fixed-function values;
Global/Ball: unrestricted/ball-restricted queries and outputs
(global/local gap).
Other parameters are suppressed.
$^\dagger$ Deterministic algorithms only; exponential lower
bound for fixed small accuracy.
Our lower bounds allow randomized algorithms.
Goldstein rates assume $\Delta\epsilon/\delta\le L_0^2$.
\end{minipage}
\end{table}

%% file: sections/gap_table.tex
\begin{table}[!ht]
\centering
\small
\setlength{\tabcolsep}{5pt}
\renewcommand{\arraystretch}{1.15}

\caption{Smooth stochastic zeroth-order lower bounds
under the global reusable oracle of Section~2.}
\label{tab:gap-rates}
\medskip

\begin{tabularx}{\linewidth}{@{}
  >{\centering\arraybackslash}m{.16\linewidth}
  >{\centering\arraybackslash}m{.34\linewidth}
  >{\centering\arraybackslash}X@{}}
\toprule
Construction & Query lower bound & Sufficient conditions \\
\midrule

\begin{tabular}{@{}c@{}}
Basic\\[1pt]
{\footnotesize(\Cref{thm:main})}
\end{tabular}
& $\displaystyle
  \begin{gathered}
  \Omega\!\left(\dfrac{dL_0^2}{\epsilon^4}
  \min\{\Delta H,L_0^2\}\right)
  \end{gathered}$
& $\displaystyle
  \begin{gathered}
  \epsilon^2\le c_0\min\{\Delta H,L_0^2\}\\[4pt]
  d\ge C\frac{\min\{\Delta H,L_0^2\}}{\epsilon^2}\\[-2pt]
  {}\times\log\!\left(
  2+\frac{L_0^2\min\{\Delta H,L_0^2\}}{\epsilon^4}
  \right)
  \end{gathered}$ \\[5pt]

\midrule

\begin{tabular}{@{}c@{}}
Two-level\\[1pt]
{\footnotesize(\Cref{thm:extended-gap})}
\end{tabular}
& $\displaystyle
  \begin{gathered}
  \Omega\!\left(\dfrac{dL_0^2}{\epsilon^4}
  \min\!\left\{\Delta H,\dfrac{L_0^4}{\epsilon^2}\right\}\right)
  \end{gathered}$
& $\displaystyle
  \begin{gathered}
  \epsilon\le c_eL_0,\quad
  \Delta H\ge C_*L_0^2\\[4pt]
  d\ge C\frac{L_0^2}{\epsilon^2}
  \log\!\left(2+\frac{L_0^2}{\epsilon^2}\right)
  \end{gathered}$ \\[5pt]

\midrule

\begin{tabular}{@{}c@{}}
Fixed depth\\[1pt]
{\footnotesize(\Cref{cor:fixed-depth})}
\end{tabular}
& $\displaystyle
  \Omega\!\left(\dfrac{dL_0^2}{\epsilon^4}
  \min\!\left\{\Delta H,
  \dfrac{L_0^{2p}}{\epsilon^{2p-2}}\right\}\right)$
& $\displaystyle
  \begin{gathered}
  p\ge2\ \text{(fixed)}\\[2pt]
  \epsilon\le c_{e,p}L_0,\quad
  \Delta H\ge C_{*,p}L_0^2\\[4pt]
  d\ge C_p\frac{L_0^2}{\epsilon^2}
  \log\!\left(2+\frac{L_0^2}{\epsilon^2}\right)
  \end{gathered}$ \\[5pt]

\midrule

\begin{tabular}{@{}c@{}}
Exact nesting\\[1pt]
{\footnotesize(\Cref{thm:logarithmic-gap})}
\end{tabular}
& $\displaystyle
  \Omega\!\left(\dfrac{dL_0^4}{\epsilon^4}
  \left[1+\log\dfrac{\Delta H}{C_0L_0^2}\right]\right)$
& $\displaystyle
  \begin{gathered}
  \epsilon\le c_eL_0,\quad
  \Delta H\ge C_0L_0^2\\[4pt]
  \begin{aligned}
  d&\ge C\Bigl[\frac{L_0^2}{\epsilon^2}
  \log\!\left(2+\frac{L_0^2}{\epsilon^2}\right)\\[-2pt]
  &\hspace{37pt}+\log(2K)\Bigr]
  \end{aligned}
  \end{gathered}$ \\

\bottomrule
\end{tabularx}

\par\vspace{7pt}
\begin{minipage}{\linewidth}
\footnotesize
\emph{Notes.}
The constants $c_0,c_e,C,C_*,C_0,C_1$ are
positive universal constants, with $C_1>1$.
The generic constant $C$ may differ between results.
Constants indexed by $p$ may depend on the fixed
depth $p\ge2$.
For exact nesting,
$K=1+\lfloor
\log(\Delta H/(C_0L_0^2))/\log C_1
\rfloor$.
\end{minipage}
\end{table}

%% file: figures/coupled_geometry.tex
\begin{tikzpicture}[x=1cm,y=1cm]
\path[use as bounding box] (0,.05) rectangle (13.8,6.95);
\coordinate (O) at (2.1,3.2);
\coordinate (Y) at (2.846,3.878);
\coordinate (Z) at ($(O)!2.2!(Y)$);
\fill[black!4] (O) circle (1.13);
\draw[figgray,dashed,line width=.75pt] (O) circle (1.13);
\draw[figgray,dashed,line width=.65pt] (O) -- (Z);
\draw[edge,->] (O) -- (3.05,3.2);
\draw[edge,->] (O) -- (2.1,4.14);
\node[right=3pt] at (3.05,3.2) {$u_1$};
\node[left=4pt] at (2.1,4.14) {$u_k$};
\node[open] at (O) {};
\node[below left=3pt] at (O) {$0$};
\node[circle,fill=figred,draw=figred,inner sep=1.5pt] at (Y) {};
\node[above right=3pt] at (Y) {$y$};
\node[open] at (Z) {};
\node[above right=3pt] at (Z) {$z$};
\node at (1.2,2.18) {$R$};
\draw[figred,line width=.8pt,->] (.97,3.2) -- (.43,3.2);
\draw[figred,line width=.8pt,->] (2.1,2.07) -- (2.1,1.56);
\node[left=4pt,text=figred] at (2.1,1.56) {$\nabla P$};
\node at (2.1,.96) {$y=\alpha z,\quad w=U^\top y$};
\node[draw=figbase,line width=.75pt,fill=black!3,minimum height=7.5mm] (prefix) at (2.1,6.0) {$w_1,\ldots,w_k$};
\node[above=5pt] at (prefix.north) {$g_k=1$};
\draw[edge,->] (2.1,4.33) -- (prefix.south);
\node[draw=figred,line width=.9pt,minimum width=2.65cm,minimum height=8mm] (term) at (8.2,6.0) {$\chi(x-a_k)b_{k+1}$};
\draw[edge,->] (prefix.east) -- (term.west);
\node[draw=figgray,line width=.75pt,minimum height=7.5mm] (future) at (12.35,6.0) {$w_{k+1}$};
\node[below=5pt,text=figgray] at (future.south) {$u_{k+1}$};
\draw[edge,dashed,->] (future.west) -- (term.east);
\coordinate (center) at (8.2,2.7);
\fill[black!5] (center) circle (1.02);
\draw[figbase,dashed,line width=.8pt] (center) circle (1.02);
\draw[figred,line width=.9pt,->] (8.2,3.72) -- (term.south);
\node[open] at (5.15,2.7) {};
\node[below=5pt] at (5.15,2.7) {$a_{<k}$};
\draw[edge,->] (5.28,2.7) -- node[above=5pt] {$Av_k$} (8.05,2.7);
\node[dot] at (center) {};
\node[below=5pt] at (center) {$a_k$};
\node[circle,fill=figred,draw=figred,inner sep=1.5pt] at (8.54,3.07) {};
\node[above right=2pt] at (8.54,3.07) {$x$};
\node[open] at (11.65,2.7) {};
\node[below=4pt] at (11.65,2.7) {$x'$};
\node[above=5pt,text=figred] at (11.65,2.7) {$\chi=0$};
\end{tikzpicture}

%% file: figures/prefix_simulator.tex
\begin{tikzpicture}[x=1cm,y=1cm]
\path[use as bounding box] (0,0) rectangle (13.8,8.2);
\node[anchor=west] at (0,7.35) {Static oracle};
\node[anchor=west] at (0,5.65) {Simulator};
\node at (4.25,8.00) {$1,\ldots,k-1$};
\node[text=figred] at (6.08,8.00) {$k$};
\node[text=figgray] at (8.91,8.00) {$k+1,\ldots,T$};
\foreach \yy in {7.35,5.65}{
  \foreach \xx in {3.20,3.87,4.54}{
    \fill[black!12] (\xx,\yy-.28) rectangle +(0.49,.56);
    \draw[figgray,line width=.55pt] (\xx,\yy-.28) rectangle +(0.49,.56);
  }
  \fill[figred!8] (5.40,\yy-.32) rectangle +(1.36,.64);
  \draw[figred,line width=.9pt] (5.40,\yy-.32) rectangle +(1.36,.64);
  \node[text=figred] at (6.08,\yy) {$Q_{j,k}$};
}
\draw[figbase,line width=.7pt,fill=black!4] (7.17,7.03) rectangle (8.71,7.67);
\node at (7.94,7.35) {$g_{k+1}(w)$};
\node at (9.09,7.35) {$\cdots$};
\draw[figbase,line width=.7pt,fill=black!4] (9.48,7.03) rectangle (10.62,7.67);
\node at (10.05,7.35) {$g_T(w)$};
\node[text=figgray,font=\footnotesize] at (8.91,6.86) {evaluated at the query};
\foreach \xx/\ww in {7.17/1.54,9.48/1.14}{
  \draw[figgray,dashed,line width=.65pt] (\xx,5.33) rectangle +(\ww,.64);
  \node[text=figgray] at (\xx+.5*\ww,5.65) {$0$};
}
\node[text=figgray] at (9.09,5.65) {$\cdots$};
\node[text=figred,font=\footnotesize] at (8.91,5.12) {set to zero};
\draw[signal] (10.82,7.35) -- (11.44,7.35);
\draw[signal] (10.82,5.65) -- (11.44,5.65);
\node[draw=figbase,line width=.65pt,minimum width=1.82cm,minimum height=6.4mm] at (12.53,7.35) {full reply};
\node[draw=figbase,line width=.65pt,minimum width=1.82cm,minimum height=6.4mm] at (12.53,5.65) {prefix reply};
\node at (12.53,6.52) {$=$};
\node[font=\scriptsize] at (12.53,6.18) {if $g_{>k}(w)=0$};
\draw[figgray,line width=.45pt] (0,4.52) -- (13.8,4.52);

\node[anchor=west] at (.3,4.07) {Failed gate};
\node[anchor=west] at (7.45,4.07) {Gate hit};
\foreach \xx in {3.25,10.4}{
  \fill[black!4] (\xx,2.73) circle (.77);
  \draw[figgray,dashed,line width=.75pt] (\xx,2.73) circle (.77);
  \node[dot] at (\xx,2.73) {};
  \node[below=4pt] at (\xx,2.73) {$a_k$};
}
\node[circle,fill=figred,draw=figred,inner sep=1.7pt] at (1.64,2.98) {};
\node[above=3pt,text=figred] at (1.64,2.98) {$x$};
\node[circle,fill=figred,draw=figred,inner sep=1.7pt] at (10.70,2.98) {};
\node[above right=2pt,text=figred] at (10.70,2.98) {$x$};
\node[text=figgray,font=\footnotesize] at (3.25,1.64) {$u_{k+1}$ hidden};
\node[text=figred,font=\footnotesize] at (10.4,1.64) {$v_k,\ u_{k+1}$ revealed after reply};

\node[stage] (ka) at (1.82,.68) {$k$};
\node[stage] (kb) at (4.68,.68) {$k$};
\draw[signal] (ka.east) -- node[above=4pt,font=\footnotesize] {next query} (kb.west);
\node[stage] (kc) at (8.97,.68) {$k$};
\node[stage,draw=figred,text=figred] (kd) at (11.83,.68) {$k+1$};
\draw[signal,draw=figred] (kc.east) -- node[above=4pt,font=\footnotesize] {next query} (kd.west);
\end{tikzpicture}

%% file: figures/depth_comparison.tex
\begin{tikzpicture}[x=1cm,y=1cm]
\path[use as bounding box] (0,-.1) rectangle (13.8,6.75);
\node[anchor=west] at (.05,6.4) {(a)};
\node[anchor=west] at (7.55,6.4) {(b)};
\foreach \yy/\last/\coord in {5.65/{K_p}/{Y_p},4.25/{K_{p-1}}/{Y_{p-1}},2.3/{K_2}/{Y_2},.8/{t}/{(z,x)}} {
  \node[stage] at (.75,\yy) {$1$};
  \node[stage,fill=figbase!9] at (2.2,\yy) {$2$};
  \node at (3.8,\yy) {$\cdots$};
  \node[stage] at (5.3,\yy) {$\last$};
  \draw[edge,->] (1.04,\yy) -- (1.91,\yy);
  \draw[edge] (2.49,\yy) -- (3.42,\yy);
  \draw[edge,->] (4.17,\yy) -- (4.89,\yy);
  \node[right=4pt,text=figgray] at (5.83,\yy) {$\coord$};
}
\foreach \yy in {5.65,4.25,2.3} {
  \node[rectangle,draw=figred,line width=.7pt,fill=white,inner sep=1.2pt,minimum size=3.8mm] at (3.0,\yy) {};
  \node[above=5pt,text=figred] at (3.0,\yy) {$\chi$};
}
\draw[signal] (2.2,5.34) -- (2.2,4.59);
\draw[signal,dashed] (2.2,3.93) -- (2.2,3.55);
\node at (2.2,3.24) {$\vdots$};
\draw[signal,dashed] (2.2,2.91) -- (2.2,2.64);
\draw[signal] (2.2,1.99) -- (2.2,1.13);
\draw[figbase!55,line width=.6pt] (.28,.26) rectangle (5.86,2.93);
\node[fill=white,text=figbase] at (1.1,.26) {$p=2$};
\fill[black!4] (10.75,3.65) circle (2.05);
\draw[figbase,line width=.85pt] (10.75,3.65) circle (2.05);
\fill[black!9] (10.75,3.65) circle (1.52);
\draw[figbase,line width=.85pt] (10.75,3.65) circle (1.52);
\fill[black!13] (10.75,3.65) circle (1.02);
\draw[figbase,line width=.85pt] (10.75,3.65) circle (1.02);
\draw[figgray,dashed,line width=.65pt] (10.75,3.65) circle (.79);
\draw[figgray,dashed,line width=.65pt] (10.75,3.65) circle (.57);
\fill[black!20] (10.75,3.65) circle (.33);
\draw[figred,line width=.95pt] (10.75,3.65) circle (.33);
\node at (10.75,6.13) {$g_1$};
\node at (10.75,5.43) {$g_2$};
\node at (10.75,4.91) {$g_3$};
\node at (10.75,3.65) {$g_K$};
\node at (10.75,1.05) {$1,\ 2,\ 3,\ \ldots,\ K$};
\end{tikzpicture}

%% file: sections/external_tools.tex
\section{External Probabilistic Tools}\label[appendix]{app:tools}
The following standard results are used with the hypotheses stated here.
The intrinsic sphere dimension is $s$; the optimization dimension is $d$.

\subsection{Gaussian maxima and fixed-feature regression}
\begin{theorem}[Gaussian maximum and dyadic chaining; \citealp{dudley1967,vershynin2018}]\label{thm:tool-gaussian}
If $Z_1,\ldots,Z_K$ are centered Gaussian variables with $\max_i\E Z_i^2\le v^2$, not necessarily independent, then
\[
\|\max_{i\le K}|Z_i|\|_{L^2}\le C v\sqrt{\log(2K)}.
\]
More generally, suppose $(Z_t)_{t\in\mathcal T}$ is a separable centered Gaussian process on a totally bounded metric space, with canonical metric $d_Z(t,t')=\|Z_t-Z_{t'}\|_{L^2}$. If $\mathcal N(\eta)$ is the covering number for $d_Z$, then for any base point $t_0$,
\[
\left\|\sup_{t\in\mathcal T}|Z_t|\right\|_{L^2}
\le C\left[\|Z_{t_0}\|_{L^2}+
\int_0^{\operatorname{diam}(\mathcal T,d_Z)}\sqrt{\log(2\mathcal N(\eta))}\,d\eta\right].
\]
\end{theorem}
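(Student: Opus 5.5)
The first inequality is the $L^2$ form of the usual maximal inequality, proved through an exponential moment. Normalize so that $v=1$. For each $i$ and $\lambda>0$, the Gaussian moment generating function gives $\E e^{\lambda|Z_i|}\le 2e^{\lambda^2/2}$. Jensen's inequality applied to the convex map $x\mapsto e^{\lambda x}$ then yields
\[
e^{\lambda\E\max_i|Z_i|}\le\sum_i\E e^{\lambda|Z_i|}\le 2Ke^{\lambda^2/2}.
\]
Choosing $\lambda=\sqrt{2\log(2K)}$ gives $\E\max_i|Z_i|\le\sqrt{2\log(2K)}$. No independence is used anywhere, only the union over the $K$ terms inside the sum.

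To pass from the first moment to the $L^2$ norm, I would use Gaussian concentration for the Lipschitz functional $\max_i|Z_i|$. Represent $Z=\Sigma^{1/2}g$ with $g$ standard Gaussian. Each map $g\mapsto|\langle\Sigma^{1/2}e_i,g\rangle|$ is $1$-Lipschitz, so the maximum is $1$-Lipschitz as well. Consequently its fluctuations about the mean are $O(1)$ in $L^2$, and $\|\max_i|Z_i|\|_{L^2}\le C\sqrt{\log(2K)}$. Alternatively, one can avoid concentration by applying the same Jensen argument to $e^{\lambda^2\max_i Z_i^2/4}$ with $\lambda$ a small constant. This gives $\E\max_i Z_i^2\le C\log(2K)$ directly.

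For the chaining bound, I would build dyadic nets. Put $D=\operatorname{diam}(\mathcal T,d_Z)$ and $\eta_n=2^{-n}D$. Let $\mathcal T_n$ be an $\eta_n$-net of cardinality $\mathcal N(\eta_n)$, with $\mathcal T_0=\{t_0\}$, and let $\pi_n(t)$ denote a nearest point of $\mathcal T_n$. By separability it suffices to bound the supremum over a countable dense subset, and then over finite subsets by monotone convergence.

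For each $t$ in such a subset, I would write the telescoping sum
\[
Z_t=Z_{t_0}+\sum_{n\ge1}\bigl(Z_{\pi_n(t)}-Z_{\pi_{n-1}(t)}\bigr).
\]
The increments at level $n$ range over at most $\mathcal N(\eta_n)^2$ pairs, each with $L^2$ norm at most $3\eta_n$. Applying the first inequality at each level and Minkowski's inequality across levels gives
\[
\|\sup_t|Z_t|\|_{L^2}\le\|Z_{t_0}\|_{L^2}+C\sum_n\eta_n\sqrt{\log(2\mathcal N(\eta_n))}.
\]
It remains to compare this sum with the integral. Since $\mathcal N$ is nonincreasing, $\eta_n\sqrt{\log 2\mathcal N(\eta_n)}\le2\int_{\eta_{n+1}}^{\eta_n}\sqrt{\log 2\mathcal N(\eta)}\,d\eta$, and summing over $n$ yields the stated bound.

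The step needing care is the convergence of the telescoping series. The chain converges to $Z_t$ in $L^2$, and almost surely along the countable set once the chaining sum is finite. If the integral diverges the statement is vacuous. Otherwise, Fatou's lemma combined with separability transfers the bound from finite subsets to the full supremum.
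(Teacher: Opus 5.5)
Your proof is correct. The chaining half follows the paper's sketch: nested dyadic nets, telescoping from $Z_{t_0}$, one finite maximum per level over at most $\mathcal N(\eta_n)\mathcal N(\eta_{n-1})$ pairs, Minkowski across levels, and comparison of the dyadic sum with the entropy integral via monotonicity of $\mathcal N$. Your handling of the limit is more explicit than the paper's appeal to separability, and it can be shortened. Since $\|Z_{\pi_n(t)}-Z_t\|_{L^2}\le\eta_n$ is summable, it gives almost-sure convergence at each point of the countable dense set directly. (The nets must lie in $\mathcal T$ for $Z_{\pi_n(t)}$ to be defined; with an external covering number you use $\mathcal N(\eta/2)$ instead, which only changes $C$.)

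Where you genuinely differ is the finite maximum. The paper never bounds $\E\max_i|Z_i|$ first. It applies the union bound to the tails, $\PP(\max_i|Z_i|>t)\le2Ke^{-t^2/(2v^2)}$, and integrates $\E W^2=\int_0^\infty2t\,\PP(W>t)\,dt$ with cutoff $t_0=v\sqrt{2\log(2K)}$. This gives $\E W^2\le2v^2[\log(2K)+1]$ in one step, using only the marginal tails. The paper later reuses this explicit form, as $2v\sqrt{\log(2K)+1}$, in the numerical verification of $C_N=2^{128}$.

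Your main route (exponential moments and Jensen for the mean, then Lipschitz Gaussian concentration) is valid but heavier. Also, the representation $Z=\Sigma^{1/2}g$ presupposes that $(Z_1,\ldots,Z_K)$ is jointly Gaussian, which the statement does not literally say. It does hold in every use in the paper, where the variables are increments of a single Gaussian process. Your alternative, Jensen applied to $e^{c\max_iZ_i^2}$ with $c<1/(2v^2)$, removes both objections: it uses only marginals, and $c=1/(4v^2)$ gives $\E\max_iZ_i^2\le4v^2\log(2K)$. That constant is at least as good as the one the paper needs later.
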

\begin{proof}[Explanation of the needed $L^2$ version]
The union bound and the Gaussian tail inequality imply
\[
\PP\!\left(\max_i|Z_i|>t\right)
\le\sum_{i=1}^K\PP(|Z_i|>t)
\le 2K\exp[-t^2/(2v^2)].
\]
Integrating $\E W^2=\int_0^\infty 2t\PP(W>t)\,dt$, with $t_0=v\sqrt{2\log(2K)}$, yields
\begin{align*}
\E W^2&\le t_0^2+\int_{t_0}^\infty 4Kt e^{-t^2/(2v^2)}\,dt\\
&=2v^2\log(2K)+4Kv^2e^{-t_0^2/(2v^2)}
=2v^2\bigl[\log(2K)+1\bigr].
\end{align*}
For the process bound, take nested nets at distances $2^{-j}$ in the canonical metric and write $Z_t-Z_{t_0}$ as the telescoping sum of net increments. Each increment has standard deviation $O(2^{-j})$ and there are at most $\mathcal N(2^{-j})\mathcal N(2^{-j+1})$ possible pairs. Apply the preceding finite-maximum estimate to each level, and then Minkowski's inequality to the sum. The resulting Riemann sum is the displayed entropy integral. Separable continuity identifies the net limit with the process supremum; this is the standard Dudley argument \citep{dudley1967}.
\end{proof}

\begin{theorem}[Conditional Gaussian projection; see \citealp{gpml}]\label{thm:tool-gaussian-regression}
Let $G$ be an isonormal Gaussian process over a real separable Hilbert space $\mathcal H$: for finite $u_i$, the vector $(G(u_i))_i$ is jointly centered Gaussian with $\E[G(u)G(v)]=\langle u,v\rangle$. For deterministic $u_1,\ldots,u_k,u$, after observing $G(u_1),\ldots,G(u_k)$, let $P_k$ project onto $\operatorname{span}\{u_1,\ldots,u_k\}$. Conditional on those observations,
\[
G(u)=G(P_ku)+G((I-P_k)u),\qquad
G((I-P_k)u)\sim N(0,\|(I-P_k)u\|^2),
\]
and the last term is independent of the observed projections.
This statement concerns fixed features. For adaptive queries to the
retained tensor-series samples, the simultaneous version and conditional
kernel are proved separately in \Cref{lem:canonical-adaptive-kernel}.
\end{theorem}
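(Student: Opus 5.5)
The plan is to derive everything from two structural facts about an isonormal process: it is almost surely linear, and uncorrelated jointly Gaussian variables are independent. No regression formula with matrix inverses is needed, which also avoids any trouble when the Gram matrix of $u_1,\ldots,u_k$ is singular.

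\textbf{Step 1 (a.s. linearity).} For deterministic $a,b\in\cH$ and scalars $\alpha,\beta$, the variables $G(\alpha a+\beta b),G(a),G(b)$ are jointly centered Gaussian. Expanding with the covariance rule $\E[G(u)G(v)]=\langle u,v\rangle$ gives
\[
\E\bigl|G(\alpha a+\beta b)-\alpha G(a)-\beta G(b)\bigr|^2
=\|\alpha a+\beta b-\alpha a-\beta b\|^2=0 .
\]
Hence $G(\alpha a+\beta b)=\alpha G(a)+\beta G(b)$ almost surely. Applying this to the decomposition $u=P_ku+(I-P_k)u$ yields the displayed identity $G(u)=G(P_ku)+G((I-P_k)u)$ almost surely.

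\textbf{Step 2 (the projected part is observed).} Because $P_ku\in\operatorname{span}\{u_1,\ldots,u_k\}$, we may write $P_ku=\sum_i c_iu_i$ for some deterministic coefficients. If the $u_i$ are linearly dependent, choose any such representation, for instance via a basis extracted from the $u_i$. Step 1, applied finitely many times, gives $G(P_ku)=\sum_ic_iG(u_i)$ almost surely. So $G(P_ku)$ is measurable with respect to the observations, up to a null set, and is determined by them.

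\textbf{Step 3 (the residual is fresh).} Set $r=(I-P_k)u$. For each $i$ we have $\E[G(r)G(u_i)]=\langle r,u_i\rangle=0$, since $r\perp\operatorname{span}\{u_j\}$. The finite vector $(G(r),G(u_1),\ldots,G(u_k))$ is jointly centered Gaussian by the definition of an isonormal process. Zero cross-covariance therefore implies that $G(r)$ is independent of $(G(u_1),\ldots,G(u_k))$, and hence of every observed projection. Its variance is $\E G(r)^2=\|r\|^2$, so $G(r)\sim N(0,\|(I-P_k)u\|^2)$.

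Combining the three steps: conditional on the observations, $G(u)$ equals the known value $G(P_ku)$ plus an independent $N(0,\|(I-P_k)u\|^2)$ term, which is the claim.

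The only point requiring care is the independence in Step 3. It rests on \emph{joint} Gaussianity of the finite family, which holds only because the features are deterministic. This is exactly why the statement is restricted to fixed features, and why the adaptive case is deferred to \Cref{lem:canonical-adaptive-kernel}. The rest is routine.
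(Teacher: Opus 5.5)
Your proof is correct. The paper gives no proof of its own here: the statement is listed among the external tools and attributed to \citet{gpml}. There, conditioning follows from the standard Gaussian conditioning formula, the Schur complement of the joint covariance. In the present notation, set $K=(\langle u_i,u_j\rangle)_{i,j}$, $k=(\langle u_i,u\rangle)_i$ and $y=(G(u_i))_i$. That formula gives conditional mean $k^\top K^{-1}y$ and variance $\|u\|^2-k^\top K^{-1}k$, so it needs $K$ invertible or a pseudo-inverse.

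Your Hilbert-space route combines a.s.\ linearity, observability of $G(P_ku)$, and ``uncorrelated jointly Gaussian implies independent'' for the residual. It inverts nothing, so repeated or linearly dependent features cost nothing, and it shows directly that the conditional variance is the squared distance $\|(I-P_k)u\|^2$.

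What the matrix route buys is a canonical choice of coefficients. Your ``choose any representation'' is harmless for deterministic features. But when the paper redoes this computation with history-dependent features in \Cref{lem:canonical-adaptive-kernel}, the coefficients must be measurable functions of the history. It obtains them as $c=K^\dagger\Phi^*\phi_i$, using the Borel formula $K^\dagger=\lim_{q\to\infty}(K^2+q^{-1}I)^{-1}K$.

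One refinement of your closing remark: joint Gaussianity is not the only thing lost under adaptivity. The identities in your Steps 1--2 hold only off a null set that depends on the configuration. With adaptively chosen points, one cannot intersect these uncountably many null sets. This is why the paper first builds the simultaneous version in \Cref{lem:simultaneous-tensor-version}, on which finite linear identities hold pathwise for all configurations at once.
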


\subsection{Conditional laws and likelihood scores}
\begin{theorem}[Regular conditional distributions and disintegration; \citealp{kallenberg2021}]\label{thm:tool-disintegration}
If $X,Y$ take values in standard Borel spaces, there is a measurable
probability kernel $K$ such that, for every nonnegative measurable $F$,
\[
 \E F(X,Y)=\int\!\int F(x,y)K(y,dx)\,P_Y(dy).
\]
The kernel is a regular conditional law of $X$ given $Y$, unique
$P_Y$-almost everywhere. If an event $A$ has probability one, then
$K(y,A)=1$ for $P_Y$-almost every $y$. Countably many such assertions
can be required on one full-measure set.
\end{theorem}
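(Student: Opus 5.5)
The plan is the classical construction through conditional distribution functions, after reducing to the real line. Since $X$ takes values in a standard Borel space $S$, there is a Borel isomorphism $\iota:S\to E$ onto a Borel subset $E\subseteq[0,1]$. It therefore suffices to build a kernel for the real random variable $X'=\iota(X)$ and push it forward by $\iota^{-1}$. If the constructed kernel does not give full mass to $E$ for some $y$, I replace it there by a fixed point mass. This affects only a $P_Y$-null set, as the full-measure argument below shows. No structure on the space of $Y$ is needed beyond measurability.

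\emph{Step 1: conditional distribution function on the rationals.} For each rational $q$, let $\varphi_q(y)$ be a version of $\PP(X'\le q\mid Y=y)$, that is, the Radon--Nikodym density of $B\mapsto\PP(X'\le q,\,Y\in B)$ with respect to $P_Y$. Because there are countably many rationals, there is a single $P_Y$-full set $G$ on which three properties hold:
\begin{itemize}
\item $q\mapsto\varphi_q(y)$ is nondecreasing;
\item $\varphi_{q+1/n}(y)\to\varphi_q(y)$ as $n\to\infty$ (right continuity along rationals);
\item $\varphi_q\to0$ as $q\to-\infty$ and $\varphi_q\to1$ as $q\to+\infty$.
\end{itemize}
Each property holds a.e. for a fixed $q$ by monotonicity of conditional expectation and monotone convergence. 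On $G$, set $F_y(t)=\inf_{q>t}\varphi_q(y)$, which is a genuine distribution function. Let $K(y,\cdot)$ be its Lebesgue--Stieltjes measure. Off $G$, let $K(y,\cdot)$ be a fixed probability measure.

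\emph{Step 2: measurability and the disintegration identity.} By construction, the identity $\E[\ind\{X'\le t\}\ind_B(Y)]=\int_B K(y,(-\infty,t])\,P_Y(dy)$ holds for rational $t$, and hence for all $t$ by right limits. Both sides are finite measures in the $X'$-set, so a $\pi$--$\lambda$ argument over the half-lines extends it to all Borel $D$. The same argument shows that $y\mapsto K(y,D)$ is measurable. Next, apply a second monotone-class argument to rectangles $D\times B$. This gives $\E\,\ind_C(X',Y)=\iint\ind_C(x,y)K(y,dx)P_Y(dy)$ for every measurable $C$. Simple functions and monotone convergence then give the identity for nonnegative measurable $F$.

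\emph{Step 3: uniqueness and the full-measure assertions.} Suppose $K$ and $K'$ both satisfy the identity. Then for each set $D$ in a countable $\pi$-system generating the Borel sets, $K(y,D)=K'(y,D)$ for $P_Y$-a.e. $y$. On the intersection of these countably many full sets, the two measures coincide by the $\pi$--$\lambda$ theorem. For an event $A$ with $\PP(X\in A)=1$, take $F=\ind_{A^c}$ in the identity. This gives $\int K(y,A^c)\,P_Y(dy)=0$, so $K(y,A)=1$ for a.e. $y$. For countably many such assertions, intersect the corresponding countably many full sets.

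The main obstacle is Step 1: ensuring that the uncountably many a.e. statements collapse into one null set. This is exactly why I restrict to rationals first and invoke the Borel isomorphism. Everything afterward is routine monotone-class bookkeeping.
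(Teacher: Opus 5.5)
Your proof is correct and follows the standard argument in the cited Kallenberg reference. You embed the standard Borel space into a Borel subset of $[0,1]$, fix versions of the conditional distribution function at rational points outside a single $P_Y$-null set, pass to the Lebesgue--Stieltjes measures, extend by Dynkin/monotone-class arguments (obtaining measurability of $y\mapsto K(y,D)$ before extending the identity in $D$), and get uniqueness from a countable generating $\pi$-system; the paper offers no proof of its own beyond that citation.
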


\begin{theorem}[Adaptive Gaussian likelihood and score chain rule]\label{thm:tool-adaptive-score}
Let $v$ be a parameter on a smooth manifold and $\mathcal T_i$ an adaptive transcript. Conditional on $\mathcal T_{i-1}$ and $v$, suppose response $Y_i$ is either a parameter-independent point mass or has a Gaussian density with history-measurable variance $s_i^2>0$ and a differentiable conditional mean $m_i(v)$. The branch, variance and deterministic response are functions of the history, independent of the candidate parameter. Assume
$\sum_{i\le n}\E[\ind_{\{s_i>0\}}\|\nabla m_i(v)\|^2/s_i^2]<\infty$,
with the summand defined as zero at deterministic steps. Then, holding each realized history fixed while differentiating, the log-likelihood score is the sum of conditional scores
\[
\nabla_v\log p(\mathcal T_n\mid v)=\sum_{i=1}^n S_i,
\qquad S_i:=\frac{Y_i-m_i(v)}{s_i^2}\nabla_v m_i(v)
\]
for Gaussian responses, with $S_i=0$ for parameter-independent deterministic responses. Moreover
\[
\E[S_i\mid v,\mathcal T_{i-1}]=0,
\qquad
\E[\|S_i\|^2\mid v,\mathcal T_{i-1}]
=\frac{\|\nabla_v m_i(v)\|^2}{s_i^2}.
\]
The displayed second-moment expression is interpreted as zero at deterministic steps. In particular, conditional scores from different rounds are orthogonal in $L^2$, without independence of adaptive query locations.
\end{theorem}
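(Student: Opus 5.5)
The plan is to write the law of the transcript as a product of one-step conditional laws with respect to a reference measure that does not depend on $v$, and then to differentiate the logarithm term by term. First fix the dominating measure recursively. Given $\mathcal T_{i-1}$, the algorithm's next query location and label are chosen by a probability kernel that depends on $\mathcal T_{i-1}$ and on the independent seed, but not on $v$. On the Gaussian branch ($s_i>0$) the response reference measure is Lebesgue measure. On the deterministic branch it is the Dirac mass at the history-determined value. Since the branch, the variance and the deterministic value are functions of the history only, this reference measure $\nu_i(\cdot\mid\mathcal T_{i-1})$ is common to all candidate parameters. By the disintegration theorem (\Cref{thm:tool-disintegration}), the joint density of $\mathcal T_n$ with respect to the iterated product of the query kernels and the $\nu_i$ is
\[
p(\mathcal T_n\mid v)=\prod_{i\le n}p_i(Y_i\mid\mathcal T_{i-1},v).
\]
Each factor $p_i$ is the Gaussian density $(2\pi s_i^2)^{-1/2}\exp[-(Y_i-m_i(v))^2/(2s_i^2)]$ on the Gaussian branch and $1$ on the deterministic branch. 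The query kernels carry no $v$-dependence, so they do not appear as factors of the likelihood.

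Second, take logarithms and differentiate in $v$ with the realized history held fixed. The $v$-independent normalizers and the deterministic factors drop out. Differentiating $-(Y_i-m_i(v))^2/(2s_i^2)$ gives exactly
\[
S_i=\frac{Y_i-m_i(v)}{s_i^2}\nabla_v m_i(v).
\]
Summing over $i$ gives the chain rule. Differentiability of the $m_i$ is assumed, so this step is a pointwise computation on each fixed transcript.

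Third, compute the conditional moments. Given $(v,\mathcal T_{i-1})$ on the Gaussian branch, $Y_i-m_i(v)\sim N(0,s_i^2)$, while the factor $\nabla_v m_i(v)/s_i^2$ is measurable with respect to $(v,\mathcal T_{i-1})$. Hence
\[
\E[S_i\mid v,\mathcal T_{i-1}]=0,
\qquad
\E[\|S_i\|^2\mid v,\mathcal T_{i-1}]=\frac{s_i^2\,\|\nabla_v m_i(v)\|^2}{s_i^4}=\frac{\|\nabla_v m_i(v)\|^2}{s_i^2}.
\]
On deterministic steps both quantities are zero. For orthogonality across rounds, the summability hypothesis gives $S_i\in L^2$ for each $i$. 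For $i<j$, Cauchy--Schwarz makes $\langle S_i,S_j\rangle$ integrable. Since $S_i$ is $(v,\mathcal T_{j-1})$-measurable, the tower property gives
\[
\E\langle S_i,S_j\rangle=\E\bigl\langle S_i,\E[S_j\mid v,\mathcal T_{j-1}]\bigr\rangle=0.
\]
This argument never uses independence of the query locations.

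The main obstacle is measure-theoretic rather than computational. The transcript mixes atoms and densities, and the branch itself is adaptive, so one must ensure that a single $v$-free dominating measure exists and that the densities are jointly measurable in $(v,\mathcal T_n)$. The approach is to build the dominating measure recursively through the history-measurable kernels $\nu_i$ and invoke \Cref{thm:tool-disintegration} at each step. All countably many almost-sure identities needed above can then be imposed on one full-measure set of histories.
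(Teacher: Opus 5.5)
Your proposal is correct and follows the paper's argument. You build a $v$-free reference kernel branchwise from the history, form a product likelihood in which predictable query choices contribute no factor, differentiate termwise at a fixed history, compute the Gaussian conditional moments, and cancel cross terms by the tower property under the $L^2$ hypothesis. The differences are cosmetic: the paper uses $N(0,1)$ rather than Lebesgue measure as the Gaussian-branch reference, so every reference kernel is a probability kernel, and it needs no appeal to \Cref{thm:tool-disintegration} because the one-step conditional laws are already given by hypothesis.
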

\begin{proof}
Fix a realized history $h$. The positive-variance conditional kernel is
\[
p_i(y\mid v,h)=\frac1{\sqrt{2\pi}s_i(h)}
\exp\left[-\frac{(y-m_i(v,h))^2}{2s_i(h)^2}\right].
\]
The derivative at fixed $h$ is
$\nabla_v\log p_i=(y-m_i)\nabla_v m_i/s_i^2$.
Its conditional mean is zero and its conditional second moment is
$\|\nabla_v m_i\|^2/s_i^2$, by the first two moments of the
standardized Gaussian innovation.

For a common reference kernel, use $N(0,1)$ on a Gaussian branch
and the stated point mass on a deterministic branch. The choice is
history-measurable and parameter-independent. Gaussian densities
are positive, so their ratios to this reference kernel define a common
transcript likelihood. Predictable query rules contribute no extra
factor: at a fixed history their queries and labels are fixed.
Differentiating its finite product gives
$\nabla_v\log p(\mathcal T_n\mid v)=\sum_{i\le n}S_i$.

The finite energy assumption gives $S_i\in L^2$; Cauchy--Schwarz
makes every cross inner product integrable. For $i<j$, $S_i$ is
measurable with respect to $\mathcal T_{j-1}$, hence
\begin{align*}
 \E\langle S_i,S_j\rangle
 &=\E\langle S_i,\E[S_j\mid v,\mathcal T_{j-1}]\rangle=0,\\
 \E\Big\|\sum_{i\le n}S_i\Big\|^2
 &=\sum_{i\le n}\E\|S_i\|^2.
\end{align*}
The first equality is the conditional-expectation pull-out identity;
the second line expands the finite squared norm and cancels the
cross terms. This proves the claims with no positive lower bound
on the nonzero variances.
\end{proof}

\subsection{Sphere entropy and concentration}
\begin{theorem}[Spherical logarithmic Sobolev inequality; \citealp{beckner1993}]\label{thm:tool-lsi}
Let $\pi$ be uniform probability measure on the unit sphere $\mathbb S^{s-1}\subset\mathbb R^s$ with $s\ge4$. For every nonnegative smooth $p$ with $\int p\,d\pi=1$,
\[
\operatorname{Ent}_\pi(p):=\int p\log p\,d\pi
\le \frac1{2(s-2)}\int\frac{\|\nabla_{\mathbb S}p\|^2}{p}\,d\pi.
\]
The conclusion extends to densities with finite weak Fisher information by smooth approximation and lower semicontinuity.
\end{theorem}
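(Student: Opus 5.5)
The plan is to prove the statement by the Bakry--\'Emery semigroup argument on $M=\mathbb S^{s-1}$, a compact Riemannian manifold of dimension $n=s-1$ with constant Ricci curvature $n-1=s-2=:\rho>0$. The constant $\frac1{2(s-2)}$ is what curvature $\rho$ should produce. Writing $p=f^2$ gives $\int\|\nabla_{\mathbb S}p\|^2/p\,d\pi=4\int\|\nabla_{\mathbb S}f\|^2d\pi$. The curvature-$\rho$ inequality $\operatorname{Ent}_\pi(f^2)\le\frac2\rho\int\|\nabla f\|^2d\pi$ then becomes exactly the displayed bound. So it suffices to run the standard entropy/Fisher-information interpolation along the heat flow.

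First I reduce to smooth $p$ with $p\ge c>0$, replacing $p$ by $(p+\eta)/(1+\eta)$ and letting $\eta\downarrow0$ by monotone/dominated convergence on both sides. Let $P_t=e^{t\Delta_{\mathbb S}}$ be the heat semigroup, which is $\pi$-symmetric, mass preserving, positivity preserving, and ergodic ($P_tp\to1$ uniformly since $M$ is compact and connected). Set $p_t=P_tp$ and $I(q)=\int\|\nabla q\|^2/q\,d\pi$. Integration by parts (de Bruijn) gives $\frac{d}{dt}\operatorname{Ent}_\pi(p_t)=-I(p_t)$. Ergodicity gives $\operatorname{Ent}_\pi(p_t)\to0$, hence
\[
\operatorname{Ent}_\pi(p)=\int_0^\infty I(p_t)\,dt .
\]

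The key step is the exponential decay $I(p_t)\le e^{-2\rho t}I(p)$; integrating it yields $\operatorname{Ent}_\pi(p)\le I(p)/(2\rho)$, which is the claim. For this I use Bochner's formula
\[
\tfrac12\Delta\|\nabla g\|^2-\langle\nabla g,\nabla\Delta g\rangle=\|\nabla^2g\|_{HS}^2+\operatorname{Ric}(\nabla g,\nabla g)\ge\rho\|\nabla g\|^2,
\]
that is, $\Gamma_2\ge\rho\Gamma$. The route I will try first is to differentiate directly: set $\Lambda(s)=\int P_s\bigl(p_{t-s}\|\nabla\log p_{t-s}\|^2\bigr)d\pi$. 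Applying the $\Gamma_2$ computation to $\log p_{t-s}$ shows $\Lambda'(s)\ge 2\rho\Lambda(s)$; the Hessian term is nonnegative and can be discarded, since no dimensional improvement is needed. Equivalently, one first proves the pointwise commutation $\|\nabla P_tg\|\le e^{-\rho t}P_t\|\nabla g\|$. Then Cauchy--Schwarz gives $(P_t\|\nabla p\|)^2\le P_tp\cdot P_t(\|\nabla p\|^2/p)$, so
\[
I(p_t)\le e^{-2\rho t}\int P_t\bigl(\|\nabla p\|^2/p\bigr)d\pi=e^{-2\rho t}I(p)
\]
by invariance of $\pi$.

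I expect the main obstacle to be the pointwise (non-squared) gradient commutation, or equivalently the justification of the $\Lambda'(s)$ computation. The squared version $\|\nabla P_tg\|^2\le e^{-2\rho t}P_t\|\nabla g\|^2$ follows immediately from $\Gamma_2\ge\rho\Gamma$, but it loses the constant when combined with Cauchy--Schwarz. The sharp form needs the $\Gamma_2$ inequality applied to $\sqrt{\|\nabla g\|^2+\eta}$, or the log-based interpolation above, which uses the chain rule for $\Gamma_2$ (Bakry's lemma $\Gamma(\Gamma(g))\le4\Gamma(g)\Gamma_2(g)$ in the form available under $\mathrm{CD}(\rho,\infty)$). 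Smoothness and positivity of $p_t$ on the compact sphere make all differentiations under the integral legitimate. Finally, the extension to densities with finite weak Fisher information follows by mollifying with $P_\varepsilon$: entropy is continuous along $p_\varepsilon\to p$ in $L^1$ with $\operatorname{Ent}$ lower semicontinuous, and $I(P_\varepsilon p)\le I(p)$ by the decay just proved (or by convexity of $(a,b)\mapsto\|b\|^2/a$).
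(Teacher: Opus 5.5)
Your Bakry--\'Emery argument is correct, but it is not the paper's route: the paper gives no proof and imports the inequality from \citet{beckner1993}. Beckner obtains the spherical logarithmic Sobolev inequality as the $q\to2$ endpoint of a family of sharp Sobolev-type inequalities on the sphere. His constant is the sharp one, $\operatorname{Ent}_\pi(p)\le\frac{1}{2(s-1)}\int\|\nabla_{\mathbb S}p\|^2/p\,d\pi$, so the displayed $\frac1{2(s-2)}$ is a weakening of it.

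Your curvature route is self-contained and gives exactly the stated constant. Discarding the Hessian term in $\Gamma_2$ is what costs one unit of dimension. Keeping it through $\|\nabla^2g\|_{HS}^2\ge(\Delta g)^2/(s-1)$, i.e.\ working under $\mathrm{CD}(s-2,s-1)$, would recover Beckner's constant, which the paper never needs. Your argument also shows that $s\ge3$ suffices for this inequality; $s\ge4$ is used later in the paper only to get $s-2\ge s/2$.

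Two minor points. First, the obstacle you anticipate does not really arise. Your integrated $\Lambda$ is just $I$ evaluated along the flow, by invariance of $\pi$. For smooth $p_u\ge c>0$, the chain rule $\partial_u\log p_u=\Delta\log p_u+\|\nabla\log p_u\|^2$ and integration by parts give $\frac{d}{du}I(p_u)=-2\int p_u\Gamma_2(\log p_u)\,d\pi$ directly. So neither Bakry's lemma nor the strong pointwise commutation is needed.

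Second, in the extension step, convexity of $(a,b)\mapsto\|b\|^2/a$ alone does not give $I(P_\varepsilon p)\le I(p)$. On the sphere there is no identity $\nabla P_\varepsilon p=P_\varepsilon\nabla p$ to feed into Jensen. Either rely on the curvature gradient bound, as you first suggest, or mollify instead by averaging over rotations, for which the Jensen argument does apply.
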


\begin{theorem}[Spherical concentration and Haar symmetry; \citealp{vershynin2018}]\label{thm:tool-sphere}
For $V\sim\operatorname{Unif}(\mathbb S^{s-1})$ and deterministic unit $u$,
\[
\E e^{\lambda\langle u,V\rangle}\le e^{\lambda^2/(2s)},\qquad
\PP(\langle u,V\rangle\ge t)\le e^{-st^2/2},\quad t\ge0.
\]
If an orthonormal frame is sampled according to Haar measure, then conditioned on its first $k$ columns, the next column is uniform on the sphere in their orthogonal complement. The same conditional law holds after observing any random variables measurable with respect to the revealed prefix and independent auxiliary randomness, provided the unexposed column did not enter them.
\end{theorem}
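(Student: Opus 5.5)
We prove the three assertions separately. The moment generating function bound comes from an exact moment computation via a Gaussian representation. The tail bound then follows by a Chernoff argument. The Haar statements follow from invariance of Haar measure together with disintegration (\Cref{thm:tool-disintegration}).

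\emph{Moment generating function.} By rotation invariance we may take $u=e_1$, so $X:=\langle u,V\rangle=V_1$. We use the representation $G=\|G\|V$ with $G\sim\mathcal N(0,I_s)$, where $\|G\|$ and $V=G/\|G\|$ are independent. This gives
\[
 (2k-1)!!=\E G_1^{2k}=\E\|G\|^{2k}\,\E X^{2k},
 \qquad
 \E\|G\|^{2k}=s(s+2)\cdots(s+2k-2)\ge s^k .
\]
The last identity is the chi-square moment formula, and its lower bound holds factor by factor. Hence $\E X^{2k}\le(2k-1)!!/s^k$, and all odd moments vanish by symmetry. Using $(2k)!=2^kk!\,(2k-1)!!$ and expanding the exponential termwise, which is justified since $|X|\le1$, we get
\[
 \E e^{\lambda X}=\sum_{k\ge0}\frac{\lambda^{2k}\E X^{2k}}{(2k)!}
 \le\sum_{k\ge0}\frac{1}{k!}\Bigl(\frac{\lambda^2}{2s}\Bigr)^k
 =e^{\lambda^2/(2s)} .
\]

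\emph{Tail bound.} Markov's inequality gives $\PP(X\ge t)\le e^{-\lambda t+\lambda^2/(2s)}$ for every $\lambda\ge0$. Taking $\lambda=st$ yields $e^{-st^2/2}$.

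\emph{Haar conditional law.} Let $W=(w_1,\ldots,w_n)$ be Haar distributed on orthonormal $n$-frames in $\R^D$. Let $O$ be any orthogonal map fixing $w_1,\ldots,w_k$ pointwise; such maps act on $E=\operatorname{span}\{w_1,\ldots,w_k\}^\perp$. Left invariance of Haar measure, applied to maps measurably parametrized by the prefix, shows that $(w_1,\ldots,w_k,Ow_{k+1})$ has the same joint law as $(w_1,\ldots,w_k,w_{k+1})$. Hence a regular conditional law of $w_{k+1}$ given the prefix is, for almost every prefix, a probability measure on the unit sphere of $E$. This measure is invariant under the full orthogonal group of $E$, and the unique such measure is the uniform one. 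The sphere here has intrinsic dimension $s=D-k$.

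\emph{Additional observations.} Suppose the observed variables are measurable functions $\Lambda=F(w_1,\ldots,w_k,\zeta)$, where $\zeta$ is independent of $W$. Conditioning first on $(w_{\le k},\zeta)$, independence of $\zeta$ leaves the conditional law of $w_{k+1}$ uniform on the sphere of $E$. The tower property then shows that conditioning further down to the coarser $\sigma$-field generated by $(w_{\le k},\Lambda)$ preserves this law, because the law does not depend on $\zeta$.

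The main subtlety, and the only step needing care, is this last extension in adaptive settings. There the ``observations'' are produced sequentially, and one must check that they are genuinely functions of the prefix and independent randomness only. This is exactly the hypothesis ``the unexposed column did not enter them.'' I would state the extension only in this measurable-function form and leave verification of the hypothesis to its uses, namely the prefix simulator and \Cref{lem:static-coupling}.
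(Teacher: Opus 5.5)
The paper gives no proof of this statement; it imports it as an external tool from \citet{vershynin2018}. Your proof of the two inequalities is correct and self-contained. It also gives something the citation leaves implicit. The polar decomposition and the chi-square moments show that $V_1$ is dominated moment by moment by $\mathcal N(0,1/s)$, which yields exactly the constant $1/(2s)$. A generic sub-Gaussian bound would carry an unspecified absolute constant, whereas the paper's explicit exponents, such as $e^{-225s/512}$ in the list bound and $(r-T)/(32R^2)$ in the projection bound, need the sharp form. Your tower-property argument for observations $\Lambda=F(w_{\le k},\zeta)$ is also fine.

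The Haar step, however, is not justified as written. Left invariance gives $QW\overset{d}{=}W$ only for a fixed orthogonal $Q$, or one independent of $W$. It says nothing about $O(w_{\le k})W$, where the rotation is a function of $W$, even if only through the prefix.

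Checking directly that $\Phi(W)=(w_{\le k},O(w_{\le k})w_{>k})$ preserves the law is circular. One computes $Q\Phi(W)=\Phi'(QW)$, where $\Phi'$ is built from the conjugated stabilizer map $b\mapsto QO(Q^{-1}b)Q^{-1}$, so you are left with the same claim for another map. Disintegration does not rescue it directly either. For each fixed $Q$, the identity $\kappa(Qa,\cdot)=Q_\#\kappa(a,\cdot)$ holds only for almost every prefix $a$, while the set of prefixes fixed by a given $Q\neq I$ is null. Even granting the identity for each prefix-parametrized map, you would still need a countable dense subgroup and weak continuity to get invariance under all of $O(E)$ on one full-measure set of prefixes.

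A direct repair is to realize the frame as the first columns of a Haar $Q\in O(D)$. Take $S$ Haar on $O(D-k)$, independent of $Q$, and put $\tilde Q=Q\operatorname{diag}(I_k,S)$. Right invariance under each fixed block rotation, together with Fubini, gives $\tilde Q\overset{d}{=}Q$. Moreover, $\tilde Q$ has the same first $k$ columns as $Q$. Its $(k+1)$-st column $Q_{>k}Se_1$ is, conditionally on $Q$, uniform on the unit sphere of $E$, because $Se_1$ is uniform and $Q_{>k}$ is an isometry onto $E$.

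Equivalently, you can apply Gram--Schmidt to i.i.d.\ Gaussians $g_1,g_2,\ldots$. Then $w_{k+1}=P_Eg_{k+1}/\|P_Eg_{k+1}\|$, and $P_Eg_{k+1}$ is standard Gaussian in $E$ given $g_{\le k}$. Either route yields the conditional law outright, after which your argument for the auxiliary observations applies unchanged.
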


\subsection{Information contraction}
\begin{theorem}[Binary information contraction; \citealp{duchi2013}]\label{thm:tool-binary}
For joint law $P_{XY}$ and product $Q=P_X\otimes P_Y$, and any measurable event $E\subseteq\mathcal X\times\mathcal Y$, set $p=P_{XY}(E)$ and $q=Q(E)\in(0,1)$. Then
\[
I(X;Y)=D_{\rm KL}(P_{XY}\|Q)
\ge D_{\rm KL}(\operatorname{Ber}(p)\|\operatorname{Ber}(q))
\ge p\log(1/q)-\log 2.
\]
\end{theorem}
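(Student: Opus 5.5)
The plan has three links: the equality is a definition, the first inequality is data processing applied to the indicator of $E$, and the last inequality is an elementary bound on binary relative entropy.

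\textbf{Step 1 (the equality).} $I(X;Y)=D_{\rm KL}(P_{XY}\|P_X\otimes P_Y)$ is the standard definition of mutual information on standard Borel spaces. I take it as the definition, with value $+\infty$ when $P_{XY}\not\ll Q$. In that case both inequalities hold trivially, so I may assume $P_{XY}\ll Q$ with density $\rho=dP_{XY}/dQ$.

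\textbf{Step 2 (the contraction).} Apply data processing to the measurable map $\ind_E:\mathcal X\times\mathcal Y\to\{0,1\}$, which pushes $P_{XY}$ to $\operatorname{Ber}(p)$ and $Q$ to $\operatorname{Ber}(q)$. Concretely, write
\[
D_{\rm KL}(P_{XY}\|Q)=\int_E\rho\log\rho\,dQ+\int_{E^c}\rho\log\rho\,dQ .
\]
On $E$, apply Jensen's inequality to the convex function $t\mapsto t\log t$ under the normalized measure $Q(\cdot\cap E)/q$. This gives
\[
\int_E\rho\log\rho\,dQ\ \ge\ q\cdot\frac pq\log\frac pq=p\log\frac pq .
\]
The same argument on $E^c$, using $1-q>0$, gives $(1-p)\log\frac{1-p}{1-q}$. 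Summing the two pieces gives $D_{\rm KL}(\operatorname{Ber}(p)\|\operatorname{Ber}(q))$. The convention $0\log0=0$ covers the cases $p\in\{0,1\}$. The hypothesis $q\in(0,1)$ ensures both normalizations are legitimate.

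\textbf{Step 3 (the elementary bound).} Expand the binary divergence:
\[
D_{\rm KL}(\operatorname{Ber}(p)\|\operatorname{Ber}(q))
=p\log\tfrac1q+(1-p)\log\tfrac1{1-q}-h(p),
\]
where $h(p)=-p\log p-(1-p)\log(1-p)$ is the binary entropy in nats. The middle term is nonnegative because $1-q\le1$. Also $h(p)\le\log2$ by concavity of $h$, with the maximum at $p=1/2$. Together these give $\ge p\log(1/q)-\log 2$.

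\textbf{Main obstacle.} Nothing here is deep. The only step needing care is Step 2 in full measure-theoretic generality: the restriction to $P_{XY}\ll Q$, the convexity argument on each cell, and the degenerate values of $p$. I would handle these exactly as above, or simply cite the data-processing inequality for the deterministic kernel $\ind_E$.
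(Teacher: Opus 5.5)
Your proposal is correct and follows the paper's route. The first inequality is data processing under the indicator map $\mathbf 1_E$. The second comes from expanding the binary divergence, discarding the nonnegative term $(1-p)\log\frac{1}{1-q}$, and bounding the binary entropy by $\log 2$; the only difference is that you verify the data-processing step directly by Jensen's inequality on $E$ and $E^c$, whereas the paper simply cites it.
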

\begin{proof}[Derivation of the last inequality]
Expand the binary divergence without omitting either term:
\begin{align*}
D_{\rm KL}(\operatorname{Ber}(p)\|\operatorname{Ber}(q))
&=p\log\frac pq+(1-p)\log\frac{1-p}{1-q}\\
&=p\log\frac1q+\bigl[p\log p+(1-p)\log(1-p)\bigr]
 +(1-p)\log\frac1{1-q}\\
&\ge p\log\frac1q-h_2(p)
\ge p\log\frac1q-\log2.
\end{align*}
The first inequality discards a nonnegative term, and the last uses the binary entropy bound $h_2(p)\le\log2$. The data-processing inequality gives the first displayed inequality by mapping the pair $(X,Y)$ to $\mathbf 1_E$.
\end{proof}

\paragraph{Elementary inequalities.}
For nonnegative random variables $Z$, $\PP(Z>t)\le\E Z/t$. If $X$ and $Y$ lie in $L^2$, then $\|X+Y\|_{L^2}\le\|X\|_{L^2}+\|Y\|_{L^2}$. For a nonnegative integer-valued cost $C$ and threshold $N$, $\E C\ge N\PP(C>N)$.

%% file: sections/geometry_proofs.tex
\section{Global Geometry and Stationary Certificates}
\label[appendix]{app:geometry}
\subsection*{Scalar profiles and the complete mean}
Let $U\in\R^{r\times T}$ and $V\in\R^{m\times T}$ be independent
Haar orthonormal frames, $r,m\ge T$. Use $y,w,P$ from
\eqref{eq:soft}--\eqref{eq:confinement}, and the clamped quintic
$S_5$ and activations from \eqref{eq:smooth-activation}.
The zero-chain of \citet{carmon2019} is
\begin{gather*}
 \Psi(t)=\begin{cases}0,&t\le1/2,\\e^{1-(2t-1)^{-2}},&t>1/2,\end{cases}
 \qquad \Phi(t)=\sqrt e\int_{-\infty}^t e^{-u^2/2}\,du,\\
 b_1(w)=-\Phi(w_1),\qquad
 b_i(w)=\Psi(-w_{i-1})\Phi(-w_i)-\Psi(w_{i-1})\Phi(w_i)\quad(i>1),\\
 B_T=\sum_{i=1}^T b_i.
\end{gather*}
The profiles are $C^2$, including at $w_i=0$, where $g_i$ is locally
constant. Define
\begin{gather*}
 A=128,\quad\beta=16,\quad\rho_0=3/32,\qquad
 h(q)=4(\sqrt{1+\|q\|^2}-1),\\
 a=A\sum_i g_iv_i,\qquad a_{<i}=A\sum_{j<i}g_jv_j,\qquad
 c_i(w)=b_i(w)\tau(\|g_{\ge i}\|).
\end{gather*}
Together with the cutoffs \eqref{eq:cutoff-profiles}, these specify
every term of \eqref{eq:mean-definition}.

\subsection{Elementary derivatives of the chain and activations}
\begin{lemma}[Chain and activation derivatives]\label[lemma]{lem:chain-derivatives}
For the profiles in this appendix,
\begin{equation}\label{eq:app-chainbounds}
\begin{aligned}
|b_i|&\le12,\quad \|Db_i\|\le20,\quad \|D^2b_i\|\le160,\\
\|DB_T\|&\le24\sqrt T,\quad \|D^2B_T\|\le160,\\
\|Dg\|&\le\frac{15}{2},\qquad\|D^2g\|\le93.
\end{aligned}
\end{equation}
Every coordinate derivative of $B_T$ is nonpositive. If
$i=\min\{j:|w_j|\le1\}$ exists, then
\begin{equation}\label{eq:app-chain-barrier}
 \partial_iB_T(w)\le-1.
\end{equation}
\end{lemma}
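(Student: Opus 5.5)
The plan is to reduce every claim to one-dimensional profile bounds and then assemble them using the chain's sparsity. First record the scalar constants. For $\Phi$, we have $0\le\Phi\le\sqrt{2\pi e}<4.2$, $\Phi'=\sqrt e\,e^{-t^2/2}\in(0,\sqrt e]$, and $|\Phi''|=\sqrt e\,|t|e^{-t^2/2}\le1$. For $\Psi$, we have $0\le\Psi\le e$, and since $\Psi$ is the Carmon et al.\ profile, $0\le\Psi'\le\sqrt{54/e}<4.5$ and $|\Psi''|\le C$ with an explicit small $C$ (their Lemma~1 gives $|\Psi''|\le 32.5$ or similar; I would recompute by maximizing $(2t-1)^{-3}$- and $(2t-1)^{-6}$-type terms against $e^{-(2t-1)^{-2}}$).

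Then $|b_i|\le 2\,e\,\sqrt{2\pi e}$. Only one of the two products in $b_i$ is nonzero at any point, since $\Psi(\pm w_{i-1})$ cannot both be positive, so in fact $|b_i|\le e\cdot4.14<12$. Each $b_i$ depends on the two variables $(w_{i-1},w_i)$, so $Db_i$ and $D^2b_i$ are $2$-vectors and $2\times2$ matrices. Their entries are products such as $\Psi'\Phi$, $\Psi\Phi'$, $\Psi''\Phi$, $\Psi'\Phi'$, and $\Psi\Phi''$, and bounding each entry gives the constants $20$ and $160$.

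For $B_T$, coordinate $j$ appears only in $b_j$ and $b_{j+1}$. Hence $|\partial_jB_T|\le 2\cdot 12$, which gives $\|DB_T\|\le24\sqrt T$. The Hessian is tridiagonal with entries bounded through the $b_i$ bounds. Any vector sum over $i$ of local $2\times2$ blocks then has operator norm at most $2\max_i\|D^2b_i\|$; to land at $160$ rather than $320$, I would split the blocks into diagonal and off-diagonal contributions and apply Gershgorin row sums directly to the explicit entries. This constant bookkeeping is the main (tedious) obstacle, and I would try Gershgorin first.

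For the activations, $S_5'(u)=30u^2(1-u)^2\le15/8$ and $|S_5''|=|60u(1-u)(1-2u)|\le 10/\sqrt3$. With $\Gamma(t)=S_5(4t-1)$, we get $|\Gamma'|\le 15/2$ and $|\Gamma''|\le 160/\sqrt3<93$. Since $g_i$ depends only on $w_i$, $Dg$ and $D^2g$ are diagonal, so their norms are these scalar suprema. Because $\Gamma$ vanishes near $0$, $|\cdot|$ causes no issue.

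For the signs, write
\[
\partial_jB_T=\underbrace{-\Psi(-w_{j-1})\Phi'(-w_j)-\Psi(w_{j-1})\Phi'(w_j)}_{\partial_j b_j}+\underbrace{\bigl[-\Psi'(-w_j)\Phi(-w_{j+1})-\Psi'(w_j)\Phi(w_{j+1})\bigr]}_{\partial_j b_{j+1}},
\]
with the convention $\partial_1b_1=-\Phi'(w_1)$. Every term is nonpositive because $\Psi,\Psi',\Phi,\Phi'\ge0$.

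For the barrier, let $i$ be minimal with $|w_i|\le1$. If $i=1$, then $\partial_1B_T\le-\Phi'(w_1)\le-\sqrt e\,e^{-1/2}=-1$. If $i>1$, then $|w_{i-1}|>1$, so one of $\Psi(\pm w_{i-1})$ equals $\Psi(|w_{i-1}|)>\Psi(1)=1$. Dropping the nonpositive $b_{i+1}$ part and keeping the matching term gives $\partial_iB_T\le-\Psi(|w_{i-1}|)\Phi'(\pm w_i)<-1\cdot\sqrt e\,e^{-1/2}=-1$, which is exactly \eqref{eq:app-chain-barrier}.
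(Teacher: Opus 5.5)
Your outline follows the paper's proof step for step. It uses the same one-variable bounds for $\Psi,\Phi,S_5$ and their derivatives, the disjoint-support observation, and the sign formulas for $\partial_jb_j$ and $\partial_jb_{j+1}$. The barrier argument via $\Psi(|w_{i-1}|)>\Psi(1)=1$ and $\Phi'(w_i)\ge1$ is also the paper's, as is the Gershgorin row-sum bound for the tridiagonal Hessian.

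One step is wrong as written: $|\partial_jB_T|\le2\cdot12$ does not follow from bounding $\partial_jb_j$ and $\partial_jb_{j+1}$ by $12$ each. Take $w_j$ at the maximizer of $\Psi'$ and let $w_{j+1}\to\infty$. Then $|\partial_jb_{j+1}|=\Psi'(\pm w_j)\Phi(\pm w_{j+1})$ tends to $\sup\Psi'\cdot\sup\Phi\approx4.46\cdot4.13\approx18.4>12$. The constant $12$ bounds the values $b_i$, not their derivatives, and using $\|Db_i\|\le20$ per term would only give $40\sqrt T$. The correct bookkeeping, which is the paper's, is $|\partial_jB_T|\le\sup\Psi\,\sup\Phi'+\sup\Psi'\,\sup\Phi<4.5+18.5<24$. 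So the stated bound $24\sqrt T$ survives once you replace your justification.

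For $\|Db_i\|\le20$ and for the Hessian bound, you also need disjointness of the supports of $\Psi^{(k)}(w)$ and $\Psi^{(k)}(-w)$ for $k=1,2$, not only $k=0$. Without it, the own-coordinate entry $\partial_{jj}b_{j+1}=\Psi''(-w_j)\Phi(-w_{j+1})-\Psi''(w_j)\Phi(w_{j+1})$ is only bounded by $2\sup|\Psi''|\sup\Phi>260$. With it, each Hessian row contains one $\Psi''\Phi$ term, one $\Psi\Phi''$ term and two mixed $\Psi'\Phi'$ terms. Using $|\Psi''|\le32.5$, the row sum is below $32.5\cdot4.14+e+2\cdot4.46\cdot1.65<153<160$. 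Your Gershgorin plan therefore closes with no further bookkeeping.
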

\begin{proof}
Here derivatives of the vector $g=(g_1,\ldots,g_T)$ are in the Euclidean operator norms. To obtain \eqref{eq:app-chainbounds}, bound the elementary factors by
\[
0\le\Psi<11/4,\quad0\le\Psi'<9/2,\quad|\Psi''|<34,
\qquad 0<\Phi<25/6,\quad0<\Phi'<5/3,\quad|\Phi''|\le1.
\]
For $u=2t-1>0$ and $a=u^{-2}$, direct differentiation gives
\[
\Psi'(t)=4e a^{3/2}e^{-a},\qquad
\Psi''(t)=8e(2a^3-3a^2)e^{-a}.
\]
The maximum of the square of the first derivative is $54/e<(9/2)^2$. For $0\le a\le3/2$, $a^2(3-2a)\le1$ bounds the second derivative by $8e<22$. On $a\ge3/2$, its maximum occurs at $a_+=(9+\sqrt{33})/4\in(11/3,37/10)$; monotonicity of $a^2(2a-3)$ gives
\[
 \Psi''(t)\le8e(37/10)^2(22/5)e^{-11/3}
 =\frac{60236}{125e^{8/3}}<\frac{60236}{1775}<34.
\]
Here $e^{8/3}>71/5$ follows from its positive Taylor sum through
order eight. Moreover
\[
\Phi'(t)=e^{(1-t^2)/2},\quad |t|\Phi'(t)\le1,
\quad \sqrt{2\pi e}<25/6.
\]
The supports of $\Psi(t)$ and $\Psi(-t)$, including their first two
derivative supports, are disjoint. Direct differentiation gives
\begin{equation}\label{eq:app-chain-sign}
\begin{aligned}
\partial_1b_1&=-\Phi'(w_1)\le0,\\
\partial_i b_i
&=-[\Psi(-w_{i-1})+\Psi(w_{i-1})]\Phi'(w_i)\le0
&& (2\le i\le T),\\
\partial_i b_{i+1}
&=-\Psi'(-w_i)\Phi(-w_{i+1})
  -\Psi'(w_i)\Phi(w_{i+1})\le0
&& (1\le i<T),\\
\partial_i B_T
&=\begin{cases}
\partial_i b_i+\partial_i b_{i+1},&i<T,\\
\partial_T b_T,&i=T
\end{cases}
\ \le0.
\end{aligned}
\end{equation}
The first line differentiates $b_1$; the second uses the evenness
of $\Phi'$. The third uses
$\Psi'\ge0$ and $\Phi>0$. Only the current and successor terms
depend on $w_i$, giving the last line, including the endpoint $i=T$.
If $i>1$ is the first index with $|w_i|\le1$, then
\begin{equation}\label{eq:app-chain-predecessor}
\begin{aligned}
 \Psi(w_{i-1})+\Psi(-w_{i-1})
 &=\Psi(|w_{i-1}|)\\
 &=\exp\!\left(1-(2|w_{i-1}|-1)^{-2}\right)\ge1.
\end{aligned}
\end{equation}
Here $|w_{i-1}|>1$ makes the opposite-sign term zero and
$2|w_{i-1}|-1>1$ makes the exponent nonnegative. For $i=1$,
$\partial_1b_1=-\Phi'(w_1)$ directly. Thus in both cases
\begin{equation}\label{eq:app-chain-barrier-proof}
\begin{aligned}
 \partial_iB_T(w)
 &\le\partial_i b_i
 &&\text{by \eqref{eq:app-chain-sign}},\\
 &\le-\Phi'(w_i)
 &&\text{by \eqref{eq:app-chain-predecessor}, or $i=1$},\\
 &=-\exp((1-w_i^2)/2)
 &&\text{by the definition of $\Phi$},\\
 &\le-1
 &&\text{since $|w_i|\le1$}.
\end{aligned}
\end{equation}
Each $b_i$ depends only on $(w_{i-1},w_i)$; the Hessian of $B_T$ is therefore tridiagonal. The support disjointness keeps the derivative estimates uniform rather than introducing a factor of $T$. In particular, the first-coordinate identity is $\partial_1b_1=-\Phi'(w_1)$ and all successor contributions remain nonpositive. Hence
\begin{align*}
|b_i|&\le(11/4)(25/6)=275/24<12,\\
\|Db_i\|&\le\sqrt{(75/4)^2+(55/12)^2}<20,\\
|(DB_T)_i|&\le(9/2)(25/6)+(11/4)(5/3)=70/3<24,\\
\max\{\|D^2b_i\|,\|D^2B_T\|\}&\le34(25/6)+11/4+2(9/2)(5/3)=1913/12<160.
\end{align*}
The same last expression bounds every absolute Hessian row sum:
one own-coordinate term, one successor term, and two neighboring
mixed terms. The symmetric-matrix row-sum bound gives the stated
operator norm; the coordinate gradient bound gives $24\sqrt T$.

On $[0,1]$,
\[
 S_5'(u)=30u^2(1-u)^2,\qquad
 S_5''(u)=60u(1-u)(1-2u).
\]
Both derivatives vanish at $0$ and $1$, so clamping preserves
$C^2$ regularity. The extrema give
$\max|S_5'|=15/8$ at $u=1/2$ and
$\max|S_5''|=10/\sqrt3$ at $u=(3\pm\sqrt3)/6$.
Since $g_i$ is constant near $w_i=0$, the absolute value in
\eqref{eq:smooth-activation} introduces no singularity. The chain
rule and the diagonal coordinate dependence give
\begin{equation}\label{eq:app-activation-derivatives}
\begin{aligned}
 \|Dg\|&\le4\max|S_5'|=15/2,\\
 \|D^2g\|&\le16\max|S_5''|=160/\sqrt3<93.
\end{aligned}
\end{equation}
For the bilinear norm in the second line, use
$\sum_i u_i^2v_i^2\le\|u\|^2\|v\|^2$. The derivatives are zero
on the constant pieces, including their endpoints.

\end{proof}

\subsection{All-space stationary localization for the base zero-chain}
\begin{lemma}[Base-chain regularity and stationary localization]
\label[lemma]{lem:base-chain}
The base objective satisfies
\begin{equation}\label{eq:base-cert}
\begin{gathered}
 F_{\rm base}(0)-\inf F_{\rm base}\le12T,\quad
 \|\nabla F_{\rm base}\|\le37\sqrt T,\quad
 \|\nabla^2F_{\rm base}\|\le161,\\
 \|\nabla F_{\rm base}\|\le1/5\ \Longrightarrow\ w_i>1\ (\forall i).
\end{gathered}
\end{equation}
\end{lemma}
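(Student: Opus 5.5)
We prove the four assertions in turn, working directly with $z\in\R^r$. The chain map satisfies $\alpha^{-1}=\sqrt{1+\|z\|^2/R^2}$, so $\nabla P=\tfrac{R^2}{5}\nabla\alpha^{-1}=\tfrac{R^2}{5}\cdot\tfrac{z}{R^2\alpha^{-1}}=y/5$. We also use the Jacobian $Dy=\alpha(I-\alpha^2 zz^\top/R^2)=\alpha(I-yy^\top/R^2)$. This is symmetric with eigenvalues $\alpha$ on $y^\perp$ and $\alpha(1-\|y\|^2/R^2)=\alpha^3$ along $y$, so $\|Dy\|\le1$. Hence
\[
\nabla F_{\rm base}=\tfrac15 y+Dy\,U\nabla B_T(w).
\]

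\textbf{Gap.} We have $P\ge0=P(0)$. Each $|b_i|\le12$ by \Cref{lem:chain-derivatives}, so $|B_T|\le12T$. This gives the crude estimate $F_{\rm base}(0)-\inf F_{\rm base}\le B_T(0)-\inf B_T\le 24T$, which is too weak by a factor of two. To sharpen it, we compute $B_T(0)=-\Phi(0)$, since $\Psi(0)=0$ kills all $i\ge2$ terms. We then bound $\inf B_T$ from below by $-\sum_i\sup(\Psi\Phi)$ together with $-\Phi(w_1)\ge-\sqrt{2\pi e}$. Using the elementary bounds $\Psi<11/4$ and $\Phi<25/6$, this gives $B_T(0)-\inf B_T\le\tfrac{275}{24}T+\tfrac{25}{6}\le12T$ for $T\ge1$. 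Note that only one of $\Psi(\pm w_{i-1})$ is nonzero, and the negative part of $b_i$ is at most $\Psi\Phi<275/24$.

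\textbf{Gradient and Hessian.} For the gradient, $\|y\|<R=64\sqrt T$ and $\|U\nabla B_T\|\le24\sqrt T$ give $\|\nabla F_{\rm base}\|\le\tfrac{64}{5}\sqrt T+24\sqrt T<37\sqrt T$. For the Hessian we have
\[
\nabla^2F_{\rm base}=\tfrac15 Dy+Dy\,U\nabla^2B_T\,U^\top Dy+D^2y[\,\cdot\,,\cdot\,]^\top U\nabla B_T.
\]
The first two terms are bounded by $1/5+160$. The third term is the difficult one, since $\|U\nabla B_T\|$ can be of order $\sqrt T$. We plan to use $\|D^2y\|\lesssim\alpha^2/R$ (differentiate $\alpha(I-yy^\top/R^2)$, each derivative of $\alpha$ or $y/R$ bringing a factor $1/R$). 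This gives a contribution of at most $C\cdot24\sqrt T/(64\sqrt T)$, which is an absolute constant. With $R=64\sqrt T$ the goal is to show this constant is below $4/5$, so the total is at most $161$. This explicit constant check is the routine obstacle, and we would do it by computing $D^2y$ exactly: $D^2y[h,h]=-\tfrac{\alpha^3}{R^2}\bigl(2\langle z,h\rangle h+\|h\|^2z\bigr)+\tfrac{3\alpha^5}{R^4}\langle z,h\rangle^2 z$. Its norm is at most $3\alpha^2\|y\|/R^2\cdot\|h\|^2\le 3/R$ times small factors.

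\textbf{Localization.} Suppose $\|\nabla F_{\rm base}\|\le1/5$. \emph{Case $\|z\|\ge R$.} Then $\|y\|\ge R/\sqrt2$, and the radial component satisfies
\[
\langle\nabla F_{\rm base},\hat y\rangle=\tfrac{\|y\|}{5}+\alpha^3\langle\hat y,U\nabla B_T\rangle\ge\tfrac{R}{5\sqrt2}-\alpha^3\cdot24\sqrt T.
\]
Here $\alpha^3\le2^{-3/2}$, so this is at least $\sqrt T(9.05-8.49)>1/5$, a contradiction. \emph{Case $\|z\|<R$.} Then $\alpha>1/\sqrt2$. Suppose some $|w_i|\le1$, and let $i$ be the first such index. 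Take $\langle u_i,\cdot\rangle$ of the gradient:
\[
\langle u_i,\nabla F\rangle=\tfrac{w_i}{5}+\alpha\,\partial_iB_T-\tfrac{\alpha^3}{R^2}\langle u_i,y\rangle\langle y,U\nabla B_T\rangle.
\]
Its magnitude is at least $\alpha(1-|w_i|\cdot24\sqrt T\|y\|/R^2)-\tfrac15\ge\tfrac1{\sqrt2}(1-\tfrac{24}{64})-\tfrac15>\tfrac15$, a contradiction. Hence $|w_i|>1$ for all $i$. \emph{Excluding $w_i<-1$.} Here we use the sign structure \eqref{eq:app-chain-sign}: all $\partial_jB_T\le0$. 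Consider the first $i$ with $w_i<-1$ and pair the gradient with $u_i$: the term $w_i/5<-1/5$ and $\alpha\partial_iB_T\le0$ push the same way, while the radial correction is controlled as above. We expect to need an additional observation here: with $w_{i-1}>1$ (or $i=1$), $|\partial_iB_T|\ge\Phi'(w_i)$ is small for large $|w_i|$, so the argument should instead rest on the $w_i/5$ term together with sign-coherence of the correction term, $-\tfrac{\alpha^3}{R^2}w_i\langle y,U\nabla B_T\rangle\le0$, since $w_i<0$ and $\langle y,U\nabla B_T\rangle=\sum_j w_j\partial_jB_T$. We would verify the latter, or its bound by $\|y\|\,24\sqrt T/R^2\le3/8$ times $|w_i|$, which suffices since $\tfrac15-\tfrac38\cdot\tfrac15\cdot$ gives strict negativity below $-1/5\cdot\tfrac58$, to be reconciled with the $1/5$ threshold. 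This last sign bookkeeping is where we expect the main difficulty.
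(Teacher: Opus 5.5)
\textbf{Where the gap is.} Your exterior case $\|z\|\ge R$, your first-index argument for $|w_i|\le1$, and your gradient bound are sound and match the paper. (In your $u_i$-component display the coefficient should be $\alpha/R^2$, not $\alpha^3/R^2$, since $Dy\,Uq=\alpha Uq-\alpha y\langle y,Uq\rangle/R^2$; your subsequent estimate in fact uses $\alpha$.)

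The genuine gap is the exclusion of $w_i<-1$, which you leave open. Write $q=\nabla B_T(w)$, so that $\nabla F_{\rm base}=\alpha Uq+cy$ with $c=\frac15-\alpha\langle w,q\rangle/R^2$. Since $q_i\le0$, you get $\langle u_i,\nabla F_{\rm base}\rangle=\alpha q_i+cw_i\le cw_i$, and everything hinges on knowing $c\ge\frac15$.

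The magnitude route you float cannot deliver this. Even with the sharper bound $\alpha\|y\|\le R/2$ you only get $|c-\frac15|\le\frac3{16}$. That gives $c\ge\frac1{80}$ and $\langle u_i,\nabla F_{\rm base}\rangle<-\frac1{80}$, which is perfectly compatible with $\|\nabla F_{\rm base}\|\le\frac15$. With only $\|y\|<R$, the error is $\frac38>\frac15$ and you get nothing at all.

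The chain's sign structure does not fix the sign of $\langle w,q\rangle=\sum_jw_jq_j$ either. Negative coordinates contribute $w_jq_j\ge0$, and these are exactly the configurations you are trying to exclude. Restricting to the first $w_i<-1$ does not help: $\Phi'(w_i)$ and $\Psi'(|w_i|)$ decay as $|w_i|$ grows, so the own derivative $q_i$ supplies no useful negativity.

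\textbf{The missing idea.} Pair the gradient with $y$ itself. Since $\langle y,Uq\rangle=\langle w,q\rangle$,
\[
\langle\nabla F_{\rm base},y\rangle=\alpha\langle q,w\rangle\Bigl(1-\frac{\|y\|^2}{R^2}\Bigr)+\frac15\|y\|^2 .
\]
You have already shown $|w_i|>1$ for every $i$, so $\|y\|\ge\|w\|>\sqrt T\ge1$. Hence the left side is at most $\|y\|/5<\|y\|^2/5$. Because $\|y\|<R$ and $\alpha>0$, this forces $\langle q,w\rangle<0$, so $c>\frac15$. Then $\langle u_i,\nabla F_{\rm base}\rangle\le cw_i<-\frac15$ for any $w_i<-1$, a contradiction. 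This is exactly the paper's argument.

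\textbf{Smaller points.} For the gap, you computed $B_T(0)=-\Phi(0)<0$ yourself. Together with $P\ge0$ and $B_T\ge-12T$ this gives $F_{\rm base}(0)-\inf F_{\rm base}\le12T$ immediately. Your claimed inequality $\frac{275}{24}T+\frac{25}{6}\le12T$ is false for $T\le7$.

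For the Hessian, you need $\|D^2y\|$ below roughly $2.13/R$, so a bound of $3/R$ would not suffice. Maximizing in $t=\|z\|/R$ gives
\[
R\|D^2y\|\le 3t(1+t^2)^{-3/2}+3t^3(1+t^2)^{-5/2}\le\frac2{\sqrt3}+\frac{18\sqrt3}{25\sqrt5}<2 .
\]
So the third Hessian term is at most $48\sqrt T/R=\frac34$, and the total is below $161$.
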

\begin{proof}
Put $q=\nabla B_T(w)$ and abbreviate $\alpha=\alpha(z)$. Direct differentiation gives
\begin{equation}\label{eq:app-soft-deriv}
Dy=\alpha\left(I-\frac{yy^\top}{R^2}\right),
\qquad\|Dy\|\le1,
\qquad\|D^2y\|\le\frac2R,
\end{equation}
\begin{equation}\label{eq:app-base-grad}
\nabla F_{\rm base}=\alpha Uq+cy,
\qquad c=\frac15-\frac{\alpha\langle y,Uq\rangle}{R^2}.
\end{equation}
Indeed for unit $u,v$,
\[
D^2y[u,v]=-\frac{\alpha^3}{R^2}
\big(\langle z,v\rangle u+\langle z,u\rangle v+\langle u,v\rangle z\big)
+\frac{3\alpha^5}{R^4}\langle z,u\rangle\langle z,v\rangle z.
\]
Writing $t=\|z\|/R$, the bound is
\[
R\|D^2y\|\le\frac{3t}{(1+t^2)^{3/2}}+\frac{3t^3}{(1+t^2)^{5/2}}
\le\frac2{\sqrt3}+\frac{18\sqrt3}{25\sqrt5}<\frac76+\frac35<2.
\]
Suppose first that $\|z\|\ge R$ and write $\widehat z=z/\|z\|$. The radial eigenvalue of $Dy$ is $\alpha^3$, giving
\begin{equation}\label{eq:app-base-radial-barrier}
\begin{aligned}
\langle\widehat z,\nabla F_{\rm base}\rangle
&\ge\frac{\alpha\|z\|}{5}-24\alpha^3\sqrt T\\
&\ge\frac{R}{5\sqrt2}-\frac{24\sqrt T}{2\sqrt2}
=\frac4{5\sqrt2}\sqrt T>\frac12.
\end{aligned}
\end{equation}
Therefore a point with $\|\nabla F_{\rm base}\|\le e_0\le1/5$ must satisfy $\|z\|<R$. In this region $\alpha\ge1/\sqrt2$, $\alpha\|y\|\le R/2$, and
\begin{equation}\label{eq:app-base-radial-coefficient}
\left|c-\frac15\right|\le\frac{24\sqrt T}{2R}=\frac3{16},
\qquad c\in[1/80,31/80].
\end{equation}
By \eqref{eq:app-chain-sign}, $q_i\le0$ for every $i$.
If $i$ is the first coordinate with $|w_i|\le1$, then
$q_i\le-1$ by \eqref{eq:app-chain-barrier-proof}.
The $i$-th gradient component obeys the explicit chain
\begin{equation}\label{eq:app-base-component-barrier}
\begin{aligned}
\langle u_i,\nabla F_{\rm base}\rangle
&=\alpha q_i+cw_i
&&\text{(differentiate $B_T(U^\top y)$ and $P$)},\\
&\le-\alpha+|c||w_i|
&&\text{($q_i\le-1$, $|w_i|\le1$, $c>0$)},\\
&\le-\frac1{\sqrt2}+\frac{31}{80}
&&\text{($\|z\|<R$, hence $\alpha\ge1/\sqrt2$)},\\
&<-\frac14.
\end{aligned}
\end{equation}
This contradicts $\|\nabla F_{\rm base}\|\le e_0$. Thus $|w_i|>1$ for all $i$, and $\|y\|\ge\|w\|>\sqrt T\ge1$. Taking inner product with $y$ in \eqref{eq:app-base-grad} yields
\begin{equation}\label{eq:app-base-positive-sign}
\alpha\langle q,w\rangle\left(1-\frac{\|y\|^2}{R^2}\right)
+\frac15\|y\|^2
=\langle\nabla F_{\rm base},y\rangle
\le e_0\|y\|<\frac15\|y\|^2.
\end{equation}
Since $\|y\|<R$, the preceding inequality gives
$\langle q,w\rangle<0$, and \eqref{eq:app-base-grad} then gives $c>1/5$.
If any $w_i<-1$, \eqref{eq:app-chain-sign} yields
\begin{equation}\label{eq:app-base-negative-coordinate}
\begin{aligned}
 \langle u_i,\nabla F_{\rm base}\rangle
 &=\alpha q_i+cw_i\le cw_i<-c<-1/5.
\end{aligned}
\end{equation}
The equality is \eqref{eq:app-base-grad}; the three inequalities use,
respectively, $q_i\le0$, $w_i<-1$ with $c>0$, and $c>1/5$.
This contradicts $\|\nabla F_{\rm base}\|\le e_0\le1/5$.
Thus $w_i>1$ for every $i$. The orthogonal component gives
\begin{equation}\label{eq:app-base-orthogonal-component}
 \|P_{U^\perp}y\|=c^{-1}\|P_{U^\perp}\nabla F_{\rm base}\|
 \le c^{-1}e_0<5e_0,
\end{equation}
using $c>1/5$ and contraction of orthogonal projection.
Finally, $P$ is coercive and nonnegative. Since $b_1(0)<0$ and
$b_i(0)=0$ for $i>1$, \eqref{eq:app-chainbounds} gives
\begin{equation}\label{eq:app-base-gap}
\begin{aligned}
 F_{\rm base}(0)-\inf_zF_{\rm base}(z)
 &=B_T(0)-\inf_z\{P(z)+B_T(w(z))\}\\
 &\le0+12T=12T.
\end{aligned}
\end{equation}
The equality uses $P(0)=0$; the inequality uses $B_T(0)<0$,
$P\ge0$, and $B_T\ge-12T$.
Equations \eqref{eq:app-chainbounds} and \eqref{eq:app-soft-deriv} give
\begin{equation}\label{eq:app-base-regularity-proof}
\begin{aligned}
\|\nabla F_{\rm base}\|&\le24\sqrt T+R/5
=(184/5)\sqrt T<37\sqrt T,\\
\|\nabla^2 F_{\rm base}\|&\le160+48\sqrt T/R+1/5<161.
\end{aligned}
\end{equation}
\end{proof}

\subsection{Past-centered gate, gap identity, and global stationarity}
\begin{lemma}[Global geometry and stationary localization]
\label[lemma]{lem:global-geometry}
The function \eqref{eq:mean-definition} is globally
$C^2$ and coercive; it satisfies
\begin{equation}\label{eq:geometry-summary}
\inf F_{\rm new}=\inf F_{\rm base},
\qquad
F_{\rm new}(0,0)-\inf F_{\rm new}\le12T,
\end{equation}
\[
\norm{\nabla F_{\rm new}}\le37\sqrt T+9147,
\qquad
\norm{\nabla^2F_{\rm new}}
\le H_*:=\frac{66243245}{8}.
\]
The threshold is smaller than the base certificate's $1/5$
because the moving center can amplify the $z$-gradient by the
factor $1+\norm{D_za}\le961$. We therefore choose
$J=961$ and $e_*=1/(5J)=1/4805$. Whenever $0<e_0\le e_*$,
\begin{equation}\label{eq:stationary-certificate}
\norm{\nabla F_{\rm new}(z,x)}\le e_0
\Longrightarrow
\begin{cases}
w_i>1 & (1\le i\le T),\\
\displaystyle
\norm{x-A\sum_i v_i}
\le\frac{e_0}{\sqrt{16-e_0^2}},\\
\norm{P_{U^\perp}y(z)}\le5e_0.
\end{cases}
\end{equation}
\end{lemma}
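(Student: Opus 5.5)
The plan follows the main-text outline: first establish regularity, then the gap identity, then run the two localization passes for stationarity.

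\emph{Regularity.} Global $C^2$ regularity follows from \Cref{lem:chain-derivatives}, the $C^2$ clamped quintic, and smoothness of $h$, $\chi$, $\tau$ and $y(z)$. Near $w_i=0$ each $g_i$ is locally constant, and $\tau(\|g_{\ge i}\|)$ is constant near $g_{\ge i}=0$, so no singularity of the norm appears. For the derivative bounds I will use the single-correction observation: on the $C^2$-support of $\tau(\|g_{\ge i}\|)$ we have $\|g_{\ge i}\|<\rho_0$, and a second active index $j>i$ would force $g_{j-1}=1$. Hence at each point at most one correction $[1-\chi(x-a_{<i})]c_i$ contributes derivatives through order two. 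Its size is controlled by $|b_i|\le12$, the bounds on $Db_i$ and $D^2b_i$, the bounds on $Dg$ and $D^2g$, the cutoff derivatives of order $\rho_0^{-1}$ and $\beta^{-1}$, and $\|D_za\|\le A\|Dg\|\|Dy\|\le 960$. The well term is handled by the chain rule through $a(w)$, using $\|\nabla h\|\le4$ and $\|\nabla^2h\|\le4$. The base part is bounded by \Cref{lem:base-chain}. Collecting the constants will give $37\sqrt T+9147$ and $H_*$; this step is bookkeeping only.

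\emph{Gap identity.} Since $h\ge0$, the only possible negative contribution comes from the correction. Where $\chi(x-a_{<i})<1$ and $\tau>0$ we have $\|x-a\|>\beta-A\rho_0=4$, so $h>4(\sqrt{17}-1)>12\ge|c_i|$. Therefore $F_{\rm new}\ge F_{\rm base}$ pointwise, and equality is attained in the limit along $x=a(w)$, since there $\chi(x-a_{<i})=1$ whenever $\tau\neq0$. This gives $\inf F_{\rm new}=\inf F_{\rm base}$. At the origin all $g_i=0$ and $\chi(0)=1$, so $F_{\rm new}(0,0)=F_{\rm base}(0)$, and \Cref{lem:base-chain} gives the gap bound $12T$. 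Coercivity follows from the coercivity of $P$ in $z$ and of $h$ in $x$, since all remaining terms are bounded.

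\emph{Stationarity.} Suppose $\|\nabla F_{\rm new}\|\le e_0\le e_*$. The workspace argument in the text shows $\|\nabla_x F_{\rm new}\|>1/2$ wherever a correction is active. Hence the correction is inactive with vanishing derivatives, and $\|\nabla_x h(x-a)\|\le e_0$, which by \eqref{eq:well-localization} gives $\|x-a\|<1$. Then $\nabla_z F_{\rm new}=\nabla F_{\rm base}-D_za^\top\nabla h$, and $\nabla_x F_{\rm new}=\nabla h$, so $\|\nabla F_{\rm base}\|\le961e_0\le1/5$. \Cref{lem:base-chain} then gives $w_i>1$ for all $i$. Consequently every $g_i=1$, $D_za=0$ and $a=A\sum_iv_i$. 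The second pass gives $\|\nabla F_{\rm base}\|\le e_0$, and \eqref{eq:app-base-orthogonal-component} yields $\|P_{U^\perp}y\|\le5e_0$. The distance bound then follows from \eqref{eq:well-localization}.

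The main obstacle is the exact verification that the corrections are inactive near small-gradient points. This requires confirming that the $C^2$-support of $1-\chi$ lies in $\|x-a_{<i}\|\ge\beta$, and that the factor $|c_i|\,\|\nabla\chi\|\le3$ holds uniformly.
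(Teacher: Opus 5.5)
\textbf{Verdict: same route as the paper, with one gap.} Your architecture matches the paper's proof: pointwise comparison $F_{\rm new}\ge F_{\rm base}$ with equality on the graph $x=a(w)$, exclusion of small workspace gradients wherever a correction is live, then two passes of the base certificate. Your gap identity is correct; it works even with $|c_i|\le12$, since $4(\sqrt{17}-1)>12$. Both localization passes are also correct.

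\textbf{The gap.} It is exactly the step you label the main obstacle, and the ingredient you list for it is too weak. Take $|c_i|\le|b_i|\le12$ together with $\|D\chi\|\le\frac{2\sqrt2}{\beta}\cdot\frac{15}{8}\approx0.331$. Then the correction can contribute about $3.98$ to $\nabla_xF_{\rm new}$, which is more than the well's $16/\sqrt{17}\approx3.88$. So neither $|c_i|\,\|\nabla\chi\|\le3$ nor $\|\nabla_xF_{\rm new}\|>1/2$ follows, and your stationarity argument rests on precisely that exclusion.

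\textbf{The missing bound.} What you need is an amplitude bound specific to the correction support.
\begin{itemize}
\item Wherever $c_i$ has a nonzero two-jet, $g_i\le\|g_{\ge i}\|<\rho_0=3/32<S_5(1/4)=53/512$. Monotonicity of $S_5$ then forces $|w_i|<5/16$.
\item Hence $\Phi(\pm w_i)\le\Phi(0)+\frac5{16}\sup\Phi'<\frac{25}{12}+\frac{25}{48}<\frac83$.
\item With $\Psi<11/4$ and the disjoint supports of $\Psi(\pm w_{i-1})$, this gives $|c_i|\le|b_i|<8$ there.
\item Then $8\cdot(6/\beta)=3$, and $16/\sqrt{17}-3>1/2$.
\end{itemize}

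\textbf{The stated constants.} The same bound is needed for the constants in the statement. Your assertion that bookkeeping with $|b_i|\le12$ yields $37\sqrt T+9147$ and $H_*$ is therefore not right.
\begin{itemize}
\item The paper's $9147=2420+4J+48J/\beta$ uses $\|Dc_i\|\le20+8\cdot300$ and $|c_i|\,\|D\chi\|\,\|DE_i\|\le8\cdot(6/\beta)\cdot J$.
\item Every amplitude factor inside $H_*$ is this same $8$: in $1257000$, $29040J/\beta$, $416J^2/\beta^2$ and $48K_0/\beta$.
\item With $12$ you would get, for example, $\|Dc_i\|\le3620$ and strictly larger constants. That does not prove the lemma as stated.
\end{itemize}
Keeping the unrounded $|b_i|\le275/24$ instead of $12$ would rescue the qualitative exclusion, with a margin of only about $0.08>e_*$. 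It would not recover the stated constants.

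Once you insert the support-specific bound $|c_i|<8$, the rest of your proposal coincides with the paper's proof.
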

\begin{proof}
Write $\tau_e(u)=\tau(\|u\|)$, with $\tau$ from
\eqref{eq:cutoff-profiles}.
The gate is $C^2$ and flat at its two support boundaries. For $i>1$, a nonzero two-jet of $b_i$ requires $|w_{i-1}|>1/2$, hence $g_{i-1}=1$, killing every earlier tail cutoff and its derivatives. More explicitly, whenever $c_i$ has a potentially nonzero two-jet, $\tau(\|g_{\ge i}\|)$ or one of its first two derivatives must be nonzero; since $\tau$ is flat on $[\rho_0,\infty)$, this forces $\|g_{\ge i}\|<\rho_0$. If simultaneously $c_j$ with $j>i$ had a nonzero two-jet, the necessary support of $b_j$ would force $g_{j-1}=1$, where $j-1\ge i$, giving
\begin{equation}\label{eq:app-single-correction-support}
1=g_{j-1}\le\|g_{\ge i}\|<\rho_0<1,
\end{equation}
a contradiction. The same argument covers derivatives at cutoff boundaries because $\tau$ is $C^2$ and flat there. Thus at most one $c_i=b_i\tau(\|g_{\ge i}\|)$ has a nonzero two-jet at a point. On its support,
\begin{equation}\label{eq:app-suffix-displacement}
\|a-a_{<i}\|\le A\|g_{\ge i}\|\le A\rho_0=12.
\end{equation}
On this support $g_i<\rho_0<S_5(1/4)=53/512$; monotonicity
of $S_5$ on $[0,1]$ gives $|w_i|<5/16$. Consequently,
\begin{equation}\label{eq:app-correction-amplitude}
\begin{aligned}
 \Phi(\pm w_i)&\le\Phi(0)+\tfrac5{16}\sup_t\Phi'(t)
 <\tfrac{25}{12}+\tfrac5{16}\tfrac53
 =\tfrac{125}{48}<\tfrac83,\\
 |c_i|&\le|b_i|<\tfrac{11}{4}\tfrac83<8=:C_0.
\end{aligned}
\end{equation}
The first line integrates $\Phi'$ from $0$ and uses
$\Phi(0)=\sqrt{\pi e/2}<25/12$. The second uses $0\le\tau\le1$,
$\Psi<11/4$, and disjoint supports of $\Psi(\pm w_{i-1})$.
This bound is used only on the correction support.

For $s=\|q\|^2/\beta^2-1\in(0,1)$, the spatial cutoff has
\[
 D\chi(q)=-\frac{2S_5'(s)}{\beta^2}q^\top,\qquad
 D^2\chi(q)=-\frac{4S_5''(s)}{\beta^4}qq^\top
             -\frac{2S_5'(s)}{\beta^2}I.
\]
Thus the scalar derivative bounds already proved above yield
\begin{equation}\label{eq:app-spatial-cutoff-bounds}
\begin{aligned}
 \|D\chi\|&\le\frac{2\sqrt2}{\beta}\frac{15}{8}
 <\frac6\beta,\\
 \|D^2\chi\|&\le\frac1{\beta^2}
       \left(\frac{80}{\sqrt3}+\frac{15}{4}\right)
 <\frac{52}{\beta^2}.
\end{aligned}
\end{equation}
The first line uses $\|q\|\le\sqrt2\beta$; the second uses
$\|qq^\top\|=\|q\|^2\le2\beta^2$ and the triangle inequality.
Outside this annulus the derivatives vanish, and flat endpoints
extend these bounds to its boundary.

Set $q=x-a(w)$ and $e_i=x-a_{<i}(w)$. At $x=a(w)$, every correction gate is inactive by \eqref{eq:app-suffix-displacement} and $\|e_i\|\le12<\beta=16$, so $F_{\rm new}=F_{\rm base}$. If a correction is nonzero then $\|e_i\|>\beta$, hence
\[
\|q\|\ge\|e_i\|-\|a-a_{<i}\|>\beta-12=4.
\]
The correction's spatial cutoff satisfies $0\le1-\chi\le1$; at the only potentially nonzero index we also have $|c_i|\le|b_i|<8$. Because $\|q\|>4$ and $h$ is increasing in $\|q\|$,
\begin{align*}
F_{\rm new}(z,x)-F_{\rm base}(z)
&=h(q)-\sum_i[1-\chi(e_i)]c_i(w)\\
&\ge4\bigl(\sqrt{1+\|q\|^2}-1\bigr)-8\\
&>4(\sqrt{17}-1)-8>0.
\end{align*}
When no correction is active, the same difference equals $h(q)\ge0$.
On the graph $x=a(w)$, for every live correction $\|e_i\|=\|a-a_{<i}\|\le12<\beta$, so $\chi(e_i)=1$; also $h(0)=0$. Hence $F_{\rm new}(z,a(w))=F_{\rm base}(z)$ for every $z$. Taking first the infimum in $(z,x)$ and then evaluating on the graph yields both inequalities $\inf F_{\rm new}\ge\inf F_{\rm base}$ and $\inf F_{\rm new}\le\inf F_{\rm base}$. Therefore
\begin{equation}\label{eq:app-gap-identity}
\inf F_{\rm new}=\inf F_{\rm base},
\qquad
F_{\rm new}(0,0)=F_{\rm base}(0),
\qquad
F_{\rm new}(0,0)-\inf F_{\rm new}\le12T.
\end{equation}
If the workspace gradient of a correction is nonzero, its gate is annular and $\|q\|>4$. The reverse triangle inequality and $\|D\chi\|\le6/\beta$ imply
\begin{equation}\label{eq:app-workspace-separation}
\|\nabla_xF_{\rm new}\|
\ge\|\nabla h(q)\|-C_0\|D\chi\|
>\frac{16}{\sqrt{17}}-\frac{6C_0}{\beta}
=\frac{16}{\sqrt{17}}-3>\frac12.
\end{equation}
If the correction's workspace gradient vanishes but $\|q\|\ge1$, then
\[
\|\nabla_xF_{\rm new}\|
=\frac{4\|q\|}{\sqrt{1+\|q\|^2}}\ge\frac4{\sqrt2}>\frac12.
\]
Hence $\|\nabla F_{\rm new}\|\le e_0\le e_*$ forces $\|q\|<1$. Every potentially live gate then has $\|e_i\|<13<\beta$, so all correction two-jets vanish. The remaining derivative identities are
\[
\nabla_xF_{\rm new}=\nabla h(q),
\qquad
\nabla_zF_{\rm base}=\nabla_zF_{\rm new}+(D_za)^\top\nabla_xF_{\rm new}.
\]
Since $\|D_za\|\le A\|Dg\|\|U^\top Dy\|\le15A/2$, defining $J=1+15A/2=961$ gives
\begin{align*}
\|\nabla_zF_{\rm base}\|
&=\|\nabla_zF_{\rm new}+(D_za)^\top\nabla_xF_{\rm new}\|\\
&\le\|\nabla_zF_{\rm new}\|+\|D_za\|\,\|\nabla_xF_{\rm new}\|\\
&\le(1+15A/2)\|\nabla F_{\rm new}\|
\le Je_0\le\frac15.
\end{align*}
The first identity uses vanishing correction jets for $\|q\|<1$, the second is the triangle inequality, and the third uses the operator-norm chain rule. This is the point at which the global full-variable stationarity request reduces to a zero-chain request.

We next apply the base localization argument \emph{twice}, because the
first application alone gives the weaker tolerance $Je_0$.  Indeed,
\eqref{eq:app-base-grad} and the base lemma, applied with threshold
$\eta=Je_0\le1/5$, imply
\begin{equation}\label{eq:app-first-base-pass}
\|\nabla F_{\rm base}(z)\|\le Je_0\quad\Longrightarrow\quad
w_i>1\quad(1\le i\le T).
\end{equation}
The definition $g_i(w)=\Gamma(|w_i|)$, and the fact that $\Gamma$ is
identically one on $[1/2,\infty)$, then give, \emph{at this particular
point},
\begin{align*}
g_i(w)&=1, & D_wg_i(w)&=0 &&(1\le i\le T),\\
a(w)&=A\sum_{i=1}^T v_i,
& D_za(w(z))&=AVD_wg(w)U^\top Dy(z)=0.
\end{align*}
The correction two-jets had already vanished on $\|q\|<1$, so the
ordinary chain rule for $F_{\rm new}=F_{\rm base}+h(x-a(w))$ gives
\begin{align*}
\nabla_zF_{\rm new}
&=\nabla F_{\rm base}-(D_za)^\top\nabla h(q)
=\nabla F_{\rm base},\\
\nabla_xF_{\rm new}&=\nabla h(q).
\end{align*}
In particular, the \emph{improved} bound is
\begin{equation}\label{eq:app-second-base-pass}
\|\nabla F_{\rm base}(z)\|
=\|\nabla_zF_{\rm new}(z,x)\|
\le\|\nabla F_{\rm new}(z,x)\|\le e_0,
\end{equation}
not merely $Je_0$. Apply the \emph{same base lemma a second time}, now
with tolerance $e_0\le1/5$.  The base-gradient identity
\eqref{eq:app-base-grad} then has $c>1/5$, and projecting it onto the
orthogonal complement of $\operatorname{span}(U)$ yields
\begin{align*}
P_{U^\perp}\nabla F_{\rm base}&=cP_{U^\perp}y,\\
\|P_{U^\perp}y\|
&=c^{-1}\|P_{U^\perp}\nabla F_{\rm base}\|
\le c^{-1}\|\nabla F_{\rm base}\|
<5e_0.
\end{align*}
Finally, since
$\nabla_xF_{\rm new}=\nabla h(q)=4q/\sqrt{1+\|q\|^2}$, we have
\begin{equation}\label{eq:app-terminal-distance}
\begin{aligned}
\frac{4\|q\|}{\sqrt{1+\|q\|^2}}
&=\|\nabla_xF_{\rm new}\|\le e_0,\\
16\|q\|^2&\le e_0^2(1+\|q\|^2),\\
(16-e_0^2)\|q\|^2&\le e_0^2,\\
\|x-A\sum_i v_i\|=\|q\|
&\le\frac{e_0}{\sqrt{16-e_0^2}},
\end{aligned}
\end{equation}
where $16-e_0^2>0$ follows from $e_0\le e_*\le1/5$.
Thus both the $5e_0$ projection estimate and the workspace estimate
have the claimed tolerance, completing the global certificate.

For the derivative budget, differentiate the radial cutoff only on its transition annulus $\rho_0/2<\|u\|<\rho_0$, where the norm is $C^2$. With $t=\|u\|$, the chain rule gives
\begin{align*}
D\tau_e(u)[h]&=\tau'(t)\frac{\langle u,h\rangle}{t},\\
D^2\tau_e(u)[h,k]
&=\tau''(t)\frac{\langle u,h\rangle\langle u,k\rangle}{t^2}
+\tau'(t)\left(\frac{\langle h,k\rangle}{t}
-\frac{\langle u,h\rangle\langle u,k\rangle}{t^3}\right).
\end{align*}
Since the cutoff is flat outside the annulus and $\tau(t)=1-S_5(2t/\rho_0-1)$ inside, the bounds on $S_5'$ and $S_5''$ imply
\[
\|D\tau_e\|\le\frac{15}{4\rho_0}=40,
\qquad \|D^2\tau_e\|<2700.
\]
For the composition $\tau_e\circ g_{\ge i}$, use
\[
D^2(\tau_e\circ g)[h,k]
=D^2\tau_e(g)[Dg[h],Dg[k]]+D\tau_e(g)[D^2g[h,k]].
\]
The operator-norm chain is therefore
\begin{align*}
\|D(\tau_e\circ g)\|
&\le40\cdot\frac{15}{2}=300,\\
\|D^2(\tau_e\circ g)\|
&\le2700\left(\frac{15}{2}\right)^2+40\cdot93
=151875+3720=155595.
\end{align*}
A second application of the product rule to $c_i=b_i(\tau_e\circ g_{\ge i})$ then gives
\begin{align*}
\|Dc_i\|&\le\|Db_i\|+|b_i|\,\|D(\tau_e\circ g)\|
\le20+8\cdot300=2420,\\
\|D^2c_i\|&\le\|D^2b_i\|+2\|Db_i\|\,\|D(\tau_e\circ g)\|
+|b_i|\,\|D^2(\tau_e\circ g)\|\\
&\le160+2\cdot20\cdot300+8\cdot155595=1256920.
\end{align*}
These bounds apply on the relevant support, where $|b_i|<8$; outside it all correction jets vanish. Composing with $w=U^\top y(z)$ and $\|D^2y\|\le2/R$ yields
\[
\|D_zc_i\|\le2420,
\quad \|D_z^2c_i\|<1257000.
\]
For $E_i(z,x)=x-a_{<i}(w(z))$, the first derivative obeys
\[
\|DE_i\|\le1+A\|Dg\|\|U^\top Dy\|
\le1+A(15/2)=J.
\]
Its second derivative is obtained by the bilinear chain rule:
\begin{align*}
D^2a_{<i}(w(z))[h,k]
&=A\sum_{j<i}v_j\Bigl(D^2g_j(w)[U^\top Dy[h],U^\top Dy[k]]
+Dg_j(w)[U^\top D^2y[h,k]]\Bigr),\\
\|D^2E_i\|
&\le A\left(\|D^2g\|\|Dy\|^2+\|Dg\|\|D^2y\|\right)\\
&\le A\left(93+\frac{15}{R}\right)
<A\cdot94=:K_0=12032.
\end{align*}
The first inequality uses orthogonality of the $v_j$ frame to control the vector sum; the second uses $\|Dy\|\le1$ and $\|D^2y\|\le2/R$.
Using \eqref{eq:app-spatial-cutoff-bounds}, the product rule gives
\begin{align*}
\|D[(1-\chi(E_i))c_i]\|&\le2420+48J/\beta,\\
\|D^2[(1-\chi(E_i))c_i]\|&\le1257000+29040J/\beta
+416J^2/\beta^2+48K_0/\beta.
\end{align*}
Only one correction two-jet is live, so the same bounds hold for their sum. The workspace well contributes at most $4J$ to the gradient and $4J^2+4K_0$ to the Hessian. Thus the numerical value
\begin{equation}\label{eq:app-Hstar}
H_*=161+4J^2+4K_0+1257000
+\frac{29040J}{\beta}+\frac{416J^2}{\beta^2}+\frac{48K_0}{\beta}
=\frac{66243245}{8}
\end{equation}
controls $\|\nabla^2F_{\rm new}\|$ globally. Similarly
\begin{equation}\label{eq:app-full-gradient-bound}
\|\nabla F_{\rm new}\|
\le37\sqrt T+2420+4J+48J/\beta
=37\sqrt T+9147\le9184\sqrt T.
\end{equation}
Coercivity follows from $F_{\rm new}\ge F_{\rm base}$ for $\|z\|\to\infty$, and, when $z$ is bounded, from
$F_{\rm new}\ge F_{\rm base}+h(x-a)-8\to\infty$ as $\|x\|\to\infty$.
\end{proof}

\paragraph{Exact prefix dependence.}
If all future $|w_j|<1/4$ beyond a disclosed stage $k$, all later activations vanish. The corrections through index $k$ involve only the already known past centers $a_{<i}$. Away from the complete stage-$k$ gate, the only new $v_k$-dependent term is
\[
h(r_k-Ag_kv_k),\qquad r_k=x-a_{<k}.
\]
At a stage crossing with $|w_k|\ge1/2$, the next correction reduces to $c_{k+1}=b_{k+1}$ and its gate is centered at $a_{<k+1}=a_k$. This gives the constant-angle recovery interface in the assembly theorem.

%% file: sections/gaussian_proofs.tex
\section{The Reusable Gaussian Experiment}
\label[appendix]{app:gaussian}
\subsection{The fixed sample field and its features}
For $q\in\R^m$, $g\in[0,1]$, and $v\in\sph^{m-1}$, put
\[
 \mu_v(q,g)=4(\sqrt{1+\|q-Agv\|^2}-1),\qquad
 S=1+\|q\|^2+A^2g^2,
\]
\[
 t=2Agq/S,\qquad
 \rho=A/\sqrt{1+A^2}<1.
\]
Let $c_\ell=\binom{1/2}{\ell}(-1)^\ell$ and
$\omega_\ell=\ell^2|c_\ell|$ for $\ell\ge1$. Each label retains
independent standard Gaussian tensors $G_{j,\ell}$, and
\begin{equation}\label{eq:noise-definition}
 Q_j(q,g)=\frac\sigma{\sqrt m}\sum_{\ell\ge1}
 \omega_\ell\sqrt S\,\langle G_{j,\ell},t^{\otimes\ell}\rangle.
\end{equation}
In $\cH=\bigoplus_{\ell\ge1}(\R^m)^{\otimes\ell}$ set
\begin{equation}\label{eq:feature}
 \phi_\ell(q,g)=\frac\sigma{\sqrt m}\omega_\ell\sqrt S\,t^{\otimes\ell},
 \qquad \theta_\ell(v)=-\frac{4\sqrt m}{\sigma\ell^2}v^{\otimes\ell}.
\end{equation}
The Gaussian coefficients define an isonormal process on $\cH$;
they are not assumed to form an $\cH$-valued Gaussian vector.
We prove first that the scalar series is an actual globally $C^1$
sample function, and then use its exact feature representation.

\subsection{Uniform convergence and sample regularity}
\begin{lemma}[A fixed regular version of each field]\label[lemma]{lem:field-regularity}
The series \eqref{eq:noise-definition} defines a jointly measurable
version whose samples are $C^1$ on the query strip
$\R^m\times[0,1]$. Its value and first derivatives are simultaneous
limits of the finite tensor sums. Each label retains this same
version at all adaptive and repeated queries. Its global
Lipschitz constant on this strip has finite second moment.
\end{lemma}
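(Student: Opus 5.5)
The plan is a Weierstrass-type argument. We bound the finite tensor partial sums and their first derivatives by summable random majorants that are uniform on the whole strip. Write $Q_j^{(L)}$ for the sum over $\ell\le L$. The key scalar facts are $\|t\|\le\rho<1$ on the strip, which is the global bound displayed before \eqref{eq:mean-feature}, and $|c_\ell|\le C\ell^{-3/2}$, so that $\omega_\ell\le C\ell^{1/2}$. Neither depends on the query radius, because $\sqrt S$ is the only unbounded factor. Fix $\ell$ and the random tensor $G_{j,\ell}$. The map $t\mapsto\langle G_{j,\ell},t^{\otimes\ell}\rangle$ is a homogeneous polynomial. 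Let $\|G_{j,\ell}\|_{\rm inj}$ denote its injective (symmetric multilinear) norm. Then on $\|t\|\le\rho$ its value is at most $\|G_{j,\ell}\|_{\rm inj}\rho^\ell$, and its gradient is at most $\ell\|G_{j,\ell}\|_{\rm inj}\rho^{\ell-1}$. For a standard Gaussian tensor in $(\R^m)^{\otimes\ell}$, the first part of \Cref{thm:tool-gaussian} with an $\eta$-net of the sphere (or the chaining part directly) gives $\E\|G_{j,\ell}\|_{\rm inj}^2\le C\ell m\log(C\ell)$. Hence $\sum_\ell \omega_\ell\,\ell\,\rho^{\ell-1}\|G_{j,\ell}\|_{\rm inj}$ has finite second moment. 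In particular it is almost surely finite, because $\rho^\ell$ beats every polynomial in $\ell$.

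The second step handles the factor $\sqrt S$ and the derivatives through $t(q,g)$. We compute $D_q\sqrt S=q/\sqrt S$ and $\partial_g\sqrt S=A^2g/\sqrt S$, both bounded by $\max\{1,A\}$. The product $\sqrt S\,Dt$ is bounded uniformly on the strip, since $t=2Agq/S$ gives $\|\sqrt S\,D_qt\|\le C A/\sqrt S$ and $|\sqrt S\,\partial_gt|\le C\|q\|/\sqrt S\le C$. By the product rule, each term of the differentiated series is then bounded uniformly on $\R^m\times[0,1]$ by $C(\sigma/\sqrt m)\,\omega_\ell\,\ell\,\rho^{\ell-1}\|G_{j,\ell}\|_{\rm inj}(1+A)$. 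The value series itself is bounded only locally uniformly, with the extra factor $\sqrt S$, but that is all that is needed for $C^1$.

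On the almost-sure event where the majorant is finite, the derivative series converges uniformly on the whole strip, and the value series converges locally uniformly. The standard theorem on termwise differentiation then shows that the limit is $C^1$, including one-sided $g$-derivatives at $g\in\{0,1\}$, and that its value and first derivatives are simultaneous limits of the finite sums. On the null complement we set the version equal to $0$. Joint measurability follows because each partial sum is jointly continuous in $(q,g)$ and measurable in the Gaussian coefficients, and a pointwise limit of jointly measurable maps is jointly measurable. The version is defined from the retained coefficients $G_{j,\ell}$ alone, before any query is made. Any adaptive or repeated query, with arbitrary query locations, therefore evaluates this same function, and no re-sampling can occur.

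Finally, the global Lipschitz constant on the (convex) strip is at most $\sup\|\nabla Q_j\|$. This is bounded by the majorant times $C\sigma(1+A)/\sqrt m$, so by Minkowski's inequality in $L^2$ its second moment is at most $C\sigma^2\sum_\ell(\omega_\ell\ell\rho^{\ell-1})^2\ell\log(C\ell)<\infty$. The crude constant may grow with $A$, and the dimension-free $O(\sigma^2)$ bound is left to \Cref{thm:noise-moment}; here we only need finiteness. The main obstacle is the injective-norm estimate: it must be uniform over all $t$ in the ball, not pointwise, and its dependence on $\ell$ must be at most polynomial. I would prove it by a net argument on $\sph^{m-1}$ applied to the symmetric polynomial, absorbing the polarization constant into $\rho^\ell$ by passing to a slightly larger radius $\rho'\in(\rho,1)$ if necessary.
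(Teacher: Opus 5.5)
Your proposal is correct and follows essentially the paper's argument. Both bound per-degree Gaussian suprema polynomially in $\ell$ by net arguments (you use a single injective norm where the paper uses $Z_{\ell,0}$ and $Z_{\ell,1}$), combine this with $\|t\|\le\rho<1$ and the uniform bounds on $D\sqrt S$ and $\sqrt S\,Dt$, and sum via Minkowski to get almost-sure uniform convergence of the derivative series and a finite Lipschitz second moment; the paper handles the endpoints $g\in\{0,1\}$ by extending to $|g|\le2$ rather than by one-sided derivatives. Minor points: since $\sqrt S\,\|t\|\le 2Ag\le 2A$, the value series is in fact uniformly bounded on the whole strip; Minkowski gives the square of $\sum_\ell\omega_\ell\ell\rho^{\ell-1}\sqrt{\ell\log(C\ell)}$ rather than a sum of squares (still finite); and no polarization step is needed if $\|G_{j,\ell}\|_{\mathrm{inj}}$ is the full multilinear norm controlled by a product of $\ell$ sphere nets, which is exactly what yields your $C\ell m\log(C\ell)$ bound.
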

\begin{proof}
For each degree $\ell$, put
\[
P_\ell(e)=\langle G_\ell,e^{\otimes\ell}\rangle,
\qquad
Z_{\ell,a}=\sup_{\|e\|=\|h_1\|=\cdots=\|h_a\|=1}
\big|D^aP_\ell(e)[h_1,\ldots,h_a]\big|,
\quad a=0,1.
\]
The proof reduces the global value and derivative series to these
two polynomial suprema, bounds the suprema by Gaussian nets, and
then sums over degrees.

\paragraph{Uniform bounds for one degree.}
Recall $S=1+\|q\|^2+A^2g^2$, $t=2Agq/S$, and $\rho=A/\sqrt{1+A^2}<1$. Since
\[
\|t\|=\frac{2Ag\|q\|}{1+A^2g^2+\|q\|^2}
\le\frac{Ag}{\sqrt{1+A^2g^2}}\le\rho,
\qquad \sqrt S\,\|t\|\le2Ag\le2A,
\]
homogeneity gives
\begin{equation}\label{eq:app-value-sup}
\sup_{q,g}\big|\sqrt S P_\ell(t)\big|
\le2A\rho^{\ell-1}Z_{\ell,0}.
\end{equation}
The relevant derivatives are
\begin{align*}
D_qt&=\frac{2Ag}{S}I-\frac{4Ag}{S^2}qq^\top,
& D_gt&=\frac{2Aq}{S}-\frac{4A^3g^2q}{S^2},\\
\|D\sqrt S\|&\le1+A,
& \sqrt S\,\|Dt\|&\le3A.
\end{align*}
Indeed the parenthesized operator in $D_qt=(2Ag/S)(I-2qq^\top/S)$ has norm at most one; the analogous scalar factor in $D_gt$ does as well. The two derivative blocks, after multiplication by $\sqrt S$, have norm at most $2A$ each. Consequently
\begin{equation}\label{eq:app-derivative-sup}
\sup_{q,g}\big\|D\big(\sqrt S P_\ell(t)\big)\big\|
\le (1+A)\rho^\ell Z_{\ell,0}
+3A\rho^{\ell-1}Z_{\ell,1}.
\end{equation}

Thus a bound on $Z_{\ell,0}$ and $Z_{\ell,1}$ controls every query
location, including arbitrarily large $\|q\|$.

\paragraph{Gaussian bounds for the polynomial suprema.}
We derive the degree-uniform estimate from the finite Gaussian
maximum inequality in \Cref{thm:tool-gaussian}, including the entropy
count.  For $a=0,1$ write the indexed Gaussian variable as
$Z_x=\langle G_\ell,A_{\ell,a}(x)\rangle$, with
$x=(e,h_1,\ldots,h_a)$ a tuple of unit vectors and
\[
A_{\ell,0}(e)=e^{\otimes\ell},\qquad
A_{\ell,1}(e,h)=\sum_{j=1}^\ell
 e^{\otimes(j-1)}\otimes h\otimes e^{\otimes(\ell-j)}.
\]
The tensor-product norm is multiplicative, and the derivative tensor
contains at most $\ell^a$ unit simple tensors.  Thus
\[
\|A_{\ell,a}(x)\|\le\ell^a,
\qquad \|Z_x\|_{L^2}=\|A_{\ell,a}(x)\|\le\ell^a.
\]
For two unit tuples $x,x'$, telescope each tensor product one slot at
a time.  The difference of one rank-one tensor product is bounded by
the sum of the differences of its factors; summing over the at most
$\ell^a$ derivative placements therefore gives
\begin{equation}\label{eq:app-polynomial-coefficient-metric}
\|A_{\ell,a}(x)-A_{\ell,a}(x')\|
\le C_a\ell^{a+1}\max_{0\le j\le a}\|x_j-x_j'\|,
\quad (a=0,1).
\end{equation}
For clarity, at $a=1$ this also follows directly by splitting
$A_{\ell,1}(e,h)-A_{\ell,1}(e',h')$ into a term with $h-h'$,
whose norm is at most $\ell\|h-h'\|$, and a term with $e-e'$,
whose $\ell$ summands each telescope over $\ell-1$ slots, yielding
at most $\ell(\ell-1)\|e-e'\|$.  The $a=0$ case is the same
$\ell$-slot telescoping identity.

For each $n\ge0$ construct a Euclidean $2^{-n}/\ell$-net
$\mathcal N_n$ in the product of $a+1$ unit spheres, using the
maximum of the coordinate distances.  The volumetric sphere covering
bound gives
\begin{align*}
|\mathcal N_n|
&\le(1+2^{n+1}\ell)^{m(a+1)},\\
\log(2|\mathcal N_n|)
&\le C_a m\,[\log(2\ell)+n+1].
\end{align*}
Choose a nearest-net representative $x_n$ of each index $x$, so
$\max_j\|x_j-(x_n)_j\|\le2^{-n}/\ell$.
Equation \eqref{eq:app-polynomial-coefficient-metric} implies that
$Z_x-Z_{x_0}$ has Gaussian standard deviation at most
$C_a\ell^a$, and that $Z_{x_{n+1}}-Z_{x_n}$ has standard deviation
at most $C_a\ell^a2^{-n}$.  There are at most
$|\mathcal N_{n+1}|\,|\mathcal N_n|$ possible links at level $n$.
Applying the $L^2$ finite-maximum inequality from
\Cref{thm:tool-gaussian} to the base net and each collection of links,
\begin{align*}
\Bigl\|\max_{x_0\in\mathcal N_0}|Z_{x_0}|\Bigr\|_{L^2}
&\le C_a\ell^a\sqrt{m\log(2\ell)},\\
\Bigl\|\max_{\text{level-$n$ links}}
  |Z_{x_{n+1}}-Z_{x_n}|\Bigr\|_{L^2}
&\le C_a\ell^a2^{-n}
   \sqrt{m[\log(2\ell)+n+1]}.
\end{align*}
For each fixed finite tensor $G_\ell$, the polynomial map in the
unit-vector arguments is continuous; hence $Z_{x_n}\to Z_x$.
Finite telescoping followed by Minkowski's inequality and monotone
passage over the net levels gives
\begin{align*}
\|Z_{\ell,a}\|_{L^2}
&\le C_a\ell^a\sqrt m\left[
 \sqrt{\log(2\ell)}+
 \sum_{n\ge0}2^{-n}\sqrt{\log(2\ell)+n+1}\right]\\
&\le C_a'\ell^a\sqrt{m\log(2\ell)}.
\end{align*}
The final step uses $\sqrt{u+v}\le\sqrt u+\sqrt v$,
$\sum_{n\ge0}2^{-n}=2$ and
$\sum_{n\ge0}2^{-n}\sqrt{n+1}<\infty$.
Absorb the numerical $C_a'$ into $C_a$ to record
\begin{equation}\label{eq:app-polynomial-sup}
\|Z_{\ell,a}\|_{L^2}\le C_a\ell^a\sqrt{m\log(2\ell)}.
\end{equation}

\paragraph{Summation and the retained sample function.}
Combining \eqref{eq:app-value-sup}, \eqref{eq:app-derivative-sup},
and \eqref{eq:app-polynomial-sup} with $\omega_\ell\le\ell^2$ yields
\begin{align}
\sum_{\ell\ge1}\frac{\sigma\omega_\ell}{\sqrt m}
\sum_{a=0}^1\left\|\sup_{q,g}
\big\|D^a\big(\sqrt S P_\ell(t)\big)\big\|\right\|_{L^2}
&\le C_A\sigma\sum_{\ell\ge1}\ell^3\sqrt{\log(2\ell)}\,\rho^{\ell-1}<\infty.
\label{eq:app-gaussian-series}
\end{align}
Tonelli's theorem and $\|X\|_{L^1}\le\|X\|_{L^2}$ imply almost-sure uniform summability of the values and first derivatives. The same estimates on $|g|\le2$ hold with $A$ replaced by $2A$
and absolute values on $g$, giving a geometric ratio
$2A/\sqrt{1+4A^2}<1$. Thus the derivative convergence also holds
on an open neighborhood of the query strip, including its endpoints.
Therefore the infinite field defines a globally $C^1$ process on $\R^m\times[0,1]$. Minkowski's inequality gives
\[
\E\sup_{q,g}|Q_j(q,g)|^2+\E\Lip(Q_j)^2\le C_A\sigma^2.
\]
Convergence in $L^1$ permits taking expectations, so $\E Q_j=0$. Since $t=0$ when $g=0$ or $q=0$, $Q_j(q,0)=Q_j(0,g)=0$ identically. 
\end{proof}

\begin{lemma}[Simultaneous tensor-series version]\label[lemma]{lem:simultaneous-tensor-version}
The retained tensors in \eqref{eq:noise-definition} determine one
jointly measurable version of every $Q_j$. There is a single event
$\Omega_*$ of probability one on which all labels have $C^1$ sample
paths and their value and derivative series converge absolutely and
uniformly on $\R^m\times[0,1]$. On this event, for every finite set
of points $x_a=(r_a,g_a)$ and real coefficients $d_a$,
\begin{equation}\label{eq:canonical-finite-combination}
 \sum_a d_aQ_j(x_a)
 =\sum_{\ell\ge1}\left\langle G_{j,\ell},
                       \sum_a d_a\phi_\ell(x_a)\right\rangle.
\end{equation}
In particular, $\sum_a d_a\phi(x_a)=0$ implies
$\sum_a d_aQ_j(x_a)=0$. Both assertions hold simultaneously for all
finite configurations, including configurations selected from earlier
responses.
\end{lemma}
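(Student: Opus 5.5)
The plan is to build the version directly from the proof of \Cref{lem:field-regularity} and then take countable intersections. Fix a label $j$. That proof shows that the numerical series
\[
\sum_{\ell\ge1}\frac{\sigma\omega_\ell}{\sqrt m}\sum_{a=0}^1\sup_{q,g}\bigl\|D^a(\sqrt S P_{j,\ell}(t))\bigr\|
\]
has finite expectation, by \eqref{eq:app-gaussian-series}. So there is a full-measure event $\Omega_j$ on which it is finite. Each suprema $Z_{\ell,a}$ is a supremum of a continuous polynomial over a compact set, so it is measurable as a countable supremum over a dense subset.

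Set $\Omega_*=\bigcap_{j}\Omega_j$. The label set is finite, or at most countable, so $\Omega_*$ still has probability one. On $\Omega_*$ the finite tensor sums and their first derivatives converge absolutely and uniformly on $\R^m\times[0,1]$. The limit is $C^1$, since it is a uniform limit of $C^1$ functions whose derivatives also converge uniformly.

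Define $Q_j$ pointwise on $\Omega_*$ as this limit, and as zero off $\Omega_*$. Joint measurability in $(\omega,q,g)$ holds because $Q_j$ is a pointwise limit of jointly measurable (continuous in $(q,g)$, measurable in $\omega$) partial sums. This version is defined by the retained tensors alone. It is therefore the same function at every query, and no adaptivity issue arises.

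For \eqref{eq:canonical-finite-combination}, work on $\Omega_*$ and fix an arbitrary finite configuration $(x_a,d_a)$. Each $Q_j(x_a)$ equals $\sum_\ell\langle G_{j,\ell},\phi_\ell(x_a)\rangle$, where the degree-$\ell$ term is exactly $\frac{\sigma}{\sqrt m}\omega_\ell\sqrt{S}P_{j,\ell}(t)$ by \eqref{eq:feature}. Each series converges absolutely. A finite linear combination of absolutely convergent series may be summed termwise, and linearity of $\langle G_{j,\ell},\cdot\rangle$ on the finite-dimensional space $(\R^m)^{\otimes\ell}$ then gives the identity.

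If $\sum_a d_a\phi(x_a)=0$ in $\cH$, then every component satisfies $\sum_a d_a\phi_\ell(x_a)=0$. Every term on the right vanishes, so the combination is zero. Note that this uses the honest tensors $G_{j,\ell}$ degree by degree, not an isonormal extension. That is why no exceptional null set depending on the configuration appears.

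Simultaneity over all configurations, including data-dependent ones, is automatic. The identity was proved for every configuration at every $\omega\in\Omega_*$, and $\Omega_*$ was chosen before any configuration. Plugging in configurations that are random functions of earlier responses therefore loses nothing.

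The only delicate point, and the one I expect to need the most care, is exactly this order of quantifiers. One must avoid defining $Q_j$ via the isonormal map $G_j(\phi)$, which is defined only almost surely for each fixed $\phi$ and would leave an uncountable union of null sets. Working with the retained tensor series and the single event $\Omega_*$ from the summability bound sidesteps that. A secondary check is the measurability of $Z_{\ell,a}$ and of the uniform-convergence event, which I would handle by restricting suprema to a countable dense set of $(q,g)$ using continuity of the partial sums.
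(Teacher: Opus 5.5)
Your proposal is correct and follows the paper's proof essentially step for step. Both arguments obtain full-measure events from the finite expected summed suprema in \eqref{eq:app-gaussian-series}, made measurable through countable dense sets. Both intersect these events over the countable label set to get $\Omega_*$, set the field to zero off it, and then interchange the finite combination with the degree series pointwise on $\Omega_*$, so that no configuration-dependent null sets arise.
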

\begin{proof}
For each label, \eqref{eq:app-gaussian-series} bounds the expectation
of the sum of the global value and derivative suprema. These suprema
are measurable: each degree is continuous, so its supremum equals the
supremum over a countable dense set. A nonnegative random variable
with finite expectation is finite almost surely. Intersect the resulting
events over the finite or countable label set to obtain $\Omega_*$.
Set the entire noise field to zero on its complement. The limits are
jointly measurable, and uniform derivative convergence gives $C^1$
paths as above.

Fix a sample in $\Omega_*$. Every finite weighted sum of value
series is absolutely convergent, so its finite sum can be interchanged
with the degree series. Each degree is a linear functional of its
feature, which proves \eqref{eq:canonical-finite-combination}.
If the total Hilbert feature is zero, each degree component is zero,
so the right-hand side vanishes term by term. Since the argument is
pointwise on $\Omega_*$, it requires no point-dependent exceptional
sets. It also makes no assertion that the Gaussian coordinate array
is an $\mathcal H$-valued vector or a bounded functional on all of
$\mathcal H$.
\end{proof}

\subsection{The exact Gaussian shift representation}
\begin{lemma}[Mean representation and the actual tangent derivative]
\label[lemma]{lem:tensor-shift}
For the features in \eqref{eq:feature}, $\phi$ is continuous on the
query strip and $\theta:\sph^{m-1}\to\cH$ is $C^1$.
For every $v\in\sph^{m-1}$,
\[
 \mu_v(q,g)=4(\sqrt S-1)+\langle\theta(v),\phi(q,g)\rangle_{\cH}.
\]
For $h\perp v$, the actual tangent derivative satisfies
\begin{equation}\label{eq:app-parameter-deriv}
 \|D\theta(v)[h]\|_{\cH}^2
 =\frac{16m}{\sigma^2}\sum_{\ell\ge1}\ell^{-3}\|h\|^2
 \le\frac{20m}{\sigma^2}\|h\|^2.
\end{equation}
The derivative is continuous in sphere charts; no second derivative
of $\theta$ is required.
\end{lemma}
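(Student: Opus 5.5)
The proof is a direct power-series computation, followed by a termwise differentiation justified by geometric convergence. For the representation, set $S=1+\|q\|^2+A^2g^2$ and $t=2Agq/S$. Since $\|v\|=1$, we have $\|q-Agv\|^2=S-1-2Ag\langle q,v\rangle$. Hence
\[
 \mu_v(q,g)=4\bigl(\sqrt{S}\,\sqrt{1-\langle t,v\rangle}-1\bigr).
\]
The bound $\|t\|\le\rho<1$ from the proof of \Cref{lem:field-regularity} lets us expand $\sqrt{1-u}=1+\sum_{\ell\ge1}\binom{1/2}{\ell}(-u)^\ell$ with $u=\langle t,v\rangle$. This is absolutely convergent, since $|u|\le\rho$. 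It gives $\mu_v=4(\sqrt S-1)+4\sqrt S\sum_{\ell\ge1}c_\ell\langle t,v\rangle^\ell$.

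Next rewrite $\langle t,v\rangle^\ell=\langle t^{\otimes\ell},v^{\otimes\ell}\rangle$ and compare with \eqref{eq:feature}. We get $\langle\theta_\ell(v),\phi_\ell(q,g)\rangle=-4\omega_\ell\ell^{-2}\sqrt S\langle t,v\rangle^\ell=-4|c_\ell|\sqrt S\langle t,v\rangle^\ell$. For $\ell\ge1$ one has $c_\ell=(-1)^\ell\binom{1/2}{\ell}=-|c_\ell|$, because $\binom{1/2}{\ell}$ has sign $(-1)^{\ell-1}$. So the degree-$\ell$ terms agree, and summing over $\ell$ gives the identity. The Hilbert pairing is absolutely convergent because $\|\theta_\ell\|\,\|\phi_\ell\|\le 4\ell^{-2}\omega_\ell\cdot 2A\rho^{\ell-1}$ is summable. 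For continuity of $\phi$, note that each $\phi_\ell$ is continuous and $\|\phi_\ell\|\le\sigma m^{-1/2}\ell^2\,2A\rho^{\ell-1}$. The Weierstrass M-test then gives uniform convergence in $\cH$.

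For $\theta$, each component $v\mapsto -4\sqrt m\,\sigma^{-1}\ell^{-2}v^{\otimes\ell}$ is a polynomial map with derivative
\[
 D\theta_\ell(v)[h]=-\frac{4\sqrt m}{\sigma\ell^2}\sum_{j=1}^\ell v^{\otimes(j-1)}\otimes h\otimes v^{\otimes(\ell-j)}.
\]
Its operator norm is at most $4\sqrt m\,\sigma^{-1}\ell^{-1}$, and this is square-summable over $\ell$. So the series of derivatives converges uniformly on a neighborhood of the sphere in $\R^m$. Hence $\theta$ is $C^1$ into $\cH$, the derivative may be taken termwise, and $D\theta$ is continuous in charts.

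The exact norm is computed by orthogonality. For $h\perp v$, the $\ell$ simple tensors in the sum are mutually orthogonal: their inner products factor as products containing $\langle v,h\rangle=0$ wherever the slot of $h$ differs. Each of them has norm $\|h\|$. Hence $\|D\theta_\ell(v)[h]\|^2=16m\sigma^{-2}\ell^{-4}\cdot\ell\|h\|^2$. Distinct degrees are orthogonal in the direct sum, so summing gives $16m\sigma^{-2}\zeta(3)\|h\|^2$. Finally $\zeta(3)<1.21<5/4$, which yields the bound $20m/\sigma^2$. The only delicate points are the sign bookkeeping of $c_\ell$ against the minus sign in $\theta_\ell$, and the tangency $h\perp v$, which is exactly what makes the cross terms vanish. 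Without it the norm would pick up $\ell^2$ rather than $\ell$ and the series would be $\sum\ell^{-2}$; tangency is what produces the $\ell^{-3}$ sum in \eqref{eq:app-parameter-deriv}.
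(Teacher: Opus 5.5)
Your computations coincide with the paper's: the binomial expansion of $\sqrt{1-\langle t,v\rangle}$ with $c_\ell=-|c_\ell|$, the M-test for $\phi$, orthogonality of the $\ell$ slot placements when $h\perp v$, orthogonality of the degree blocks, and $\sum_\ell\ell^{-3}\le5/4$. One step is false as written, and it is the step that makes $\theta$ $C^1$.

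\textbf{The failing step.} You claim the series of derivatives converges uniformly on a neighborhood of $\sph^{m-1}$ in $\R^m$. Your bound $\|D\theta_\ell(v)\|\le4\sqrt m\,\sigma^{-1}\ell^{-1}$ uses $\|v\|=1$. At $\|v\|=1+\delta$ the radial direction gives $\|D\theta_\ell(v)[v]\|=4\sqrt m\,\sigma^{-1}\ell^{-1}(1+\delta)^{\ell}$, which is not even square-summable. Worse, $\|\theta_\ell(v)\|^2=16m\sigma^{-2}\ell^{-4}(1+\delta)^{2\ell}$, so $\theta(v)\notin\cH$ outside the closed unit ball. There is therefore no open ambient extension on which to run termwise differentiation. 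As it stands, neither the $C^1$ claim nor the identification of $D\theta(v)[h]$ with the termwise sum (which your exact norm formula uses) is justified.

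\textbf{The repair.} Use the paper's argument and work in a sphere chart $u$ with bounded derivative on a convex domain of $\R^{m-1}$. Since $u(x)$ lies on the sphere, your on-sphere bound applies to $D(\theta_\ell\circ u)(x)=D\theta_\ell(u(x))\circ Du(x)$. Orthogonality of the degree blocks then bounds the squared operator norm of the tail $\sum_{\ell>L}D(\theta_\ell\circ u)(x)$ by $16m\sigma^{-2}\|Du(x)\|^2\sum_{\ell>L}\ell^{-2}$, which tends to zero uniformly. The values also converge uniformly. The fundamental theorem of calculus along segments in the chart domain then identifies the limit of the derivatives as the derivative of $\theta\circ u$, and shows it is continuous. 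The paper additionally notes that $\langle u,Du[k]\rangle=0$, so the sharper tangent bound of order $\ell^{-3/2}$ is available, but your cruder bound suffices. Alternatively, run your argument on the closed unit ball, which is convex and on which $\|v\|^{\ell-1}\le1$, and then restrict to the sphere.
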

\begin{proof}
The estimates $\sqrt S\|t\|^\ell\le2A\rho^{\ell-1}$ and
$\omega_\ell\le\ell^2$ give
\[
 \sum_{\ell\ge1}\sup_{q,g}\|\phi_\ell(q,g)\|
 \le\frac{2A\sigma}{\sqrt m}
       \sum_{\ell\ge1}\ell^2\rho^{\ell-1}<\infty.
\]
The first step inserts \eqref{eq:feature}; the second series
converges because $\rho<1$. Each finite block is continuous, so
uniform convergence proves continuity of $\phi$.

Put $h\perp v$. The product rule and the tensor inner-product
identity give
\begin{align*}
 D(v^{\otimes\ell})[h]
 &=\sum_{a=1}^{\ell}
       v^{\otimes(a-1)}\otimes h\otimes v^{\otimes(\ell-a)},\\
 \big\langle v^{\otimes(a-1)}\otimes h\otimes v^{\otimes(\ell-a)},
       v^{\otimes(b-1)}\otimes h\otimes v^{\otimes(\ell-b)}\big\rangle
 &=\begin{cases}\|h\|^2,&a=b,\\0,&a\ne b,\end{cases}\\
 \|D\theta_\ell(v)[h]\|^2
 &=\frac{16m}{\sigma^2\ell^4}\ell\|h\|^2
 =\frac{16m}{\sigma^2\ell^3}\|h\|^2.
\end{align*}
For $a\ne b$, one tensor factor contains $\langle v,h\rangle=0$;
for $a=b$, the other factors have norm one. The last line uses
the coefficient of $\theta_\ell$.
Consequently
\begin{align*}
 \sum_{\ell\ge1}\left(
     \sup_v\|\theta_\ell(v)\|
     +\sup_{\substack{\|h\|=1\\h\perp v}}
                  \|D\theta_\ell(v)[h]\|\right)
 &\le\frac{4\sqrt m}{\sigma}
       \sum_{\ell\ge1}(\ell^{-2}+\ell^{-3/2})<\infty.
\end{align*}
Both scalar series are convergent $p$-series. A sphere chart $u$
has tangent derivatives, since
\[
 0=D(\|u\|^2)[h]=2\langle u,Du[h]\rangle.
\]
The first equality differentiates the constant unit norm; the
second differentiates its inner product. In a sphere chart
with bounded derivative, these bounds give uniform convergence
of the finite sums and of their derivatives. To verify that the
limit is the derivative, apply the fundamental theorem of calculus
on a convex chart neighborhood to each partial sum, and pass to
the limit under its one-dimensional integral. Continuity of the
limiting derivative then gives the Fr\'echet derivative of the
limit. This proves the asserted $C^1$ regularity.

Orthogonality between degree blocks now gives
\begin{align*}
 \|D\theta(v)[h]\|^2
 &=\sum_{\ell\ge1}\|D\theta_\ell(v)[h]\|^2\\
 &\le\frac{16m}{\sigma^2}
       \left(1+\frac18+\int_2^\infty u^{-3}\,du\right)\|h\|^2
 =\frac{20m}{\sigma^2}\|h\|^2.
\end{align*}
The first equality is the Hilbert direct-sum norm. The inequality
bounds the decreasing tail of $\sum\ell^{-3}$ by its integral;
the final equality uses $\int_2^\infty u^{-3}du=1/8$.

Finally $\|t\|\le\rho<1$ permits the binomial expansion, and
$c_\ell<0$, $\omega_\ell=\ell^2|c_\ell|$ give
\begin{align*}
 1+\|q-Agv\|^2
 &=S-2Ag\langle q,v\rangle
 =S(1-\langle t,v\rangle),\\
 \mu_v(q,g)
 &=4(\sqrt S-1)
   +4\sqrt S\sum_{\ell\ge1}c_\ell\langle t,v\rangle^\ell\\
 &=4(\sqrt S-1)
   +\sum_{\ell\ge1}\langle\theta_\ell(v),\phi_\ell(q,g)\rangle.
\end{align*}
The first line expands the squared norm and uses $\|v\|=1$.
The second is the absolutely convergent series for
$\sqrt{1-z}$. The last substitutes the two features and
$-|c_\ell|=c_\ell$; Hilbert Cauchy--Schwarz justifies their sum.
\end{proof}

\subsection{Canonical regression under adaptive reuse}
\begin{lemma}[Canonical adaptive response kernel]\label[lemma]{lem:canonical-adaptive-kernel}
Fix $v$ and initial randomization and auxiliary data independent of
all current-stage tensors. Let $\mathcal F_i$ contain these initial
data and the first $i$ queries, labels and responses. Use the actual
sample version of \Cref{lem:simultaneous-tensor-version}. At round $i$,
the label $j_i$ and point $x_i=(r_i,g_i)$ are
$\mathcal F_{i-1}$-measurable and
\begin{equation}\label{eq:app-observation-hilbert}
 Y_i=b_i+\langle\theta(v),\phi_i\rangle+Q_{j_i}(x_i),
 \quad b_i=4(\sqrt{S(x_i)}-1),\quad\phi_i=\phi(x_i).
\end{equation}
For the chosen label let $P_{j_i,i-1}$ project onto the span of its
previous features, and choose history-measurable coefficients satisfying
\[
 P_{j_i,i-1}\phi_i=\sum_{t<i:j_t=j_i}c_{it}\phi_t,
 \qquad\psi_i=(I-P_{j_i,i-1})\phi_i.
\]
Then the exact conditional response kernel is, almost surely,
\begin{equation}\label{eq:app-conditional-kernel}
 Y_i\mid(v,\mathcal F_{i-1}=h)\sim
 \begin{cases}
 N(m_i(v,h),s_i^2(h)),&s_i^2(h)>0,\\
 \delta_{m_i(h)},&s_i^2(h)=0,
 \end{cases}
\end{equation}
where
\begin{equation}\label{eq:app-cond-mean}
 \begin{aligned}
 m_i(v,h)&=b_i+\sum_{t<i:j_t=j_i}c_{it}(Y_t-b_t)
                         +\langle\theta(v),\psi_i\rangle,\\
 s_i^2(h)&=\|\psi_i\|^2.
 \end{aligned}
\end{equation}
The coefficients and variance are functions of the realized history,
independent of the candidate $v$. If $s_i=0$, the mean $m_i(h)$ is
also independent of $v$. This holds with interleaved labels, repeated
features and singular Gram matrices.
\end{lemma}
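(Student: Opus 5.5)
The plan is to reduce the adaptive statement to the fixed-feature projection of \Cref{thm:tool-gaussian-regression}, applied label by label, by exhibiting an explicit Gaussian innovation. Work on the event $\Omega_*$ of \Cref{lem:simultaneous-tensor-version}, so that every finite linear combination of responses of label $j$ equals the pairing of the retained tensors $(G_{j,\ell})_\ell$ with the corresponding combination of features, via \eqref{eq:canonical-finite-combination}. Together with \Cref{lem:tensor-shift}, this gives the exact identity
\[
Y_i-b_i-\sum_{t<i:j_t=j_i}c_{it}(Y_t-b_t)=\langle\theta(v),\psi_i\rangle+\sum_{\ell}\langle G_{j_i,\ell},\psi_{i,\ell}\rangle .
\]
This holds for all history-selected points and coefficients at once, since $\Omega_*$ carries no point-dependent exceptional sets. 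It therefore suffices to identify the conditional law of $W_i:=\sum_\ell\langle G_{j_i,\ell},\psi_{i,\ell}\rangle$ given $(v,\mathcal F_{i-1}=h)$ as $N(0,\|\psi_i\|^2)$, or as $\delta_0$ when $\psi_i=0$. In the degenerate case the right side vanishes identically by the same lemma, so the reply is a history function independent of $v$.

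For the Gaussian case I would run an induction over rounds using a sequential-innovation representation. Fix a deterministic algorithm by conditioning on its seed and the auxiliary data, which are independent of the tensors. For each label, define the isonormal process $G_j$ on $\cH$ through the tensor coordinates. Then construct, on an auxiliary probability space, the ``oracle'' that answers round $i$ by $b_i+\langle\theta(v),\phi_i\rangle+\sum_t c_{it}(\cdot)+\|\psi_i\|Z_i$ with fresh i.i.d.\ $Z_i\sim N(0,1)$. The claim is that the joint law of (queries, responses) coincides in the two experiments. Equivalently, conditioned on the history, the pair $(\text{past observations of label }j_i,\ W_i)$ behaves like fixed-feature regression. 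I would prove this by showing that for each deterministic history $h$ the map from the tensor array to the first $i$ responses is measurable, and then compute characteristic functions. Writing $\E[\exp(\mathrm i\sum_t\lambda_tY_t)]$ and integrating out the round-$i$ innovation requires, at a fixed prior history, that $(\phi_t)_{t\le i}$ be deterministic. Then $W_i$ is a centered Gaussian orthogonal (hence independent) to the span of the earlier same-label observations, by \Cref{thm:tool-gaussian-regression}. Other labels use independent tensor arrays and so contribute nothing. The tower property over rounds then glues these fixed-history statements into the adaptive joint law, and \Cref{thm:tool-disintegration} yields the regular conditional kernel \eqref{eq:app-conditional-kernel} almost surely.

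The step I expect to be the main obstacle is making the fixed-history conditioning rigorous when queries depend on earlier responses: the event $\{\mathcal F_{i-1}=h\}$ has probability zero. The fix I would try first is a density argument on the transcript space. Prove by induction that the transcript law has density, with respect to the product of Lebesgue measures on Gaussian rounds and point masses on degenerate rounds, equal to $\prod_t p_t(y_t\mid v,h_{t-1})$. To do this, verify the identity $\E[F(\mathcal T_i)]=\int F\prod p_t$ for bounded measurable $F$. The inductive step conditions on $\sigma(\mathcal T_{i-1})$ together with the earlier same-label Gaussian coordinates restricted to the observed span. It then uses that $W_i$, as a linear functional along the $\mathcal F_{i-1}$-measurable unit direction $\psi_i/\|\psi_i\|$ orthogonal to all previously revealed directions, remains standard Gaussian given that sigma-field. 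This is a predictable-direction (Gaussian martingale) fact for isonormal processes, which I would check via the conditional characteristic function $\E[e^{\mathrm i\lambda W_i}\mid\mathcal F_{i-1}]=e^{-\lambda^2\|\psi_i\|^2/2}$, approximating $\psi_i$ by countably valued predictable directions. Singular Gram matrices and interleaved labels cause no trouble: only the projection $P_{j_i,i-1}$ is used, and any measurable choice of the coefficients $c_{it}$ gives the same $\psi_i$ and the same mean on $\Omega_*$.
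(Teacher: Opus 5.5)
\textbf{Verdict: genuine gap in the Gaussian case.} Your first step is the paper's and is fine: the pathwise residual identity on $\Omega_*$, and the point mass when $\psi_i=0$. The gap is in the Gaussian case, which is the whole content of the lemma. You must show that the innovation $W_i=\sum_\ell\langle G_{j_i,\ell},\psi_{i,\ell}\rangle$ along the \emph{history-dependent} direction $\psi_i$ is conditionally $N(0,\|\psi_i\|^2)$ given $(v,\mathcal F_{i-1})$.

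\textbf{Why the proposed steps do not close it.}
\begin{itemize}
\item In your second paragraph you invoke \Cref{thm:tool-gaussian-regression} ``at a fixed prior history''. Once you condition on $\mathcal F_{i-1}=h$, the arrays no longer have the isonormal law that theorem assumes, so this step is circular unless the conditional law of the arrays is already known.
\item Your density induction has the same problem. Its inductive step is exactly the round-$i$ claim, which you hand to a ``predictable-direction fact''.
\item The proposed proof of that fact, approximating $\psi_i$ by countably valued $\mathcal F_{i-1}$-measurable directions, fails twice. First, the isonormal functional is not pathwise continuous on $\cH$; $W_i$ is defined pathwise only because $\psi_i$ is a finite combination of features. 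So $\|\psi^{(k)}-\psi_i\|\to0$ gives no control of $G(\psi^{(k)})-W_i$, and the $L^2$ identity $\E|G(\psi^{(k)})-W_i|^2=\E\|\psi^{(k)}-\psi_i\|^2$ would itself need the conditional law.
\item Second, the earlier features stay continuum-valued and history-dependent. On $\{\psi^{(k)}=u\}$ a fixed $u$ is in general only approximately orthogonal to the realized span, so no exact independence is available there. Discretizing the whole design instead need not converge, because errors propagate through query rules that are merely measurable.
\item Likewise, ``other labels contribute nothing'' requires conditional independence across labels given a history those labels helped select.
\end{itemize}

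\textbf{The missing idea.} Strengthen the induction hypothesis from the one-dimensional innovation to the full conditional law of every label's coordinate array. Given $\mathcal F_{i-1}=h$, the arrays should be conditionally independent Gaussian cylinder processes with mean $a_j\in\operatorname{ran}\Phi_j$ and covariance $\langle(I-P_j)u,(I-P_j)u'\rangle$, as in \eqref{eq:canonical-posterior-coordinates}--\eqref{eq:canonical-cylinder-characteristic}. Verify this on countably many finite-coordinate characteristic-function tests, so that one full-measure set of histories serves every round.

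With that hypothesis the induction closes as follows.
\begin{itemize}
\item At a fixed history the selected features are deterministic and the law is a fixed Gaussian one.
\item Truncate $\psi_i$ after complete degrees; the truncations converge in conditional $L^2$.
\item Disintegration gives $\Omega_*$ conditional probability one, so the $L^2$ limit is the actual pathwise response. Since $a_j\perp\psi_i$, that response is $N(0,\|\psi_i\|^2)$ with the stated mean.
\item Deterministic Gaussian conditioning on $Y_i$ then updates $P_j\mapsto P_j+\psi_i\otimes\psi_i/s_i^2$ and $a_j\mapsto a_j+(Y_i-m_i)\psi_i/s_i^2$, re-establishing the hypothesis at round $i+1$.
\end{itemize}
Without this posterior-state induction, or an equivalent argument, the adaptive step of your proposal remains unproved.
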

\begin{proof}
\par\medskip\noindent\textbf{Measurable projections and exact residuals.}\enspace
For one label, write $\Phi c=\sum_t c_t\phi_t$ for its finite old-feature
synthesis map and $K=\Phi^*\Phi$ for its Gram matrix. The Moore--Penrose
inverse is Borel measurable on positive semidefinite matrices, since
\begin{equation}\label{eq:gram-borel-regularization}
 K^\dagger=\lim_{q\to\infty}(K^2+q^{-1}I)^{-1}K.
\end{equation}
Indeed, an eigenvalue $a\ge0$ is transformed to
$a/(a^2+q^{-1})$, whose limit is $a^{-1}$ for $a>0$ and zero for
$a=0$. Each approximating matrix is a continuous function of $K$,
so its entrywise limit is Borel measurable. Finite-dimensional spectral
diagonalization justifies the displayed limit without requiring a
measurable choice of eigenvectors.

The projection is $P=\Phi K^\dagger\Phi^*$ and the required
coefficient vector is $c=K^\dagger\Phi^*\phi_i$. To check the
projection formula, $P\phi_i$ is in the old span and
$\Phi^*(\phi_i-P\phi_i)=0$: here
$\operatorname{ran}\Phi^*=\operatorname{ran}K$ because
$\ker K=\ker\Phi$, as follows from
$\langle c,Kc\rangle=\|\Phi c\|^2$.
Thus the residual is orthogonal to that span. The feature map is
continuous, and the query rules are measurable; all these projections
and coefficients are therefore history-measurable even when the rank
changes.

By \Cref{lem:simultaneous-tensor-version}, on the one event $\Omega_*$,
\begin{equation}\label{eq:canonical-pathwise-residual}
 Y_i-b_i-\sum_{t<i:j_t=j_i}c_{it}(Y_t-b_t)
 =\langle\theta(v),\psi_i\rangle
       +\sum_{\ell\ge1}\langle G_{j_i,\ell},\psi_{i,\ell}\rangle.
\end{equation}
The last series is a finite combination of actual feature series.
This equality consequently holds at the selected points, rather than
only almost surely for each fixed point separately. If $\psi_i=0$,
it already proves the stated parameter-independent point mass.

\par\medskip\noindent\textbf{Conditional coordinate law.}\enspace
Condition first on $v$ and the initial auxiliary data. The remaining
randomness is a countable product of real Gaussian coordinates, and
a finite history is a measurable function of the real response vector.
Regular conditional laws therefore exist by
\Cref{thm:tool-disintegration}. Denote the retained coordinate array
of label $j$ by $\Gamma_j$.

We induct on the number of responses. Before round $i$, conditional
on the realized history, the arrays of different labels have independent
Gaussian cylinder laws. For label $j$ their mean and covariance are
\begin{equation}\label{eq:canonical-posterior-coordinates}
 a_j=\Phi_jK_j^\dagger z_j\in\operatorname{ran}\Phi_j,
 \qquad
 \operatorname{Cov}_h(\Gamma_j[u],\Gamma_j[u'])
       =\langle(I-P_j)u,(I-P_j)u'\rangle,
\end{equation}
where
$z_j=(Y_t-b_t-\langle\theta(v),\phi_t\rangle)_{t<i:j_t=j}$.
The notation $\Gamma_j[u]$ initially denotes only finite-coordinate
linear combinations. Their mean is $\langle a_j,u\rangle$.
At the initial history the mean is zero and every $P_j=0$, which is
the original independent product law.

Choose versions of the conditional coordinate characteristic functions
first for finite label sets and rational finite-coordinate vectors.
There are countably many such tests; hence at each round one common
full-measure history set satisfies all of them. Continuity of each
finite-dimensional characteristic function extends the identities to
all real finite-coordinate vectors. For a history in this set, the
conditional joint characteristic function is therefore
\begin{equation}\label{eq:canonical-cylinder-characteristic}
 \E_h\exp\!\left(\mathrm i\sum_j\Gamma_j[u_j]\right)
 =\exp\!\left(\sum_j\left[
    \mathrm i\langle a_j,u_j\rangle
       -\tfrac12\|(I-P_j)u_j\|^2\right]\right)
\end{equation}
for every finite set of labels and finite-coordinate vectors $u_j$.
This both specifies the cylinder laws and states their conditional
independence.

\par\medskip\noindent\textbf{The selected feature and the coordinate limit.}\enspace
After fixing such a history, the selected label, features and Gram
coefficients are deterministic. Truncate a feature $u$ after complete
tensor degrees, writing $u^{\le L}$ for its finite-coordinate
truncation. The conditional Gaussian second moment of a finite tail
satisfies
\begin{equation}\label{eq:canonical-conditional-tail}
 \E_h|\Gamma_j[u^{\le L'}-u^{\le L}]|^2
 \le(1+\|a_j\|^2)\|u^{\le L'}-u^{\le L}\|^2.
\end{equation}
The covariance in \eqref{eq:canonical-posterior-coordinates} is a
contraction, and Cauchy--Schwarz bounds the squared mean by
$\|a_j\|^2$ times the tail norm squared. Since $u\in\mathcal H$,
these finite sums are Cauchy in conditional $L^2$. Their limit has
mean $\langle a_j,u\rangle$ and covariance given by
\eqref{eq:canonical-posterior-coordinates}. For any finite list of
features and coordinate tests, the same bound applies to each linear
combination. Their joint characteristic functions converge, since
$|e^{\mathrm ix}-e^{\mathrm iy}|\le|x-y|$ and $L^2$ convergence
implies $L^1$ convergence. The limits are therefore jointly Gaussian
with the stated means and covariances, rather than merely random
variables sharing those first two moments.

Disintegration also gives conditional probability one to $\Omega_*$
for almost every history. On this event, the same complete-degree
partial sums converge to the actual selected sample value, or to its
finite residual combination in \eqref{eq:canonical-pathwise-residual}.
The almost-sure limit and the conditional $L^2$ limit agree by uniqueness
of limits in probability. Thus the coordinate-law calculation applies
to the actual adaptive response. Only the countable coordinate tests,
not an uncountable family of fixed-feature null sets, were intersected.

In particular $a_j\perp\psi_i$, so the residual noise in
\eqref{eq:canonical-pathwise-residual} is conditionally
$N(0,\|\psi_i\|^2)$. Its conditional mean before subtraction is
$\langle a_j,\phi_i\rangle=\sum_t c_{it}z_{j,t}$. Consequently
\[
 b_i+\langle\theta(v),\phi_i\rangle+\langle a_j,\phi_i\rangle
 =b_i+\sum_t c_{it}(Y_t-b_t)+\langle\theta(v),\psi_i\rangle,
\]
which is exactly \eqref{eq:app-cond-mean}.

\par\medskip\noindent\textbf{Updating the state and completing the induction.}\enspace
For any finite-coordinate vector $u$, the preceding joint law gives
\[
 \operatorname{Cov}_h(\Gamma_j[u],Y_i-m_i)
 =\langle(I-P_j)u,\psi_i\rangle=\langle u,\psi_i\rangle,
 \qquad \operatorname{Var}_h(Y_i-m_i)=s_i^2.
\]
The second equality in the covariance uses $\psi_i\perp\operatorname{ran}P_j$.
If $s_i>0$, deterministic Gaussian conditioning thus gives, for the queried
label,
\begin{equation}\label{eq:canonical-posterior-update}
 P'_j=P_j+\frac{\psi_i\otimes\psi_i}{s_i^2},
 \qquad
 a'_j=a_j+\frac{Y_i-m_i(v,h)}{s_i^2}\psi_i.
\end{equation}
Here $(u\otimes u)w=\langle u,w\rangle u$. These are the orthogonal projection onto the enlarged span and its minimum-norm interpolant. The formula follows by subtracting the old
projection and conditioning the one remaining Gaussian coordinate;
\Cref{thm:tool-gaussian-regression} is used here only for deterministic
finite-dimensional coordinate laws. If $s_i=0$, the response is
already determined and neither quantity changes. Other labels retain
their conditional laws, since the selected label is fixed by the old
history. Applying this argument to the countable coordinate tests
establishes \eqref{eq:canonical-cylinder-characteristic} at the next
round, completing the induction.

\par\medskip\noindent\textbf{A parameter-independent reference kernel.}\enspace
Let $z_j^0=(Y_t-b_t)_{t<i:j_t=j}$. The old noise data are consistent
precisely when
\[
 z_j=z_j^0-\Phi_j^*\theta(v)\in\operatorname{ran}K_j.
\]
Because $\Phi_j^*\theta(v)\in\operatorname{ran}K_j$, this condition
is equivalent to $z_j^0\in\operatorname{ran}K_j$ and is independent
of $v$. On inconsistent histories assign any common measurable
extension. On consistent histories, use $N(0,1)$ as reference kernel
when $s_i>0$, and $\delta_{b_i+\sum_tc_{it}(Y_t-b_t)}$ when $s_i=0$.
These are probability kernels depending on the history but not on $v$.
The iterated reference law dominates every parameter's transcript
law. In the positive-variance case its density differs from the
Lebesgue Gaussian density by a parameter-independent factor. At a
zero-variance step the likelihood factor is one. Hence subsequent
likelihood differentiation at a fixed history has no hidden
parameter-dependent atom or support term.
\end{proof}

\subsection{Finite-record posteriors and their information}
\begin{lemma}[Posterior regularity and information under arbitrary reuse]
\label[lemma]{lem:posterior-information}
Let $v$ have the uniform prior $\pi$ on the unit sphere of a known
$s$-dimensional subspace of $\R^m$, $s\ge4$. Initial auxiliary data
and algorithmic coins are independent of $v$ and the fields.
For any $n$ adaptive scalar queries, including label reuse and
interleaving, the complete record has a common reference measure.
Its posterior density is positive and $C^1$ on the sphere for
almost every record, and
\[
 I(v;\mathcal F_n)\le\frac{10mn}{(s-2)\sigma^2}.
\]
The seed and record spaces need not be standard Borel.
\end{lemma}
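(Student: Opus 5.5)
The plan is to build a common reference measure for the record, write the likelihood as a product of conditional Gaussian factors, differentiate it to get the posterior score, and then convert the score bound into an entropy bound with the spherical logarithmic Sobolev inequality. Everything rests on the canonical kernel of \Cref{lem:canonical-adaptive-kernel}.

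\textbf{Reference measure and likelihood.} Write the record as $\mathcal F_n=(\omega,Y_1,\ldots,Y_n)$, where $\omega$ collects the seed and the auxiliary data. Its law does not depend on $v$. The reference measure $\nu$ is the law of $\omega$ followed by the iterated parameter-independent reference kernels of \Cref{lem:canonical-adaptive-kernel}: $N(0,1)$ at Gaussian steps and the stated point mass at deterministic steps. Because $\nu$ is built by kernel composition, no standard Borel structure on the seed space is needed.

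By \eqref{eq:app-conditional-kernel}, the transcript law under $v$ has density
\[
 p_v(\mathcal F_n)=\prod_{i:s_i>0}\frac{\varphi\bigl((Y_i-m_i(v,h))/s_i\bigr)}{s_i\,\varphi(Y_i)}
\]
with respect to $\nu$, where $\varphi$ is the standard normal density and the steps with $s_i=0$ contribute factor one. The branch indicators, the variances $s_i$ and the coefficients $c_{it}$ depend only on the history. Moreover $m_i$ is affine in $\theta(v)$, and $\theta$ is $C^1$ by \Cref{lem:tensor-shift}. Hence for each fixed record $p_v$ is a positive, $C^1$ function of $v$ on the sphere.

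\textbf{Posterior density.} The posterior density with respect to $\pi$ is $p_v/\int p_u\,\pi(du)$. The normalizer is finite and positive for $\nu$-almost every record: it is positive because every factor is positive, and it is finite because its $\nu$-integral equals one by Tonelli. The normalizer does not depend on $v$, so the posterior is positive and $C^1$, and its tangent log-gradient is $\sum_i S_i$ with $S_i=(Y_i-m_i)\,D\theta(v)^*\psi_i/s_i^2$.

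\textbf{Score bound.} Apply \Cref{thm:tool-adaptive-score}. The finiteness hypothesis follows from \eqref{eq:app-parameter-deriv}, since
\[
 \frac{\|D\theta(v)^*\psi_i\|^2}{\|\psi_i\|^2}\le\frac{20m}{\sigma^2}
\]
for tangent directions. The theorem then gives
\[
 \E_v\Bigl\|\sum_i S_i\Bigr\|^2\le\frac{20mn}{\sigma^2}
 \qquad\text{for every }v .
\]

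\textbf{Information bound.} Write
\[
 I(v;\mathcal F_n)=\E_{\mathcal F_n}\operatorname{Ent}_\pi(\text{posterior}).
\]
For each record, \Cref{thm:tool-lsi} on the $s$-dimensional sphere bounds the entropy by $\frac{1}{2(s-2)}$ times the posterior Fisher information. Averaging over records, the Fisher information becomes
\[
 \int\!\int\|\nabla\log p_v\|^2\,p_v\,\pi(dv)\,d\nu=\E_\pi\E_v\|\nabla_v\log p_v\|^2\le\frac{20mn}{\sigma^2}.
\]
Dividing by $2(s-2)$ gives the claimed $10mn/((s-2)\sigma^2)$.

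\textbf{Main obstacle.} The delicate step is justifying the Fisher-information identity and the use of \Cref{thm:tool-lsi} when the posterior is only $C^1$. I would use the weak-Fisher-information extension stated in that theorem. I would also confirm that the gradient taken along the subspace sphere is the restriction of $D\theta$ to tangent vectors inside the known subspace, which can only decrease the operator bound.
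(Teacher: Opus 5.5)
Your proposal is correct and follows essentially the same route as the paper: a parameter-independent iterated reference kernel, a positive $C^1$ product likelihood, the score chain rule of \Cref{thm:tool-adaptive-score} with the $20m/\sigma^2$ per-step bound, and spherical log-Sobolev averaged through the Bayes identity. The only differences are cosmetic. The paper uses $N(m_i^{\rm pub},s_i^2)$ rather than $N(0,1)$ as the Gaussian reference kernel. It also justifies the averaging with the nonnegative integrand $u\log u-u+1$ and a direct Tonelli check of Bayes' formula, which plays the role of your appeal to $I(v;\mathcal F_n)=\E\operatorname{Ent}_\pi(\text{posterior})$.
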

\begin{proof}
Use the field convergence in \Cref{lem:field-regularity}, its
simultaneous version in \Cref{lem:simultaneous-tensor-version},
and the conditional kernels in \Cref{lem:canonical-adaptive-kernel}.
The initial coins and auxiliary data are independent of the
parameter and its fields. Work under the unconditioned tensor
law throughout.

\noindent\textbf{Likelihood differentiation at fixed history.}\ 
For $s_i(h)>0$, conditional on the \emph{realized} $h$ the density with
respect to Lebesgue measure is
\[
p_i(y\mid v,h)
=(2\pi s_i^2(h))^{-1/2}
\exp\!\left[-\frac{(y-m_i(v,h))^2}{2s_i^2(h)}\right].
\]
Only the last term in $m_i(v,h)$ depends on the candidate $v$.
In particular, the derivatives do not act on the algorithm's chosen
query, the old observations, the coefficients $c_{i,t}(h)$, the
projection $P_{j_i,i-1}(h)$, or the variance $s_i^2(h)$: all of them
are frozen by the conditioning.  The parameter-independent conditional
atom when $s_i=0$ contributes zero likelihood score.  For tangent
$h_v\in T_v\mathbb S^{s-1}$, direct differentiation gives
\begin{align*}
D_{h_v}m_i(v,h)
 &=\langle D\theta(v)[h_v],\psi_i(h)\rangle,\\
D_{h_v}\log p_i(Y_i\mid v,h)
 &=\frac{Y_i-m_i(v,h)}{s_i^2(h)}
    \langle D\theta(v)[h_v],\psi_i(h)\rangle.
\end{align*}
Set $z_i=\psi_i/\|\psi_i\|$ and
$\xi_i=(Y_i-m_i(v,\mathcal F_{i-1}))/s_i$ when $s_i>0$.
Then $\xi_i\mid(v,\mathcal F_{i-1})\sim N(0,1)$ and
\begin{equation}\label{eq:app-conditional-score}
D_{h_v}\log p_i(Y_i\mid v,\mathcal F_{i-1})
=\xi_i\langle D\theta(v)[h_v],z_i\rangle.
\end{equation}
The common Hilbert feature map is exactly what makes this derivative
valid after subtracting the previously revealed field projection:
no separate derivative of a conditional-noise mean remains.

Let $S_i\in T_v\mathbb S^{s-1}$ denote this directional score vector,
with $S_i=0$ when $\psi_i=0$.  For an orthonormal basis
$(h_a)_{a=1}^{s-1}$ of the tangent space and $\psi_i\ne0$,
\begin{equation}\label{eq:app-conditional-score-budget}
\begin{aligned}
\E[\|S_i\|^2\mid v,\mathcal F_{i-1}]
&=\E[\xi_i^2\mid v,\mathcal F_{i-1}]
   \sum_{a=1}^{s-1}|\langle D\theta(v)[h_a],z_i\rangle|^2\\
&=\|\operatorname{Proj}_{T_v\mathbb S^{s-1}}
            D\theta(v)^*z_i\|^2\\
&\le\|D\theta(v)\|_{\rm op}^2\|z_i\|^2
\le\frac{20m}{\sigma^2},
\end{aligned}
\end{equation}
where the last inequality is \eqref{eq:app-parameter-deriv}.
In particular, \emph{one} Gaussian innovation contributes an operator
norm, not $s-1$ copies of that bound.  Both cases of the kernel satisfy
\begin{equation}\label{eq:app-score-martingale}
\E[S_i\mid v,\mathcal F_{i-1}]=0,
\qquad
\E[\|S_i\|^2\mid v,\mathcal F_{i-1}]
\le\frac{20m}{\sigma^2}.
\end{equation}

\noindent\textbf{The full-record score.}\
Apply \Cref{thm:tool-adaptive-score}. Its history-independent
variance and atom assumptions follow from
\Cref{lem:canonical-adaptive-kernel}; its square-integrability
assumption follows from \eqref{eq:app-score-martingale} for finite
$n$. The theorem gives
\begin{align}
 \nabla_{\sph}\log p(\mathcal F_n\mid v,\mathcal F_0)
 &=\sum_{i=1}^nS_i,\notag\\
 \E\left\|\sum_{i=1}^nS_i\right\|^2
 &=\sum_{i=1}^n\E\|S_i\|^2
 \le\frac{20mn}{\sigma^2}.
 \label{eq:app-total-score}
\end{align}
The first equality is its likelihood chain rule. The next is its
martingale orthogonality identity, and the inequality sums
\eqref{eq:app-score-martingale}. Independent initial data add no
prior score. 

\noindent\textbf{A common reference and the actual posterior.}
Write $m_i^{\rm pub}$ for the parameter-independent part of
$m_i$ in \eqref{eq:app-cond-mean}. At fixed history, use
$N(m_i^{\rm pub},s_i^2)$ as reference kernel when $s_i>0$,
and its point mass when $s_i=0$. Include the independent initial
data in the reference record law $\nu$. The one-step likelihood
ratio is
\[
 \ell_i(v;t)=
 \begin{cases}
 \displaystyle\exp\left[
 \frac{(y_i-m_i^{\rm pub})\langle\theta(v),\psi_i\rangle}{s_i^2}
 -\frac{\langle\theta(v),\psi_i\rangle^2}{2s_i^2}\right],&s_i>0,\\
 1,&s_i=0.
 \end{cases}
\]
The positive-variance formula is the ratio of two Gaussian
densities with the same variance. If $s_i=0$, then $\psi_i=0$,
so the actual and reference atoms coincide. The finite product
$L_t(v)=\prod_{i=1}^n\ell_i(v;t)$ therefore satisfies
\begin{equation}\label{eq:app-common-likelihood}
 P_v(dt)=L_t(v)\nu(dt),\qquad
 L_t(v)>0,\qquad \int L_t(v)\nu(dt)=1.
\end{equation}
The measure identity follows by iteration of the conditional
kernels; the last equality is their probability normalization.
This construction uses measurable kernels on the original seed
space and introduces no topological assumption on that space.

Define the evidence and normalized density by
\[
 z_t=\int L_t(v)\pi(dv),\qquad p_t(v)=L_t(v)/z_t.
\]
Tonelli's theorem and \eqref{eq:app-common-likelihood} give
\begin{align*}
 \int z_t\nu(dt)
 &=\int\!\int L_t(v)\pi(dv)\nu(dt)
 =\int\!\int L_t(v)\nu(dt)\pi(dv)=1,\\
 P_{\mathcal F_n}(dt)&=z_t\nu(dt),\qquad
 \int p_t(v)\pi(dv)=1.
\end{align*}
The first line interchanges nonnegative integrals and uses the
normalization of $P_v$. The second is marginalization followed
by division by the positive finite evidence. Positivity follows
from $L_t>0$; finiteness holds almost everywhere by the first
line. For every nonnegative measurable $a(v,t)$, Bayes' formula is
verified directly by
\begin{align*}
 \int\!\int a(v,t)p_t(v)\pi(dv)P_{\mathcal F_n}(dt)
 &=\int\!\int a(v,t)\frac{L_t(v)}{z_t}\pi(dv)z_t\nu(dt)\\
 &=\int\!\int a(v,t)L_t(v)\nu(dt)\pi(dv)
 =\E[a(v,\mathcal F_n)].
\end{align*}
Cancellation of $z_t$ gives the second line, Tonelli swaps the
integrals, and \eqref{eq:app-common-likelihood} gives the final
expectation.

\noindent\textbf{Regularity and the same score.}
At a fixed record all queries, projections, residuals, and
variances in $\ell_i$ are fixed. By \Cref{lem:tensor-shift},
$\theta$ is genuinely $C^1$ on the sphere. Each $\ell_i$, hence
$L_t$ and $p_t$, is positive and $C^1$. Compactness makes $p_t$
bounded away from zero and its first derivative bounded, with
bounds allowed to depend on $t$. Thus
\begin{align}
 \nabla_{\sph}\log p_t(v)
 &=\nabla_{\sph}\log L_t(v)=\sum_{i=1}^n S_i,
 \notag\\
 \frac{\|\nabla_{\sph}p_t(v)\|^2}{p_t(v)}
 &=p_t(v)\left\|\sum_{i=1}^nS_i\right\|^2.
 \label{eq:app-posterior-score}
\end{align}
The first line differentiates the fixed denominator and then the
finite product; the second is the chain rule
$\nabla p_t=p_t\nabla\log p_t$. The derivatives here are on the
actual $s$-dimensional prior sphere. Restricting the ambient score
to its tangent space is an orthogonal projection, whose norm is
at most one, so no factor $s$ or number of sample labels appears.
For each fixed record, the density, its entropy integrand, and
its Fisher integrand are integrable by the compactness and
positivity just established.

Apply \Cref{thm:tool-lsi} to this positive $C^1$ density. To justify
averaging without assuming log-likelihood integrability, use the
nonnegative integrand $u\log u-u+1$. The common likelihood gives
\begin{align}
 I(v;\mathcal F_n)
 &=\int\!\int
       [p_t(v)\log p_t(v)-p_t(v)+1]\pi(dv)P_{\mathcal F_n}(dt)
 \notag\\
 &=\E_t\operatorname{Ent}_\pi(p_t)
 \notag\\
 &\le\frac1{2(s-2)}\E_t\int
       \frac{\|\nabla_{\sph}p_t(v)\|^2}{p_t(v)}\pi(dv)
 \notag\\
 &=\frac1{2(s-2)}\E_{v,t}
          \left\|\sum_{i=1}^nS_i\right\|^2
 \le\frac{10mn}{(s-2)\sigma^2}.
 \label{eq:app-mutual-info}
\end{align}
The first equality uses the density $p_t$ of the joint law
relative to $\pi\otimes P_{\mathcal F_n}$; the linear terms cancel
because $\int p_t\,d\pi=1$. The second equality is the definition
of entropy for a normalized density. The inequality is spherical
log-Sobolev. Equation \eqref{eq:app-posterior-score} and the Bayes
identity for nonnegative integrands give the next equality, and
\eqref{eq:app-total-score} gives the final bound. All averaging is
valid first for nonnegative integrands by Tonelli; the final finite
bound proves the required averaged integrability. Omitting any
independent coins or auxiliary data can only decrease information
by data processing.
\end{proof}

\subsection{List recovery from the information bound}
\begin{proof}[Proof of \Cref{thm:reuse}]
\phantomsection\label{proof:reuse-list}
Use the complete-record information bound in
\Cref{lem:posterior-information}.
By \Cref{thm:tool-sphere}, for every fixed unit $u$,
\[
 \PP(\langle V,u\rangle\ge15/16)\le e^{-225s/512},
 \qquad V\sim\operatorname{Unif}(\mathbb S^{s-1}).
\]
Let $p_{\rm hit}$ be the list-success probability under the joint law of
$(v,\mathcal F_n)$, and let $q_{\rm hit}$ be the probability of the same
event under the product of the prior and transcript marginal. The cap
bound and the union bound give
\[
0\le q_{\rm hit}\le M_{\rm list}e^{-225s/512}
\le e^{-161s/512}<1,
\]
where the second inequality uses $\log M_{\rm list}\le s/8$.
If $q_{\rm hit}=0$, finiteness of the mutual information in
\eqref{eq:app-mutual-info} implies absolute continuity of the joint law
with respect to the product law, so $p_{\rm hit}=0$ and the claim follows.
Otherwise, applying \Cref{thm:tool-binary} to this event yields
\begin{align*}
I(v;\mathcal F_n)
&\ge D_{\rm KL}(\operatorname{Ber}(p_{\rm hit})\|\operatorname{Ber}(q_{\rm hit}))\\
&\ge p_{\rm hit}\log(1/q_{\rm hit})-\log2\\
&\ge p_{\rm hit}\left(\frac{225s}{512}-\log M_{\rm list}\right)-\log2.
\end{align*}
The first step is binary data processing with the actual product-law
probability, the second expands the binary divergence and bounds binary
entropy by $\log2$, and only the last step substitutes the upper bound
on $q_{\rm hit}$.
Using $\log M_{\rm list}\le s/8$ and \eqref{eq:app-mutual-info},
\begin{equation}\label{eq:app-list-recovery}
p_{\rm hit}\le
\frac{5120mn}{161s(s-2)\sigma^2}+\frac{512\log2}{161s}
\le\frac{64mn}{s^2\sigma^2}+\frac9{4s},
\end{equation}
where $s-2\ge s/2$ for $s\ge4$ and $\log2<7/10$. This proves the claim, including arbitrary interleaving of sample openings and reuse.
\end{proof}

%% file: sections/noise_proofs.tex
\section{Global Regularity of the Multistage Sample Field}
\label[appendix]{app:noise}
The centered sample noise is the static field
\[
N_j(w,x)=\sum_{i=1}^T\tau(\|g_{>i}\|)
Q_{j,i}(x-a_{<i},g_i),
\]
where all stage/degree/sample tensor families are independent and each label keeps its entire family permanently. A current stage appears with coefficient one when all future activations vanish; later fields then vanish because $Q_{j,i}(r,0)=0$. The derivative estimate includes $D_gQ_{j,i}(r,0)$ at the first
potentially active suffix.

\subsection{Uniform global sample-noise moment}
\begin{theorem}[Uniform global sample-noise moment]\label{thm:noise-moment}
For the independent stage tensors defining \eqref{eq:multi-noise},
\begin{equation*}
\E\Lip_{(w,x)}(N_j)^2\le C_A\sigma^2\frac{m+T+\log(2T)}m.
\end{equation*}
In particular, when $m\ge T$, the right-hand side is bounded by a universal constant multiple of $\sigma^2$, independently of $T$. The same order bound holds after composition with the soft coordinate map.
For $m\ge\max\{T,8\}$, the composed field admits $C_N=2^{128}$ in the bound $\E\Lip(N_j)^2\le C_N\sigma^2$.
\end{theorem}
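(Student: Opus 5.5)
The plan is to realize $\Lip(N_j)$ as the supremum of a centered Gaussian process and apply the dyadic chaining bound of \Cref{thm:tool-gaussian}. By \Cref{lem:field-regularity,lem:simultaneous-tensor-version}, each $Q_{j,i}$ has a $C^1$ version that is the uniform limit, together with its first derivatives, of finite tensor sums. The suffix cutoffs and the centers $a_{<i}$ are $C^2$ functions of $w$, so $N_j$ is $C^1$ and
\[
\Lip_{(w,x)}(N_j)=\sup_{(w,x),\ \|u\|=1}DN_j(w,x)[u].
\]
Conditional on the frames, each $DN_j(w,x)[u]$ is a centered Gaussian variable. It can be written as $\sum_i\langle G_{j,i},\Theta_i(w,x,u)\rangle$, where $\Theta_i$ is an explicit derivative feature in the stage-$i$ copy of $\cH$. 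Independence across stages then gives the variance as $\sum_i\|\Theta_i\|^2$, and increments behave the same way.

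The first step is localization. For fixed $(w,x)$, let $j_0$ be the first index with $\|g_{>j_0}\|<\rho_0$. Every stage $i<j_0$ has a cutoff whose two-jet vanishes, so only stages $i\ge j_0$ contribute. On this region, \eqref{eq:noise-suffix-control} gives $\sum_{i\ge j_0}g_i^2\le1+\rho_0^2$ and $\|a_{<k}-a_{<j_0}\|\le A\sqrt{1+\rho_0^2}$. I would partition the parameter space into the $T$ pieces indexed by $j_0$ and bound each piece separately.

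The second step is the variance bound. On piece $j_0$, compactify the workspace as in \eqref{eq:workspace-compactification} around $a_{<j_0}$. The features $\phi_\ell$ and their derivatives carry the factor $\sigma\omega_\ell/\sqrt m$ together with $\sqrt S\,t^{\otimes\ell}$. The geometric factor $\rho^{\ell-1}$ then gives a per-stage bound
\[
\|\Theta_i\|^2\le C_A\frac{\sigma^2}{m}\bigl(g_i^2+|u_{g_i}|^2+\|u_x\|^2 g_i^2+\cdots\bigr),
\]
with the precise form depending on which derivative falls on which factor.

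There are three types of term. A $g$-derivative acting on stage $i$ contributes $h_i^2$. A workspace or center derivative contributes $g_i^2$ times a bounded factor. A cutoff derivative contributes $|\tau'|^2\,|Q_{j,i}|$-features, and since $\phi(q,g)$ scales with $g$, this term also carries $g_i^2$. Summing over stages uses $\sum_i h_i^2\le1$ and $\sum_{i\ge j_0}g_i^2\le 1+\rho_0^2$, which yields $\operatorname{Var}\le C_A\sigma^2/m$. An analogous computation with second derivatives gives the Lipschitz increment bound
\[
\|D N_j(p)[u]-DN_j(p')[u']\|_{L^2}\le C_A\frac{\sigma}{\sqrt m}\,\mathrm{dist}\bigl((p,u),(p',u')\bigr).
\]
Here $\mathrm{dist}$ is measured in the compactified coordinates $(s_c,e_c)$, the activations $g_{\ge j_0}$ (equivalently $w$ through the bounded $y$), and the unit direction. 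This step needs uniform second-derivative bounds for the compactified features up to the boundary $s_c=0$, which is where the denominator estimates enter.

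The third step is chaining. Each piece is a bounded subset of dimension $O(m+T)$, so at mesh $2^{-n}$ it has at most $e^{C(m+T)(n+1)}$ points. Taking the union over the $T$ pieces adds $\log T$ to the entropy. Dudley's bound from \Cref{thm:tool-gaussian} then gives
\[
\|\Lip(N_j)\|_{L^2}\le C\frac{\sigma}{\sqrt m}\sum_n2^{-n}\sqrt{\log(2T)+(m+T)(n+1)},
\]
and squaring yields the claim. The passage from nets to the true supremum follows from uniform convergence of the series together with separability, as in \Cref{lem:field-regularity}.

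Composition with the soft map $y(z)$ only multiplies the gradient by $\|Dy\|\le1$, and the increments by the bounded $\|D^2y\|$, so the same order bound holds. The explicit constant $C_N=2^{128}$ for $m\ge\max\{T,8\}$ comes from tracking every numerical factor generously.

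The main obstacle is the second step. One must show that the derivative features, including the cutoff and center derivatives for all surviving stages, have total squared norm and Lipschitz increments that are uniform in $T$ and in the unbounded workspace. In particular, the term $\partial_gQ(q,0)=G$ from stages with zero activation has to be charged to $\sum_i h_i^2$ rather than summed stage by stage.
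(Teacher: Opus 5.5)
Your proposal follows the paper's proof essentially step for step. The shared steps are: localization to the first surviving suffix $j_0$ (at most $T$ choices); the common shift by $a_{<j_0}$ and the compactification $(s_c,e_c)$, with denominators bounded below up to $s_c=0$; joint Hilbert-sum bounds on the first and second derivative features, using $\sum_i u_i^2\le1$ and $\sum_{i\ge j_0}g_i^2\le1+\rho_0^2$ so that no factor $T$ appears; and Dudley chaining with the extra $\log(2T)$ from the union over suffixes, followed by the chain-rule factors for the shift, the activations and the soft map. The one part you only assert is the explicit constant $C_N=2^{128}$. The paper obtains it by concrete bookkeeping: the coefficient size $\sup\|\Xi\|<2^{46}$, the Lipschitz bound $\Lip(\Xi)<2^{134}$ in compact coordinates, nets of radius $2^{-88-n}$, the finite-maximum bound $2v\sqrt{\log(2K)+1}$ with $\log(2T)\le m$ from $m\ge\max\{T,8\}$, and the final factor $(1+A)\cdot15/2<2^{10}$.
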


\begin{proof}
\textbf{Bounded activation suffix and common shift.}\enspace
Treat $g\in[0,1]^T$ temporarily as independent coordinates, and let $j_0$ be the first index satisfying $\|g_{>j_0}\|<\rho_0$. Every earlier tail cutoff and its first two derivatives vanish, including at equality, since $\tau$ is $C^2$ and flat outside its support. Therefore only indices $i\ge j_0$ can contribute to the value or its first two derivatives. Moreover
\begin{equation}\label{eq:app-suffix-energy}
\|g_{\ge j_0}\|^2=g_{j_0}^2+\|g_{>j_0}\|^2
\le1+\rho_0^2=:B_{\rm suf}^2.
\end{equation}
Translate the workspace by the preceding center,
\[
x'=x-A\sum_{i<j_0}g_iv_i,
\qquad
B_i=A\sum_{j_0\le a<i}g_av_a,
\]
so $\|B_i\|\le AB_{\rm suf}$ and $\|DB_i\|\le A$. Earlier activations enter the active suffix only through the same spatial shift $x\mapsto x'$, whose differential has norm at most $1+A$ by orthogonality of $V$. It remains to control one suffix on
\begin{equation}\label{eq:app-suffix-domain}
\left\{g\in[0,1]^{T-j_0+1}:\|g\|\le B_{\rm suf}\right\}\times\R^m,
\end{equation}
then take the maximum over the at most $T$ possible suffix choices.

\paragraph{Compactifying the workspace.}
Write
\[
s_c=(1+\|x'\|^2)^{-1/2},\qquad e_c=s_cx',
\qquad s_c^2+\|e_c\|^2=1,\quad s_c\ge0.
\]
Differentiation in the workspace variable gives
\[
D_{x'}s_c[h]=-s_c^3\langle x',h\rangle,
\qquad D_{x'}e_c[h]=s_ch-s_c^3\langle x',h\rangle x'.
\]
The same radial/tangential decomposition as for $Dy$ in \Cref{app:geometry} gives
\[\|D_{x'}(s_c,e_c)[h]\|^2\le\|h\|^2.\]
This is the compactification's nonexpansion property. Define, for stage $i$,
\begin{align}
d_i&=e_c-s_cB_i,
&K_i&=s_c^2+\|d_i\|^2+A^2g_i^2s_c^2,\notag\\
p_i&=\frac{2Ag_id_i}{\sqrt{K_i}},
&t_i&=\frac{2Ag_is_cd_i}{K_i}.
\label{eq:app-compact-coordinates}
\end{align}
For the inverse-shear estimate, use
\begin{align*}
\bigl(s_c^2+\|e_c\|^2\bigr)^{1/2}
&=\bigl(s_c^2+\|d_i+s_cB_i\|^2\bigr)^{1/2}\\
&\le\bigl(s_c^2+\|d_i\|^2\bigr)^{1/2}+s_c\|B_i\|\\
&\le(1+\|B_i\|)\bigl(s_c^2+\|d_i\|^2\bigr)^{1/2}.
\end{align*}
The first equality substitutes $e_c=d_i+s_cB_i$; the second is the Euclidean triangle inequality; the third uses $s_c\le(s_c^2+\|d_i\|^2)^{1/2}$. Allowing the slightly looser denominator $2+\|B_i\|$, we have
\begin{equation}\label{eq:app-compact-denominator}
\begin{aligned}
K_i&=s_c^2+\|e_c-s_cB_i\|^2+A^2g_i^2s_c^2\\
&\ge s_c^2+\|e_c-s_cB_i\|^2\\
&\ge\frac{s_c^2+\|e_c\|^2}{(2+\|B_i\|)^2}
\ge(2+AB_{\rm suf})^{-2}=:c_A>0.
\end{aligned}
\end{equation}
The first inequality drops a nonnegative term and the last uses $s_c^2+\|e_c\|^2=1$ and $\|B_i\|\le AB_{\rm suf}$.
Thus the coordinates extend smoothly to $s_c=0$. For $s_c>0$ one has $p_i=\sqrt{S_i}\,t_i$, where $S_i$ and $t_i$ are the one-stage expressions evaluated at $(r,g)=(x'-B_i,g_i)$. Globally,
\[
\|p_i\|\le2A|g_i|,\qquad\|t_i\|\le\rho<1.
\]
For degree $\ell$, the unscaled feature can therefore be written without division by $s_c$:
\begin{equation}\label{eq:app-degree-feature}
f_{i,\ell}=\omega_\ell p_i\otimes t_i^{\otimes(\ell-1)}.
\end{equation}
Here are the differential identities behind the Hilbert bounds. Treat $q_i=(B_i,s_c,e_c)$ and $g_i$ as independent variables; write $d=d_i$, $K=K_i$, and let $\eta$ be any increment. The quotient and product rules give
\begin{align*}
Dd[\eta]&=\eta_e-\eta_sB_i-s_c\eta_B,\\
DK[\eta]&=2s_c\eta_s+2\langle d,Dd[\eta]\rangle
+2A^2g_is_c^2\eta_g+2A^2g_i^2s_c\eta_s,\\
Dp[\eta]&=\frac{2A}{\sqrt K}(\eta_gd+g_iDd[\eta])
-\frac{Ag_id}{K^{3/2}}DK[\eta],\\
Dt[\eta]&=\frac{2A}{K}(\eta_gs_cd+g_i\eta_sd+g_is_cDd[\eta])
-\frac{2Ag_is_cd}{K^2}DK[\eta].
\end{align*}
The denominators are uniformly separated from zero by the preceding calculation. For the compact domain $\|g\|\le B_{\rm suf}$ and $s_c^2+\|e_c\|^2=1$, the norms of $d,Dd,DK,D^2K$ are bounded by numbers depending only on $A$, not on $m$ or $T$. In the $q_i$ derivatives the numerators have an explicit factor $g_i$, while $g_i$ differentiation can remove that factor. Thus
\[
\|D_{q_i}^ap\|+\|D_{q_i}^at\|\le C_A|g_i|\ (0\le a\le2),
\qquad \|D_{g_i}p\|+\|D_{g_i}t\|+\|D_{g_i}^2(p,t)\|\le C_A.
\]
For the tensor product, show the two first product-rule levels explicitly:
\begin{align*}
D f_{i,\ell}[\eta]
&=\omega_\ell\Bigl(Dp[\eta]\otimes t^{\otimes(\ell-1)}
+\sum_{a=1}^{\ell-1}p\otimes t^{\otimes(a-1)}\otimes Dt[\eta]
\otimes t^{\otimes(\ell-a-1)}\Bigr),\\
\|Df_{i,\ell}[\eta]\|
&\le\omega_\ell\Bigl(\|Dp[\eta]\|\rho^{\ell-1}
+(\ell-1)\|p\|\|Dt[\eta]\|\rho^{\ell-2}\Bigr).
\end{align*}
A second derivative differentiates at most two tensor factors; there are at most $1+2(\ell-1)+(\ell-1)(\ell-2)$ product-rule terms and at least $\ell-3$ untouched factors. Substitution of $\|p\|\le2A|g_i|$ and $\|t\|\le\rho$ bounds derivatives through order two by
\[
C_A\omega_\ell\ell^2\rho^{\max\{\ell-3,0\}}.
\]
The squared degree bounds are summable. Let $f_i=(f_{i,\ell})_{\ell\ge1}$ and $q_i=(B_i,s_c,e_c)$. Summation in the Hilbert direct sum gives
\begin{align}
\|D_{q_i}^a f_i\|&\le C_A|g_i|\quad(a=0,1,2),\notag\\
\|D_{g_i}f_i\|+\|D^2_{g_i}f_i\|+\|D_{q_i}D_{g_i}f_i\|&\le C_A.
\label{eq:app-perstage-Hilbert}
\end{align}
These are multilinear operator bounds, not entrywise bounds on random tensor coefficients.

\paragraph{Joint rather than stagewise derivative bounds.}
Let $F$ be the Hilbert direct sum over degrees and active stages of
\[
F(g,s_c,e_c)=\big(\tau(\|g_{>i}\|)f_i\big)_{i\ge j_0}.
\]
For an increment $\eta=(u,\xi)$ put $R_\eta=A\|u\|+\|\xi\|$. Equation \eqref{eq:app-perstage-Hilbert} and $\|DB_i\|\le A$ yield
\begin{equation}\label{eq:app-fi-one-deriv}
\|Df_i[\eta]\|\le C_A\big(|u_i|+|g_i|R_\eta\big).
\end{equation}
The second derivative in $\eta=(u,\xi)$ and $\zeta=(v,\upsilon)$ is bounded by $C_A$ times
\[
|u_iv_i|+|u_i|R_\zeta+|v_i|R_\eta+|g_i|R_\eta R_\zeta.
\]
Using \eqref{eq:app-suffix-energy}, Cauchy--Schwarz, and $R_\eta\le(1+A)\|\eta\|$, we obtain
\begin{align}
\sum_i\|f_i\|^2&\le C_A^2\sum_i g_i^2\le C_A^2B_{\rm suf}^2,\notag\\
\sum_i\|Df_i[\eta]\|^2&\le2C_A^2(\|u\|^2+B_{\rm suf}^2R_\eta^2)
\le C_A\|\eta\|^2,\notag\\
\sum_i\|D^2f_i[\eta,\zeta]\|^2
&\le4C_A^2\big(\|u\|^2\|v\|^2+\|u\|^2R_\zeta^2
+\|v\|^2R_\eta^2+B_{\rm suf}^2R_\eta^2R_\zeta^2\big)
\le C_A\|\eta\|^2\|\zeta\|^2.
\label{eq:app-joint-deriv}
\end{align}
For the cutoff, set $R_i=\|g_{>i}\|$ and $\tau_i=\tau(R_i)$. On the cutoff transition interval $R_i\in[\rho_0/2,\rho_0]$ the norm is smooth and bounded away from zero. In directions $u,v$,
\begin{align*}
DR_i[u]&=\frac{\langle g_{>i},u_{>i}\rangle}{R_i},\\
D^2R_i[u,v]&=\frac{\langle u_{>i},v_{>i}\rangle}{R_i}
-\frac{\langle g_{>i},u_{>i}\rangle\langle g_{>i},v_{>i}\rangle}{R_i^3},\\
D\tau_i[u]&=\tau'(R_i)DR_i[u],\\
D^2\tau_i[u,v]&=\tau''(R_i)DR_i[u]DR_i[v]+\tau'(R_i)D^2R_i[u,v].
\end{align*}
Outside the transition interval the derivatives vanish because $\tau$ is flat. Hence $|\tau_i|\le1$, $\|D\tau_i\|\le C_{\rho_0}$, $\|D^2\tau_i\|\le C_{\rho_0}$ uniformly in the number of coordinates. Applying the product rule once and twice gives
\begin{align*}
D(\tau_if_i)[\eta]
&=D\tau_i[\eta]f_i+\tau_iDf_i[\eta],\\
D^2(\tau_if_i)[\eta,\zeta]
&=D^2\tau_i[\eta,\zeta]f_i+D\tau_i[\eta]Df_i[\zeta]
+D\tau_i[\zeta]Df_i[\eta]+\tau_iD^2f_i[\eta,\zeta].
\end{align*}
Cauchy--Schwarz in the \emph{orthogonal Hilbert sum over stages}, together with $\sum_i g_i^2\le B_{\rm suf}^2$, now yields, with constants independent of the suffix length,
\begin{equation}\label{eq:app-joint-feature-budget}
\|F\|+\|DF\|+\|D^2F\|\le C_A.
\end{equation}

\paragraph{The directional derivative process.}
The Lipschitz constant is the supremum of directional derivatives.
To express each derivative through the same Gaussian coefficients,
write
$C(x')=(s_c(x'),e_c(x'))$ and, for a unit physical increment
$\eta=(u,h)$ with $\|u\|^2+\|h\|^2=1$, define the coefficient
\begin{equation}\label{eq:app-Xi-definition}
\Xi(g,C(x'),\eta)
:=DF(g,C(x'))\,[u,J_c(x')h],\qquad J_c=DC.
\end{equation}
In the compact coordinates $s=s_c$ and $e=e_c$ this Jacobian has the
explicit extension to the closed hemisphere
\begin{equation}\label{eq:app-compact-Jacobian}
J_c(s,e)h
=\left(-s^2\langle e,h\rangle,
       s(I-ee^\top)h\right).
\end{equation}
Since $s^2+\|e\|^2=1$, it is the differential of the normalized
vector $(1,x')/\sqrt{1+\|x'\|^2}$: for every $h$,
\[
\|J_ch\|^2=s^2\bigl(\|h\|^2-\langle e,h\rangle^2\bigr)
\le\|h\|^2.
\]
Thus $\|J_c\|\le1$.  Differentiating the two blocks of
\eqref{eq:app-compact-Jacobian}, for an increment
$(\dot s,\dot e)$ of unit norm, gives
\begin{align*}
D J_c[(\dot s,\dot e)]h
=\bigl(&-2s\dot s\langle e,h\rangle
       -s^2\langle\dot e,h\rangle,\\
&\dot s(I-ee^\top)h
 -s(\dot e\,e^\top+e\,\dot e^\top)h\bigr).
\end{align*}
The absolute value of the first block is at most
$3\|h\|\| (\dot s,\dot e)\|$ and the norm of the second at most
$3\|h\|\|(\dot s,\dot e)\|$; hence $\|DJ_c\|<8$ uniformly,
including at the compactification boundary $s=0$.

The chain rule and \eqref{eq:app-joint-feature-budget} therefore give
\begin{align*}
\sup\|\Xi\|&\le\|DF\|\max\{1,\|J_c\|\}\le C_A,\\
\|D_{(g,s,e)}\Xi\|
&\le\|D^2F\|\max\{1,\|J_c\|\}+\|DF\|\|DJ_c\|
\le C_A,\\
\|D_\eta\Xi\|&\le\|DF\|\max\{1,\|J_c\|\}\le C_A.
\end{align*}
The first and third bounds use $\|J_c\|\le1$; the middle bound
also uses $\|DJ_c\|<8$. All three constants are independent of the
number of active stages.

\paragraph{A uniform Gaussian supremum.}
Within each suffix, the index $(g,s,e,\eta)$ lies in a bounded set
in at most $p=2(m+T)+1$ Euclidean dimensions. Choose
$\delta_n=C_A^{-1}2^{-n}$, increasing $C_A\ge1$ if necessary,
and take a $\delta_n$-net in each suffix. Their disjoint union
$\mathcal N_n$ satisfies
\[
 |\mathcal N_n|\le T(C_A2^n)^p,
 \qquad
 \log(2|\mathcal N_n|)
 \le\log(2T)+C_A(m+T)(n+1).
\]
Here the volumetric covering bound supplies the power $p$, and the
union over suffixes supplies the factor $T$. For each fine-net point
$t\in\mathcal N_{n+1}$ choose a parent $\pi_n(t)\in\mathcal N_n$
in the same suffix at distance at most $\delta_n$. The isonormal
covariance identity yields
\begin{align*}
\|G(\Xi(t))\|_{L^2}&=\|\Xi(t)\|\le C_A,\\
\|G(\Xi(t))-G(\Xi(\pi_n(t)))\|_{L^2}
&=\|\Xi(t)-\Xi(\pi_n(t))\|
\le C_A\delta_n=2^{-n}.
\end{align*}
Thus the base-net variables have bounded standard deviation, and
the level-$n$ increments have geometrically decreasing standard
deviation. There is at most one such increment per fine-net point.
Telescoping from a fine-net point to its successive parents and
applying \Cref{thm:tool-gaussian} to each finite maximum gives
\begin{align}
 \|\sup|G(\Xi)|\|_{L^2}
 &\le C_A\sum_{n\ge0}2^{-n}
          \sqrt{\log(2T)+(m+T)(n+1)}\notag\\
 &\le C_A\sqrt{m+T+\log(2T)}.
 \label{eq:app-global-chaining}
\end{align}
The first line uses Minkowski's inequality to add the base maximum
and all increment maxima. The second uses
$\sqrt{a+b}\le\sqrt a+\sqrt b$ and
$\sum_{n\ge0}2^{-n}\sqrt{n+1}<\infty$.

For finite degree truncations, continuity passes the net bound to
the whole compact index set. For a degree tail, the product-rule
estimates above replace each coefficient bound by a constant times
\[
\left[\sum_{\ell>L}\ell^8\rho^{2\max\{\ell-3,0\}}\right]^{1/2}
\longrightarrow0.
\]
The same net bound then proves uniform $L^2$ convergence of the
derivative processes. Their limits agree on a countable dense set,
and hence everywhere by continuity, with the retained sample
functions of \Cref{lem:field-regularity}. Consequently the supremum
in \eqref{eq:app-global-chaining} bounds the actual sample derivative,
rather than only a formal Gaussian process. Restoring the noise
factor $\sigma/\sqrt m$, the common-shift factor $1+A$, and the
activation derivative factor $15/2$ gives
\[
\|\Lip(N_j(U^\top y(z),x))\|_{L^2}
\le C_A\sigma\sqrt{\frac{m+T+\log(2T)}m}.
\]
The same argument without the soft coordinate composition bounds
$\Lip_{(w,x)}(N_j)$. Squaring proves the first two assertions.

\paragraph{Verification of $C_N=2^{128}$.}
We now give explicit bounds for the two coefficient norms used in
the preceding argument. This calculation fixes the sample budget;
it does not change the field or the qualitative proof.
The binomial coefficients satisfy
\[
|c_\ell|=\frac{a_\ell}{2\ell-1},\qquad
 a_\ell=\binom{2\ell}{\ell}4^{-\ell},\qquad
\frac{a_{\ell+1}}{a_\ell}=\frac{2\ell+1}{2\ell+2}.
\]
Starting from $a_1=1/2$, induction and the elementary comparison
\[
\frac{\ell+1}{\ell+2}-\left(\frac{2\ell+1}{2\ell+2}\right)^2
=\frac{3\ell+2}{4(\ell+1)^2(\ell+2)}>0
\]
yield $a_\ell\le(\ell+1)^{-1/2}$, and therefore
\[
|c_\ell|\le\ell^{-3/2},\quad\omega_\ell^2\le\ell.
\]
For $q=\rho^2=A^2/(1+A^2)$, define
\[
S_0^2=\sum_{\ell\ge1}\ell q^{\ell-1}=(1-q)^{-2},\qquad
S_1^2=\sum_{\ell\ge1}\ell^3q^{\ell-1}
=\frac{1+4q+q^2}{(1-q)^4}.
\]
Thus $S_0=1+A^2=16385$ and $S_1<\sqrt6(1+A^2)^2<2^{30}$. For a single-stage feature $f$ with the normalization $\sigma/\sqrt m$ removed, direct calculation gives
\[
\|f\|\le2AgS_0,
\quad\|D_rf\|\le3AgS_1,
\quad\|D_gf\|\le3AS_1.
\]
The active suffix, with $B_{\rm suf}<33/32$, satisfies a combined directional coefficient bound
\begin{equation}\label{eq:app-explicit-standard-deviation}
80AB_{\rm suf}S_0+3AS_1\big(1+B_{\rm suf}(1+A)\big)<2^{46}.
\end{equation}
This includes stages with $g_i=0$ but nonzero derivatives. The corresponding compactified degree features admit explicit first- and second-derivative bounds; we use normalized coordinates $(s_c,e_c-s_cB_i,Ag_is_c)$ whose norm is above $2^{-8}$, with first and second derivatives bounded by $2^9$. The normalization map then yields $\|Dn\|\le2^{17}$ and $\|D^2n\|\le2^{37}$. The product-rule bounds for the degree features sum by
\begin{align*}
 \sum_{\ell\ge1}\omega_\ell^2\ell^4q^{\max\{\ell-3,0\}}
 &\le q^{-3}\sum_{\ell\ge1}\ell^5q^\ell\\
 &=\frac{1+26q+66q^2+26q^3+q^4}{q^2(1-q)^6}\\
 &\le\frac{120}{q^2(1-q)^6}
 \le\frac{960q}{(1-q)^6}<2^{100}.
\end{align*}
The first step uses $\omega_\ell^2\le\ell$ and
$q^{\max\{\ell-3,0\}}\le q^{\ell-3}$. The equality follows by
applying $(q\,d/dq)^5$ to the geometric series. The next bound
sums the numerator coefficients, and $q\ge1/2$ gives the following
one. Finally $(1-q)^{-1}=1+A^2=16385<2^{15}$ and $960q<2^{10}$
give the displayed power of two.
The normalization derivative estimates are obtained from $n(x)=x/\|x\|$ at $\|x\|\ge2^{-8}$:
\begin{align*}
Dn(x)[h]&=\frac{h}{\|x\|}-\frac{x\langle x,h\rangle}{\|x\|^3},
&\|Dn(x)\|&\le\|x\|^{-1}\le2^8,\\
\|D^2n(x)[h,k]\|
&\le3\frac{\|h\|\|k\|}{\|x\|^2}
&\|D^2n(x)\|&\le3\cdot2^{16}.
\end{align*}
Composing with the source coordinates, whose first two derivatives are bounded by $2^9$, gives
\begin{align*}
\|D(n\circ x)\|&\le2^8\cdot2^9=2^{17},\\
\|D^2(n\circ x)\|
&\le(3\cdot2^{16})(2^9)^2+2^8\cdot2^9
<2^{37}.
\end{align*}
The source-coordinate bounds $2^9$ used here can be checked without concealing any dependence on $T$. For
$X_i=(s_c,d_i,Ag_is_c)$, the mixed derivatives of $B_i=A\sum_{a<i}g_av_a$ satisfy $\|DB_i\|\le A$ and $D^2B_i=0$. Setting $C_x=1+AB_{\rm suf}+A$, we obtain
\begin{align*}
\|DX_i\|^2&\le1+C_x^2+4A^2<2^{18},\\
\|D^2X_i\|&\le4A=2^9.
\end{align*}
The bounds use $\|B_i\|\le AB_{\rm suf}$, the product rule in $d_i=e_c-s_cB_i$, and the Euclidean norm of the three blocks.

We establish the joint tensor estimate. Set $n=X_i/\|X_i\|$, denote its spatial-vector part by $n_d$ and its last scalar coordinate by $n_a$. The compact coordinates satisfy the \emph{exact} identities
\[
p_i=2Ag_i n_d,
\qquad
 t_i=2n_a n_d.
\]
From the just-proved normalization bounds and $0\le g_i\le1$, the product rule yields
\begin{align*}
\|p_i\|&\le2A=2^8,&
\|Dp_i\|&\le2A(1+2^{17})<2^{26},&
\|D^2p_i\|&\le2A(2\cdot2^{17}+2^{37})<2^{46},\\
\|t_i\|&\le\rho,&
\|Dt_i\|&\le4\cdot2^{17}=2^{19},&
\|D^2t_i\|&\le4\cdot2^{37}+4\cdot2^{34}<2^{40}.
\end{align*}
For a degree-$\ell$ tensor, differentiating once and twice as in the displayed product rules gives, for unit directions,
\begin{align*}
\|Df_{i,\ell}\|
&\le\omega_\ell\bigl(2^{26}\rho^{\ell-1}
+(\ell-1)2^8\,2^{19}\rho^{\ell-2}\bigr)
\le2^{28}\omega_\ell\ell\rho^{\max\{\ell-2,0\}},\\
\|D^2f_{i,\ell}\|
&\le\omega_\ell\bigl(2^{46}\rho^{\ell-1}
+2(\ell-1)2^{26}\,2^{19}\rho^{\ell-2}
+(\ell-1)2^8\,2^{40}\rho^{\ell-2}\\
&\hspace{50pt}
+(\ell-1)(\ell-2)2^8(2^{19})^2\rho^{\ell-3}\bigr)
\le2^{51}\omega_\ell\ell^2\rho^{\max\{\ell-3,0\}}.
\end{align*}
Terms with negative tensor exponents are absent when $\ell=1,2$; in the upper bounds the exponent is truncated at zero. The first inequalities are the product rule; the second replace every untouched tensor factor by $\rho$ and bound the combinatorial number of derivative placements by $\ell$ or $\ell^2$.

With $q=\rho^2$, the exact Hilbert direct-sum identity and the explicit degree series give
\begin{align*}
\|D^2 f_i\|_{\cH}
&\le 2^{51}\left(\sum_{\ell\ge1}\omega_\ell^2\ell^4q^{\max\{\ell-3,0\}}\right)^{1/2}\\
&<2^{51}\cdot2^{50}=2^{101},
\end{align*}
For the first derivative, the Hilbert direct-sum calculation is separate:
\begin{align*}
\|Df_i\|_{\cH}^2
&\le 2^{56}\sum_{\ell\ge1}\omega_\ell^2\ell^2
q^{\max\{\ell-2,0\}}\\
&\le 2^{56}q^{-1}\sum_{\ell\ge1}\ell^3q^{\ell-1}
=2^{56}q^{-1}S_1^2<2^{118},
\end{align*}
where $\omega_\ell^2\le\ell$, $q^{-1}<2$, and $S_1<2^{30}$. Thus $\|Df_i\|_{\cH}<2^{59}$, while the previous line gives $\|D^2f_i\|_{\cH}<2^{101}$. The cutoff derivatives satisfy
$\|D\tau(\|g_{>i}\|)\|\le40$ and
$\|D^2\tau(\|g_{>i}\|)\|<2^{12}$ on the transition region, by the explicit radial derivative formulas above.
Finally, perform the stagewise summation \emph{before} bounding the operator norm: $\sum g_i^2\le B_{\rm suf}^2$, $\sum u_i^2\le\|u\|^2$, and
$\sum u_i^2v_i^2\le\|u\|^2\|v\|^2$. In particular, preserving the factors $g_i$ and $u_i$ in the first derivative product rule yields the weighted bounds
\begin{align*}
\|Df_i[\eta]\|&\le2^{78}\bigl(|u_i|+g_iR_\eta\bigr),\\
\|D^2f_i[\eta,\zeta]\|&\le2^{101}\bigl(|u_iv_i|+|u_i|R_\zeta+|v_i|R_\eta+g_iR_\eta R_\zeta\bigr),
\end{align*}
where $\eta=(u,\xi)$, $\zeta=(v,\upsilon)$ and $R_\eta=A\|u\|+\|\xi\|\le2^8\|\eta\|$. These follow from the preceding $p,t$ identities by distinguishing a derivative on $g_i$ (which removes one factor $g_i$) from a derivative on shared coordinates (which preserves it).

Now insert the cutoff product rule and sum the squared stage norms. Since $B_{\rm suf}<2$ and the cutoff bounds are $\|D\tau_i\|\le2^6$, $\|D^2\tau_i\|\le2^{12}$,
\begin{align*}
\|DF[\eta]\|
&\le2^{78}(\|u\|+B_{\rm suf}R_\eta)
 +2^6\cdot2^{58}B_{\rm suf}\|\eta\|\\
&\le\bigl[2^{78}(1+2^9)+2^{65}\bigr]\|\eta\|
<2^{90}\|\eta\|,\\
\|D^2F[\eta,\zeta]\|
&\le2^{101}\bigl(\|u\|\|v\|+\|u\|R_\zeta
+\|v\|R_\eta+B_{\rm suf}R_\eta R_\zeta\bigr)\\
&\quad+2\cdot2^6\cdot2^{78}(1+2^9)\|\eta\|\|\zeta\|
+2^{12}\cdot2^{58}B_{\rm suf}\|\eta\|\|\zeta\|\\
&\le\bigl[2^{101}(1+2^9+2^{17})+2^{85}(1+2^9)+2^{71}\bigr]\|\eta\|\|\zeta\|\\
&<2^{122}\|\eta\|\|\zeta\|.
\end{align*}
The first inequality in each calculation is the Hilbert direct-sum triangle inequality; the second uses Cauchy--Schwarz over stages, $R_\eta\le2^8\|\eta\|$, and $B_{\rm suf}<2$. In particular, we obtain the deliberately looser advertised constants
$\|DF\|<2^{123}$ and $\|D^2F\|<2^{130}$. No factor $T$ appears in either expression. For the sharper \emph{size} bound on $\Xi$, it is useful to differentiate
the original stage features before compactifying.  The one-stage
bounds \eqref{eq:app-explicit-standard-deviation} give
$\|f_i\|\le2Ag_iS_0$,
$\|D_{x'} f_i\|\le3Ag_iS_1$, and
$\|D_{g_i}f_i\|\le3AS_1$.  Changing an earlier activation changes
its center $B_i$ by at most $A\|u\|$, so the product and chain rules,
then Cauchy--Schwarz across orthogonal stage blocks, imply
\begin{align*}
\|\Xi\|
&\le
\underbrace{(40)(2A S_0)\Bigl(\sum_i g_i^2\Bigr)^{1/2}
     \|u\|}_{\text{differentiating the cutoffs}}\\
&\quad+3AS_1\left[\|u\|+
\Bigl(\sum_i g_i^2\Bigr)^{1/2}
 (\|h\|+A\|u\|)\right]\\
&\le80AB_{\rm suf}S_0
 +3AS_1\bigl(1+B_{\rm suf}(1+A)\bigr)
<2^{46},
\end{align*}
where $\sum_i g_i^2\le B_{\rm suf}^2$ and
$\|u\|,\|h\|\le1$.  This is why the coefficient size uses the
substantially sharper bound $2^{46}$, not the coarse second-derivative
bound used for metric entropy.

For the \emph{increment metric}, regard $\Xi$ as a function of
$(g,s,e,\eta)$ on a fixed active-suffix index set, using
\eqref{eq:app-Xi-definition} and the smooth extension
\eqref{eq:app-compact-Jacobian}.  In a unit change of $(g,s,e)$,
the chain and product rules give
\begin{align*}
\|D_{(g,s,e)}\Xi\|
&\le\|D^2F\|\| (u,J_ch)\|
      +\|DF\|\|DJ_c\|\|h\|\\
&<2^{130}+8\cdot2^{90}<2^{132}.
\end{align*}
The derivative in the direction parameter $\eta$ itself is bounded by
$\|DF\|\max\{1,\|J_c\|\}<2^{90}$.  Combining both blocks gives
\begin{equation}\label{eq:app-Xi-metric}
\sup\|\Xi\|<2^{46},\qquad
\operatorname{Lip}(\Xi)<2^{134},
\end{equation}
within each fixed suffix (including the boundary of its compactified
index set).  No factor $T$ was inserted: the sums precede the
operator-norm estimates, and the dependence on the chosen active
suffix is handled by taking a union of its compact index sets below.

For the numerical specialization choose suffix nets of radius
$\delta_n$ and write $\mathcal N_n$ for their disjoint union. Use
\[
\delta_n=2^{-88-n},\qquad
|\mathcal N_n|\le T2^{p(92+n)},\qquad p=2(m+T)+1\le5m.
\]
For each $t\in\mathcal N_{n+1}$ choose a parent
$\pi_n(t)\in\mathcal N_n$ in the same suffix with
$\|t-\pi_n(t)\|\le\delta_n$. There is one increment per fine-net point,
so the number of level-$n$ increments is at most $|\mathcal N_{n+1}|$.
The isonormal covariance identity and the coefficient bounds give
\begin{align*}
\|G(\Xi(t))\|_{L^2}&=\|\Xi(t)\|\le2^{46},\\
\|G(\Xi(t))-G(\Xi(\pi_n(t)))\|_{L^2}
&=\|\Xi(t)-\Xi(\pi_n(t))\|\\
&\le2^{134}\delta_n=2^{46-n}.
\end{align*}
When $m\ge\max\{T,8\}$, the finite-maximum
proof in \Cref{thm:tool-gaussian} gives
$\|\max_{j\le K}|Z_j|\|_{L^2}\le
2v\sqrt{\log(2K)+1}$. Since $T\le m$ and $m\ge8$,
$\log(2T)\le m$ and $1\le m$. Using $\log2<1$ and $p\le5m$,
the base-net and increment counts satisfy, respectively,
\begin{align*}
\log(2|\mathcal N_0|)+1
&\le\log(2T)+92p\log2+1\le462m,\\
\log(2|\mathcal N_{n+1}|)+1
&\le\log(2T)+p(93+n)\log2+1\\
&\le m+5m(93+n)+m=m(467+5n),\\
\sqrt{467+5n}&\le22+3(n+1).
\end{align*}
The last inequality follows from $22^2=484\ge467$ and
$\sqrt{5n}\le3(n+1)$ for all $n\ge0$. Evaluating both the base-net
maximum and the dyadic increments by the finite-maximum inequality,
then applying Minkowski, gives
\begin{align*}
\left\|\sup|G(\Xi)|\right\|_{L^2}
&\le 2^{47}\sqrt m
\left(22+\sum_{n\ge0}2^{-n}[22+3(n+1)]\right)\\
&=2^{47}\sqrt m\left(22+22\cdot2+3\cdot4\right)
=156\cdot2^{46}\sqrt m
<2^{54}\sqrt m.
\end{align*}
We used the exact geometric series
$\sum_{n\ge0}2^{-n}=2$ and
$\sum_{n\ge0}(n+1)2^{-n}=4$; the inequality therefore has an
explicit numerical margin, rather than an unspecified chaining constant.
Multiplying by the composition factor $(1+A)(15/2)<2^{10}$ and the noise scaling $\sigma/\sqrt m$ yields
\[
\big\|\Lip(N_j(U^\top y(z),x))\big\|_{L^2}
\le2^{64}\sigma,
\qquad
\E\Lip(N_j(U^\top y(z),x))^2\le2^{128}\sigma^2.
\]

\end{proof}

\subsection{A lower bound on the uniform noise constant}
\begin{proposition}[Normalization loss of the chosen field]
\label[proposition]{prop:noise-normalization-loss}
For $A=128$, any dimension-uniform constant satisfying
$\E\Lip(N_j)^2\le C_N\sigma^2$ for this field must obey
\begin{equation}\label{eq:app-intrinsic-CN}
 C_N\ge\frac{A^2(1+A^2)(1+2A^2)}8=1099612293120.
\end{equation}
This lower estimate does not replace the upper-budget constant
$2^{128}$ in \Cref{thm:noise-moment}.
\end{proposition}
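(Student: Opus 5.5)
The plan is to test the inequality $\E\Lip(N_j)^2\le C_N\sigma^2$ on the simplest admissible configuration. We compute the expected squared gradient norm at a single well-chosen point exactly, and then let $m\to\infty$. Dimension-uniformity forces $C_N$ to dominate the limit.

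\textbf{Reduction.} Take $T=1$. Then $g_{>1}$ is empty, $\tau(0)=1$ and $a_{<1}=0$, so $N_j(w,x)=Q_j(x,g_1)$. The chain and soft-coordinate composition only enlarge the set of directions, so it suffices to use workspace directions. For any fixed point $(q,g)$, the gradient $\nabla_qQ_j(q,g)$ is a centered Gaussian vector. Since the supremum of the gradient norm dominates its value at one point,
\[
\E\Lip(N_j)^2\ge\E\|\nabla_qQ_j(q,g)\|^2=\sum_{k}\E|D_qQ_j(q,g)[h_k]|^2
\]
for any orthonormal basis $(h_k)$ of $\R^m$. By \Cref{lem:simultaneous-tensor-version} and the isonormal covariance, each summand equals
\[
\|D_q\phi(q,g)[h_k]\|_{\cH}^2=\frac{\sigma^2}{m}\sum_\ell\omega_\ell^2\,\|D_q(\sqrt S\,t^{\otimes\ell})[h_k]\|^2 .
\]
Summing over $k$ is exactly what cancels the $1/m$ normalization.

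\textbf{Choice of point.} Take $g=1$ and $\|q\|^2=1+A^2$, so that $S=2(1+A^2)$ and $\|t\|=2A\|q\|/S=\rho$. We keep only the $m-1$ directions $h\perp q$. For such $h$ we have $D\sqrt S[h]=0$ and $D_qt[h]=(2A/S)h$, because the $qq^\top$ term drops out. Moreover $h\perp t$, so the $\ell$ product-rule terms of $D(t^{\otimes\ell})[h]$ are mutually orthogonal. This gives
\[
\|D_q(\sqrt S\,t^{\otimes\ell})[h]\|^2=S\,\ell\,\rho^{2(\ell-1)}\frac{4A^2}{S^2}=\frac{2A^2}{1+A^2}\,\ell\,q^{\ell-1},\qquad q:=\rho^2 .
\]
Hence
\[
\E\Lip(N_j)^2\ge\frac{m-1}{m}\,\sigma^2\,\frac{2A^2}{1+A^2}\sum_\ell\omega_\ell^2\,\ell\,q^{\ell-1}.
\]

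\textbf{Coefficient bound and summation.} This is the step where care is needed, since the constant must come out exactly. The proof of \Cref{thm:noise-moment} records $|c_\ell|=a_\ell/(2\ell-1)$, where $a_\ell$ satisfies the ratio recursion $a_{\ell+1}/a_\ell=(2\ell+1)/(2\ell+2)$. The same induction, with the reverse comparison, gives the lower bound $a_\ell\ge1/(2\sqrt\ell)$. Then
\[
|c_\ell|\ge\frac{1}{2\sqrt\ell\,(2\ell-1)}\ge\frac{\ell^{-3/2}}{4},
\]
so $\omega_\ell^2=\ell^4c_\ell^2\ge\ell/16$. The generating function gives
\[
\sum_\ell\ell^2q^{\ell-1}=\frac{1+q}{(1-q)^3}=\frac{1+2A^2}{1+A^2}\,(1+A^2)^3 .
\]
Multiplying the three factors yields
\[
\E\Lip(N_j)^2\ge\frac{m-1}{m}\cdot\frac{A^2(1+A^2)(1+2A^2)}{8}\,\sigma^2 .
\]
Since $C_N$ must work for every $m$, letting $m\to\infty$ gives \eqref{eq:app-intrinsic-CN}. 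The numerical value follows by substituting $A=128$.
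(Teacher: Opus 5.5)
Your proposal is correct and follows the paper's proof essentially step for step: the same point $g=1$, $\|x\|^2=1+A^2$, the same $m-1$ directions orthogonal to $x$, the same induction $a_\ell\ge1/(2\sqrt\ell)$ giving $c_\ell^2\ge1/(16\ell^3)$, and the same generating-function evaluation before letting $m\to\infty$. The only cosmetic difference is that you specialize to $T=1$. The paper instead keeps general $T$ and sets $g_i=0$ for $i>1$, so that $\tau(0)=1$ and the later stage fields vanish identically in $x$; this gives the same bound for every fixed $T$, not only for a $T$-uniform constant.
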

\begin{proof}
Take $g_1=1$ and $g_i=0$ for $i>1$. The workspace noise is
$Q_{j,1}(x,1)$. Put $q=A^2/(1+A^2)$ and evaluate at
$x=\sqrt{1+A^2}\,e$, $\|e\|=1$. For every unit $h\perp e$,
differentiating \eqref{eq:noise-definition} and using tensor
orthogonality gives
\[
 \E|D_hQ_{j,1}(x,1)|^2
 =\frac{2\sigma^2}{m}\sum_{\ell\ge1}\ell^5c_\ell^2q^\ell.
\]
For $a_\ell=\binom{2\ell}{\ell}4^{-\ell}$, the recurrence
$a_{\ell+1}/a_\ell=(2\ell+1)/(2\ell+2)$ and
\[
 \left(\frac{2\ell+1}{2\ell+2}\right)^2
      -\frac\ell{\ell+1}=\frac1{4(\ell+1)^2}>0
\]
imply $a_\ell\ge1/(2\sqrt\ell)$ by induction from $a_1=1/2$.
Thus $c_\ell^2=a_\ell^2/(2\ell-1)^2\ge1/(16\ell^3)$.
For an orthonormal tangent basis $(h_a)_{a=1}^{m-1}$,
\begin{align*}
 \E\Lip(N_j)^2
 &\ge\E\sum_{a=1}^{m-1}|D_{h_a}N_j|^2\\
 &=\frac{2(m-1)\sigma^2}{m}
      \sum_{\ell\ge1}\ell^5c_\ell^2q^\ell\\
 &\ge\frac{m-1}{m}\frac{\sigma^2}{8}
      \sum_{\ell\ge1}\ell^2q^\ell\\
 &=\frac{m-1}{m}\frac{\sigma^2}{8}
      \frac{q(1+q)}{(1-q)^3}.
\end{align*}
The first step uses the definition of the global Lipschitz
constant and drops the other gradient components. The equality
uses the displayed directional variance; the next step substitutes
the coefficient lower bound. The last evaluates the second
moment of the geometric series. Divide by $\sigma^2$, let
$m\to\infty$, and substitute $q=A^2/(1+A^2)$ to obtain
\eqref{eq:app-intrinsic-CN}.
\end{proof}

%% file: sections/assembly_proofs.tex
\section{Sequential Charging and the Static Scalar Oracle}
\label[appendix]{app:assembly}
\subsection{Physical scales and the exact finite population}
\begin{lemma}[Finite population and sample-moment promise]\label[lemma]{lem:finite-population}
Under \eqref{eq:app-moment-budgets}, the paired samples have population
mean $\lambda F_{\rm new}(\cdot/b)$ for every realization and
$\E S_M\le L_0^2/4$. In particular,
$\PP(E_{\rm adm})\ge3/4$.
\end{lemma}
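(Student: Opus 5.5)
The proof splits into three parts: the population-mean identity, the bound on $\E S_M$, and the admissibility probability. The referenced budgets \eqref{eq:app-moment-budgets} are presumably the appendix form of \eqref{eq:physical-budgets}, so the plan uses the last inequality there,
\[
\frac\lambda b\bigl(37\sqrt T+9147+\sqrt{C_N}\sigma\bigr)\le L_0/2 .
\]

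\emph{Population mean.} For each realization of the Gaussian tensors, the pair $f_{j,+}+f_{j,-}$ from \eqref{eq:scaled-samples} equals $2\lambda F_{\rm new}(Z/b,X/b)$ pointwise. The noise term enters with opposite signs and cancels. This uses only that $N_j$ is a genuine function, which holds on the full-measure event of \Cref{lem:simultaneous-tensor-version}; set the field to zero off that event. Averaging over $j\le M$ then gives the exact identity for every realization.

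\emph{Sample moment.} For each sample, the triangle inequality for Lipschitz constants gives
\[
\Lip(f_{j,\pm})\le\lambda\bigl[\Lip(F_{\rm new}(\cdot/b))+\Lip(N_j(U^\top y(\cdot/b),\cdot/b))\bigr].
\]
Rescaling by $b$ contributes a factor $1/b$ to both terms.
\begin{itemize}
\item The first term is deterministic. By \Cref{lem:global-geometry}, \eqref{eq:app-full-gradient-bound} bounds it by $(37\sqrt T+9147)/b$.
\item For the second term, \Cref{thm:noise-moment} applied to the composed field gives $\|\Lip(N_j\circ\cdot)\|_{L^2}\le\sqrt{C_N}\sigma$. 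This requires $m\ge\max\{T,8\}$, which the parameter choice guarantees.
\end{itemize}
The $L^2$ triangle inequality then yields
\[
\bigl(\E\Lip(f_{j,\pm})^2\bigr)^{1/2}\le\frac\lambda b\bigl(37\sqrt T+9147+\sqrt{C_N}\sigma\bigr)\le\frac{L_0}{2}.
\]
The noise constant is uniform over the hidden frames $U,V$, so the same bound holds after averaging over them. Since $S_M$ is an average of the $2M$ squared constants, linearity gives $\E S_M\le L_0^2/4$. This holds even though the pairs share their fields, because only first moments of the $\Lip^2$ terms enter.

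\emph{Admissibility.} Markov's inequality gives $\PP(S_M>L_0^2)\le \E S_M/L_0^2\le1/4$. Hence $\PP(E_{\rm adm})\ge3/4$, where $E_{\rm adm}=\{S_M\le L_0^2\}$. The remaining conditions in \eqref{eq:model-assumptions} (smoothness $H$ and gap $\Delta$) concern only the population mean, which does not depend on the noise. They follow from the first two budgets together with \Cref{lem:global-geometry} and should be noted along the way.

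\emph{Main obstacle.} The delicate step is the measurability and sample-version issue in the second term. The quantity $\Lip(N_j)$ must be the Lipschitz constant of the actual retained sample function, not of a formal process, and it must hold for the composition with the soft map $y$ and the hidden frames. The plan is to rely on the final paragraph of the proof of \Cref{thm:noise-moment}, which identifies the chained supremum with the retained sample derivative, and to note that measurability of $\Lip$ follows from taking suprema over a countable dense set by continuity.
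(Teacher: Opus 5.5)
Your proposal is correct and follows the paper's proof essentially step for step. Both arguments use exact antithetic cancellation for the population mean, the bound $\Lip(f_{j,\pm})\le M_T+Z_j$ with $M_T=(\lambda/b)(37\sqrt T+9147)$ from \eqref{eq:app-full-gradient-bound}, Minkowski in $L^2$ with $\E Z_j^2\le(\lambda/b)^2C_N\sigma^2$ from \Cref{thm:noise-moment}, linearity over the $2M$ labels, and Markov's inequality for $\PP(E_{\rm adm})\ge3/4$.
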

\begin{proof}
Fix $L_0,\Delta,H>0$. Let $H_*,c_*=37,C_*=9147$ and $e_*=1/4805$ be as above. Use the physical scales
\begin{equation}\label{eq:app-physical-scales}
b=\frac{\epsilon H_*}{e_*H},\qquad
\lambda=\frac{\epsilon^2H_*}{e_*^2H}.
\end{equation}
For $m\ge T$, let $C_N$ satisfy $\E\Lip(N_j)^2\le C_N\sigma^2$.
Keep $T$ and $\sigma$ subject to the sufficient budgets
\begin{equation}\label{eq:app-moment-budgets}
12\lambda T\le\Delta,\qquad
\frac{\lambda}{b}\left(c_*\sqrt T+C_*+\sqrt{C_N}\sigma\right)\le\frac{L_0}{2}.
\end{equation}
\Cref{lem:uniform-parameters} verifies these budgets for the
parameter choice used in \Cref{thm:main}.
Then $\lambda F_{\rm new}(Z/b,X/b)$ is globally $H$-smooth, because $\lambda H_*/b^2=H$. Its population gradient is normalized by $\lambda/b=\epsilon/e_*$, so physical $\epsilon$-stationarity implies the normalized certificate with threshold $e_*$.

Draw $M$ independent base random functions $N_j$, define
\begin{equation}\label{eq:app-antithetic}
f_{j,\pm}(Z,X)
=\lambda\big[F_{\rm new}(Z/b,X/b)\pm N_j(U^\top y(Z/b),X/b)\big],
\end{equation}
and conceal their positive/negative pairing under a uniformly random permutation of $2M$ publicly queryable labels. Every fixed realization has exact uniform population mean
\[
\frac1{2M}\sum_{j=1}^M(f_{j,+}+f_{j,-})=\lambda F_{\rm new}(Z/b,X/b).
\]
Put $M_T=(\lambda/b)(c_*\sqrt T+C_*)$ and let $Z_j$ denote the global Lipschitz constant of the scaled noise sample. For the empirical second moment
$S_M=(2M)^{-1}\sum_{j,\nu}\Lip(f_{j,\nu})^2$, the bound
$\Lip(f_{j,\nu})\le M_T+Z_j$ for both signs and Minkowski's
inequality give
\begin{align*}
\E S_M
&=\frac1{2M}\sum_{j=1}^M\sum_{\nu\in\{+,-\}}\E\Lip(f_{j,\nu})^2\\
&\le\frac1M\sum_{j=1}^M\E(M_T+Z_j)^2\\
&\le\frac1M\sum_{j=1}^M\bigl(M_T+\sqrt{\E Z_j^2}\bigr)^2\\
&\le\Bigl[\frac{\lambda}{b}(c_*\sqrt T+C_*+\sqrt{C_N}\sigma)\Bigr]^2
\le L_0^2/4.
\end{align*}
The first bound uses the deterministic-plus-noise Lipschitz inequality, Minkowski in $L^2$ gives the next step, and the sample budget gives the last step. Consequently the \emph{global admissibility} event
\begin{equation}\label{eq:app-admissible}
E_{\rm adm}=\{S_M\le L_0^2\}
\end{equation}
has probability at least $3/4$ by Markov's inequality. The rest of the analysis uses the \emph{unconditioned} Gaussian and Haar laws; condition on $E_{\rm adm}$ only at the end.

\end{proof}

\subsection{The unconditioned sequential stage simulator}
\begin{lemma}[Prefix simulator and conditional birth law]\label[lemma]{lem:prefix-simulator}
The disclosed-prefix simulator has the failure and crossing
responses \eqref{eq:app-failed-mean} and
\eqref{eq:app-crossing-mean} below. At stage $k$'s birth,
$v_k$ is conditionally Haar in dimension $m-k+1$, and its
label-stage arrays are fresh and independent of the birth history.
These assertions hold under the unconditioned field law.
\end{lemma}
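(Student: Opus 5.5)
The plan is to treat the simulator as an explicit adaptive experiment and verify its two response formulas algebraically. The birth law then follows from an induction on stage-entry times, using deferred revelation of the Haar frames and of the stagewise Gaussian arrays.

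\textbf{Simulator and response identities.} First I will define the simulator at stage $k$. It answers each query $(j,(z,x))$ using the complete mean \eqref{eq:mean-definition} and the noise \eqref{eq:multi-noise}, after substituting $g_{k+1}=\cdots=g_T=0$. It is also given access to $u_1,\ldots,u_{k+1}$, to $v_1,\ldots,v_k$, and to the retained arrays $Q_{j,i}$ for $i\le k$.

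With future activations zero, $a=a_{<k}+Ag_kv_k$ and $b_i=0$ for $i\ge k+2$. For $i\le k$, $\tau(\|g_{\ge i}\|)$ only involves $g_i,\dots,g_k$. The successor term combines to $\chi(x-a_k)b_{k+1}$ because $c_{k+1}=b_{k+1}\tau(0)$.

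\emph{Failed gate.} Either $|w_k|<1/2$, in which case the zero-chain gives $b_{k+1}=0$, or $\|x-a_k\|\ge\sqrt2\beta$, in which case $\chi(x-a_k)=0$. Either way the mean reduces to \eqref{eq:prefix-failed-mean}, which will be recorded as \eqref{eq:app-failed-mean}.

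\emph{Gate hit.} Here $g_k=1$, so $\|g_{\ge i}\|\ge1>\rho_0$ kills every $c_i$ with $i\le k$. This yields \eqref{eq:prefix-crossing-mean}, recorded as \eqref{eq:app-crossing-mean}.

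\emph{Noise.} The noise reduces to \eqref{eq:prefix-noise} because $Q_{j,i}(\cdot,0)=0$ and $\tau(0)=1$.

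\textbf{Measurability.} Next I will check which hidden objects each reply depends on. A failed-gate reply depends only on $u_{\le k}$, $v_{<k}$, $v_k$ (through $h(x-a_{<k}-Ag_kv_k)$), and $Q_{j,i}$ for $i\le k$. After a hit, the simulator discloses $v_k$ and $u_{k+1}$ and advances the stage. Hence the $\sigma$-field $\mathcal G_k$ generated by the history up to the birth of stage $k$ is contained in
\[
\sigma\bigl(\text{coins},\ u_{\le k},\ v_{<k},\ (Q_{j,i})_{i<k,\,j}\bigr).
\]
The one exception is the crossing reply itself: it uses $v_{k-1}$ and $u_k$, and both are disclosed at that same moment, so the inclusion still holds.

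\textbf{Birth law by induction on $k$.} Then I will prove the birth law by induction on $k$. The frames $U$, $V$, the arrays $(Q_{j,i})$ and the coins are mutually independent, so I can apply \Cref{thm:tool-sphere} to $V$. Conditional on $v_{<k}$ and on any variables measurable with respect to $v_{<k}$ and independent auxiliary randomness, $v_k$ is uniform on the unit sphere of $\operatorname{span}(v_{<k})^\perp$, which has dimension $m-k+1$. This gives \eqref{eq:app-birth-Haar}. In the same way, the stage-$k$ arrays $(Q_{j,k})_j$ are independent of $\mathcal G_k$, because no earlier reply involved them. Their conditional law is therefore the original isonormal law, which gives \eqref{eq:app-birth-kernel}. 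This holds even for labels that were reused in earlier stages, since reuse only ever touched arrays with $i<k$.

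\textbf{Main obstacle.} The delicate step is making the sentence "the unexposed column did not enter" rigorous. The stage birth time is random and adaptive, so conditioning at a stopping time needs justification, and the history contains continuous responses from infinite Gaussian series. I would first establish a fixed-time statement on each event $\{\text{birth of stage }k\text{ at query }n\}$. For that I would use the regular conditional laws of \Cref{thm:tool-disintegration} and the simultaneous version of \Cref{lem:simultaneous-tensor-version}. The replies are then measurable functions of the listed variables on one full-measure event, and I would sum over $n$ to finish. All of this is done under the unconditioned law; the admissibility event is never conditioned on here.
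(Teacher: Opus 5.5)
Your plan follows the paper's proof. You verify the two prefix formulas algebraically, place the entire pre-birth transcript (birth time included) in a fixed $\sigma$-algebra generated by the coins, chain columns, $v_{<k}$ and all past-stage arrays, and then use deferred Haar completion of $V$ together with stagewise independence of the arrays. Once that inclusion holds, the tower property already handles the random birth time, so your decomposition over $\{\text{birth at query }n\}$ is optional.

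Two points need fixing.

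First, $b_i=0$ for $i\ge k+2$ does not follow from substituting $g_{k+1}=\cdots=g_T=0$. That substitution turns the future part of the mean into $\chi(x-a_k)\sum_{i>k}b_i(w)$. At a crossing, or at a failed query with $|w_k|<1/2$ and $\|x-a_k\|<\sqrt2\beta$, terms involving undisclosed chain columns therefore survive. This contradicts your claim that replies depend only on $u_{\le k}$, and the simulator as literally defined cannot even evaluate them with access only to $u_{\le k+1}$. Define the simulator's replies directly by \eqref{eq:prefix-failed-mean} and \eqref{eq:prefix-crossing-mean}. Their agreement with the static mean is a separate statement on $\{|w_i|<1/4,\ i>k\}$. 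The paper also puts all of $U$ into $\mathcal G_{<k}$, which makes the birth law insensitive to which chain columns enter the replies.

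Second, the mutual independence of the label arrays that you invoke holds only in the ideal independent-label experiment. It fails in the antithetic pool, where partner labels carry negated arrays, so the stage-$k$ label arrays are not independent of a history that reveals pairings. The paper proves the birth law in the ideal experiment and transfers it by the partner-collision coupling; you should state this restriction explicitly.
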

\begin{proof}
\textbf{State and gate test.}\enspace
Work in normalized coordinates $(z,x)$, which are known rescalings of the physical coordinates. Initially disclose $u_1$. At stage $k$, disclose the prefix
$u_1,\ldots,u_k$ and $v_1,\ldots,v_{k-1}$. For every current query compute its disclosed chain coordinates and set future activations to zero. Write
\begin{equation}\label{eq:app-stage-gate}
a_k=A\sum_{i\le k}g_iv_i,
\qquad
G_k=\big\{|w_k|\ge1/2,\ \|x-a_k\|<\sqrt2\,\beta\big\}.
\end{equation}
At most one gate is processed per paid query.

\paragraph{Failed query.}
If the gate fails, the mean answer is the fixed outside-gate extension
\begin{equation}\label{eq:app-failed-mean}
P+\sum_{i\le k}b_i(w)
-\sum_{i\le k}\big[1-\chi(x-a_{<i})\big]c_i(w)
+h(x-a_{<k}-Ag_kv_k),
\end{equation}
with future activations zero. This same outside-gate formula is used later for the scripted list experiment. For an actual failed gate, the static mean agrees with it whenever future projections are below threshold. There are two cases.
If $|w_k|<1/2$, the successor term $b_{k+1}$ and its active chain transition vanish since $\Psi(w_k)=\Psi(-w_k)=0$. In this case the full mean has exactly the $i\le k$ terms of \eqref{eq:app-failed-mean}. If $|w_k|\ge1/2$, then $g_k=1$ and, because the gate failed, $\|x-a_k\|\ge\sqrt2\beta$ so $\chi(x-a_k)=0$. The potential successor contribution is
\[
b_{k+1}-[1-\chi(x-a_k)]b_{k+1}
=b_{k+1}-(1-0)b_{k+1}=0.
\]
All remaining future terms vanish because their predecessor
coordinates have magnitude less than $1/4$ and their activations
are zero. In both cases, the full mean reduces exactly to
\eqref{eq:app-failed-mean}. The projection condition is used to
identify the static and simulated responses; the cancellation
itself is exact.

\paragraph{Crossing query.}
At a successful stage crossing the exact mean answer, using the disclosed new $u_{k+1}$ if $k<T$, is
\begin{equation}\label{eq:app-crossing-mean}
P+\sum_{i\le k}b_i(w)+h(x-a_k)+\chi(x-a_k)b_{k+1}(w),
\end{equation}
with the last term omitted at $k=T$. This formula defines the
simulator's crossing response even on exceptional queries. To verify
its agreement with the static mean \eqref{eq:mean-definition}, suppose
that $|w_i|<1/4$ for all $i>k$. Then the actual future activations
vanish, while the gate condition gives $g_k=1$. Hence
\begin{align*}
\tau(\|g_{\ge i}\|)&=0\quad(i\le k)
&&\text{($\|g_{\ge i}\|\ge g_k=1>\rho_0$)},\\
c_i&=0\quad(i\le k)
&&\text{(definition of $c_i$)},\\
c_{k+1}&=b_{k+1}\tau(0)=b_{k+1}
&&\text{($\tau(0)=1$)},\\
b_i&=0\quad(i>k+1)
&&\text{($|w_{i-1}|<1/4$ and the zero-chain cutoff)},\\
a(w)&=a_k,\quad a_{<k+1}=a_k
&&\text{(zero future activations)}.
\end{align*}
Substitution into the full objective gives
\begin{align*}
F_{\rm new}
&=P+\sum_{i\le k}b_i+b_{k+1}+h(x-a_k)
-[1-\chi(x-a_k)]b_{k+1}\\
&=P+\sum_{i\le k}b_i+h(x-a_k)+\chi(x-a_k)b_{k+1}.
\end{align*}
At $k=T$ the next chain term is absent. Thus equality with the
static mean is algebraically exact under the stated projection
condition; \Cref{lem:static-coupling} bounds violations of that
condition. In the simulator itself, the crossing answer uses only
noise terms through stage $k$ and keeps future activations zero,
even if the newly used $u_{k+1}$ has a large coordinate at this query.
After answering, reveal $v_k$ and, if $k<T$, $u_{k+1}$, and enter the
next stage. At $k=T$ mark completion.

\paragraph{Fields retained under label reuse.}
The simulator may additionally reveal the \emph{past-stage} tensors of already opened labels and the past-stage tensors of a newly opened label at its paid query, but never current or future stage tensors or unopened labels' tensors. Gate notifications may also be granted. These disclosures only strengthen the simulated algorithm. Previously opened labels retain their Gaussian families forever.

\paragraph{The conditional law at stage birth.}
We formalize the preceding deferred-disclosure argument in the
\emph{ideal independent-label experiment}; the separate coupling below
passes from that experiment to the fixed finite antithetic population.
Let $\mathcal R$ contain the algorithm's random coins and any other
initial randomness independent of $(U,V)$ and the stage tensors.  For
$1\le k\le T$, define the enlarged pre-birth sigma-algebra
\begin{equation}\label{eq:app-prebirth-sigma}
\mathcal G_{<k}
=\sigma\Bigl(\mathcal R,\ U,\ v_1,\ldots,v_{k-1},\
\{G_{j,i,\ell}:i<k,\ j\ \text{any public label},\ \ell\ge1\}\Bigr).
\end{equation}
This sigma-algebra deliberately grants \emph{more} than the simulator
actually reveals: it contains the entire $U$ frame and all past-stage
tensors, including tensors of as-yet-unopened labels.  Such enlargement
is safe for the one-stage information argument because the $U$ and $V$
frames are independently Haar, the Gaussian tensor blocks are
independent across stages and labels in the ideal experiment, and none
of the variables in $\mathcal G_{<k}$ involves $v_k$ beyond the known
orthonormality constraint with $v_1,\ldots,v_{k-1}$.

The simulated transcript before the birth of stage $k$ is
$\mathcal G_{<k}$-measurable.  This follows by induction on paid
queries and gate tests: a stage-$i<k$ failed reply is given by
\eqref{eq:app-failed-mean}, using only $U$, $v_1,\ldots,v_i$ and the
corresponding stage-$j\le i$ tensors; a successful crossing is given
by \eqref{eq:app-crossing-mean}, with $u_{i+1}$ but \emph{without}
$v_{i+1}$ or any stage-$(i+1)$ tensor.  The label chosen next and the
time of crossing are functions of the transcript and $\mathcal R$.
Thus they too are $\mathcal G_{<k}$-measurable until stage $k$ is
born.  This proves, in particular,
\begin{equation}\label{eq:app-birth-inclusion}
\mathcal H_{\rm birth}(k)\subseteq\mathcal G_{<k},
\end{equation}
where the birth history includes the identities of all labels opened
so far and the already revealed frame prefix.

For a bounded measurable test function $f$ on the sphere, independent
Haar completion of $V$ now gives the conditional identity
\begin{equation}\label{eq:app-birth-kernel}
\E[f(v_k)\mid\mathcal G_{<k}]
=\int_{\mathbb S^{s_k-1}\cap\operatorname{span}(v_{<k})^\perp}
  f(u)\,\pi_{v_{<k}}(du),
\qquad s_k=m-k+1,
\end{equation}
where $\pi_{v_{<k}}$ is the normalized Haar measure on that residual
sphere.  The same identity holds after conditioning on
$\mathcal H_{\rm birth}(k)$ \emph{together with} the disclosed
$v_{<k}$: apply the tower property to \eqref{eq:app-birth-kernel}
and use \eqref{eq:app-birth-inclusion}.  Equivalently,
\begin{equation}\label{eq:app-birth-Haar}
v_k\mid\mathcal H_{\rm birth}(k)
\sim\operatorname{Unif}\left(
\mathbb S^{s_k-1}\cap\operatorname{span}(v_1,\ldots,v_{k-1})^\perp
\right),\qquad s_k=m-k+1.
\end{equation}
The right-hand side is a \emph{conditional kernel indexed by the
revealed prefix}, not an unconditional law on a fixed sphere.

Similarly, conditionally on $\mathcal G_{<k}$ and $v_k$, the entire
stage-$k$ family $(G_{j,k,\ell})_{j,\ell}$ has its original
independent Gaussian law.  A label used in earlier stages reveals only
blocks with index $i<k$; its stage-$k$ block is still fresh.
This is precisely the independence required to invoke the adaptive
Gaussian result \Cref{thm:reuse} at birth.  It is important that no
conditioning on the future-projection good event, the absence of an
antithetic partner collision, or $E_{\rm adm}$ has been performed here:
each of those events depends on currently hidden variables and is
charged separately in the final coupling and conditioning argument.

\end{proof}

\subsection{Charging adaptive reuse in each stage}
\begin{lemma}[Conditional stage charging]\label[lemma]{lem:stage-charging}
With the per-stage budget $\ell_0$ chosen below, each stage has
conditional probability at most $1/16$ of completing within
$\ell_0$ tests. If $N+1\le T\ell_0/4$, the probability that all
stages complete within $N+1$ tests is at most $2^{-T}$.
No independence of stage durations is assumed.
\end{lemma}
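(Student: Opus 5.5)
The proof has two parts. First I bound the one-stage completion probability using \Cref{thm:reuse} together with an auxiliary scripted run. Then I aggregate over stages by iterated conditioning at the stage birth times. Everything is carried out in the ideal independent-label simulator of \Cref{lem:prefix-simulator}, under the unconditioned field and Haar laws.

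\textbf{Step 1: per-stage bound.} Fix a stage $k$ and condition on its birth history $\mathcal H_{\rm birth}(k)$. By \Cref{lem:prefix-simulator}, $v_k$ is then uniform on the residual sphere of dimension $s_k=m-k+1$, and the stage-$k$ tensors are fresh and independent. I build a scripted run from the same coins, the same past-stage data and the same stage-$k$ fields. At every test it answers with the outside-gate mean \eqref{eq:app-failed-mean} plus the prefix noise \eqref{eq:prefix-noise}, even where the actual run would record a hit.

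Before the first actual hit the two runs coincide, so they issue identical queries. Hence if the actual stage completes within $\ell_0$ tests, one of the first $\ell_0$ scripted test points lies in $G_k$. Each test point gives a transcript-measurable unit guess $\widehat v=(x-a_{<k})/\|x-a_{<k}\|$. A hit forces $\langle\widehat v,v_k\rangle>15/16$, by the computation preceding \Cref{thm:reuse}; a zero vector cannot hit and receives a fixed guess. The final output is counted as one more test, so the list has $M_{\rm list}\le\ell_0+1$ entries.

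The scripted responses, after subtracting the terms known from the revealed prefix, are exactly the one-stage responses \eqref{eq:stage-response}. \Cref{thm:reuse} therefore applies with $n\le\ell_0$ and $s=s_k$, giving
\[
\PP(\text{fast}\mid\mathcal H_{\rm birth}(k))\le\frac{64m\ell_0}{s_k^2\sigma^2}+\frac{9}{4s_k}.
\]
I take $s_k\ge m-T+1\ge m/2$, assuming $m\ge 2T$. I choose $\ell_0=\lfloor m\sigma^2/c\rfloor$ with $c$ large enough that the first term is at most $1/32$, and take $m$ large enough that the second is at most $1/32$. I also need $\log(\ell_0+1)\le s_k/8$; the dimension condition of \Cref{thm:main} supplies this. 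This yields the bound $1/16$.

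\textbf{Step 2: aggregation.} For a fixed set $\mathcal I$ of stages listed increasingly, I use the tower property at the successive birth times. The event that all stages in $\mathcal I$ before stage $k$ are fast, and that stage $k$ is born, is measurable with respect to $\mathcal H_{\rm birth}(k)$. Stage $k$ being fast has conditional probability at most $1/16$ given that history, and an unborn stage is simply not fast. Induction then gives $\PP(\text{all of }\mathcal I\text{ fast})\le16^{-|\mathcal I|}$, with no independence between stage durations used.

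If all $T$ stages complete within $N+1\le T\ell_0/4$ tests, fewer than $T/2$ stages can use more than $\ell_0$ tests each. So at least $\lceil T/2\rceil$ stages are fast. A union bound over sets of that size gives $\binom{T}{\lceil T/2\rceil}16^{-\lceil T/2\rceil}\le 2^T4^{-T}=2^{-T}$.

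\textbf{Main obstacle.} The delicate step is the scripted-run coupling. Its list must be transcript-measurable in an experiment where $v_k$ still has its birth law. Several points must be checked here:
\begin{itemize}
\item Failed tests condition on no event involving $v_k$.
\item Label reuse across stages leaves the stage-$k$ block fresh.
\item A single query crossing the gate is counted once.
\item The unqueried final output enters as the extra list element.
\end{itemize}
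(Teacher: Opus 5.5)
Your proposal is correct and follows the paper's proof essentially step for step. A scripted all-failure run from the stage birth turns fast completion into a list-recovery event for \Cref{thm:reuse}, and iterated conditioning at birth times plus the $\lceil T/2\rceil$ fast-stage count gives $2^{-T}$. Your small deviations are all harmless: the unprojected guess $(x-a_{<k})/\|x-a_{<k}\|$ in place of the paper's projection onto $\operatorname{span}(v_{<k})^\perp$, a list of size $\ell_0+1$, and $\ell_0\asymp m\sigma^2$ via $s_k\ge m/2$ instead of $\lfloor s^2\sigma^2/(2048m)\rfloor$.
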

\begin{proof}
\textbf{Reduction to one current-stage experiment.}\enspace
In stage $k$, all earlier noise fields can be subtracted after the permitted past-tensor disclosures, even if their suffix weights depend on the known $g_k$. The current noise coefficient is one. Therefore subtraction of all known terms from \eqref{eq:app-failed-mean} leaves exactly
\begin{equation}\label{eq:app-stage-innovation}
\mu_{v_k}(r_k,g_k)+Q_{j,k}(r_k,g_k),
\qquad r_k=x-a_{<k}.
\end{equation}
To apply \Cref{thm:reuse}, condition first on the stronger
$\mathcal G_{<k}$ of \eqref{eq:app-prebirth-sigma} and on the revealed
frame prefix, \emph{before} observing any current-stage reply.
By \eqref{eq:app-birth-kernel}, $v_k$ is uniform on the residual
sphere; by independence of tensor blocks, every stage-$k$ sample
field has its original Gaussian law, jointly independent of $v_k$.
The extra past-stage information is therefore auxiliary information of
the exact sort permitted in \Cref{thm:reuse}.  After conditioning on
it, the algorithm's current-stage label and query choices remain
predictable functions of past \emph{current-stage} responses and its
coins, even if it requests disclosures of further past-stage fields.
Those disclosures are measurable functions of $\mathcal G_{<k}$ and
the selected label, and hence contribute no new conditional likelihood
factor in $v_k$.  In particular, the selected label identities must
not be conditioned upon as unexplained free advice about $v_k$:
they are part of the adaptive transcript and are handled by the
conditional-score factorization
\eqref{eq:app-conditional-kernel}--\eqref{eq:app-total-score}.
The finite antithetic-pool simulator uses a stricter \emph{opened-only}
disclosure policy, because revealing unopened labels' tensors would
destroy the hidden-pair exchangeability used in
\eqref{eq:app-partner}.

Put
\begin{equation}\label{eq:app-stage-budget}
s=m-T+1,
\qquad
\ell_0=\left\lfloor\frac{s^2\sigma^2}{2048m}\right\rfloor,
\qquad s\ge72,
\quad \ell_0\ge1,
\quad\log\ell_0\le\frac{s}{8}.
\end{equation}
From each stage's birth, script $\ell_0$ consecutive failures using \eqref{eq:app-failed-mean}, withholding all crossing disclosures, and record the workspace guesses generated on this script. Up to the first actual gate hit, the scripted transcript agrees with the actual stage run. To turn any potential gate hit into a direction guess, project $r_k$ orthogonally away from the disclosed workspace prefix and call the result $p$. At a hit, $g_k=1$ and $x-a_k=r_k-Av_k$. Let $P_{<k}^\perp$ project onto the orthogonal complement of the already revealed $v_1,\ldots,v_{k-1}$. Since $P_{<k}^\perp v_k=v_k$, we have
\begin{align*}
p-Av_k
&=P_{<k}^\perp r_k-P_{<k}^\perp(Av_k)
=P_{<k}^\perp(r_k-Av_k)\\
&=P_{<k}^\perp(x-a_k),\\
\|p-Av_k\|&\le\|x-a_k\|<\sqrt2\,\beta<A.
\end{align*}
The first line is the definition of $p$, the second uses $v_k\perp v_{<k}$, and the inequality is nonexpansivity of orthogonal projection followed by the gate radius. Since $\sqrt2\,\beta<A$, we also have $p\ne0$. By reverse triangle inequality,
\[
\left\|\frac{p}{\|p\|}-v_k\right\|
\le2\left\|\frac pA-v_k\right\|<\frac{2\sqrt2\beta}{A},
\]
and because both vectors are unit,
\begin{equation}\label{eq:app-stage-angle}
\left\langle\frac p{\|p\|},v_k\right\rangle
=1-\frac12\left\|\frac{p}{\|p\|}-v_k\right\|^2
>1-\frac{4\beta^2}{A^2}=\frac{15}{16}.
\end{equation}
For $p=0$ use a fixed residual-space unit vector; such a candidate cannot hit. The list-recovery theorem with birth dimension $s_k\ge s$ yields
\begin{align}
\PP(\text{stage $k$ passes within $\ell_0$ tests}\mid\mathcal H_{\rm birth})
&\le\frac{64m\ell_0}{s_k^2\sigma^2}+\frac9{4s_k}\\
&\le\frac{64m\ell_0}{s^2\sigma^2}+\frac9{4s}
\le\frac1{32}+\frac1{32}=\frac1{16}.
\label{eq:app-fast-stage}
\end{align}
Scripted failure gate notifications give no additional hidden-parameter information before the first actual hit because their value is fixed there; that hit is contained in the scripted list-success event.

\paragraph{Accumulating the conditional costs.}
Call a stage \emph{fast} if it finishes within $\ell_0$ of its own tests; an unborn stage has fast indicator zero. Let $k_1<\cdots<k_j$ be any fixed collection of indices and $F_k$ the event that stage $k$ is born and finishes fast. Applying the tower property at the birth of the last selected stage, then successively at the earlier births, gives the complete recursion
\begin{align*}
\PP\Bigl(\bigcap_{a=1}^jF_{k_a}\Bigr)
&=\E\left[\mathbf1_{\cap_{a<j}F_{k_a}}
\PP(F_{k_j}\mid\mathcal H_{\rm birth}(k_j))\right]\\
&\le\frac1{16}\PP\Bigl(\bigcap_{a<j}F_{k_a}\Bigr)
\le\cdots\le\left(\frac1{16}\right)^j.
\end{align*}
The indicators for earlier selected stages are measurable at this birth; a stage not born has indicator zero. No independence between durations is invoked. After an algorithm stops, pad with $(z,x)=(0,0)$, which never passes a gate. If fewer than $\lceil T/2\rceil$ stages were fast, more than $T/2$ stages would each consume more than $\ell_0$ tests, giving strictly more than $T\ell_0/2>T\ell_0/4\ge N+1$ tests. Thus finishing in the allotted budget forces at least $\lceil T/2\rceil$ fast stages. Union-bounding the joint probability over all subsets of this size (and then crudely over all $2^T$ subsets) yields
\begin{align*}
\PP(\text{all $T$ stages finish})
&\le\binom{T}{\lceil T/2\rceil}(1/16)^{\lceil T/2\rceil}\\
&\le 2^T(1/16)^{T/2}=2^{-T}.
\end{align*}
We record this as the fast-stage counting bound:
\begin{equation}\label{eq:app-fast-count}
\PP(\text{all $T$ stages finish})\le2^{-T}.
\end{equation}
No independent-stage-duration assumption is made. The output point contributes one additional gate test, but receives no value observation and opens no sample label. For missing scripted observations one can pad at $g=0$, where both the hidden mean component and Gaussian field are parameter independent.

\end{proof}

\subsection{Exact coupling to the unrestricted static oracle}
\begin{lemma}[Static-oracle coupling and finite-pool error]\label[lemma]{lem:static-coupling}
For $N\le M$ and $N+1\le T\ell_0/4$, the static finite oracle
and the prefix simulator admit a coupling under which stationary
success by $N$ queries has probability at most
$2^{-T}+2(N+1)T e^{-(r-T)/(32R^2)}+N^2/M$. Labels may be chosen
adaptively and reused, and the final output may be arbitrary.
\end{lemma}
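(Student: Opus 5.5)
We will build one probability space carrying four objects: the algorithm's coins, the Haar frames $(U,V)$, the ideal independent-label Gaussian tensors, and a uniformly random hidden pairing. On it we run two experiments with the same algorithm and the same coins: the static finite antithetic oracle \eqref{eq:app-antithetic}, and the prefix simulator of \Cref{lem:prefix-simulator} driven by the ideal experiment. The finite pool is realized by \emph{deferred matching}. Each newly opened public label is assigned a fresh ideal field $N_j$ together with a sign. Its antithetic partner is not fixed in advance; it is chosen only at the moment the partner label is first opened. Until such a partner opening occurs, every opened label therefore carries an independent field retained under reuse, exactly as in the ideal experiment.

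The first failure event is $E_{\rm pair}$: some opened label is the hidden partner of an earlier opened label. We bound it by exchangeability, which is the estimate \eqref{eq:app-partner}. At the $t$-th new opening, at most $t-1\le N$ partners are still unassigned. These partners are uniformly distributed among the at least $2M-N\ge M$ unopened labels, and this remains true given the whole history, because the simulator discloses only opened labels' tensors. The conditional collision probability of that opening is therefore at most $N/M$, and a union bound over the at most $N$ openings gives $\PP(E_{\rm pair})\le N^2/M$.

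The second failure event is $E_{\rm bad}$: at some query, or at the final output, some undisclosed $u_i$ with $i>k$ (where $k$ is the current stage) satisfies $|\langle u_i,y(z)\rangle|\ge1/4$. The simulator never uses an undisclosed $u_i$ before answering, so by \Cref{thm:tool-sphere} each such $u_i$ is conditionally uniform on a sphere of dimension at least $r-T$ given the past simulated transcript. Since $\|y\|<R$, each single test fails with probability at most $2e^{-(r-T)/(32R^2)}$. A union bound over at most $T$ indices and $N+1$ tests yields \eqref{eq:bad-projection}.

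The main step is to show that off $E_{\rm pair}\cup E_{\rm bad}$ the two transcripts coincide. We prove this by induction on queries. Suppose both runs share the same history, so the same label and point are queried next. Off $E_{\rm bad}$ every future activation vanishes, so $g_i=0$ for all $i>k$. The static mean then reduces exactly to \eqref{eq:app-failed-mean} or to \eqref{eq:app-crossing-mean}, according to whether the gate $G_k$ fails or is hit; this is the algebra already verified in \Cref{lem:prefix-simulator}. For the noise, $Q_{j,i}(\cdot,0)=0$ and $\tau(0)=1$ reduce $N_j$ to $N_j^{(k)}$ in \eqref{eq:prefix-noise}. Off $E_{\rm pair}$ the sign attached to a label is a fixed factor of a field no other opened label shares, and $\pm N_j$ is again an ideal Gaussian field. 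Hence the replies agree term by term, and the next queries agree as well.

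It remains to handle the output. If it is $\epsilon$-stationary, then \Cref{lem:global-geometry} (after the scaling of \Cref{lem:finite-population}) places it in the terminal gate, with $w_i>1$ for all $i$. This counts as an extra gate test without any oracle reply. Success by $N$ queries therefore forces one of three things: all $T$ simulated stages complete within $N+1$ tests, or $E_{\rm bad}$ occurs, or $E_{\rm pair}$ occurs. By \Cref{lem:stage-charging}, applicable since $N+1\le T\ell_0/4$, the first has probability at most $2^{-T}$. Adding the three bounds gives the claim.

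The obstacle I expect is the measurability bookkeeping, specifically keeping three conditioning regimes compatible. The Haar concentration must be applied under the simulated law, not the static one, and only the coupling identity transfers the bound. The partner exchangeability must hold given the history, which forbids disclosing unopened labels. And we may not condition on the good events before using \Cref{thm:reuse}. I would state the union bound on the coupled space directly, so that no conditioning on the good events is needed.
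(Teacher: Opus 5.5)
Your proposal is correct and follows essentially the same route as the paper. Both arguments pathwise-couple the static antithetic oracle with the ideal-experiment prefix simulator, show exact agreement off the future-projection and partner-collision events, use the stationarity certificate to turn the output into a terminal gate test, and finish with a union bound against \Cref{lem:stage-charging}. The only difference is cosmetic: you bound each opening's collision probability by $N/M$ (as in the main text), whereas the appendix sums $(q-1)/(2M-q+1)$ to obtain $N(N-1)/(2M)$, and both give $N^2/M$.
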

\begin{proof}
\textbf{Future activations.}\enspace
Undisclosed $U$ columns never enter simulated responses before their
gate crossings. Conditional on the simulated history $\mathcal H$,
the next query is fixed and each undisclosed column is Haar in the
unrevealed orthogonal complement. Since $\|y(z)\|\le R$, an inner
product of magnitude at least $1/4$ requires a normalized inner
product of magnitude at least $1/(4R)$. Applying
\Cref{thm:tool-sphere} with the weaker remaining dimension $r-T$
and taking both signs gives
\begin{align*}
\PP\left(|\langle u_j,y(z)\rangle|\ge\tfrac14\mid\mathcal H\right)
&\le 2\exp\left[-\frac{r-T}{2}\left(\frac1{4R}\right)^2\right]\\
&=2\exp\left[-\frac{r-T}{32R^2}\right].
\end{align*}
There are at most $T$ hidden columns and $N$ paid queries plus one final unpaid output test. A union bound, conditional at each adaptive query and then averaged, gives
\begin{equation}\label{eq:app-pbad}
p_{\rm bad}\le2(N+1)T\exp\left(-\frac{r-T}{32R^2}\right).
\end{equation}
On the complementary good-projection event, every undisclosed future activation is identically zero. Therefore the complete static noise field and the simulator's prefix field coincide at every queried label and location, including reused labels and stage crossings. The mean identities follow from the past-centered correction and the crossing and failure formulas. This is an exact pathwise coupling, not replacement of reused noise with fresh independent noise.

The global stationary certificate forces $w_i>1$ for every $i$ at any small-gradient output. Before final stage completion, the necessary future activations are absent on the good-projection event, so such an output would contradict the certificate. At stage $T$, the same certificate forces the final gate. Thus stationary success requires completing all stages or encountering a bad future projection.

\paragraph{Hidden antithetic partners.}
To realize independent labels inside the finite antithetic pool, couple the $q$-th \emph{distinct} newly opened public label to a fresh Gaussian family until the opposite-signed partner of an opened label is itself opened. For the collision probability, the hidden pairing is a uniformly random perfect matching of $2M$ public positions. Conditional on the history of previously opened \emph{nonpartner} positions, when a new label is chosen, its partner is uniform among the remaining unexposed positions. At most $q-1$ of those positions are already represented by a previously opened counterpart. On the event that no partner has yet been opened, distinct base tensor families are independent. The sign of a new family is symmetric because $G$ and $-G$ have identical centered Gaussian laws. Reused labels keep their assigned fields. Conditioned on no previous partner collision, the $q$-th new label has at most $q-1$ partners among $2M-q+1$ unexposed positions. For $N\le M$,
\begin{align}
\PP(\text{partner collision})
&\le\sum_{q=1}^N\frac{q-1}{2M-q+1}
\le\frac1M\sum_{q=1}^N(q-1)
=\frac{N(N-1)}{2M}\le\frac{N^2}{M}.
\label{eq:app-partner}
\end{align}
The random permutation and Gaussian sign symmetry ensure that before collisions newly opened labels have the law of independent Gaussian families. Only opened-label past fields are disclosed in this finite-pool coupling; revealing all unopened fields would invalidate the hidden-pair exchangeability argument.

\paragraph{Combining the three events.}
The three events can now be separated explicitly. Let $E_1$ be completion of all gates in the scripted simulator, $E_2$ be a large projection onto a future undisclosed $U$ column, and $E_3$ be opening both signs of some hidden antithetic pair. On $E_2^c\cap E_3^c$, the static evaluation transcript equals the simulator transcript at every paid query and its stationary-output gate test. Thus
\begin{align*}
\PP(\text{static stationary success})
&\le\PP(E_1\cup E_2\cup E_3)\\
&\le\PP(E_1)+\PP(E_2)+\PP(E_3)\\
&\le2^{-T}+2(N+1)T e^{-(r-T)/(32R^2)}+N^2/M.
\end{align*}
The first inequality uses the pathwise coupling and the all-space stationary certificate; the second is the union bound; and the last inserts the three estimates already proved. This gives the unconditioned master bound
\begin{equation}\label{eq:app-master}
\PP(\text{stationary success within $N$ evaluations})
\le2^{-T}+p_{\rm bad}+\frac{N^2}{M},
\qquad N+1\le\frac{T\ell_0}{4}.
\end{equation}

\end{proof}

\subsection{Conditioning on the global moment promise}
\begin{lemma}[Admissibility conditioning and expected cost]\label[lemma]{lem:expected-cost}
Suppose an unconditioned finite-family law satisfies
$\PP(E_{\rm adm})\ge3/4$ and
$\PP(\mathrm{success},\mathrm{cost}\le N_0)\le11/48$.
Every algorithm succeeding with probability at least $2/3$ on
all admissible fixed families has expected query cost at least
$N_0/3$ on some such fixed family.
\end{lemma}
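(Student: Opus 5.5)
The plan is a contrapositive averaging argument over the random finite family, conditioned on admissibility. Fix an algorithm $\mathcal A$ that succeeds with probability at least $2/3$ on every admissible fixed family. Write $\omega$ for the realization of the finite family (frames, tensors, permutation); this determines a fixed instance. For each $\omega$ let $s(\omega)$ be the success probability of $\mathcal A$ on that fixed family, taken over the algorithm's coins. Let $c(\omega)=\PP_{\rm coins}(\mathrm{success},\mathrm{cost}\le N_0\mid\omega)$ and $\kappa(\omega)=\PP_{\rm coins}(\mathrm{cost}>N_0\mid\omega)$. Because the algorithm's coins are independent of $\omega$, these conditional quantities come from a regular conditional law (\Cref{thm:tool-disintegration}), and the unconditioned joint probabilities are their $\omega$-averages.

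The first step is a pointwise inequality on admissible families. On $E_{\rm adm}$ the family is admissible, so $s(\omega)\ge2/3$. Success splits into success with cost at most $N_0$ and success with cost greater than $N_0$, which gives
\[
2/3\le c(\omega)+\kappa(\omega)\qquad(\omega\in E_{\rm adm}).
\]
Next, average this over $E_{\rm adm}$:
\[
\tfrac23\PP(E_{\rm adm})\le \PP(\mathrm{success},\mathrm{cost}\le N_0)+\E[\kappa\ind_{E_{\rm adm}}]\le\tfrac{11}{48}+\E[\kappa\ind_{E_{\rm adm}}].
\]
Dividing by $\PP(E_{\rm adm})\ge3/4$ yields
\[
\E[\kappa\mid E_{\rm adm}]\ge\tfrac23-\tfrac{11/48}{3/4}=\tfrac23-\tfrac{11}{36}=\tfrac{13}{36}>\tfrac13.
\]
Here we may bound $\PP(\mathrm{success},\mathrm{cost}\le N_0,E_{\rm adm})$ by the unconditioned quantity, since restricting to $E_{\rm adm}$ only lowers it.

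Finally, convert the tail bound into expected cost. For each fixed $\omega$, the elementary inequality $\E C\ge N\PP(C>N)$ gives $\E[\mathrm{cost}\mid\omega]\ge N_0\kappa(\omega)$. Averaging over $E_{\rm adm}$ gives $\E[\E[\mathrm{cost}\mid\omega]\mid E_{\rm adm}]\ge 13N_0/36\ge N_0/3$. An average over admissible $\omega$ cannot exceed the supremum, so some admissible fixed family has expected cost at least $N_0/3$, and the supremum over admissible populations is at least this value.

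The main obstacle is measure-theoretic rather than combinatorial. We must know that $\omega\mapsto s(\omega),c(\omega),\kappa(\omega)$ are measurable and that the unconditioned joint law is exactly the $\omega$-mixture of the fixed-family laws, so that the conditioning on $E_{\rm adm}$ is legitimate. I would handle this by realizing the algorithm's coins on an independent factor and applying Tonelli to the product space. No standard-Borel assumption is needed for the coins, because only integrals of nonnegative indicator functions are exchanged. If $\mathrm{cost}$ can be infinite, the bound still holds trivially.
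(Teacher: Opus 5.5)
Your proof is correct and follows the paper's argument. You use the same split of success into $\{\mathrm{cost}\le N_0\}$ and $\{\mathrm{cost}>N_0\}$, the same division by $\PP(E_{\rm adm})\ge 3/4$ to force the conditional tail probability above $1/3$, and the same $\E C\ge N_0\PP(C>N_0)$ step followed by averaging over admissible families. Working pointwise with $\kappa(\omega)$ and Tonelli only makes explicit the averaging the paper leaves implicit.
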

\begin{proof}
Condition on \eqref{eq:app-admissible}. The two assumed probability
bounds give
\begin{equation}\label{eq:app-conditioned-success}
\PP(\text{success},\ \mathrm{cost}\le N_0\mid E_{\rm adm})
\le\frac{11/48}{3/4}=\frac{11}{36}<\frac13.
\end{equation}
The conditioned distribution is supported on admissible fixed
finite sample families. The preceding bound concerns success by
$N_0$, not eventual success; the following truncation step converts
it into an expected-cost lower bound.
For a variable-length algorithm truncate the run after $N_0$ evaluations. Let $S$ denote eventual successful output and $C$ its evaluation cost. The events $S\cap\{C\le N_0\}$ and $\{C>N_0\}$ cover $S$. Averaging over admissible families and using the fixed-budget result gives the full implication
\begin{equation}\label{eq:app-cost-tail}
\begin{aligned}
\frac23
&\le\PP(S\mid E_{\rm adm})\\
&\le\PP(S,C\le N_0\mid E_{\rm adm})
 +\PP(C>N_0\mid E_{\rm adm})\\
&<\frac13+\PP(C>N_0\mid E_{\rm adm}).
\end{aligned}
\end{equation}
The three inequalities use, respectively, the uniform success guarantee,
the union bound, and the truncated-run bound. Thus
$\PP(C>N_0\mid E_{\rm adm})>1/3$. The inequality $\E C\ge N_0\PP(C>N_0)$ yields
\begin{equation}\label{eq:app-expected-cost}
\E[C\mid E_{\rm adm}]
\ge N_0\PP(C>N_0\mid E_{\rm adm})\ge N_0/3.
\end{equation}
Another averaging argument yields the worst-case expected-cost claim
on a fixed admissible family.

\end{proof}

%% file: sections/scaling_and_noiseless_proofs.tex
\section{Parameter-Explicit Scales and the Dimension Threshold}
\label[appendix]{app:parameters}
\subsection{The Lipschitz constraint on chain length}
\begin{proof}[Verification of the stated budget constraints]
The physical gap and sample-moment budgets imply
\begin{equation}\label{eq:app-cap}
T\lesssim\frac{\Delta H}{\epsilon^2},
\qquad
T\lesssim\frac{L_0^2}{\epsilon^2}.
\end{equation}
The second restriction also has a direct necessary explanation \emph{for the present wrapper}. Along the graph $x=a(w)$, the correction jets and workspace-well gradients vanish. Along rays $z=tu$ with $\|u\|=1$, one has $Dy(z)\to0$ and $\nabla P(z)\to(R/5)u$ as $t\to\infty$. Hence $\Lip(F_{\rm new})\ge R/5=64\sqrt T/5$. Under physical scaling and the sample Lipschitz promise,
\[
\frac{64\epsilon\sqrt T}{5e_*}
\le\Lip(f)\le\E_\xi\Lip(f_\xi)
\le\sqrt{\E_\xi\Lip(f_\xi)^2}\le L_0.
\]
Therefore $T\le(5e_*L_0/(64\epsilon))^2$, which, together
with the gap budget, gives $B=\min\{\Delta H,L_0^2\}$.

\end{proof}

\subsection{Uniform parameter theorem}
\begin{lemma}[Uniform physical parameters and dimension conditions]
\label[lemma]{lem:uniform-parameters}
Under \eqref{eq:main-regime}, the constants and scales
\eqref{eq:app-universal-constants}--\eqref{eq:app-universal-scales}
satisfy the gap, smoothness and sample budgets, the one-stage list
conditions, and the future-projection bound required by
\Cref{lem:stage-charging,lem:static-coupling}. They yield the
expected-cost lower bound of \Cref{thm:main}, with a positive
universal prefactor given in \eqref{eq:app-uniform-prefactor}.
\end{lemma}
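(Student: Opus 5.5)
We fix explicit parameter choices and verify, one at a time, the budget, list, dimension and probability conditions used in \Cref{lem:finite-population,lem:stage-charging,lem:static-coupling,lem:expected-cost}. The choices are
\[
T=\max\Bigl\{1,\Bigl\lfloor c_T\frac{B}{\epsilon^2}\Bigr\rfloor\Bigr\},\qquad
\sigma^2=c_\sigma\frac{L_0^2}{\epsilon^2},\qquad
r=\lfloor d/2\rfloor,\quad m=d-r,
\]
with small universal $c_T,c_\sigma$, and $b,\lambda$ as in \eqref{eq:app-physical-scales}. With this $b$, the smoothness identity $\lambda H_*/b^2=H$ is automatic.

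\textbf{Budgets.} The gap budget $12\lambda T\le\Delta$ reads $12 H_*\epsilon^2T/(e_*^2H)\le\Delta$. This holds once $c_T\le e_*^2/(12H_*)$, since $T\le c_TB/\epsilon^2\le c_T\Delta H/\epsilon^2$. The condition $\epsilon^2\le c_0B$ guarantees $B/\epsilon^2\ge1/c_0$, so the floor costs at most a factor $2$ and $T\asymp B/\epsilon^2$. For the sample budget, $\lambda/b=\epsilon/e_*$. Using $B\le L_0^2$, we get
\[
\frac{\epsilon}{e_*}\bigl(37\sqrt T+9147+2^{64}\sigma\bigr)
\le\frac{L_0}{e_*}\bigl(37\sqrt{c_T}+9147\sqrt{c_0}+2^{64}\sqrt{c_\sigma}\bigr),
\]
where $\epsilon\le\sqrt{c_0}L_0$ comes from $\epsilon^2\le c_0B$. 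All three terms fall below $L_0/2$ after choosing $c_T,c_0,c_\sigma$ small; this uses $C_N=2^{128}$ from \Cref{thm:noise-moment}, which requires $m\ge\max\{T,8\}$. Thus $E_{\rm adm}$ has probability at least $3/4$.

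\textbf{Stage budget and list conditions.} Set $s=m-T+1$. If the constant $C$ in \eqref{eq:main-regime} is large, then $d\ge CB/\epsilon^2\ge (C/c_T) T$, so $m\ge 4T$ and $s\ge 3m/4$. Also $s\ge72$ and $m\ge8$ follow since $d\ge C/c_0$. The budget is
\[
\ell_0=\lfloor s^2\sigma^2/(2048m)\rfloor\asymp m\sigma^2\asymp dL_0^2/\epsilon^2 .
\]
It satisfies $\ell_0\ge1$ because $\sigma^2\ge c_\sigma/c_0$ is large relative to $2048m/s^2$, and the floor halves at most. The list condition $\log\ell_0\le s/8$ is the first place the logarithm in \eqref{eq:main-regime} enters. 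We have $\ell_0\lesssim d L_0^2/\epsilon^2$ and $L_0^2/\epsilon^2\le BL_0^2/(c_0\epsilon^4)$, so $\log\ell_0\lesssim \log d+\log(2+BL_0^2/\epsilon^4)$. The term $\log d$ is absorbed by $d/16$ for large $d$, and the second term by the hypothesis.

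\textbf{Probabilities.} Take $N_0=\lfloor T\ell_0/4\rfloor-1$. Then $N_0\gtrsim T\ell_0\asymp dL_0^2B/\epsilon^4$, which is positive after constants. Choose $M=12N_0^2$, so $N_0^2/M=1/12$ and $N_0\le M$. For the master bound \eqref{eq:app-master}, we need
\[
2^{-T}+p_{\rm bad}\le \frac{11}{48}-\frac1{12}=\frac{7}{48}.
\]
The term $2^{-T}$ is small since $T\ge c_T/(2c_0)$ is large.

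The main obstacle is $p_{\rm bad}=2(N_0+1)T\exp[-(r-T)/(2^{17}T)]$, because $R^2=2^{12}T$. We need
\[
r-T\ge 2^{17}T\log\bigl(2^{6}N_0T\bigr),
\]
where $N_0T\lesssim d\,(L_0^2/\epsilon^2)\,T^2\le d\,(BL_0^2/\epsilon^4)(B/\epsilon^2)$. Thus it suffices that
\[
d\gtrsim \frac{B}{\epsilon^2}\Bigl[\log d+\log\Bigl(2+\frac{BL_0^2}{\epsilon^4}\Bigr)\Bigr].
\]
The $\log(B/\epsilon^2)$ factor is dominated by the second logarithm, since $B/\epsilon^2\le BL_0^2/(c_0\epsilon^4)$. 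The term $(B/\epsilon^2)\log d\le d/2$ holds for large $C$, because $x\ge C'a\log(2+a)$ implies $x\ge 2C'' a\log x$. I expect the hardest part to be this self-referential $\log d$ bookkeeping, together with tracking that the single $C$ simultaneously handles $m\ge4T$, $\log\ell_0\le s/8$ and this projection bound.

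Once these hold, \Cref{lem:expected-cost} gives expected cost at least
\[
\frac{N_0}{3}\ge c\,\frac{dL_0^2B}{\epsilon^4},
\]
with prefactor of order $c_Tc_\sigma/2048$ (the \eqref{eq:app-uniform-prefactor} constant).
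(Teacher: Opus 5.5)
Your proposal is correct and follows essentially the same route as the paper. Both use the scalings $T\asymp B/\epsilon^2$, $\sigma^2\asymp L_0^2/\epsilon^2$ and $r=\lfloor d/2\rfloor$, verify directly the gap and moment budgets, the conditions $s\ge72$, $\ell_0\ge1$, $\log\ell_0\le s/8$ and the bound on $p_{\rm bad}$, and then apply \Cref{lem:expected-cost}.

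The differences are only bookkeeping. The paper fixes $N_0=\lfloor 2^{-19}dT\sigma^2\rfloor$ and handles the self-referential $\log d$ by first working under $d\ge 2^{28}T\log(2+dT^2\sigma^2)$, then using that $d\mapsto\log(2+\alpha d)/d$ is decreasing. You instead take $N_0=\lfloor T\ell_0/4\rfloor-1$ and absorb $\log d$ directly. Your stated prefactor $c_Tc_\sigma/2048$ is off by a further absolute factor, which is immaterial.
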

\begin{proof}

Define the source universal constants
\begin{equation}\label{eq:app-universal-constants}
a_T=\frac{e_*^2}{12H_*},\qquad
a_\sigma=\frac{9e_*^2}{64C_N},\qquad
c_0=\min\left\{\frac{a_T}{8},\frac{e_*^2}{81C_*^2},a_\sigma\right\}.
\end{equation}
For $\epsilon^2\le c_0B$, take
\begin{equation}\label{eq:app-universal-scales}
T=\left\lfloor\frac{a_TB}{\epsilon^2}\right\rfloor,
\qquad
\sigma^2=\frac{a_\sigma L_0^2}{\epsilon^2}.
\end{equation}
The hypotheses and $B\le L_0^2$ give
\[
 a_TB/\epsilon^2\ge8,\qquad
 8\le T,\quad T\ge a_TB/(2\epsilon^2),\qquad
 \sigma^2\ge a_\sigma B/\epsilon^2\ge1.
\]
Here $\lfloor x\rfloor\ge x/2$ for $x\ge2$. The gap estimate is
\begin{align*}
 f(0)-\inf f
 &=\lambda[F_{\rm new}(0,0)-\inf F_{\rm new}]\\
 &\le12\lambda T
 \le12\frac{\epsilon^2H_*}{e_*^2H}\frac{a_TB}{\epsilon^2}
 =\frac BH\le\Delta.
\end{align*}
The equality is the global rescaling; the inequalities use
\Cref{lem:global-geometry}, the upper floor bound, and $B\le\Delta H$.
The last equality uses $a_T=e_*^2/(12H_*)$.
For the moment budget, the source separately bounds
\begin{align*}
\frac{\epsilon}{e_*}c_*\sqrt T
&\le\frac{c_*}{\sqrt{12H_*}}\sqrt B<\frac{L_0}{72},\\
\frac{\epsilon}{e_*}C_*&\le\frac{\sqrt B}{9}\le\frac{L_0}{9},\\
\frac{\epsilon}{e_*}\sqrt{C_N}\sigma&=\frac{3L_0}{8}.
\end{align*}
Therefore
\begin{equation}\label{eq:app-budget-half}
\frac{\lambda}{b}(c_*\sqrt T+C_*+\sqrt{C_N}\sigma)
\le(1/72+1/9+3/8)L_0=L_0/2.
\end{equation}
The first two terms alone bound the deterministic scaled Lipschitz constant by $L_0/8$.

\paragraph{Dimension and list conditions with an explicit sufficient
constant.}
Take $r=\lfloor d/2\rfloor$, $m=d-r$,
$s=m-T+1$, and
$N_0=\lfloor a_NdT\sigma^2\rfloor$, where $a_N=2^{-19}$.
We may choose the universal constants
\begin{equation}\label{eq:app-concrete-dim-constants}
C_1=2^{28},\qquad C_2=2^{36}.
\end{equation}
These constants give the following dimension and list estimates.
Suppose first that
\begin{equation}\label{eq:app-dimension-intermediate}
d\ge C_1T\log(2+dT^2\sigma^2).
\end{equation}
Because $T\ge8$, $\sigma^2\ge1$ and the logarithm is at least
$\log2$, the bound gives $d\ge4T+4$,
$d\ge 8T$, and $d\sigma^2\ge2^{17}$.  Since
$m=\lceil d/2\rceil\ge d/2$ and $m\le d$,
\begin{align*}
m&\ge2T,\\
s=m-T+1&\ge\frac d2-T+1\ge\frac d4,\\
s&\ge72.
\end{align*}
In particular the real number defining the stage budget,
$x_\ell=s^2\sigma^2/(2048m)$, satisfies
\[
x_\ell\ge\frac{(d/4)^2\sigma^2}{2048d}
=\frac{d\sigma^2}{32768}\ge4.
\]
Thus $\lfloor x_\ell\rfloor\ge x_\ell/2$ and
\begin{equation}\label{eq:app-list-budget-explicit}
\ell_0=\lfloor x_\ell\rfloor
\ge\frac{s^2\sigma^2}{4096m}
\ge\frac{d\sigma^2}{65536}\ge2.
\end{equation}
For the total gate-test budget, use the upper floor bound for $N_0$
and the preceding lower bound:
\begin{align*}
\frac{T\ell_0}{4}-(N_0+1)
&\ge\frac{dT\sigma^2}{262144}
     -\frac{dT\sigma^2}{524288}-1\\
&=\frac{dT\sigma^2}{524288}-1\ge0,
\end{align*}
where $d\sigma^2\ge2^{17}$ and $T\ge8$.
Finally, because $s\le m\le d$,
\[
\ell_0\le\frac{s^2\sigma^2}{2048m}
\le d\sigma^2,
\qquad
\log\ell_0\le\log(2+dT^2\sigma^2).
\]
Meanwhile $s\ge d/4$ and \eqref{eq:app-dimension-intermediate}
give
\[
\frac{s}{8}\ge\frac d{32}
\ge\frac{2^{28}T}{32}\log(2+dT^2\sigma^2)
\ge\log\ell_0.
\]
This verifies $s\ge72$, $\ell_0\ge1$ and
$\log\ell_0\le s/8$ with all floor effects included.

\paragraph{Explicit bad-projection margin.}
With $R^2=4096T$ and $r=\lfloor d/2\rfloor$, the coupling estimate
\eqref{eq:app-pbad} reads
\begin{align*}
p_{\rm bad}
&\le2(N_0+1)T
   \exp\left[-\frac{r-T}{131072T}\right]\\
&\le2(a_NdT\sigma^2+1)T
   \exp\left[-\frac{d}{1048576T}\right],
\end{align*}
because $r-T\ge d/4-1\ge d/8$ for $d\ge8$.
Put $X=dT^2\sigma^2$. Since $X\ge T$ and $C_1=2^{28}$,
\[
\exp\left[-\frac{d}{1048576T}\right]
\le\exp[-256\log(2+X)]=(2+X)^{-256}.
\]
Consequently
\begin{align*}
p_{\rm bad}
&\le2(a_NX+T)(2+X)^{-256}\\
&\le4X(2+X)^{-256}
\le4(2+X)^{-255}
<\frac1{12}.
\end{align*}
Thus the intermediate dimension requirement controls both list
recovery and the unrestricted-query coupling with a strict margin.

\paragraph{Removing $d$ from inside the logarithm.}
It suffices instead to require
\begin{equation}\label{eq:app-dimension-simplified}
d\ge C_2T\log(2+T\sigma^2),\qquad C_2=2^{36}.
\end{equation}
To prove this directly, set $X_0=T\sigma^2\ge8$,
$L=\log(2+X_0)\ge\log3>1$, and $d_*=C_2TL$.
Since $T\le X_0$ and $L\le X_0$,
\begin{align*}
\log(2+d_*T^2\sigma^2)
&=\log(2+C_2T^2X_0L)\\
&\le\log(2+C_2X_0^4)\\
&\le\log(2+C_2)+4\log X_0\\
&\le\bigl[\log(2+C_2)+4\bigr]L.
\end{align*}
Here $\log(2+2^{36})+4<30$ and $C_2/C_1=256$,
so $d_*\ge C_1T\log(2+d_*T^2\sigma^2)$.
For $\alpha=T^2\sigma^2>0$, the function
$d\mapsto\log(2+\alpha d)/d$ is decreasing: with
$t=2+\alpha d\ge2$, the numerator of its derivative is
$1-2/t-\log t<0$.
Thus the same intermediate inequality holds for every $d\ge d_*$.
Finally, $T\le a_TB/\epsilon^2$ and
$T\sigma^2\le a_Ta_\sigma BL_0^2/\epsilon^4$.
Since the fixed source constants $a_T,a_\sigma$ are at most one,
a sufficiently large universal $C$ in the main theorem makes
\[
d\ge C\frac B{\epsilon^2}
\log\left(2+\frac{BL_0^2}{\epsilon^4}\right)
\quad\Longrightarrow\quad
\eqref{eq:app-dimension-simplified}.
\]
No extra parameter restriction is used in replacing the intermediate
dimension inequality by the advertised one.
Under these conditions, the finite-pool argument gives an expected-cost lower bound at least $N_0/3$. The complete numerical chain is
\begin{align*}
\frac{N_0}{3}
&=\frac13\lfloor a_NdT\sigma^2\rfloor\\
&\ge\frac16a_NdT\sigma^2
&&\text{($a_NdT\sigma^2\ge2$, so $\lfloor x\rfloor\ge x/2$)},\\
&\ge\frac16a_Nd\left(\frac{a_TB}{2\epsilon^2}\right)
\left(\frac{a_\sigma L_0^2}{\epsilon^2}\right)
&&\text{(lower floor bound for $T$ and chosen $\sigma^2$)},\\
&=\frac{a_Na_Ta_\sigma}{12}\frac{dL_0^2B}{\epsilon^4}.
\end{align*}
The multiplier $d$ arises from the hidden-space dimension $s_k\asymp d$, not from summing the global Lipschitz costs of $T$ samples.
The source records the following single sufficient universal prefactor:
\begin{equation}\label{eq:app-uniform-prefactor}
c=\frac{2^{-19}a_Ta_\sigma}{12}
=\frac{e_*^4}{2^{29}H_*C_N}
\approx\frac{4.2199\times10^{-31}}{C_N}.
\end{equation}
With $C_N=2^{128}$, these constants are
\[
c\approx1.2401\times10^{-69},
\qquad c_0\approx1.7899\times10^{-47},
\qquad\epsilon/\sqrt B\le\sqrt{c_0}\approx4.2308\times10^{-24}.
\]

\end{proof}

\section{Goldstein and Noiseless Lower-Bound Proofs}
\label[appendix]{app:noiseless}
\noindent\textbf{Proof obligations for \Cref{thm:goldstein,thm:noiseless}.}
The first part supplies the smooth-to-Goldstein reduction; the second
uses adaptive exact linear measurements instead of a Gaussian channel
to prove the lower bound for randomized exact-value algorithms.
\subsection{Smooth-to-Goldstein comparison}
\begin{lemma}[$C^1$ Clarke subgradients and Goldstein comparison]
\label[lemma]{lem:goldstein-comparison}
If $f$ is $C^1$, then its Clarke subdifferential at $x$ is
$\{\nabla f(x)\}$. If $\nabla f$ is $H$-Lipschitz, then
\begin{equation}\label{eq:app-Goldstein-ineq}
 g_\delta(x)\le\|\nabla f(x)\|\le g_\delta(x)+H\delta.
\end{equation}
\end{lemma}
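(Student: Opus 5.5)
The plan has two parts: first identify the Clarke subdifferential of a $C^1$ function, then compare the Goldstein set with the single gradient at $x$ using the Lipschitz bound on $\nabla f$.

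\emph{Step 1 (Clarke subdifferential).} For locally Lipschitz $f$, the Clarke subdifferential can be taken as the convex hull of limits $\lim\nabla f(y_k)$ along differentiability points $y_k\to x$. When $f$ is $C^1$, every point is a differentiability point. By continuity of $\nabla f$, every such limit equals $\nabla f(x)$. Hence $\partial_Cf(x)=\{\nabla f(x)\}$.

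Alternatively, one can use the generalized directional derivative
\[
f^\circ(x;h)=\limsup_{y\to x,\,t\downarrow0}\frac{f(y+th)-f(y)}{t}.
\]
The mean value theorem rewrites the difference quotient as $\langle\nabla f(\xi),h\rangle$ with $\xi\to x$, so continuity gives $f^\circ(x;h)=\langle\nabla f(x),h\rangle$. This is a linear support function, so it identifies the singleton. I would present this second route because it avoids citing Rademacher-type characterizations.

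\emph{Step 2 (Goldstein set).} By Step 1,
\[
\partial_\delta^Gf(x)=\conv\{\nabla f(y):\|y-x\|\le\delta\}.
\]
The set $\{y:\|y-x\|\le\delta\}$ is compact and $\nabla f$ is continuous, so the gradients form a compact set. Its convex hull is therefore compact in finite dimension, and no closure issue arises.

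\emph{Step 3 (the two inequalities).}
\begin{itemize}
\item Lower inequality: taking $y=x$ gives $\nabla f(x)\in\partial_\delta^Gf(x)$. Hence $g_\delta(x)\le\|\nabla f(x)\|$.
\item Upper inequality: for any $v=\sum_a\lambda_a\nabla f(y_a)$ in the hull, the $H$-Lipschitz bound gives
\[
\|v-\nabla f(x)\|\le\sum_a\lambda_a\|\nabla f(y_a)-\nabla f(x)\|\le H\delta .
\]
So $\|\nabla f(x)\|\le\|v\|+H\delta$. Taking the infimum over $v$ yields $\|\nabla f(x)\|\le g_\delta(x)+H\delta$.
\end{itemize}

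The only mildly delicate point is Step 1, namely justifying the Clarke singleton rigorously from the chosen definition. Everything after that is a direct triangle inequality.
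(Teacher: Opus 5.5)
Your proposal is correct and follows essentially the same route as the paper: you compute $f^\circ(x;h)=\langle\nabla f(x),h\rangle$ to get the Clarke singleton (the paper integrates along the segment where you use the mean value theorem), and you obtain the lower bound from $\nabla f(x)\in\partial_\delta^Gf(x)$ and the upper bound from the triangle inequality over convex combinations of nearby gradients. Your Step 2 observation that the convex hull of the compact gradient image is already compact replaces the paper's separate step of passing the bound to the closed convex hull by continuity.
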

\begin{proof}
For every direction $h$, the fundamental theorem of calculus and
continuity of the gradient give
\begin{align*}
 f^\circ(x;h)
 &=\limsup_{z\to x,\ t\downarrow0}
       \frac{f(z+th)-f(z)}t\\
 &=\limsup_{z\to x,\ t\downarrow0}
       \int_0^1\langle\nabla f(z+uth),h\rangle\,du
 =\langle\nabla f(x),h\rangle.
\end{align*}
The first equality is the Clarke directional-derivative definition;
the second integrates the derivative along the segment. For the
last, $z+uth\to x$ uniformly in $u\in[0,1]$, so the continuous
integrand converges uniformly. Hence
\begin{align*}
 v\in\partial^Cf(x)
 &\Longleftrightarrow
   \langle v-\nabla f(x),h\rangle\le0\quad(\forall h)\\
 &\Longleftrightarrow v=\nabla f(x).
\end{align*}
The first equivalence is the definition of the Clarke
subdifferential; choosing $h=v-\nabla f(x)$ proves the forward
implication in the second, and substitution proves its converse.
Thus the Goldstein subdifferential is the closed convex hull of
the gradients in its defining ball.
The first inequality holds because
$\nabla f(x)\in\partial_\delta^Gf(x)$.
For the second, let
\[
 v=\sum_a\lambda_a\nabla f(y_a),\qquad
 \lambda_a\ge0,\quad\sum_a\lambda_a=1,\quad\|y_a-x\|\le\delta.
\]
Then
\begin{align*}
\|\nabla f(x)\|
&\le\|v\|+\|\nabla f(x)-v\|
&&\text{(triangle inequality)},\\
&=\|v\|+\left\|\sum_a\lambda_a
[\nabla f(x)-\nabla f(y_a)]\right\|\\
&\le\|v\|+\sum_a\lambda_a\|\nabla f(x)-\nabla f(y_a)\|
&&\text{(convexity of the norm)},\\
&\le\|v\|+\sum_a\lambda_aH\|x-y_a\|
&&\text{(global $H$-smoothness)},\\
&\le\|v\|+H\delta
&&\text{($\|y_a-x\|\le\delta$ and $\sum_a\lambda_a=1$).}
\end{align*}
Continuity passes this bound to the closed convex hull; taking the infimum over $v$ gives the second inequality. The $H\delta$ term is necessary: for $f(t)=Ht^2/2$ at $x=\delta$, one has $g_\delta(x)=0$ and $|f'(x)|=H\delta$.

\end{proof}

\begin{proof}[Proof of \Cref{thm:goldstein}]
\phantomsection\label{proof:goldstein-bound}
Set $H=\epsilon/\delta$ and apply the smooth theorem with gradient tolerance $\epsilon_s=2\epsilon$. If a returned $x$ were Goldstein $(\delta,\epsilon)$-stationary, the comparison just proved would yield
\[
\|\nabla f(x)\|
\le g_\delta(x)+H\delta
\le\epsilon+\frac{\epsilon}{\delta}\delta=2\epsilon=\epsilon_s.
\]
Thus every algorithm solving the Goldstein task would also solve gradient stationarity for the same smooth witness and the tolerance $\epsilon_s$, without any oracle conversion. The smooth lower bound equals
\[
\Omega\left(\frac{dL_0^2\min\{\Delta H,L_0^2\}}{\epsilon_s^4}\right)
=\Omega\left(\frac{dL_0^2\min\{\Delta\epsilon/\delta,L_0^2\}}{16\epsilon^4}\right),
\]
where $16$ is absorbed into the universal constant. This is the capped lower bound
\[
N\ge c\frac{dL_0^2B_G}{\epsilon^4},
\qquad B_G=\min\left\{\frac{\Delta\epsilon}{\delta},L_0^2\right\},
\]
subject to $\epsilon^2\le c_0B_G$ and $d\ge C(B_G/\epsilon^2)\log(2+B_GL_0^2/\epsilon^4)$ with appropriate universal constants. In the unsaturated interval $\Delta\epsilon/L_0^2\le\delta\le c_0\Delta/\epsilon$, this yields $\Omega(dL_0^2\Delta/(\delta\epsilon^3))$.

\end{proof}

\begin{corollary}[Larger-gap Goldstein bounds]\label[corollary]{cor:large-gap-goldstein}
There are universal positive constants such that, if
$\epsilon\le c_eL_0$, $\Delta\epsilon/\delta\ge C_*L_0^2$, and
\[
 d\ge C\frac{L_0^2}{\epsilon^2}
              \log\left(2+\frac{L_0^2}{\epsilon^2}\right),
\]
the stochastic model of \Cref{thm:goldstein} has worst-case expected
query cost
\begin{equation}\label{eq:large-gap-goldstein-linear}
 N\ge c\frac{dL_0^2}{\epsilon^4}
       \min\left\{\frac{\Delta\epsilon}{\delta},
                         \frac{L_0^4}{\epsilon^2}\right\}.
\end{equation}
Also, if $\Delta\epsilon/\delta\ge C_0L_0^2$, set
\[
 K=1+\left\lfloor
    \frac{\log(\Delta\epsilon/(C_0\delta L_0^2))}{\log C_1}
    \right\rfloor.
\]
Under $\epsilon\le c_eL_0$ and
$d\ge C[(L_0^2/\epsilon^2)\log(2+L_0^2/\epsilon^2)+\log(2K)]$,
\begin{equation}\label{eq:large-gap-goldstein-log}
 N\ge c\frac{dL_0^4}{\epsilon^4}
       \left[1+\log\frac{\Delta\epsilon}{C_0\delta L_0^2}\right].
\end{equation}
Both conclusions allow randomized algorithms, unrestricted queries
and outputs, and adaptive sample reuse.
\end{corollary}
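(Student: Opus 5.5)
This corollary is the smooth-to-Goldstein reduction applied to the two larger-gap smooth theorems, \Cref{thm:extended-gap} and \Cref{thm:logarithmic-gap}. It follows the template of the proof of \Cref{thm:goldstein}. Set $H=\epsilon/\delta$ and call each smooth theorem at gradient tolerance $\epsilon_s=2\epsilon$. The hard instances from those theorems are globally $H$-smooth. They are also globally Lipschitz: the sample moment bound gives $\Lip(f)\le\sqrt{\E_\xi\Lip(f_\xi)^2}\le L_0$. Hence they lie in the nonsmooth class of \Cref{thm:goldstein}, with gap at most $\Delta$ and sample moment at most $L_0^2$. By \Cref{lem:goldstein-comparison}, any Goldstein $(\delta,\epsilon)$-stationary output satisfies $\|\nabla f(\widehat x)\|\le g_\delta(\widehat x)+H\delta\le2\epsilon$. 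So a Goldstein algorithm solves the smooth task on the same instance, with the same oracle and the same query cost. The success probability and the expected cost therefore transfer verbatim.

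For the linear bound \eqref{eq:large-gap-goldstein-linear}, check that the hypotheses of \Cref{thm:extended-gap} hold at $\epsilon_s$.
\begin{itemize}
\item $\epsilon_s\le c_e'L_0$ follows from $\epsilon\le c_eL_0$ once $c_e:=c_e'/2$.
\item $\Delta H=\Delta\epsilon/\delta\ge C_*L_0^2$ is assumed directly.
\item The dimension condition $d\ge C'(L_0^2/\epsilon_s^2)\log(2+L_0^2/\epsilon_s^2)$ is implied by the stated one. Replacing $\epsilon_s$ by $\epsilon$ only increases the right-hand side, since the map $u\mapsto u\log(2+u)$ is increasing.
\end{itemize}
The smooth conclusion is then
\[
c'\,\frac{dL_0^2}{16\epsilon^4}\min\left\{\frac{\Delta\epsilon}{\delta},\frac{L_0^4}{4\epsilon^2}\right\}.
\]
The factors $16$ and $4$ are absorbed, using $\min\{a,b/4\}\ge\tfrac14\min\{a,b\}$.

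For the logarithmic bound \eqref{eq:large-gap-goldstein-log}, apply \Cref{thm:logarithmic-gap} with $\Delta H=\Delta\epsilon/\delta$. The threshold $\Delta H\ge C_0L_0^2$ and the depth $K$ then coincide exactly with the corollary's definitions. The dimension condition $C[(L_0^2/\epsilon_s^2)\log(2+L_0^2/\epsilon_s^2)+\log(2K)]$ is again implied by the stated one. The resulting rate is $c\,dL_0^4/(16\epsilon^4)\,[1+\log(\Delta\epsilon/(C_0\delta L_0^2))]$, and the $16$ is again absorbed into $c$.

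The only point needing care is the bookkeeping of constants. In the statement, $c_e$ and $C$ must be the smooth theorems' constants adjusted by the factor $2$ in the tolerance. $C_*$ and $C_0$ are unchanged, because the gap condition does not involve $\epsilon_s$. No new probabilistic argument is needed. The allowance for randomized algorithms, unrestricted queries and outputs, and adaptive sample reuse is inherited, because the reduction uses no oracle conversion.
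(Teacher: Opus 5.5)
Your proposal is correct and follows the paper's proof. You set $H=\epsilon/\delta$, invoke \Cref{thm:extended-gap,thm:logarithmic-gap} at gradient tolerance $2\epsilon$, use \eqref{eq:goldstein-ineq} to turn a Goldstein output into $2\epsilon$-stationarity on the same instance, and absorb the factors of $2$ into the universal constants via $\min\{a,b/4\}\ge\min\{a,b\}/4$. Your extra checks (monotonicity of $u\log(2+u)$ for the dimension condition, and $\Lip(f)\le L_0$ for membership in the nonsmooth class) only make explicit what the paper summarizes as ``adjusting the same universal constants.''
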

\begin{proof}
Take $H=\epsilon/\delta$ in
\Cref{thm:extended-gap,thm:logarithmic-gap}, with gradient tolerance
$2\epsilon$. Equation \eqref{eq:goldstein-ineq} gives
$g_\delta(\widehat x)\le\epsilon\Rightarrow
\|\nabla f(\widehat x)\|\le2\epsilon$.
The two factors $(2\epsilon)^{-4}$ and
$L_0^4/(2\epsilon)^2$ change only universal constants; in particular
$\min\{a,b/4\}\ge\min\{a,b\}/4$ for $a,b\ge0$.
The dimension and accuracy conditions follow by adjusting the same
universal constants. The smooth hard families are contained in the
stated nonsmooth class, completing the reduction.
\end{proof}

\subsection{Adaptive noiseless spherical measurements}
The inverse formula \eqref{eq:noiseless-invert} gives the scalar
measurement $\langle r,v\rangle$. Conditional spherical symmetry
then bounds the probability of recovering $v$ from adaptive measurements.

\begin{lemma}[Noiseless spherical measurement list bound]\label[lemma]{lem:noiseless-list}
Let $v$ be uniform on the unit sphere in a known $s$-dimensional Euclidean subspace, $s\ge4$. After $n<s$ adaptive exact linear measurements of $v$, let the algorithm output a transcript-measurable list of $M_{\rm list}$ unit-vector guesses; its initial auxiliary information and random coins are independent of $v$. Then
\begin{equation}\label{eq:app-noiseless-list}
\PP\left(\max_{a\le M_{\rm list}}\langle v,\widehat v_a\rangle\ge15/16\right)
\le\frac{4n}{s}+M_{\rm list}\exp\left(-\frac{49(s-n)}{512}\right).
\end{equation}
For $s\ge512$ and $n,M_{\rm list}\le s/256$, the probability is at most $1/16$.
\end{lemma}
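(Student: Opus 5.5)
The plan is to make rigorous the sketch around \eqref{eq:noiseless-measured-energy}: first the measured energy, then the residual cap event. Reduce to measurements $\langle r_i,v\rangle$ with $r_i$ determined by the coins, the auxiliary data and the earlier replies. Strengthen the oracle by Gram--Schmidt. Let $e_i$ be the normalized component of $r_i$ orthogonal to $e_1,\ldots,e_{i-1}$, projected into the known $s$-dimensional subspace. If that component vanishes, substitute any history-measurable unit vector orthogonal to the previous $e$'s. Reveal $\langle e_i,v\rangle$. The original replies are functions of the revealed ones, so the list event can only become more likely. After $n$ steps we have an orthonormal, history-adapted family $e_1,\ldots,e_n$ and the projection $P_n$ onto its span.

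\textbf{Measured energy.} Condition on $\mathcal T_i$. By rotational invariance of the uniform prior, the same deferred-revelation argument as \Cref{thm:tool-sphere} gives the following: given $P_iv$, the residual $(I-P_i)v$ equals $\sqrt{1-\|P_iv\|^2}$ times a uniform unit vector $W_i$ on the sphere of $\operatorname{ran}(P_i)^\perp$ within the subspace, which has dimension $s-i$. This holds because the conditional density of $v$ is uniform on the slice $\{P_iv=\text{observed}\}$. The new vector $e_{i+1}$ is a fixed unit vector in that complement, so $\E\langle e_{i+1},W_i\rangle^2=1/(s-i)$. This yields the increment identity in \eqref{eq:noiseless-measured-energy}. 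Write $E_i=\E\|P_iv\|^2$. Taking expectations gives $1-E_{i+1}=(1-E_i)(1-1/(s-i))$, so $1-E_n=\prod_{i<n}\frac{s-i-1}{s-i}=\frac{s-n}{s}$, and therefore $E_n=n/s$. Markov's inequality then gives $\PP(\|P_nv\|>1/2)=\PP(\|P_nv\|^2>1/4)\le 4n/s$.

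\textbf{Residual cap bound.} On $\{\|P_nv\|\le1/2\}$, fix a list guess $\widehat v$. Then
\[
\langle v,\widehat v\rangle=\langle P_nv,\widehat v\rangle+\sqrt{1-\|P_nv\|^2}\,\langle W_n,(I-P_n)\widehat v\rangle .
\]
The first term is at most $1/2$. So $\langle v,\widehat v\rangle\ge15/16$ forces $\langle W_n,u\rangle\ge 7/16$, where $u=(I-P_n)\widehat v/\|(I-P_n)\widehat v\|$; here we use $\|(I-P_n)\widehat v\|\le1$ and $\sqrt{1-\|P_nv\|^2}\le1$. (If $(I-P_n)\widehat v=0$, the event is impossible.) Conditional on $\mathcal T_n$, the vector $u$ is fixed, and by the previous step $W_n$ is uniform on a sphere of dimension $s-n$. \Cref{thm:tool-sphere} then bounds the conditional probability by $\exp(-(s-n)(7/16)^2/2)=\exp(-49(s-n)/512)$. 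A union bound over the $M_{\rm list}$ guesses and averaging over $\mathcal T_n$ give the second term of \eqref{eq:app-noiseless-list}. The two events together cover the list-success event, which proves the bound.

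\textbf{Numerical consequence.} For $s\ge512$ and $n,M_{\rm list}\le s/256$, the first term is $4n/s\le1/64$. For the second term, $s-n\ge 255s/256$, so it is at most $(s/256)\exp(-0.0956\,s)$. This function of $s$ is decreasing for $s\ge512$ and is astronomically small at $s=512$; hence the total is well below $1/16$.

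\textbf{Main obstacle.} The delicate step is the conditional uniformity of $W_i$ under adaptivity, with randomized coins and arbitrary measurable query rules. I would handle it by computing the joint law of $(P_iv,W_i)$ by induction on $i$, conditioning on the coins first. Given the history, $e_{i+1}$ is fixed. The slice of the sphere $\{P_{i+1}v=\cdot\}$ is again a sphere in the smaller complement. The disintegration of the uniform measure along orthonormal coordinates gives a uniform law on each slice, whatever the observed values. \Cref{thm:tool-disintegration} supplies the regular conditional versions needed.
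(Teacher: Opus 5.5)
Your proof is correct and follows the paper's route: Gram--Schmidt strengthening with free orthogonal components at redundant steps, conditional spherical symmetry giving the increment identity and $\E\|P_nv\|^2=n/s$, Markov on $\{\|P_nv\|>1/2\}$, and the $7/16$ residual cap with a union bound over the list. One cosmetic slip: after using $s-n\ge255s/256$ the exponent is $\tfrac{12495}{131072}\,s\approx0.0953\,s$, not $0.0956\,s$, but this changes nothing, since the term is astronomically small for $s\ge512$.
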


\begin{proof}
By Gram--Schmidt, each nonredundant exact linear measurement reveals the projection onto a unit direction in the unmeasured subspace. If a measurement is redundant, disclose an additional orthogonal component for free, strengthening the experiment. Conditional on the first $i$ measured components, the residual is uniform on the sphere in the remaining $(s-i)$-dimensional subspace, with squared radius $1-\|P_iv\|^2$. This follows inductively from rotational invariance because the next query direction is fixed by the recorded history and conditioning on its exact component preserves spherical symmetry in the orthogonal complement.

Let $r_i=v-P_iv$ and $\lambda_i=\|r_i\|$. Under the conditional spherical law, a new unit measurement direction $h_i$ orthogonal to the previous span satisfies
\[
\E[\langle r_i,h_i\rangle\mid\mathcal T_i]=0,
\qquad
\E[\langle r_i,h_i\rangle^2\mid\mathcal T_i]
=\frac{\lambda_i^2}{s-i},
\]
because the covariance of a uniform sphere of radius $\lambda_i$ in $s-i$ dimensions is $\lambda_i^2 I/(s-i)$. Since the next projection is $P_{i+1}v=P_iv+\langle r_i,h_i\rangle h_i$ with orthogonal summands, Pythagoras gives
\begin{equation}\label{eq:app-noiseless-projection-increment}
\begin{aligned}
\|P_{i+1}v\|^2-\|P_iv\|^2
&=\langle r_i,h_i\rangle^2,\\
\E[\|P_{i+1}v\|^2-\|P_iv\|^2\mid\mathcal T_i]
&=\frac{\lambda_i^2}{s-i}
=\frac{1-\|P_iv\|^2}{s-i}.
\end{aligned}
\end{equation}
This establishes the conditional increment identity from rotational symmetry rather than assuming the adaptive directions are independent of $v$.
Starting with $P_0=0$, write $a_i=\E\|P_iv\|^2$. Taking expectation in the displayed conditional formula gives
\begin{equation}\label{eq:app-noiseless-measured-energy}
\begin{aligned}
a_{i+1}&=a_i+\frac{1-a_i}{s-i}
&&\text{(tower property of conditional expectation)},\\
&=\frac{i}{s}+\frac{1-i/s}{s-i}
&&\text{(induction hypothesis $a_i=i/s$)},\\
&=\frac{i}{s}+\frac{1}{s}=\frac{i+1}{s}
&&\text{(since $(1-i/s)/(s-i)=1/s$).}
\end{aligned}
\end{equation}
Thus $\E\|P_nv\|^2=n/s$. Markov's inequality then yields
\[
\PP(\|P_nv\|>1/2)\le4n/s.
\]
On the event $\|P_nv\|\le1/2$, a guess satisfying $\langle v,\widehat v\rangle\ge15/16$ necessarily obeys
\begin{align*}
\langle v-P_nv,\widehat v\rangle
&=\langle v,\widehat v\rangle-\langle P_nv,\widehat v\rangle\\
&\ge\frac{15}{16}-\|P_nv\|\|\widehat v\|
\ge\frac{15}{16}-\frac12=\frac7{16}.
\end{align*}
The first line is linearity, the next uses Cauchy--Schwarz and $\|\widehat v\|=1$, and the last uses the low-projection event. Conditioned on the transcript, $\widehat v$ is fixed and the residual
has uniform orientation in the remaining $(s-n)$-dimensional subspace,
with radius at most one. Thus the spherical cap theorem \Cref{thm:tool-sphere} gives
\[
\PP\left(\langle v-P_nv,\widehat v\rangle\ge\frac7{16}\mid\mathcal T_n\right)
\le\exp\left[-\frac{s-n}{2}\left(\frac7{16}\right)^2\right]
=\exp\left[-\frac{49(s-n)}{512}\right].
\]
A union bound over $M_{\rm list}$ guesses and then Markov over the high-projection event yield
\begin{align*}
\PP\left(\max_{a\le M_{\rm list}}\langle v,\widehat v_a\rangle\ge\frac{15}{16}\right)
&\le\PP(\|P_nv\|>1/2)
+M_{\rm list}e^{-49(s-n)/512}\\
&\le\frac{4n}{s}+M_{\rm list}e^{-49(s-n)/512},
\end{align*}
which is \eqref{eq:app-noiseless-list}. For the stated specialization,
$4n/s\le1/64$. Also, when $s\ge512$ and $n\le s/256$,
\[\frac{s}{256}\exp\left[-\frac{49(s-n)}{512}\right]\le\frac3{64}.\]
Thus the total success probability is at most $1/16$.
\end{proof}

\subsection{Smooth exact-value lower bound}\label{proof:exact-bound}
\begin{proof}[Proof of \Cref{thm:noiseless,cor:exact-goldstein}]
In the deterministic oracle experiment, the algorithm queries a single fixed function with exact precision; the algorithm itself may be randomized. Use the same Haar frames, geometric mean $F_{\rm new}$, and physical scales \eqref{eq:app-physical-scales}, but no Gaussian field and no sample pool. Take
\[
 T=\lfloor a_TB/\epsilon^2\rfloor,\qquad
 c_0=\min\{a_T/8,e_*^2/(81C_*^2)\}.
\]
These choices require no noise constant. The deterministic scaled Lipschitz bound is at most $L_0/8$ under the small-tolerance condition, while smoothness, gap, and stationary localization follow from the same geometric estimates.

At a failed stage gate the new hidden contribution is $\mu_{v_k}(r_k,g_k)$. For $g_k>0$, put $u=\langle r_k,v_k\rangle$ and recall $\mu_{v_k}(r_k,g_k)=4\bigl(\sqrt{1+\|r_k-Ag_kv_k\|^2}-1\bigr)$. A complete algebraic inversion is
\begin{align*}
1+\mu_{v_k}(r_k,g_k)/4
&=\sqrt{1+\|r_k-Ag_kv_k\|^2}\\
&=\sqrt{1+\|r_k\|^2+A^2g_k^2-2Ag_k\langle r_k,v_k\rangle},\\
\bigl(1+\mu_{v_k}(r_k,g_k)/4\bigr)^2
&=1+\|r_k\|^2+A^2g_k^2-2Ag_ku,\\
2Ag_ku
&=1+\|r_k\|^2+A^2g_k^2-\bigl(1+\mu_{v_k}(r_k,g_k)/4\bigr)^2.
\end{align*}
The first line is the definition, the second expands the squared distance using $\|v_k\|=1$, and the third solves for the unknown inner product. Thus one exact scalar value is equivalent to \emph{at most} one exact linear measurement:
\begin{equation}\label{eq:app-exact-inversion}
\langle r_k,v_k\rangle
=\frac{1+\|r_k\|^2+A^2g_k^2-(1+\mu_{v_k}(r_k,g_k)/4)^2}{2Ag_k}.
\end{equation}
For $g_k=0$, the response is independent of $v_k$. Thus the scripted failure experiment is simulable with at most one adaptive exact linear measurement per scalar evaluation. Let $s=m-T+1\ge512$ and $\ell_{\rm det}=\lfloor s/256\rfloor$. The noiseless list bound and the same gate-to-angle implication give fast-stage probability at most $1/16$. Consequently, if $N+1\le T\ell_{\rm det}/4$,
\[
\PP(\text{all stages finish})\le2^{-T}.
\]
The static mean/Haar coupling needs no sample collision term or final admissibility conditioning; hence
\[
\PP(\text{stationary success within $N$ evaluations})
\le2^{-T}+2(N+1)T\exp\left(-\frac{r-T}{32R^2}\right).
\]
For $N_0=\lfloor a_N d T\rfloor$ and a sufficiently small universal $a_N>0$, the stage budget is valid: since $s\ge d/4$ and $m\ge d/2$, the deterministic stage allowance satisfies
\[
\ell_{\rm det}=\lfloor s/256\rfloor
\ge\frac{s}{512}\ge\frac{d}{2048}
\quad\text{for $s\ge512$}.
\]
Taking, for example, $a_N\le1/16384$ (and assuming $T,d$ above the fixed numerical thresholds),
\[
N_0+1\le a_N d T+1\le\frac{dT}{8192}
\le\frac{T\ell_{\rm det}}4.
\]
The $+1$ pays for the unqueried candidate output gate test. The remaining condition $d\ge CT\log(2+T)$ makes the future-$U$ projection probability negligible by the same exponential calculation as in the stochastic proof. Under $d\ge CT\log(2+T)$ the future-projection event contributes at most $1/12$. Therefore success is strictly below $1/3$ at budget $N_0$, and the same averaging/truncation argument gives worst-case and worst-case expected scalar cost
\[
N=\Omega(dT)=\Omega\left(\frac{d\min\{\Delta H,L_0^2\}}{\epsilon^2}\right).
\]
Averaging over the hidden orientation and the algorithm's random
coins yields a fixed hard function with the stated expected-query cost.

Applying \eqref{eq:app-Goldstein-ineq} with $H=\epsilon/\delta$ and gradient tolerance $2\epsilon$ gives the noiseless Goldstein consequence
\[
N=\Omega\left(\frac{dB_G}{\epsilon^2}\right)
=\Omega\left(d\min\left\{\frac\Delta{\delta\epsilon},\frac{L_0^2}{\epsilon^2}\right\}\right),
\]
with its corresponding small-tolerance and dimension conditions. In particular, the unsaturated term is $\Omega(d\Delta/(\delta\epsilon))$, without the additional stochastic factor $L_0^2\epsilon^{-2}$.

\end{proof}

%% file: sections/gap_amplification_proofs.tex
\section{Common Block and Two-Level Gap Amplification}
\label[appendix]{app:linear-gap}

We use the scalar Gaussian fields of \eqref{eq:noise-definition},
the information theorem \Cref{thm:reuse}, and the unconditioned stage
argument in \Cref{lem:prefix-simulator,lem:stage-charging}. Their Gaussian proofs are not repeated. This appendix proves \Cref{thm:extended-gap}.
We verify the changed geometric and scalar interfaces below. Constants
denoted by $C$ are numerical and may increase between displays.
Throughout, $S_5$ is the clamped quintic used in
\eqref{eq:smooth-activation}.

\subsection{The one-sided source and a coarse first-entry bound}
\label{subsec:wide-source}

Let $t\ge4096$, $L=\sqrt t$, $R=2^{16}L$, and retain
$A=128$, $\beta=16$, $\rho_0=3/32$, $h$, $\chi$, and $\tau$
from \Cref{sec:geometry}. Take independent Haar $t$-frames $U$ and $V$
in $\R^r$ and $\R^m$, respectively. For $X=(z,x)$ set
\[
 \alpha=(1+\|z\|^2/R^2)^{-1/2},\qquad y=\alpha z,
 \quad w=U^Ty,\qquad P(z)=\frac{R^2}{5}(\alpha^{-1}-1).
\]
Replace only the chain and activation by the following one-sided versions:
\begin{equation}\label{eq:wide-source}
\begin{aligned}
 b_1&=-\Phi(w_1),\qquad b_i=-\Psi(w_{i-1})\Phi(w_i)\quad(i>1),\\
 B&=\sum_{i=1}^t b_i,\qquad g_i=\Gamma(w_i)=S_5(4w_i-1),\\
 a(w)&=A\sum_{i=1}^t g_iv_i,\qquad a_{<i}=A\sum_{j<i}g_jv_j,\\
 c_i&=b_i\tau(\|g_{\ge i}\|),\qquad
 C(w,x)=\sum_i[1-\chi(x-a_{<i})]c_i,\\
 N_j(w,x)&=\sum_i\tau(\|g_{>i}\|)Q_{j,i}(x-a_{<i},g_i),\\
 F^{\rm src}_j(X)&=P+B+h(x-a)-C+N_j.
\end{aligned}
\end{equation}
The functions $\Psi,\Phi$ are those of \Cref{sec:geometry}; in particular
$\Phi'(s)=\sqrt e\exp(-s^2/2)$. Here $e$ in an exponential denotes
Euler's constant. We use $\epsilon_0$ for a normalized stationarity
tolerance, avoiding a second use of $e$.

\begin{lemma}[One-sided source estimates and scalar interface]
\label[lemma]{lem:one-sided-source}
The source \eqref{eq:wide-source} satisfies the derivative and
correction bounds \eqref{eq:wide-source-estimates} below. On a
failed gate its only undisclosed current-stage contribution is
\eqref{eq:wide-current-channel}. For $K$ arrays its noise satisfies
\eqref{eq:wide-noise-maximum}, uniformly in the hidden frames.
\end{lemma}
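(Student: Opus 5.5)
The plan is to transfer the estimates of \Cref{lem:chain-derivatives,lem:global-geometry,thm:noise-moment} to the one-sided source. Only the two-jets of the changed profiles need rechecking; the remaining geometry and Gaussian arguments will be reused verbatim.

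\emph{Derivative bounds.} We first bound $b_i=-\Psi(w_{i-1})\Phi(w_i)$ and $g_i=S_5(4w_i-1)$. These are one-sided restrictions of the old two-sided profiles, so the factor bounds $0\le\Psi<11/4$, $0\le\Psi'<9/2$, $|\Psi''|<34$ and $0<\Phi<25/6$ hold unchanged. Hence $|b_i|\le12$, $\|Db_i\|\le20$, $\|D^2b_i\|\le160$, and $B$ has a tridiagonal Hessian with the same row-sum bound; likewise $\|Dg\|\le15/2$ and $\|D^2g\|\le93$. For the correction we repeat the single-support argument (\eqref{eq:app-single-correction-support}). A nonzero two-jet of $b_j$ requires $w_{j-1}>1/2$, hence $g_{j-1}=1$, which contradicts $\|g_{\ge i}\|<\rho_0$. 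On the support of $c_i$ we get $|c_i|<8$ and $\|a-a_{<i}\|\le A\rho_0=12$. The derivative-composition chain of \Cref{lem:global-geometry} then gives the same numerical correction bounds. The only change is $R=2^{16}L$ instead of $64\sqrt t$, and this only improves the $\|D^2y\|\le2/R$ terms. These together are \eqref{eq:wide-source-estimates}.

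\emph{Scalar interface.} Next we verify the failed-gate channel. Set all future activations to zero and suppose the stage-$k$ gate fails. Either $w_k<1/2$, so $\Psi(w_k)=0$ and $b_{k+1}=0$, or $g_k=1$ with $\chi(x-a_k)=0$, so $b_{k+1}-[1-\chi]c_{k+1}=0$. This is exactly the cancellation in \eqref{eq:app-failed-mean}. One point differs from the two-sided case: $g_k$ now vanishes for $w_k\le1/4$, including all negative $w_k$, rather than for $|w_k|\le1/4$. This only shrinks the support. After subtracting the disclosed prefix terms and past-stage fields, the undisclosed current-stage contribution is $h(r_k-Ag_kv_k)+Q_{j,k}(r_k,g_k)$ with $r_k=x-a_{<k}$. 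This is the one-stage response \eqref{eq:stage-response}, and it is the claimed \eqref{eq:wide-current-channel}.

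\emph{Noise over $K$ arrays.} For the noise we rerun the proof of \Cref{thm:noise-moment}. The suffix energy bound \eqref{eq:app-suffix-energy}, the compactification, and the per-stage Hilbert bounds depend only on $g_i\in[0,1]$ and on $A$, so all of them survive. To obtain \eqref{eq:wide-noise-maximum} for a maximum over $K$ independent arrays, we take the union of the $K$ families of suffix nets in the chaining. This replaces $\log(2t)$ by $\log(2Kt)$ and should yield $\E\max_{a\le K}\Lip(N_{j_a})^2\le C\sigma^2(m+t+\log(2Kt))/m$, uniformly in the frames, because the bound on $\Xi$ is frame-free.

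\emph{Main obstacle.} The hardest step is this last one, and in particular making the union over arrays rigorous. Chaining jointly over an index that includes the array label is justified because the arrays are independent isonormal processes, so the index set is a disjoint union and the finite-maximum inequality of \Cref{thm:tool-gaussian} applies without independence assumptions. The delicate point is passing from finite nets to the infinite-series samples for every array simultaneously. Here I would intersect the full-measure events of \Cref{lem:simultaneous-tensor-version} over the $K$ arrays and then argue as before.
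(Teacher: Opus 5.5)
Your proposal follows the paper's proof essentially step for step: the factor bounds and signs of \Cref{lem:chain-derivatives} transfer to the one-sided profiles, the failed-gate cancellation is the same two-case argument, and the maximum over $K$ arrays comes from adding $\log K$ to the net log-cardinality in \eqref{eq:app-global-chaining}. The one part of \eqref{eq:wide-noise-maximum} you do not address is the amplitude bound $|N_j(w,x)|\le\Lambda_j\|w\|$, which is used later for \eqref{eq:wide-noise-cutoff}; it follows from $N_j(0,x)=0$ (at $w=0$ every one-sided activation equals $S_5(-1)=0$, and $Q_{j,i}(q,0)=0$) together with the Lipschitz bound along the segment from $(0,x)$ to $(w,x)$.
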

\begin{proof}
The elementary estimates of \Cref{lem:chain-derivatives} give
\begin{equation}\label{eq:wide-source-estimates}
 q:=\nabla B\le0\ \hbox{coordinatewise},\quad
 \|q\|\le24L,\quad \|D^2B\|\le160,\quad
 \|D_wa\|\le960,\qquad J=961.
\end{equation}
At the first index $i$ with $w_i\le1$, every predecessor has
$w_{i-1}>1$ when $i>1$. The one-sided own derivative is therefore
$-\Phi'(w_i)$ for $i=1$, and $-\Psi(w_{i-1})\Phi'(w_i)\le-\Phi'(w_i)$
for $i>1$. Since the successor derivative is nonpositive,
\begin{equation}\label{eq:wide-chain-barrier}
 q_i\le-\Phi'(w_i)=-\exp((1-w_i^2)/2)
 \le\begin{cases}-1,&-1\le w_i\le1,\\0,&w_i<-1.\end{cases}
\end{equation}
The first inequality uses the adjacent-term derivative calculation
\eqref{eq:app-chain-sign} with the negative-predecessor term removed;
the last uses $w_i^2\le1$ in the first case and positivity of the
exponential in the second. At most one $c_i$ has a nonzero two-jet: a live
$b_i$, for $i>1$, requires $w_{i-1}>1/2$, hence $g_{i-1}=1$,
whereas a live suffix cutoff requires $\|g_{\ge i}\|<\rho_0$.
On its support,
\[
 \|a-a_{<i}\|\le12,\qquad |c_i|\le8,\qquad
 \|D_xC\|\le8(6/\beta)=3.
\]
The support-specific bound on $c_i$ follows from $g_i<\rho_0$,
which gives $w_i<5/16$, and $\Phi(w_i)<8/3$, $\Psi<11/4$.
If $\|x-a\|<1$, every correction two-jet is zero since
$\|x-a_{<i}\|<13<\beta$. All statements include cutoff endpoints.

\paragraph{Compatibility with the scalar experiment.}
At stage $k$ use the gate
\[
 w_k\ge\tfrac12,\qquad \|x-a_{<k}-Av_k\|<\sqrt2\,\beta.
\]
On $|w_i|<1/4$ for $i>k$, all future activations vanish. If
$w_k<1/2$, then $b_{k+1}=0$. Otherwise $g_k=1$, all $c_i$ with
$i\le k$ vanish, and $c_{k+1}=b_{k+1}$; a failed gate has
$\chi(x-a_{<k}-Av_k)=0$ and cancels this successor exactly.
After subtracting the revealed terms the failed response is therefore
\begin{equation}\label{eq:wide-current-channel}
 h(x-a_{<k}-Ag_kv_k)+Q_{j,k}(x-a_{<k},g_k).
\end{equation}
A crossing adds $\chi(x-a_k)b_{k+1}$, using the newly disclosed
$u_{k+1}$ but no $v_{k+1}$ or future Gaussian arrays. Thus the
birth filtration, reset bound, and stopped coupling in
\Cref{lem:prefix-simulator,lem:stage-charging,lem:static-coupling} apply with the new $R$.

The proof of \Cref{thm:noise-moment} first treats $g\in[0,1]^t$
as free coordinates. The new $\Gamma$ has the same bounded first
two derivatives. For any $K$ arrays, the same Gaussian nets, with
the array index included, give
\begin{equation}\label{eq:wide-noise-maximum}
 \E\max_{j\le K}\Lambda_j^2
 \le C\sigma^2\frac{m+t+\log(2Kt)}m,
 \qquad \Lambda_j=\Lip_{(w,x)}(N_j),\qquad
 |N_j(w,x)|\le\Lambda_j\|w\|.
\end{equation}
For the last inequality use $N_j(0,x)=0$. For the first, add
$\log K$ to the net log-cardinality in
\eqref{eq:app-global-chaining}; the summable increment bounds
and passage to the infinite field remain unchanged. The estimate
is uniform in the frames and also applies to independent blocks
for one retained label. 

\end{proof}

\begin{lemma}[Coarse first entry]\label[lemma]{lem:wide-coarse-entry}
Put $p_x=A\sum_{i=1}^t v_i$ and
$\mathcal P=\{(z,x):\|x-p_x\|<8\sqrt t\}$. Define
\begin{equation}\label{eq:wide-local-budget}
 h_t=\lfloor t/2\rfloor,\quad s=m-t+1,\quad
 \ell=\left\lfloor\frac{s^2\sigma^2}{2048m}\right\rfloor,
 \qquad n_* =\lfloor h_t\ell/8\rfloor.
\end{equation}
Suppose $m\ge2t$, $s\ge72$, $\ell\ge16$, $\log\ell\le s/8$, and
\begin{equation}\label{eq:wide-local-dimension}
 r-t\ge32R^2\log(128t(n_*+t)).
\end{equation}
In the ideal independent-label experiment, at most $n_*$ adaptive
candidate tests hit $\mathcal P$ with probability less than $1/16$.
Tests may include an unqueried final output and reuse any label.
Independent auxiliary information is allowed. The same conclusion
holds for a family whose evaluations outside $\mathcal P$ are public
affine postprocessing of at most one source evaluation at the same point.
\end{lemma}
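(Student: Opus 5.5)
The plan is to run a \emph{reference experiment} that always reports a coarse miss of $\mathcal P$, and to split a hit into two cases: either many inner stages have already been completed quickly, or a constant-angle direction guess has been made for a still-hidden residual center. Concretely, I would build the prefix simulator of \Cref{lem:prefix-simulator} for the one-sided source, using the scalar interface \eqref{eq:wide-current-channel} and the new $R$. I would then modify it so that every candidate test is answered as if it lay outside $\mathcal P$. Failed-gate and crossing replies are kept as before. Up to the first actual hit of $\mathcal P$, the reference transcript, and hence the sequence of candidate points, agrees with the true run under the same coins and fields. So it suffices to bound the probability that one of the first $n_*$ reference candidates lies in $\mathcal P$. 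Because the reference run never conditions on earlier misses, the candidate list is transcript-measurable, including the unqueried final output.

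\textbf{Step 1 (bad projections).} By the argument of \Cref{lem:static-coupling}, the probability that some candidate has $|\langle u_i,y\rangle|\ge1/4$ for an undisclosed column is at most
\[
2(n_*+t)t\exp\bigl(-(r-t)/(32R^2)\bigr).
\]
By \eqref{eq:wide-local-dimension} this is at most $1/64$. Outside this event, simulator and static replies coincide.

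\textbf{Step 2 (many completed stages).} Let $k$ be the number of completed inner stages at a candidate test. If $k>h_t$ within $n_*\le h_t\ell/8$ tests, then among the first $h_t$ stages at least half must be fast, that is, completed within $\ell$ tests. The conditional stage charging of \Cref{lem:stage-charging}, with the list bound \eqref{eq:app-fast-stage}, which needs $s\ge72$ and $\log\ell\le s/8$, gives each stage fast probability at most $1/16$ at its birth. The fixed-subset recursion then bounds this case by $2^{h_t}16^{-h_t/2}=2^{-h_t}$. This is negligible because $t\ge4096$.

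\textbf{Step 3 (early entry forces a direction guess).} Fix $k\le h_t$ and condition additionally on the current direction $v_{k+1}$; this only strengthens the algorithm. The transcript of the reference run before stage $k+1$ completes is then measurable with respect to $v_{\le k+1}$, the past fields, and independent data. Hence the residual center
\[
c_k=A\sum_{i>k}v_i
\]
has norm $A\sqrt{t-k}$, and its direction is Haar on the residual sphere of dimension at least $m-k\ge s$ in the complement of $v_{\le k}$. (I will verify that including the conditioned $v_{k+1}$ does not break this; otherwise I would take the center over $i>k+1$ and absorb the known $Av_{k+1}$ term.) A hit means
\[
\|x-a_{\le k}-c_k\|<8\sqrt t.
\]
Projecting $x-a_{\le k}$ away from the disclosed prefix and normalizing, as in \eqref{eq:app-stage-angle}, gives a unit guess whose angle error is controlled by
\[
\frac{8\sqrt t}{A\sqrt{t-k}}\le\frac{\sqrt2}{16},
\]
so its inner product with the residual direction exceeds $15/16$. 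Applying the spherical cap bound of \Cref{thm:tool-sphere} and a union bound over the at most $n_*+1$ candidates and the $h_t+1$ values of $k$ gives
\[
(h_t+1)(n_*+1)e^{-cs}.
\]
Here $\log n_*\le\log t+\log\ell$, with $\log\ell\le s/8$, and $\log t\ll s$ because $s\ge m-t+1\ge t+1$. So this term is at most $1/64$.

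I expect this step to be the main obstacle. The delicate point is checking that the reference transcript, including all current-stage noisy replies about $v_{k+1}$, carries no information about the residual sum beyond what the conditioning grants. This is where deferred revelation of $V$ and the independence of stage arrays from \Cref{lem:prefix-simulator} must be invoked carefully.

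\textbf{Step 4 (total and extension).} Summing the three bounds gives a hit probability below $1/16$. For the final clause, I would observe that any family whose evaluations outside $\mathcal P$ are public affine postprocessing of at most one source evaluation at the same point can be simulated by a source algorithm with the same queries and candidates, up to the first entry. Its first-entry event is therefore contained in the source event just bounded.
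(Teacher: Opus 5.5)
Your proposal is correct and follows the paper's proof essentially step for step. Both use the scripted-miss reference run, the $2^{-h_t}$ reset bound for completing the first $h_t$ stages, the future-$U$ projection error made at most $1/64$ by \eqref{eq:wide-local-dimension}, conditioning on $v_{k+1}$ so that the remaining center is a uniformly oriented vector in the complement of $v_{\le k+1}$, and the affine-postprocessing simulation for the final clause.

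One correction: your primary choice $c_k=A\sum_{i>k}v_i$ is not Haar once $v_{k+1}$ is conditioned on, since it contains the known term $Av_{k+1}$. You must use your fallback, the center over $i>k+1$ of norm $A\sqrt{t-k-1}$, which is exactly the paper's $j=k+1$. With that choice, your spherical-cap bound at inner product $15/16$ replaces the paper's small-ball estimate $\rho^{d'-1}$ with $\rho\le1/4$, and either suffices.
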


\begin{proof}
Apply the reset argument of \Cref{app:assembly} to the first $h_t$
records of \eqref{eq:wide-current-channel}. Completing them in
$n_*$ tests has probability at most $2^{-h_t}$. Before their
completion, suppose $k<h_t$ records have finished. In addition
to the revealed past, condition on the current $v_{k+1}$; do not
condition on coarse misses or on a projection-good event. With
$j=k+1\le h_t$, the reference history is independent of the future
frame conditional on these data. In their orthogonal complement $E$,
\[
 A\sum_{i>j}v_i=A\sqrt{t-j}\,U_E,\qquad
 U_E\sim\operatorname{Unif}(\sph(E)),\qquad \dim E=m-j.
\]
Projecting a successful candidate onto $E$ requires guessing $U_E$
within radius
$8\sqrt t/(A\sqrt{t-j})\le\sqrt2/16<1/4$.
For a uniform point on $\sph^{d'-1}$, any Euclidean ball of radius
$\rho\le1/4$ has probability at most $\rho^{d'-1}$: its angular
radius $\theta$ obeys $\sin\theta\le\rho$, and polar integration gives
$c_{d'}\int_0^\theta\sin^{d'-2}u\,du
\le c_{d'}\sin^{d'-1}\theta/((d'-1)\cos\theta)
\le\rho^{d'-1}$, since $c_{d'}/(d'-1)\le1/2$.
Thus the conditional chance at each candidate is at most
$4^{-(m-h_t-1)}$.

Run the additional coarse test with scripted answer \emph{miss},
and couple only until its first actual hit. The unconditioned
reference has at most $n_*+t$ candidate/birth positions. The same
future-$U$ estimate as \eqref{eq:app-pbad} yields
\begin{equation}\label{eq:wide-entry-error}
 \PP(\text{first entry})\le
 2^{-h_t}+2t(n_*+t)e^{-(r-t)/(32R^2)}
 +(n_*+t)4^{-(m-h_t-1)}<\frac1{16}.
\end{equation}
Indeed the first two terms are at most $1/256$ and $1/64$.
Using $\ell\ge16$, $\log\ell\le(m-t+1)/8$, and $t\ge4096$ gives
$\log(n_*+t)\le\log(2t)+\log\ell\le m/4$;
also $m-h_t-1\ge m/2$. The last term is below $1/64$.
The floor estimates give $n_*\ge ctm\sigma^2$.
For postprocessing, answer every scripted miss with the public affine
transform of the source response and stop before returning the hitting
response. No assertion about the samples after entry is needed.
\end{proof}

\subsection{A common block with a wide quadratic core}

\begin{lemma}[A wide core with an exact public collar]\label[lemma]{lem:wide-core}
For the preceding source there are numerical constants
$B_0>1000$, $C_H,C_N,C_G$, and $\epsilon_0>0$, independent of
$t,m,r,\sigma$, and static blocks $G_j=g+Z_j$ on $\R^{r+m}$,
with $C^1$ samples and a $C^2$ coercive mean, satisfying the following.
The unique critical point and minimizer is $p=(z_*,p_x)$, with
$\|z_*\|<121\sqrt t$, $p_x=A\sum_i v_i$, and
\begin{equation}\label{eq:wide-core-interface}
\begin{gathered}
 \min g=0,\qquad t\le g(0)\le22t,\qquad
 \Lip(g)\le C_G\sqrt t,\quad \|D^2g\|\le C_H,\\
 \|\nabla g(X)\|\le\epsilon_0
 \ \Longrightarrow\quad
 \|X-p\|\le\|\nabla g(X)\|/\mu,\qquad \mu=1/100,\\
 G_j(X)=\frac\mu2\|X-p\|^2\quad(\|X-p\|\le L).
\end{gathered}
\end{equation}
Set $B_{\rm out}=B_0L$ and $a_{\rm c}=L/B_{\rm out}=1/B_0$.
Every sample equals the same public quadratic
\begin{equation}\label{eq:wide-public-collar}
 c_t+\frac{\mu a_{\rm c}}2\|X\|^2
 \quad\hbox{for }B_{\rm out}/2\le\|X\|\le B_{\rm out},
 \qquad 0<c_t\le C_Gt.
\end{equation}
Beyond $B_{\rm out}$ the samples have one common public coercive
continuation with gradient norm greater than $\epsilon_0$. Moreover,
\begin{equation}\label{eq:wide-compiled-noise}
 Z_j=0\quad(\|X\|\ge B_{\rm out}),\qquad
 \E\max_{j\le K}\Lip(Z_j)^2
 \le C_N\sigma^2\frac{m+t+\log(2Kt)}m.
\end{equation}
The constants and public functions depend on $t$, but not on the hidden
frames or labels. Outside $\mathcal P$, each $G_j(X)$ is a public
affine transform of at most one $F^{\rm src}_j(X)$ evaluation.
Consequently \Cref{lem:wide-coarse-entry} applies to its first entry.
\end{lemma}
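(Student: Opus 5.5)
The plan is to build $G_j$ from the one-sided source $F^{\rm src}_j$ of \eqref{eq:wide-source} by two public radial blends: an inner blend that replaces the source near its terminal minimizer by the exact quadratic, and an outer blend that replaces it near the boundary by the public collar \eqref{eq:wide-public-collar}.

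\emph{Step 1: locate the minimizer of the mean.} The mean is $\bar F=P+B+h(x-a)-C$. I would rerun the two-pass localization of \Cref{lem:global-geometry}, using the one-sided barrier \eqref{eq:wide-chain-barrier} in place of \eqref{eq:app-chain-barrier}. This should give: $\|\nabla\bar F\|\le\epsilon_0$ forces $w_i>1$ for all $i$, $\|x-p_x\|\lesssim\epsilon_0$, and $\|P_{U^\perp}y\|\lesssim\epsilon_0$. On $\{w_i\ge1\ \forall i\}$ all $g_i=1$, so there $\bar F=P(z)+B(U^\top y)+h(x-p_x)$. I would show this restriction is strongly convex with modulus well above $\mu=1/100$ near its minimizer:
\begin{itemize}
\item $h$ is strongly convex at $0$;
\item $P$ supplies curvature about $1/5$ on $\|z\|<R$;
\item $B$ contributes only small off-diagonal curvature once every $w_i\ge1$, since $\Psi$ and $\Phi$ saturate there.
\end{itemize}
Uniqueness of the critical point $p=(z_*,p_x)$ then follows. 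The bound $\|z_*\|<121\sqrt t$ should come from the radial balance $\nabla P=y/5$ against $\|q\|\le24L$. Quantitatively, I want an outward-gradient bound $\langle\nabla\bar F(X),X-p\rangle\ge c\min\{\|X-p\|,1\}\,\|X-p\|$ for $\|X-p\|\le 2L$.

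\emph{Step 2: inner blend.} Let $\theta_1$ be a $C^2$ radial cutoff equal to $1$ on $\|X-p\|\le L$ and $0$ outside $2L$. I would choose $2L<8\sqrt t$, so that $\operatorname{supp}\theta_1\subset\mathcal P$, and set
\[
 G_j=\theta_1\,\tfrac\mu2\|X-p\|^2+(1-\theta_1)F^{\rm src}_j-\min\bar F.
\]
Since $p$ depends on the hidden frames, $\theta_1$ is hidden. However, it vanishes outside $\mathcal P$, and that is all the postprocessing clause needs. On the transition annulus the blend gradient is
\[
 \theta_1\mu(X-p)+(1-\theta_1)\nabla F^{\rm src}_j+\nabla\theta_1\bigl(\tfrac\mu2\|X-p\|^2-F^{\rm src}_j+\min\bar F\bigr).
\]
Here $|\nabla\theta_1|\lesssim1/L$ and $|N_j|\le\Lambda_j\|w\|$. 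Controlling this mismatch term in the mean is the main obstacle. To handle it, I would replace $\frac\mu2\|X-p\|^2$ in the annulus by the second-order Taylor model of $\bar F$ at $p$, then interpolate that model back to the exact quadratic inside the ball of radius $L/2$, where both are strongly convex. This keeps the mismatch $O(\|X-p\|^3)$ at radius $O(L)$, which is dominated by the outward gradient from Step 1. The noise part does not have to be dominated pointwise: it enters only $Z_j$, through $\theta_1 N_j$ and $\nabla\theta_1\,N_j$, whose Lipschitz moments are bounded via \eqref{eq:wide-noise-maximum}.

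\emph{Step 3: outer collar.} Let $\theta_2$ be a public cutoff in $\|X\|$ on $[B_{\rm out}/2,\,3B_{\rm out}/4]$, and set
\[
 G_j\leftarrow\theta_2 G_j+(1-\theta_2)\bigl(c_t+\tfrac{\mu a_{\rm c}}2\|X\|^2\bigr).
\]
Beyond $B_{\rm out}$ I would use a common public coercive continuation. With $B_0$ large, both pieces have outward radial derivative of order at least $L\cdot\mu a_{\rm c}$ on the annulus, since $P$ and $h$ grow at least linearly there. The constant $c_t\le C_Gt$ should be chosen to match the source mean to within $O(t)$ on the annulus. The mismatch term then has size $O(t/B_{\rm out})=O(L/B_0)$, which is absorbed for $B_0$ large. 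Setting the noise to zero outside $\operatorname{supp}\theta_2$ gives $Z_j=0$ for $\|X\|\ge B_{\rm out}$.

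\emph{Step 4: bookkeeping.} The interface bounds in \eqref{eq:wide-core-interface} follow from the source estimates:
\begin{itemize}
\item $\min g=0$ holds by the shift in Step 2.
\item $t\le g(0)\le22t$ follows from the $\Theta(t)$ chain energy of $B$.
\item $\Lip(g)\le C_G\sqrt t$ and $\|D^2g\|\le C_H$ follow from \eqref{eq:wide-source-estimates}, together with cutoff derivatives of order $1/L$ multiplied by differences of order $L^2$ or $t$.
\end{itemize}
The noise maximum \eqref{eq:wide-compiled-noise} comes from \eqref{eq:wide-noise-maximum}, the bound $\|w\|\le R$, and the product rule. Finally, outside $\mathcal P$ we have $\theta_1=0$, so $G_j=\theta_2F^{\rm src}_j+\text{public}$, which is a public affine transform of one source evaluation. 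Hence \Cref{lem:wide-coarse-entry} applies to the first entry of $G_j$.
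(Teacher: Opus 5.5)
Your overall architecture is the right one: an exact quadratic core inside $\mathcal P$, a public collar outside, and affine postprocessing of one source value in between. However, Step~2 does not work, and the Step~1 estimate it relies on is false.

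\textbf{The inner blend creates critical points.} The mismatch $\nabla\theta_1\,(\text{model}-\bar F)$ is not small at scale $L$, because the source is far from quadratic there: $h(x-p_x)\approx4\|x-p_x\|$ when $\|x-p_x\|\sim L$. The Taylor remainder at radius $L$ is of order $t$, not $o(L)$, and it has the wrong sign. Concretely, near $w_*$ all $g_i=1$. So on the slice $z=z_*$ one has $a=p_x$, $C\equiv0$, $\nabla_z\bar F(z_*,x)=0$ and $D^2_{zx}\bar F(p)=0$. There your blended mean is radial in $x-p_x$, with profile $\phi(r)=\theta_1(r)\,2r^2+(1-\theta_1(r))h(r)$, since the Taylor model has $D^2_{xx}\bar F(p)=4I$. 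We have $\phi(L)=2t>8L>h(2L)=\phi(2L)$, while $\phi'=h'>0$ on $[2L,\infty)$. Hence the minimum of $\phi$ over $[L,\infty)$ is attained at some $r_0\in(L,2L)$, and the whole sphere $\{(z_*,x):\|x-p_x\|=r_0\}$ consists of exact critical points. The plain quadratic $\frac\mu2\|X-p\|^2$ fails the same way once $\frac\mu2t>8L$.

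\textbf{The outward bound of Step~1 is false.} The bound $\langle\nabla\bar F(X),X-p\rangle\ge c\min\{\|X-p\|,1\}\|X-p\|$ fails on $\|X-p\|\le2L$. Take $x=p_x$ and $U^\top(y-y_*)=-(c_i)_{i\in S}$ with $c_i=w_{*i}-\frac18$, on a suffix $S$ chosen so that $\|y-y_*\|\in[L,2L]$. Then:
\begin{itemize}
\item $g_i=\Gamma'(w_i)=0$ on $S$;
\item the live correction $c_{i_0}$ cancels $b_{i_0}$;
\item the $S$-components of $\nabla_w(B-C)$ vanish.
\end{itemize}
Using $q_*\approx-w_*/5$, one gets $\langle\nabla\bar F(X),X-p\rangle\approx\frac15\sum_{i\in S}c_i(c_i-w_{*i})=-\frac1{40}\sum_{i\in S}c_i$, which is of order $-t$. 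Toward the origin $B-C$ goes flat while $P$ keeps decreasing, so the phase is not star-shaped about $z_*$ at radius $L$. No single blend in $\|X-p\|$ can be certified by an outward derivative, even after fixing the sign. Separately, $\Psi$ and $\Phi$ do not saturate near $w_i=1$ (e.g.\ $\Psi'(1)=4$). What holds is convexity of $-\Psi(s)\Phi(u)$ on $s,u\ge1$, which together with $D_y^2P\succeq I/5$ gives strong convexity in $y$ on $\{U^\top y\ge\mathbf 1\}$.

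\textbf{The missing idea is one-sided blending, staged in the workspace radius $r_x=\|x-p_x\|$.} If the inner replacement lies below the outer function on its band and the cutoff is radially nonincreasing, then $\theta'(\text{inner}-\text{outer})$ has a nonnegative radial component whatever its size. In \eqref{eq:wide-core-formula} the paper stages this as follows.
\begin{itemize}
\item On $7L\le r_x\le8L$ it removes the $O(1)$ correction $C$ and the noise; plain magnitude suffices there. Every larger replacement afterwards is made from below.
\item On $[5L,6L]$ the moving well becomes $h(x-p_x)-d_t$ with $d_t=(4A+1)L$, so $h_p-h_a-d_t\le-L$.
\item On $[3L,4L]$ the chain $B$ becomes its tangent $B_{\rm lin}$ at $w_*$, and $B_{\rm lin}\le B$ on $\{w_i\ge1\}$ by convexity.
\item On $[L,2L]$ it swaps in $\frac\mu2r_x^2-e_t$ with $e_t=2\mu t$, together with $F_{\rm lin}(z_*)+\frac\mu2\|z-z_*\|^2$ through a separate cutoff $\chi_z$, with $E_z\le0$.
\end{itemize}
Small gradients are then excluded by phase localization rather than by an outward bound in $X$. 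Up to terms controlled by the workspace gradient, a small full gradient makes the $z$-gradient of the form $\nabla P+(D_zw)^\top q_{\rm eff}$ with $q_{\rm eff}\le q$. Then \eqref{eq:wide-phase-vector} forces every $w_i>1$, hence $a=p_x$ and $\partial_{r_x}\widehat f\ge h'(r_x)$. The $z$-quadratization is done only after $B$ has been linearized, where $F_{\rm lin}$ is strongly convex on $\|z-z_*\|\le2L$.

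\textbf{Step~3 has the same defect.} On $\|X\|\approx B_0L$ the source grows like $\|z\|^2/10$ in $z$ but only like $4\|x\|$ in $x$. So a spherical collar cannot match it within $O(t)$ uniformly in $B_0$, and the mismatch is not $O(L/B_0)$. The repair is again the sign rule. You can raise $c_t$ above the source on the transition shell. Alternatively, as the paper does, blend via a rectangular cutoff to the public $M=f_0(0)+P(z)+h(x)+C_Mt>f_0$, then to a radial linear $U_{\rm out}\ge M$, then through a monotone radial profile. In either case the collar must already hold on all of $[B_{\rm out}/2,B_{\rm out}]$, not only beyond $3B_{\rm out}/4$.
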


\begin{proof}
\par\smallskip\noindent\textbf{Phase localization.}
Let $F_b=P+B(w)$. For any $q_{\rm eff}\le q$ coordinatewise with
$\|q_{\rm eff}\|\le24L+960(4+1)$,
\begin{equation}\label{eq:wide-phase-vector}
 V=\nabla P+(D_zw)^Tq_{\rm eff}
 =\alpha Uq_{\rm eff}
 +\left(\frac15-\frac\alpha{R^2}\langle w,q_{\rm eff}\rangle\right)y.
\end{equation}
For $\|z\|\ge R$ its outward component is at least
$R/(5\sqrt2)-\|q_{\rm eff}\|>1$. Inside that ball
$\alpha\ge1/\sqrt2$, and the coefficient of $y$ belongs to
$[.19,.21]$. At the first $w_i\le1$ the $u_i$ component is at
most $-.19$ when $w_i<-1$, or $-1/\sqrt2+.21$ otherwise.
Thus $\|V\|\le1/100$ implies $\|z\|<R$ and all $w_i>1$.

In $y$ coordinates, $P$ has Hessian at least $I/5$.
The one-sided chain is convex on $w_i\ge1$. To check this last
claim, for $s,u\ge1$ and $a=(2s-1)^{-2}$,
\[
 \Psi''(s)<0,\quad \Phi''(u)=-u\Phi'(u)<0,\qquad
 \frac{\Psi'(s)^2}{-\Psi''(s)\Psi(s)}
 =\frac{2a}{3-2a}\le2<\frac{u\Phi(u)}{\Phi'(u)}.
\]
Both diagonal Hessian entries of $-\Psi(s)\Phi(u)$ are positive
and this inequality makes its determinant positive. The last ratio
increases on $u\ge1$ and exceeds $2$ at $1$.
Therefore the coercive $F_b$ has a unique critical point $z_*=Uz_0$,
where $z_0$ is deterministic. Writing $w_*=U^Ty(z_*)$ and
$q_* =\nabla B(w_*)$, its equation is
\[
 \frac15(1-\|y_*\|^2/R^2)^{-3/2}y_*+Uq_*=0.
\]
It gives $\|y_*\|\le120L$, a radial coefficient below $.201$,
$\|z_*\|<121L$, and $w_{*i}>9/8$, $-q_{*i}>9/40$.
For if $1<w_{*i}\le9/8$, then
$-q_{*i}\ge\Phi'(9/8)\ge111/128>
.201(9/8)$, a contradiction.

Define
\[
 B_{\rm lin}(w)=B(w_*)+\langle q_*,w-w_*\rangle,
 \qquad F_{\rm lin}=P+B_{\rm lin}(w).
\]
Fix $\epsilon_0=2^{-28}$. For every $s_0\in[0,1]$,
\begin{equation}\label{eq:wide-phase-localization}
 \|\nabla[P+(1-s_0)B+s_0B_{\rm lin}]\|\le\epsilon_0
 \ \Longrightarrow\quad w_i>1\ (\forall i),\quad
 \|z-z_*\|<40\epsilon_0.
\end{equation}
Indeed $q_{s_0}=(1-s_0)q+s_0q_*$ is nonpositive and has norm
at most $24L$. The exterior estimate first gives $\|z\|<R$.
Inverting $Dy$ there bounds the $y$ gradient by $3\epsilon_0$,
so $\|y\|<121L$ and its radial coefficient is below $.201$.
At a first $w_i\le1$, the component is at most $-1/5$ if
$w_i<-1$, and at most $.201-9/40<-.024$ otherwise.
This proves $w_i>1$. Strong convexity on that convex region gives
$\|y-y_*\|\le15\epsilon_0$; the inverse soft map has differential
norm below $2$ along the joining segment. Also
\begin{equation}\label{eq:wide-tangent-bounds}
 B_{\rm lin}-B\le0\quad(w_i\ge1\ \forall i),\qquad
 \tfrac1{10}I\preceq D^2F_{\rm lin}\preceq I
 \quad(\|z-z_*\|\le2L).
\end{equation}
For the second assertion use $\|z\|<123L$, the least eigenvalue
$\alpha^3/5$ of $D^2P$, and the bound
\[
 \|D^2\langle q_*,U^Ty\rangle\|\le2\|q_*\|/R\le48/65536.
\]

\par\smallskip\noindent\textbf{One scalar formula for the core.}
Put $r_x=\|x-p_x\|$, $h_a=h(x-a(w))$, $h_p=h(x-p_x)$, and
\begin{align*}
 \psi&=1-S_5((r_x-7L)/L),&\theta&=1-S_5((r_x-5L)/L),\\
 \eta&=1-S_5((r_x-3L)/L),&\zeta&=1-S_5((r_x-L)/L),\\
 \chi_z&=1-S_5((\|z-z_*\|-L)/L),&
 E_z&=\chi_z\left[F_{\rm lin}(z_*)+
          \tfrac\mu2\|z-z_*\|^2-F_{\rm lin}(z)\right].
\end{align*}
Here $E_z\le0$ by \eqref{eq:wide-tangent-bounds}. Set
$d_t=(4A+1)L$, $e_t=2\mu t$, and define
\begin{equation}\label{eq:wide-core-formula}
\begin{aligned}
 \widehat f={}&P+B+h_a-(1-\psi)C
       +\theta(h_p-h_a-d_t)+\eta(B_{\rm lin}-B)\\
       &+\zeta[E_z+\mu r_x^2/2-h_p-e_t],\\
 \widehat F_j={}&\widehat f+(1-\psi)N_j.
\end{aligned}
\end{equation}
Outside $r_x<8L$ this equals the source sample exactly.

\par\smallskip\noindent\textbf{Excluding small gradients in every core transition.}
For $r_x\ge6L$ only the removal of $C$ can be active.
Its extra derivative $C\nabla\psi$ has norm at most $16/L\le1/4$.
If $D_xC\ne0$, then $\|x-a\|>4$, giving
$\|\nabla_x\widehat f\|\ge16/\sqrt{17}-3-1/4>1/2$.
If $D_xC=0$ and $\|x-a\|\ge1$, the bound is
$4/\sqrt2-1/4>1$. A full gradient at most $\epsilon_0$ would
therefore force $\|x-a\|<1$, where all correction jets vanish.
Adding $(D_za)^T\nabla_x\widehat f$ to $\nabla_z\widehat f$
gives $\|\nabla F_b\|\le J\epsilon_0<1/100$.
Equation \eqref{eq:wide-phase-vector} forces $a=p_x$, contradicting
$r_x\ge6L$.

For $5L\le r_x\le6L$, write $u=x-p_x$, $\delta_a=p_x-a$,
$d=u+\delta_a$, and
\[
 c=\frac{4(1-\theta)}{\sqrt{1+\|d\|^2}},\qquad
 b_x=\frac{4\theta}{\sqrt{1+r_x^2}}
       +\frac{\theta'}{r_x}(h_p-h_a-d_t).
\]
Since $h_p-h_a-d_t\le-L$, both coefficients are nonnegative
and $c+b_x>0$. With $v_x=\nabla_x\widehat f=cd+b_xu$ and
$\tau_*=c/(c+b_x)$,
\begin{align*}
 cd&=\tau_*v_x+\frac{cb_x}{c+b_x}\delta_a,\\
 (D_wa)^T\delta_a&=A^2(\Gamma'(w_i)(1-g_i))_i\ge0,\\
 q_{\rm eff}&=q-\frac{cb_x}{c+b_x}(D_wa)^T\delta_a\le q,\\
 \nabla_z\widehat f&=\nabla P+(D_zw)^Tq_{\rm eff}
                 -\tau_*(D_za)^Tv_x.
\end{align*}
If the full gradient is at most $\epsilon_0$, then
$\|\frac{cb_x}{c+b_x}\delta_a\|\le4+\epsilon_0$.
Equation \eqref{eq:wide-phase-vector} applies and forces $a=p_x$.
The workspace radial derivative is then
$h'(r_x)-d_t\theta'(r_x)\ge h'(r_x)>\epsilon_0$, a contradiction.

For $2L\le r_x\le5L$, the phase gradient is that of
$P+(1-\eta)B+\eta B_{\rm lin}$. A small gradient implies $w_i>1$
by \eqref{eq:wide-phase-localization}; hence
$\partial_{r_x}\widehat f=h'(r_x)+\eta'(B_{\rm lin}-B)
\ge h'(r_x)>\epsilon_0$. This covers the flat gaps between bands too.

For $r_x\le2L$, the phase is $F_{\rm lin}+\zeta E_z$.
Let $d_z=\|z-z_*\|$ and
$Q_z=F_{\rm lin}(z_*)+\mu d_z^2/2$. For $d_z\le2L$ and
$s_0\in[0,1]$,
\[
 \partial_{d_z}(F_{\rm lin}+s_0E_z)
 =(1-s_0\chi_z)\partial_{d_z}F_{\rm lin}
   +s_0\chi_z\mu d_z+s_0\chi_z'(Q_z-F_{\rm lin})
 \ge\mu d_z.
\]
For $d_z>2L$, $E_z$ vanishes and
\eqref{eq:wide-phase-localization} excludes a small phase gradient.
Moreover $E_z\le0$ and $\mu r_x^2/2\le e_t$ imply
\[
 \partial_{r_x}\widehat f
 =(1-\zeta)h'(r_x)+\zeta\mu r_x
   +\zeta'[E_z+\mu r_x^2/2-h_p-e_t]
 \ge\min\{h'(r_x),\mu r_x\}.
\]
Thus a full gradient at most $\epsilon_0$ forces
$d_z,r_x\le\epsilon_0/\mu$, placing $X$ in the exact quadratic
ball, where $\|\nabla\widehat f\|=\mu\|X-p\|$.
This proves uniqueness of the critical point and the stronger
gradient-dependent certificate in \eqref{eq:wide-core-interface}.

\par\smallskip\noindent\textbf{Gap and budgets before the public completion.}
All changes to the source mean are uniformly $O(t)$, so coercivity
is preserved. At its minimum,
$\widehat f(p)=F_b(z_*)-d_t-e_t$ is public by orthogonal invariance.
Set $f_0=\widehat f-\widehat f(p)$. Since the origin has
$r_x=AL>8L$,
$f_0(0)=F_b(0)-F_b(z_*)+d_t+e_t$.
Testing $w_i=1$ gives $t\le F_b(0)-F_b(z_*)\le12t$;
the upper bound also follows from $B\ge-12t$, $P\ge0$.
As $d_t/t=513/L\le513/64$ and $e_t/t=1/50$,
$t\le f_0(0)\le22t$.

The cutoff contrasts in \eqref{eq:wide-core-formula} have
(value, gradient, Hessian) orders
$C:(1,1,1)$, $h_p-h_a-d_t:(L,1,1)$,
$B_{\rm lin}-B:(t,L,1)$, $E_z:(t,L,1)$, and
$\mu r_x^2/2-h_p-e_t:(t,L,1)$ on their cutoff supports.
For the linear contrast use $\|w\|\le R$, $\|q_*\|\le24L$;
for $E_z$ use its matching value and gradient at $z_*$ and
\eqref{eq:wide-tangent-bounds}. Workspace cutoff derivatives are
$O(1/L)$ and $O(1/t)$. The product rule yields
$\Lip(f_0)\le CL$, $\|D^2f_0\|\le C$.
For each array, \eqref{eq:wide-noise-maximum} gives
\begin{equation}\label{eq:wide-noise-cutoff}
 \Lip_{(z,x)}((1-\psi)N_j)
 \le(1+2R/L)\Lambda_j.
\end{equation}
Here $|N_j(w,x)|\le R\Lambda_j$ and the soft map is a contraction.
The factor is numerical, independent of $t$.

\par\smallskip\noindent\textbf{An explicit public completion.}
Set
\[
 R_z=16R,\quad R_x=512L,\quad S=4R_z,\quad
 B_{\rm out}=8S=B_0L,\quad a_{\rm c}=L/(8S).
\]
With fixed sufficiently large $C_M$, define the public function and cutoff
\[
\begin{gathered}
 M(X)=f_0(0)+P(z)+h(x)+C_Mt,\\
 \chi_{\rm rec}(X)=
 [1-S_5((\|z\|-R_z)/R_z)]
 [1-S_5((\|x\|-R_x)/R_x)].
\end{gathered}
\]
The preceding contrast bounds and $|h(x-a)-h(x)|\le4AL$ give
$0<M-f_0\le Ct$. Define the rectangular sample
\[
 G_{1,j}=\chi_{\rm rec}[f_0+(1-\psi)N_j]+(1-\chi_{\rm rec})M.
\]
On $\|z\|\ge R_z$, $E_z=0$ and the remaining nonpublic terms
factor through $w$, with $w$ gradient bounded by $24L+9147$.
Their radial derivative is attenuated by $\alpha^3$, giving
\[
 \partial_{\|z\|}f_0
 \ge\alpha\|z\|/5-(24L+9147)\alpha^3>4L.
\]
On $\|x\|\ge R_x$, $r_x\ge384L$, so every new hidden modification
and every workspace derivative of the original correction vanish.
Since $\|a\|\le128L$, the outward derivative of $h(x-a)$ exceeds
$2$. The same weaker positive bounds hold for $M$.
Both coordinate radial derivatives of $\chi_{\rm rec}$ are nonpositive;
the product term $(f_0-M)\partial\chi_{\rm rec}$ is nonnegative.
Thus the rectangular transition cannot contain an
$\epsilon_0$-stationary point. It preserves the origin and the core.
Its mean has gradient $O(L)$ and Hessian $O(1)$; its noise has
Lipschitz constant at most $C\Lambda_j$ by
\eqref{eq:wide-noise-cutoff} and $\|\nabla\chi_{\rm rec}\|=O(1/L)$.

Write $r_X=\|X\|$. For $r_X\ge S$ the rectangular cutoff is zero,
so every sample is already $M$. Its total radial derivative is
at least $2$: either $\|z\|\ge r_X/\sqrt2$ and the $P$ contribution
suffices, or $\|x\|\ge r_X/\sqrt2$ and the $h$ contribution does.
Choose $L_{\rm out}=C_{\rm out}L>\Lip(M)$ and put
\[
 U_{\rm out}(r_X)=M(0)+L_{\rm out}r_X,\qquad
 \chi_S(r_X)=1-S_5((r_X-S)/S).
\]
On $[S,2S]$ use the public scalar function
$\chi_SM+(1-\chi_S)U_{\rm out}$. Since $U_{\rm out}\ge M$,
\[
 \partial_{r_X}[\chi_SM+(1-\chi_S)U_{\rm out}]
 =\chi_S\partial_{r_X}M+(1-\chi_S)L_{\rm out}
       +\chi_S'(M-U_{\rm out})\ge2.
\]
For $2S\le u\le4S$ define the positive derivative
\[
 p_{\rm out}(u)=[1-S_5((u-2S)/(2S))]L_{\rm out}
              +S_5((u-2S)/(2S))\mu a_{\rm c}u,
\]
and set
\[
 c_t=U_{\rm out}(2S)+\int_{2S}^{4S}p_{\rm out}(u)\,du
                         -\frac{\mu a_{\rm c}}2(4S)^2.
\]
For $u\ge B_{\rm out}$ use
\[
 p_{\rm ext}(u)=\mu a_{\rm c}
 \left[B_{\rm out}+\int_{B_{\rm out}}^u
       (1-S_5((v-B_{\rm out})/B_{\rm out}))\,dv\right].
\]
The complete sample formula is
\begin{equation}\label{eq:wide-public-formula}
G_j(X)=
\begin{cases}
 G_{1,j}(X),&r_X\le S,\\
 \chi_S(r_X)M(X)+(1-\chi_S(r_X))U_{\rm out}(r_X),
       &S\le r_X\le2S,\\
 U_{\rm out}(2S)+\displaystyle\int_{2S}^{r_X}p_{\rm out}(u)\,du,
       &2S\le r_X\le4S,\\
 c_t+\mu a_{\rm c}r_X^2/2,&4S\le r_X\le B_{\rm out},\\
 c_t+\mu a_{\rm c}B_{\rm out}^2/2
       +\displaystyle\int_{B_{\rm out}}^{r_X}p_{\rm ext}(u)\,du,
       &r_X\ge B_{\rm out}.
\end{cases}
\end{equation}
Flat quintic endpoints match the first two jets. The derivative on
$[2S,\infty)$ is at least $\mu L/4>\epsilon_0$.
Beyond $2B_{\rm out}$ it is constant and positive, and everywhere
it is $O(L)$ with derivative $O(1)$. The transition on $[S,2S]$
also has these bounds because its contrast is $O(Lr_X)$.
Since $U_{\rm out}(2S)\ge2L_{\rm out}S$ and the subtracted
quadratic is $\mu LS$, taking $C_{\rm out}>\mu$ ensures
$0<c_t\le C_Gt$. This proves coercivity, the public collar, and
all deterministic budgets without appealing to an unspecified extension.

The only random part of \eqref{eq:wide-public-formula} is
$\chi_{\rm rec}(1-\psi)N_j$. Equation
\eqref{eq:wide-noise-maximum} gives \eqref{eq:wide-compiled-noise}.
Outside $\mathcal P$, for $r_X\le S$, the formula is exactly
\[
 G_j(X)=\chi_{\rm rec}(X)
          [F^{\rm src}_j(X)-\widehat f(p)]
          +(1-\chi_{\rm rec}(X))M(X);
\]
for $r_X\ge S$ it is public. Every coefficient and additive constant
here is public. This proves the scalar simulation assertion.
The unchanged core contains $p$, every transition has positive
gradient bounded away from zero, and $g(p)=0$; all conclusions follow.
\end{proof}

\paragraph{Exact-collar use.}
The collar in \eqref{eq:wide-public-collar} has a fixed relative
width, and $a_{\rm c}=1/B_0$ is independent of $t$. When two
rescaled pieces agree there, equality holds for each retained label
on an open overlap, including their first two jets wherever defined.
Gluing along that overlap adds no cutoff derivative term.

\begin{lemma}[Charging successive first entries]\label[lemma]{lem:gap-renewal}
Suppose $K$ blocks are processed in order. At every actual birth,
conditional on the preceding history, completion within $n_*$ local
candidate tests has probability at most $1/16$. A crossing candidate
is tested before its response and can start the next block at the same
point. Assume that each paid query or final output is noncrossing for
at most one current block. If $N+1\le Kn_*/4$, the probability of
completing all blocks in $N$ queries and one output is at most $1/16$.
\end{lemma}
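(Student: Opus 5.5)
The plan is to mirror the fast-stage counting of \Cref{lem:stage-charging}, now at the level of blocks instead of inner stages. Call block $k$ \emph{fast} if it is born and completed within $n_*$ of its own local candidate tests; an unborn block has fast indicator zero. The only new issue compared with \Cref{lem:stage-charging} is that a single paid query or the final output may cross several blocks at once. We first fix an accounting convention that assigns each test to blocks. A query that crosses block $k$ is tested against block $k$ before its response is returned. Immediately afterwards, the same point is offered as the first candidate of block $k+1$, and so on until it is noncrossing for some current block. This processing makes every birth a stopping point of the candidate sequence, so the conditional hypothesis at each actual birth applies.

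\textbf{Step 1 (conditional product bound).} For a fixed index set $k_1<\dots<k_j$, let $F_{k}$ be the event that block $k$ is fast. Apply the tower property at the birth of $k_j$, where the indicators of the earlier fast events are already measurable, and use the assumed conditional bound $1/16$. Iterating gives $\PP(\bigcap_a F_{k_a})\le16^{-j}$. As before, this needs no independence between block durations.

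\textbf{Step 2 (counting).} Each block's tests consist of its crossing candidates together with its noncrossing candidates. By the assumption that a query or output is noncrossing for at most one current block, the noncrossing tests are charged disjointly, so their total over all blocks is at most $N+1$.

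Crossing tests can be shared, but each block has at most one crossing test, namely its completing one. A slow block, one not completed within $n_*$ tests, must therefore use at least $n_*-1$ noncrossing tests. If fewer than $\lceil K/2\rceil$ blocks were fast, more than $K/2$ blocks would be slow, consuming more than $(K/2)(n_*-1)$ noncrossing tests. Since $n_*$ is large, in particular $n_*\ge2$ as guaranteed by $\ell\ge16$ in \Cref{lem:wide-coarse-entry}, this exceeds $Kn_*/4\ge N+1$, a contradiction. Completing all blocks therefore forces at least $\lceil K/2\rceil$ fast blocks.

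A union bound over subsets of that size then gives probability at most $\binom{K}{\lceil K/2\rceil}16^{-\lceil K/2\rceil}\le2^{-K}$.

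\textbf{Main obstacle.} The delicate point is the boundary case of small $K$: $2^{-K}\le1/16$ needs $K\ge4$. For $K<4$, we instead use a single block directly. Completing all blocks requires the last block to complete. With $N+1\le Kn_*/4<n_*$, the first block, in particular, must be fast, so its conditional bound gives probability at most $1/16$.

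We therefore bound by $\min\{2^{-K},\text{single-block }1/16\}$ in general: the first argument applies when $K\ge4$, and the second when $K\le3$. In both cases the probability is at most $1/16$.

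The remaining care is to verify the off-by-one in the crossing accounting, namely that the shared crossing point is counted as a local test of each block it enters. We also check that padding stopped runs with a point that never passes a gate preserves the counts.
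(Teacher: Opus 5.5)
Your proof is correct and follows the paper's argument: you bound the fast-block indicators by conditioning at successive births, charge noncrossing tests disjointly to queries, take a union bound over the possible fast sets, and treat small $K$ by a direct single-block argument. The paper only counts more tightly: a slow completed block has at least $n_*+1$ tests, hence at least $n_*$ noncrossing ones, so at most $(N+1)/n_*\le K/4$ blocks are slow, which gives $\lceil 3K/4\rceil$ fast blocks and the bound $2^{-2K}\le 1/16$ already for $K\ge 2$; this sharper count also removes your need to borrow $n_*\ge 2$ from \Cref{lem:wide-coarse-entry}, which the lemma itself does not assume (for instance, $n_*=1$ with $K\ge 4$ is permitted by $N+1\le Kn_*/4$).
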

\begin{proof}
Let $n_i$ count noncrossing tests of block $i$. A completed block uses
$n_i+1$ tests, including its crossing; hence
\[
 \sum_i n_i\le N+1,\qquad
 \#\{i\text{ completed}:n_i+1>n_*\}\le\frac{N+1}{n_*}\le K/4.
\]
The first inequality charges a noncrossing test to its unique paid
query or output. For the second, each counted block has $n_i\ge n_*$.
Let $I_i$ indicate birth and completion within $n_*$ tests. Set $I_i=0$
if the block is never born or never completed. Earlier indicators are
measurable at a later birth. Repeated conditional expectation, starting
with the largest index in a fixed subset $A$, therefore gives
\[
 \E\prod_{i\in A}I_i\le16^{-|A|}.
\]
On completion of all $K$ blocks, at least $\lceil3K/4\rceil$ are fast.
For $K\ge2$, the union bound gives
\[
 \PP(\text{all complete})
 \le\binom K{\lceil3K/4\rceil}16^{-\lceil3K/4\rceil}
 \le2^K16^{-3K/4}=2^{-2K}\le1/16.
\]
Here $\binom Kj\le2^K$ and $\lceil3K/4\rceil\ge3K/4$ justify
the second inequality. For $K=1$, completion must be fast and the
birth hypothesis gives the result directly.
\end{proof}

\subsection{The two-level static family}
\label{subsec:linear-family}
\begin{proposition}[Normalized two-level family]\label[proposition]{prop:two-level-assembly}
For $t\ge4096$, $\sigma^2\ge1$ and the dimension allocation and
condition \eqref{eq:linear-dimension-normalized}, the samples
\eqref{eq:linear-samples} have a coercive $C^2$ mean with gap
$O(Kt)$, Hessian bound $C_H$, and the moment bound
\eqref{eq:linear-global-budgets}. Their stationary outputs require
completion of the outer chain. In the independent-label experiment,
the allowed budget $N\ge c dKt\sigma^2$ has the early-success
bound \eqref{eq:linear-early-success}, for a universal $c>0$.
\end{proposition}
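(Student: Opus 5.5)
We first fix the construction, since the statement refers to samples \eqref{eq:linear-samples} that we must build from the common block of \Cref{lem:wide-core}. Take an outer one-sided chain of length $K$ in fresh chain coordinates $Z_{\rm out}\in\R^{r'}$, hidden by an independent Haar frame and compressed by the soft map \eqref{eq:soft} with radius $64\sqrt K$. Multiply it by $t$ and stretch its argument by $t$, so its gap is $O(Kt)$ and its gradient stays $O(\sqrt K)$. Outer stage $i$ owns an independent inner block $G^{(i)}_j$ from \Cref{lem:wide-core}, built on shared inner coordinates $X\in\R^{r+m}$ with independent frames and tensor arrays. The blocks are mixed by suffix weights $\omega_i(w_{\rm out})\ge0$ with $\sum_i\omega_i=1$, defined from the outer activations exactly as the noise suffix cutoff $\tau(\|g_{>i}\|)$ in \eqref{eq:multi-noise}, so only a bounded-energy suffix is live. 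The outer successor term is corrected by a gate $1-\chi$ that tests entry of $X$ into the quadratic core of the current block around $p^{(i)}$. This mirrors \eqref{eq:mean-definition}, with the workspace well replaced by the block energy. Antithetic pairing and hidden label permutation are applied exactly as in \eqref{eq:scaled-samples}.

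\textbf{Regularity.} We will establish the properties in this order.
\begin{enumerate}
\item The mean is $C^2$ and coercive, with gap $O(Kt)$. For the gap we evaluate on the graph where each block sits at its minimum, using $g\ge0$, $g(0)\le22t$, and the outer bound $12Kt$.
\item The mixture gradient obeys $\|\sum_i\omega_i\nabla g_i\|\le C_G\sqrt t$ by the triangle inequality.
\item The weight derivatives multiply differences $g_i-g_{i'}$. These are $O(t)$ everywhere, because all blocks share the same public collar \eqref{eq:wide-public-collar} and continuation. The product with $\|D\omega\|=O(1/t)$, after the $t$-stretch, therefore contributes $O(1)$, with no factor $K$ because of the suffix structure.
\item The Hessian bound $C_H$ follows the same way.
\item For the noise moment we apply \eqref{eq:wide-compiled-noise} with the array index over the live suffix, giving $\E\Lip^2=O(t+K+\sigma^2)$, which is \eqref{eq:linear-global-budgets}.
\end{enumerate}

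\textbf{Stationary localization.} First, the outer localization argument of \Cref{lem:base-chain} and \Cref{lem:global-geometry} applies: the correction cannot cancel gradients, by the same separation estimate \eqref{eq:app-workspace-separation} now applied to the block gradient bound. Second, small total gradient forces every outer $w_i>1$, hence a single live block, namely the last. Third, the certificate \eqref{eq:wide-core-interface} then places $X$ in that block's core. Together these show that stationarity requires completing the outer chain.

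\textbf{Query cost.} We run a prefix simulator over outer stages. At outer stage $i$ the current block's hidden frames and arrays are fresh by the birth argument of \Cref{lem:prefix-simulator}. Outside the core region $\mathcal P^{(i)}$, responses are public affine images of one source evaluation, so \Cref{lem:wide-coarse-entry} bounds each birth's fast-entry probability by $1/16$ within $n_*\ge ctm\sigma^2$ tests. \Cref{lem:gap-renewal} then accumulates these over $K$ blocks, with $N+1\le Kn_*/4$, giving $N\ge cdKt\sigma^2$. This is \eqref{eq:linear-early-success}, once the outer future-projection error from \eqref{eq:app-pbad} and the pair-collision error are added.

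\textbf{Main obstacle.} We expect the hardest step to be verifying the hypothesis of \Cref{lem:gap-renewal} that each query is noncrossing for at most one current block. This requires that, before entry, the outer gate plus mixture really expose only the current block's single source evaluation. In particular, blocks $i'>i$ must have zero weight and contribute no value, which uses their shared public exterior and vanishing future activations. We would check this case by case through the failed and crossing formulas, analogous to \eqref{eq:app-failed-mean}--\eqref{eq:app-crossing-mean}.
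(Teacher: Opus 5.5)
Your architecture is essentially the paper's: an outer one-sided chain scaled by $t$, convex suffix ownership of independent common blocks, a gate around the current block's minimizer that cancels the corresponding outer chain term, then coarse first entry and \Cref{lem:gap-renewal}. The gap is in stationary localization.

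You claim the gate correction cannot cancel gradients ``by the same separation estimate \eqref{eq:app-workspace-separation}''. That estimate is a magnitude domination, and here the magnitudes go the wrong way.
\begin{itemize}
\item The correction must cancel an outer chain term of size $\Theta(t)$; in the paper this is $C_0t\,d_i$ with $C_0$ large.
\item The gate switches within the core radius $L=\sqrt t$, so its inner gradient reaches order $\sqrt t$ (order $C_0\sqrt t$ in the paper).
\item On the gate's support the block equals $\frac\mu2\|Z-p_i\|^2$, whose gradient is at most $\mu L=\sqrt t/100$.
\item Relocating the gate does not help: away from the core only $\|\nabla g_i\|>\epsilon_0$ is guaranteed.
\end{itemize}
The missing idea is sign coherence. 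Inside the exact isotropic core, $\chi_i$ is a nonincreasing radial function of $\rho=\|Z-p_i\|$, and the one-sided chain gives $d_i\le0$. Hence in the single-owner case $\nabla_Zf=[\mu+C_0t\,d_i\chi_i'(\rho)/\rho](Z-p_i)$, with bracket at least $\mu$. The correction reinforces the block gradient, so a small gradient forces $\rho\le\epsilon_0/\mu<L/2$, where $\chi_i=1$ and all correction jets vanish.

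Second, the base ordering of \Cref{lem:global-geometry} (small inner gradient, hence near the owner's minimizer, hence outer localization) does not transfer. There are three reasons you do not address.
\begin{itemize}
\item In an ownership transition, the inner gradient is a convex combination of gradients of two unrelated blocks, and these can cancel away from every core.
\item The outer derivative of the mixture multiplies block \emph{values} (differences of order $t$), not gradients. Unlike the $D_za$ term, it is therefore not small at a near-stationary point. Your step 3 gives an $O(1)$ term, the same order as the outer barrier after the $t$-stretch. The outer chain thus needs a large prefactor $C_0$ to turn this into a perturbation $E$ of norm at most $1/100$.
\item Outside the current core, the correction deletes the outer barrier of the coordinate it hides.
\end{itemize}
The paper resolves this with staggered cutoffs: corrections die at $s_i=1/1024$, while ownership only starts changing at $s_i=1/256$. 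It then uses $q_w\le0$ coordinatewise and splits at the first $w_i\le1$. If $s_i\ge1/256$, the barrier at coordinate $i$ survives. If $s_i<1/256$, block $i$ is the sole owner with no ownership derivatives, and only then is the inner sign argument used.

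Two smaller points.
\begin{itemize}
\item Weights ``exactly as'' $\tau(\|g_{>i}\|)$ do not sum to one, so you need the telescoping differences $T_2(s_i)-T_2(s_{i-1})$.
\item The $K$-free bound \eqref{eq:linear-owner-derivatives} for these weights needs summation by parts, using monotonicity of $s_j$ and bounded variation of $T_2',T_2''$. The suffix energy alone does not suffice, because many $s_j$ can lie in the transition band.
\end{itemize}
Conversely, the obstacle you single out is mere bookkeeping: each query is tested against successive blocks before its scalar is returned.
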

\begin{proof}

Use coordinates $(Y,Z)\in\R^{r_o}\times\R^{r_i+m}$, $Z=(z,x)$,
and take $K$ independent length-$t$ blocks
$G_{i,j}=g_i+N_{i,j}$ from \Cref{lem:wide-core}, all on the same
$Z$ space. Here $N_{i,j}$ is the corresponding copy of the
cutoff-modified noise $Z_j$ in \eqref{eq:wide-compiled-noise}.
Let $p_i=(z_{*,i},p_{x,i})$ be their minima.
Their public exteriors are identical. Let $g_{K+1}$ be one fixed
public block with that same exterior and set $N_{K+1,j}=0$.
The common boundary and the Lipschitz bound imply
\begin{equation}\label{eq:linear-inner-interfaces}
 \|g_i-g_{K+1}\|_\infty\le Ct,\qquad
 \E\max_{i\le K}\Lip(N_{i,j})^2
 \le C\sigma^2\frac{m+t+\log(2Kt)}m.
\end{equation}
For the first bound, inside the common boundary compare to a boundary
point at distance $O(L)$; outside it the difference is zero.
For the second apply \eqref{eq:wide-compiled-noise} to the
independent blocks for label $j$.

Independently draw a Haar $K$-frame $U^o$ in $\R^{r_o}$. Define
\begin{align*}
 v&=Y/t,&R_o&=C_R\sqrt K,&S_o&=C_S\sqrt K,\\
 w&=(U^o)^T\frac{v}{\sqrt{1+\|v\|^2/R_o^2}},&
 P_o(v)&=\frac{S_o^2}{5}\left(\sqrt{1+\|v\|^2/S_o^2}-1\right),\\
 b_1&=-\Phi(w_1),&b_i&=-\Psi(w_{i-1})\Phi(w_i)\quad(i>1),&
 B&=\sum_{i=1}^K b_i,\\
 a_i&=S_5(4w_i-1),&s_i&=\sum_{\ell=i}^K a_\ell^2.
\end{align*}
The constants $C_R,C_S$ will be chosen below. Define two staggered
cutoffs, with their numerical thresholds displayed directly:
\begin{align*}
 T_1(s)&=1-S_5\left(\frac{s-64^{-2}}{32^{-2}-64^{-2}}\right),&
 T_2(s)&=1-S_5\left(\frac{s-16^{-2}}{8^{-2}-16^{-2}}\right),\\
 d_i(w)&=b_i(w)T_1(s_i),&
 \alpha_i&=T_2(s_i),\quad\alpha_0=0,\quad\alpha_{K+1}=1,\\
 \omega_i&=\alpha_i-\alpha_{i-1},&
 \chi_i(Z)&=1-S_5(2\|Z-p_i\|/L-1).
\end{align*}
Thus $\omega_i\ge0$, $\sum_i\omega_i=1$. Correction ends at
suffix norm $1/32$, strictly before ownership changes at $1/16$.
The support of $\chi_i$ lies in
$\mathcal P_i=\{Z:\|x-p_{x,i}\|<8L\}$.
For a fixed sufficiently large $C_0$ the samples are
\begin{equation}\label{eq:linear-samples}
 F_j(Y,Z)=C_0t\left[P_o(Y/t)+B(w)
                  -\sum_{i=1}^K(1-\chi_i(Z))d_i(w)\right]
           +\sum_{i=1}^{K+1}\omega_i(w)G_{i,j}(Z).
\end{equation}
Every label specifies one static function. The mean $f$ is obtained
by replacing $G_{i,j}$ with $g_i$.

\paragraph{Ownership derivatives without a factor $K$.}
For scalar coefficients $A_i$ with $|A_i|\le M$ write
\[
 W(a)=\sum_i\omega_iA_i
      =A_{K+1}+\sum_{j=1}^KT_2(s_j)(A_j-A_{j+1}).
\]
Put $A'_n=\sum_{j\le n}(A_j-A_{j+1})T_2'(s_j)$ and
$B'_n=\sum_{j\le n}(A_j-A_{j+1})T_2''(s_j)$. Then
\[
 \partial_{a_l}W=2a_lA'_l,\qquad
 \partial^2_{a_ka_l}W=2\mathbf1_{k=l}A'_l
                         +4a_ka_lB'_{\min(k,l)}.
\]
Since $s_j$ decreases with $j$, summation by parts bounds
$|A'_n|+|B'_n|\le CM$, using the finite total variations of
$T_2'$ and $T_2''$. If $j_0$ is the first index where either
derivative is nonzero, all derivative terms are supported on the
suffix with $\sum_{l\ge j_0}a_l^2=s_{j_0}\le1/64$.
Cauchy--Schwarz therefore gives
\begin{equation}\label{eq:linear-owner-derivatives}
 \|\nabla_a W\|+\|D_a^2W\|\le CM.
\end{equation}
If no such index exists both derivatives vanish. Composition with
the coordinatewise quintic preserves this estimate. The first-derivative
argument also holds for vector coefficients with norm at most $M$.

Let $M(w,Z)=\sum_i\omega_i(w)g_i(Z)$. Apply
\eqref{eq:linear-owner-derivatives} to $g_i-g_{K+1}$ and their
$Z$ gradients; subtracting the common public function is essential
because the individual $g_i$ are unbounded. It gives
\begin{equation}\label{eq:linear-mean-mixture}
 \|\nabla_wM\|+\|D_w^2M\|\le Ct,\quad
 \|D_wD_ZM\|\le CL,\quad
 \|\nabla_ZM\|\le CL,\quad\|D_Z^2M\|\le C.
\end{equation}
The last two estimates also follow directly from the convex weights.
At most one $d_i$ has a nonzero two-jet: a live $b_i$, $i>1$,
forces $a_{i-1}=1$, while a live $T_1(s_i)$ requires
$s_i<1/1024$. This also bounds its value and first two derivatives
by numerical constants. Flat endpoints are included.

The product rule, $L^2=t$, and $Y=tv$ give
\begin{equation}\label{eq:linear-global-budgets}
\begin{gathered}
 \Lip(f)\le C(\sqrt K+\sqrt t),\qquad \|D^2f\|\le C_H,\\
 \E\Lip(F_j)^2\le C_N\left[K+t+
                 \sigma^2\frac{m+t+\log(2Kt)}m\right].
\end{gathered}
\end{equation}
In particular the correction has $ZZ,YZ,YY$ Hessian orders
$C_0t/L^2$, $C_0/L$, $C_0/t$, respectively. The outer chain
has Hessian $O(C_0/t)$ since its $w$ gradient is $O(\sqrt K)$
and the compactifier's second derivative is $O(1/R_o)$.
For noise, put $\Lambda_j=\max_i\Lip(N_{i,j})$.
Its $Z$ gradient has norm at most $\Lambda_j$, while the common
zero exterior implies $\|N_{i,j}\|_\infty\le2B_0L\Lambda_j$.
Equation \eqref{eq:linear-owner-derivatives} bounds its $Y$ gradient
by $C\Lambda_j/L$. Apply \eqref{eq:linear-inner-interfaces}.
This is one global sample Lipschitz estimate; there is no sum of
$K$ layerwise noise budgets.

\paragraph{Every small-gradient point completes the outer chain.}
For fixed $Z$ let
$q_w=\nabla_w[B-\sum_i(1-\chi_i)d_i]$. The signs of the
one-sided chain, the decreasing cutoff, and the increasing activations give
\begin{equation}\label{eq:linear-outer-sign}
\begin{aligned}
 q_w&=\sum_i[1-(1-\chi_i)T_1(s_i)]\nabla b_i
       -\sum_i(1-\chi_i)b_iT_1'(s_i)\nabla s_i\le0,\\
 \|q_w\|&\le C\sqrt K,\qquad
 |\langle w,q_w\rangle|\le CK.
\end{aligned}
\end{equation}
The pairing bound uses bounded $s\Phi'(s)$ and $s\Psi'(s)$.
The single live suffix derivative has nonzero coordinates only at
$w_l\in[1/4,1/2]$, and its pairing is bounded by
$C\sum_{l\ge i}|w_la_la_l'|\le C\sqrt K\sqrt{s_i}$.
These are global bounds, including unbounded query points.

Choose $C_0$ large enough that
$E=\nabla_wM/(C_0t)$ has norm at most $1/100$. Then
\begin{equation}\label{eq:linear-outer-gradient}
 C_0^{-1}\nabla_Y f=\nabla P_o(v)+(D_vw)^T(q_w+E).
\end{equation}
At $r_v=\|v\|>0$ its radial component is at least
$P_o'(r_v)-CK/r_v-1/100$. Choose a numerical $C_z$, and then
$C_S$ sufficiently large, so this exceeds $1$ whenever
$r_v\ge C_z\sqrt K$. Thus small gradients lie inside that ball.
Inside it, with $\alpha=(1+\|v\|^2/R_o^2)^{-1/2}$,
\[
 \nabla P_o+(D_vw)^Tq_w=\alpha U^oq_w+c_vv,\qquad
 c_v=\frac1{5\sqrt{1+\|v\|^2/S_o^2}}
                   -\frac{\alpha^2\langle w,q_w\rangle}{R_o^2}.
\]
Increasing $C_S$ relative to $C_z$ and then choosing $C_R$ gives
$\alpha\ge.99$ and $.19\le c_v\le.21$, uniformly in $K,t$.

Suppose $i$ is the first coordinate with $w_i\le1$.
If $s_i\ge1/256$, all correction jets with index $j\le i$ vanish.
A later correction can remove only a fraction of the successor's
contribution to coordinate $i$, never its own contribution
$\partial_i b_i$. Hence \eqref{eq:wide-chain-barrier} still gives $q_{w,i}\le-1$
for $-1\le w_i\le1$ and $q_{w,i}\le0$ for $w_i<-1$. The $u_i^o$ component of
$\alpha U^oq_w+c_vv$ is at most
$-.99+.21/.99<-.77$ in the former case and at most $-.19$
in the latter. The perturbation has norm at most $.01$.

If instead $s_i<1/256$, all earlier coordinates have activation one,
so $\omega_i=1$ and every other owner weight and all ownership
derivatives are zero. Only $d_i$ can have a nonzero two-jet.
The $Z$ gradient is exactly
$\nabla g_i+C_0td_i\nabla\chi_i$. Outside $\|Z-p_i\|\le L$,
the correction is zero and \Cref{lem:wide-core} excludes a small
gradient. Inside that ball it is
\[
 \nabla_Zf=
 \left[\mu+C_0td_i\frac{\chi_i'(\rho)}\rho\right](Z-p_i),
 \qquad \rho=\|Z-p_i\|,
\]
with its value at zero given by continuity. Since $d_i\le0$ and
$\chi_i'\le0$, the bracket is at least $\mu$.
Our choice $\epsilon_0=2^{-28}$ is below $\mu/4$ and $1/100$.
A full gradient at most $\epsilon_0$ forces
$\rho\le\epsilon_0/\mu<L/2$, where $\chi_i=1$ and every
correction jet vanishes. Now $E=0$, and the preceding coordinate
argument again contradicts smallness. The flat endpoints cover
$s_i=1/256$. We have proved
\begin{equation}\label{eq:linear-stationary}
 \|\nabla f(Y,Z)\|\le\epsilon_0
       \quad\Longrightarrow\quad w_1,\ldots,w_K>1.
\end{equation}
There the only owner is the public terminal block $K+1$.
No alignment of gradients of different inner blocks was assumed.

\paragraph{Coercivity and initial gap.}
All $g_i\ge0$, $B\ge-CK$, and the subtracted corrections are
nonnegative. Thus $f\ge-CKt$. As $\|Y\|\to\infty$, $P_o(Y/t)$
is coercive; as $\|Z\|\to\infty$, every inner block equals the same
coercive public exterior. These bounds also handle simultaneous escape.
At the origin $\chi_1=0$, $w=0$, and $b_1$ cancels exactly, giving
$f(0,0)=g_1(0)\le22t$. Conversely take $w_i=1$ for all $i$,
$v=U^o\mathbf1/\sqrt{1-K/R_o^2}$, and $Z=p_{K+1}$.
All corrections vanish, $B=-K\Phi(1)$, and
$P_o(v)\le\|v\|^2/10$. Therefore
\begin{equation}\label{eq:linear-gap-budget}
 cKt\le f(0,0)-\inf f\le C_GKt.
\end{equation}

\subsection{Scalar simulation and accumulated information cost}

Process each query and the final output from the first uncompleted
block $i$. Test $Z\in\mathcal P_i$ before returning any scalar. A hit
completes $i$ and discloses its frames and the outer direction $u_i^o$.
Test the next block at the same point. These extra disclosures strengthen
the algorithm; they impose no restriction on its query locations.

First use independent label arrays, retained on every reuse. In a
reference process set undisclosed outer coordinates to zero in all
hidden terms, leaving the public $P_o(Y/t)$ unchanged. Before a coarse
hit of block $i$, $\chi_i=0$. If the actual undisclosed coordinates
satisfy $|w_l|\le1/4$ for $l\ge i$, then $a_l=0$ for those indices.
Consequently all owner weights beyond $i$ are zero; $s_i=0$, so
$d_i=b_i$ cancels the entire $b_i$ term; and $b_l=d_l=0$ for $l>i$.
The earlier weights and corrections depend only on the disclosed
prefix. The response is therefore exactly
\begin{equation}\label{eq:linear-scalar-reference}
 A_{i,j}(Y,Z)+\omega_i(w)G_{i,j}(Z),
\end{equation}
where $A_{i,j}$ uses only completed data and public parameters. In
particular it is independent of $u_i^o,\ldots,u_K^o$ and of all later
inner frames and arrays. If $\omega_i=0$, no source query is needed;
otherwise \Cref{lem:wide-core} simulates the response before entry
using one source evaluation. Counting both cases as tests only
weakens the lower bound.

Conditional on the disclosed outer prefix, the residual outer Haar
frame is independent of this reference history. Its compactified query
has norm at most $R_o$. Apply the spherical tail bound used in
\eqref{eq:app-pbad} to these unconditioned reference queries, including
the output and birth positions. A union bound gives
\begin{equation}\label{eq:linear-outer-error}
 p_{\rm out}\le2K(N+K+1)
       \exp\left(-\frac{r_o-K-1}{32R_o^2}\right).
\end{equation}
Couple the actual and reference transcripts until the first projection
violation. Equation \eqref{eq:linear-scalar-reference} proves equality
up to that time.

Before block $i$ is born, both reference responses and membership
notifications are pointwise independent of its frames and all its
label-indexed arrays. The candidate and selected label at birth have
already been chosen, before the membership test. Hence that block
has its original independent prior at birth, including when its label
has been used in other blocks or several blocks are crossed at one
point. In a single-block reduction the completed arrays simulate
$A_{i,j}$ as independent auxiliary information. They need not be
disclosed in the actual finite-label experiment. It follows from
\Cref{lem:wide-coarse-entry} that the conditional probability of a
completion within $n_*$ local tests is at most $1/16$.

Allocate $m,r_i,r_o$ comparable shares of the total dimension $d$.
There is a numerical $C$ such that
\begin{equation}\label{eq:linear-dimension-normalized}
 d\ge C(t+K)\log(2+Kt\sigma^2)
\end{equation}
ensures the hypotheses of \Cref{lem:wide-coarse-entry},
$m\ge t+\log(2Kt)$, $n_*\ge16$, and $p_{\rm out}\le1/64$ for
$N=\lfloor Kn_*/8\rfloor$. To verify this for all larger dimensions,
write $u=d/(t+K)$. The logarithms of $t(n_*+t)$ and
$K(N+K+1)$ are at most $C\log(2+Kt\sigma^2)+\log u$, since
$n_*\le Ctm\sigma^2$. The dimension exponents in
\eqref{eq:wide-local-dimension} and \eqref{eq:linear-outer-error}
are at least $cu$. Choosing the constant in
\eqref{eq:linear-dimension-normalized} sufficiently large makes
$cu-\log u$ dominate both logarithms. The requirements on $\ell$
in \eqref{eq:wide-local-budget} follow from $s\asymp m$ and
$\sigma^2\ge1$. In particular $n_*\ge ctd\sigma^2$.

For this choice of $N$, $N+1\le Kn_*/4$.
\Cref{lem:gap-renewal} bounds reference completion by $1/16$.
Until a projection violation, an incomplete output has an undisclosed
outer coordinate with absolute value at most $1/4$, contradicting
\eqref{eq:linear-stationary}. Therefore
\begin{equation}\label{eq:linear-early-success}
 \PP(\|\nabla f(\widehat X)\|\le\epsilon_0)
 \le1/16+p_{\rm out}\le1/16+1/64.
\end{equation}
This includes arbitrary transcript-measurable outputs.

\end{proof}

\subsection{Physical scaling and a fixed finite family}
\begin{proof}[Proof of \Cref{thm:extended-gap}]
\phantomsection\label{proof:two-level-bound}
We prove \Cref{thm:extended-gap}. Choose a small numerical $\kappa>0$
and a large numerical $C_*>0$, and put
\begin{equation}\label{eq:linear-physical-choice}
\begin{gathered}
 t=\left\lfloor\kappa\frac{L_0^2}{\epsilon^2}\right\rfloor,
 \quad \sigma^2=\kappa\frac{L_0^2}{\epsilon^2},\quad
 K=\left\lfloor\min\left\{\kappa\frac{L_0^2}{\epsilon^2},
                    \frac{\Delta H}{C_*L_0^2}\right\}\right\rfloor,\\
 \lambda=\frac{C_H\epsilon^2}{\epsilon_0^2H},
 \qquad b=\frac{C_H\epsilon}{\epsilon_0H},
 \qquad \widetilde F_j(X)=\lambda F_j(X/b).
\end{gathered}
\end{equation}
Choose $c_e$ small enough that $\epsilon\le c_eL_0$ implies
$t\ge4096$. The theorem's gap condition ensures $K\ge1$.
For $\widetilde f=\E\widetilde F_j$, the chain rule and
\eqref{eq:linear-global-budgets} give
\begin{equation}\label{eq:linear-scaled-budgets}
\begin{aligned}
 \Lip(\nabla\widetilde f)&\le\lambda C_H/b^2=H,\\
 \|\nabla\widetilde f(X)\|\le\epsilon
 &\ \Longrightarrow\ \|\nabla f(X/b)\|\le b\epsilon/\lambda
        =\epsilon_0,\\
 \E\Lip(\widetilde F_j)^2
 &\le C\epsilon^2\epsilon_0^{-2}(K+t+\sigma^2)
        \le L_0^2/4,\\
 \widetilde f(0)-\inf\widetilde f
 &\le\lambda C_GKt
        \le (C_HC_G\kappa/\epsilon_0^2)KL_0^2/H\le\Delta.
\end{aligned}
\end{equation}
The third line uses $m\ge t+\log(2Kt)$ and then small $\kappa$;
the fourth uses \eqref{eq:linear-gap-budget} and then large $C_*$.
Fix these constants in that order. Since $K\le t+1\le2t$, the
dimension assumption of \Cref{thm:extended-gap} implies
\eqref{eq:linear-dimension-normalized}.

For $N=\lfloor Kn_*/8\rfloor$, form a finite pool of $M\ge64N^2$
independent complete $K$-block arrays and their negatives, with an
independent hidden uniform permutation of the $2M$ labels. Negation
is applied to the whole noise function. The finite mean is exactly
$\widetilde f$. The coupling \eqref{eq:app-partner} adds at most
$N^2/M\le1/64$ to \eqref{eq:linear-early-success}, even for
adaptive label selection and reuse.

Let $E_{\rm adm}$ mean that the finite average of the squared global
sample Lipschitz constants is at most $L_0^2$. The third line of
\eqref{eq:linear-scaled-budgets} and Markov's inequality imply
$\PP(E_{\rm adm})\ge3/4$. Condition the instance law on this
event only after the unconditional coupling and first-entry proof:
\begin{equation}\label{eq:linear-conditioned-success}
 \PP(\text{success by }N\mid E_{\rm adm})
 \le\frac{1/16+1/64+1/64}{3/4}=1/8.
\end{equation}
All families in this law satisfy \eqref{eq:model-assumptions}.
The unconditioned numerator is at most
$1/16+1/64+1/64<11/48$, and admissibility has probability at
least $3/4$. Thus \Cref{lem:expected-cost} supplies one fixed
admissible family with expected cost at least $N/3$.
Finally, floors change the positive quantities below by at
most numerical factors, so
\[
 N\ge cKdt\sigma^2
 \ge c\frac{dL_0^4}{\epsilon^4}
          \min\left\{\frac{\Delta H}{L_0^2},
                         \frac{L_0^2}{\epsilon^2}\right\}
 =c\frac{dL_0^2}{\epsilon^4}
          \min\left\{\Delta H,\frac{L_0^4}{\epsilon^2}\right\}.
\]
This is the asserted enlarged linear-gap range.

\end{proof}

%% file: sections/fixed_depth_proofs.tex
\section{Fixed-Depth Gap Amplification}\label[appendix]{app:fixed-depth}
We prove \Cref{cor:fixed-depth}. The two-level construction of
\Cref{app:linear-gap} is the base case. Iterating it requires two
additional properties: small gradients must imply low energy, and
first entry into that low-energy set must itself be expensive.
Both properties are established below before the recursion is used.
Throughout the normalized construction, $t\ge4096$ and $\sigma^2\ge1$.

\subsection{An energy cutoff and the normalized two-level block}
Write $C_b=e\Phi(+\infty)$, so $-C_b\le b_i\le0$, and fix
\[
 \theta=\frac1{16(C_b+1)},\qquad
 \psi(s)=1-S_5((s-\theta)/\theta).
\]
Here $S_5$ is the clipped quintic from \Cref{app:geometry}. Thus
$\psi=1$ on $(-\infty,\theta]$, $\psi=0$ on
$[2\theta,\infty)$, and $\psi'\le0$.

\begin{lemma}[Energy cutoff]\label[lemma]{lem:energy-cutoff}
Let $g\ge0$ be $C^2$, let $E=g(0)>0$, and suppose
$\|\nabla g\|\le L$, $\|D^2g\|\le H$. Set
$\chi=\psi(g/E)$. Then
\begin{equation}\label{eq:energy-cutoff-bounds}
 \|\nabla\chi\|\le CL/E,\qquad
 \|D^2\chi\|\le C(H/E+L^2/E^2).
\end{equation}
If $\|\nabla g\|\le e$ implies $g\le\theta E$, then for
every $a\le0$ and $\lambda\ge0$,
\begin{equation}\label{eq:energy-cutoff-guard}
 \|\nabla(g+\lambda a\chi)\|\le e
 \quad\Longrightarrow\quad \chi=1,\quad\nabla\chi=0.
\end{equation}
\end{lemma}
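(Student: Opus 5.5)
The lemma has two parts: the derivative bounds \eqref{eq:energy-cutoff-bounds} and the guard property \eqref{eq:energy-cutoff-guard}. I will handle them in that order, both by direct chain-rule computation on the composition $\chi=\psi(g/E)$.

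\emph{Derivative bounds.} Write $u=g/E$. Then
\[
 \nabla\chi=\psi'(u)\frac{\nabla g}{E},\qquad
 D^2\chi=\psi''(u)\frac{\nabla g\otimes\nabla g}{E^2}+\psi'(u)\frac{D^2g}{E}.
\]
Since $\psi(s)=1-S_5((s-\theta)/\theta)$, we have $|\psi'|\le\max|S_5'|/\theta=15/(8\theta)$ and $|\psi''|\le\max|S_5''|/\theta^2=10/(\sqrt3\theta^2)$, using the extrema from \Cref{lem:chain-derivatives}. Because $\theta$ is a fixed numerical constant, inserting $\|\nabla g\|\le L$ and $\|D^2g\|\le H$ gives \eqref{eq:energy-cutoff-bounds} with $C$ depending only on $\theta$. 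The clamped quintic is $C^2$ with flat endpoints, so $\chi$ is $C^2$ and these bounds hold globally, including where $u\le\theta$ or $u\ge2\theta$.

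\emph{Guard property.} Fix $a\le0$ and $\lambda\ge0$. Then
\[
 \nabla(g+\lambda a\chi)=\Bigl(1+\frac{\lambda a}{E}\psi'(u)\Bigr)\nabla g .
\]
Since $\lambda a\le0$ and $\psi'\le0$, the scalar multiplier is at least $1$. Hence $\|\nabla(g+\lambda a\chi)\|\le e$ forces $\|\nabla g\|\le e$. The hypothesis then gives $g\le\theta E$, that is $u\le\theta$. On $(-\infty,\theta]$ we have $\psi=1$ and $\psi'=0$ by the flat endpoint of $S_5$, so $\chi=1$ and $\nabla\chi=\psi'(u)\nabla g/E=0$.

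The only step that needs care is the sign of the multiplier. We must use $\lambda a\le0$ jointly with $\psi'\le0$, and the multiplier must be a scalar so that it cannot rotate $\nabla g$; both facts are visible from the display above. Everything else is routine bookkeeping of the constants of $S_5$.
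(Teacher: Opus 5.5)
Your proposal is correct and matches the paper's proof essentially step for step. The chain-rule formulas for $\nabla\chi$ and $D^2\chi$, together with the bounded derivatives of the fixed quintic, give \eqref{eq:energy-cutoff-bounds}; the scalar multiplier $1+\lambda a\psi'(g/E)/E\ge1$ then transfers smallness of the modified gradient to $\nabla g$, which forces $g/E\le\theta$, where $\psi=1$ and $\psi'=0$.
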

\begin{proof}
The chain rule gives
\[
 \nabla\chi=\frac{\psi'(g/E)}E\nabla g,\qquad
 D^2\chi=\frac{\psi'(g/E)}E D^2g+
       \frac{\psi''(g/E)}{E^2}\nabla g\nabla g^T.
\]
Bounded derivatives of the fixed quintic give
\eqref{eq:energy-cutoff-bounds}. Moreover,
\[
 \nabla(g+\lambda a\chi)
 =\left(1+\frac{\lambda a}E\psi'(g/E)\right)\nabla g,
 \qquad 1+\lambda a\psi'(g/E)/E\ge1.
\]
Thus a small modified gradient implies a small original gradient,
then $g/E\le\theta$, where the cutoff is identically one.
\end{proof}

\begin{proposition}[Fixed-depth assembly]\label[proposition]{prop:fixed-depth-assembly}
For every fixed $p\ge2$, the recursion below preserves the
low-energy first-entry interface \eqref{eq:depth-interface}.
Its mean and sample budgets are \eqref{eq:depth-mean-budgets}
and \eqref{eq:depth-sample-budget}; its dimension and query
bounds are \eqref{eq:depth-dimension} and
\eqref{eq:depth-total-cost}. The scaling below proves
\Cref{cor:fixed-depth}, with constants depending only on $p$.
\end{proposition}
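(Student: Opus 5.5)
The argument is an induction on the depth $p$. The base case $p=2$ is the normalized two-level family of \Cref{prop:two-level-assembly}, shifted so that $\min=0$. The shifted family satisfies a \emph{low-energy first-entry interface}, which I take to be the content of \eqref{eq:depth-interface}.

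For a level-$p$ normalized block $g_p\ge0$ with $E_p=g_p(0)$, the interface asks for five things:
\begin{itemize}
\item the gap bounds $cE_p\le g_p(0)\le CE_p$ with $E_p\asymp t\prod_{l=2}^pK_l$;
\item $\Lip(g_p)^2\le C_p(t+\sum_lK_l)$ and $\|D^2g_p\|\le C_p$;
\item the localization $\|\nabla g_p\|\le e_p\Rightarrow g_p\le\theta E_p$;
\item a public exterior shared by all independent copies;
\item the entry bound: first entry into $\{g_p\le 2\theta E_p\}$ within $n_p\asymp d\,t\sigma^2\prod_lK_l$ candidate tests has probability at most $1/16$, where each response outside that set is public postprocessing of at most one lower-level evaluation.
\end{itemize}
For $p=2$, localization is \eqref{eq:linear-stationary} followed by the terminal public block, which lands in the core of \Cref{lem:wide-core}. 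The entry bound is \eqref{eq:linear-early-success} together with \Cref{lem:gap-renewal}.

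For the inductive step I repeat the outer-chain construction \eqref{eq:linear-samples} with a new Haar $K_{p+1}$-frame and outer variable $Y/E_p$. The inner blocks are $K_{p+1}$ independent copies of $g_p$, each normalized by dividing by $E_p$, together with one public terminal copy, and ownership is through the weights $\omega_i$. The spatial gate $\chi_i(Z)$ is replaced by the energy cutoff $\chi_i=\psi(g_{p,i}/E_p)$. The energy bounds of \eqref{eq:energy-cutoff-bounds} then replace $\|\nabla\chi_i\|=O(1/L)$. The scaling is chosen so that the correction's $C^2$ contribution is $O(1)$, namely $\lambda|d_i|\,\|D^2\chi\|\lesssim E_p(H/E_p+L^2/E_p^2)$, using $L^2\lesssim t+\sum K\lesssim E_p$.

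\Cref{lem:energy-cutoff}, applied with $a=d_i\le0$, supplies the guard. In the owner-$i$ case $s_i<1/256$, a small gradient of $g_{p,i}+C d_i\chi_i$ forces $\chi_i=1$ and $\nabla\chi_i=0$, so the correction jet vanishes. The outer coordinate argument of \eqref{eq:linear-outer-sign}--\eqref{eq:linear-outer-gradient} then forces every $w>1$ and hence terminal ownership. The same argument as before gives the new gap $\asymp E_pK_{p+1}$, the derivative bounds via \eqref{eq:linear-owner-derivatives} applied to the differences $g_{p,i}-g_{p,K+1}$, and the additive sample budget $K_{p+1}+\Lip(g_p)^2$; these are \eqref{eq:depth-mean-budgets} and \eqref{eq:depth-sample-budget}. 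The noise is handled by a maximum over $\prod K_l$ arrays, which costs only a $\log$ term as in \eqref{eq:wide-noise-maximum}.

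The main obstacle is the interface at the next level. The first-entry target for level $p+1$ must again be a low-energy set, and reaching it has to force entry into the low-energy sets of at least $3/4$ of the copies in order. I would argue as follows. Below energy $2\theta E_{p+1}$ the outer chain must be nearly complete, and the guard forces each owner's energy test before its successor is exposed. The reference process sets undisclosed outer coordinates to zero, as in \eqref{eq:linear-scalar-reference}. Each child's first entry is charged with the inductive $1/16$ bound at its birth, and the costs are summed with \Cref{lem:gap-renewal}, giving $n_{p+1}\asymp K_{p+1}n_p$. Outer projections are controlled by \eqref{eq:linear-outer-error}, and a union bound over the $p$ levels is what produces \eqref{eq:depth-dimension}.

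Finally, choose $t,\sigma^2,K_l\asymp L_0^2/\epsilon^2$, and cut $K_p$ at $\Delta H/(C_{*,p}L_0^2\prod_{l<p}K_l)$. Then rescale exactly as in \eqref{eq:linear-physical-choice}, apply the antithetic pool, the admissibility conditioning and \Cref{lem:expected-cost}, and obtain \eqref{eq:depth-total-cost}. The constants degrade by a factor depending on $p$ at each level, which is why $c_p,c_{e,p}$ must depend on the fixed depth $p$.
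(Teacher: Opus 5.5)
Your overall architecture is the paper's: the energy cutoff $\psi(g_i/E)$ replaces the spatial gate, the guard is \Cref{lem:energy-cutoff}, owner derivatives act on differences, and the proof uses reference responses, renewal counting and rescaling. But the step you flag as the main obstacle is not closed. The interface asks for first entry into the sublevel set $\{g<2\theta E_{\rm new}\}$. By contrast, \eqref{eq:linear-early-success} and \Cref{lem:gap-renewal} bound stationary success, i.e.\ completion of \emph{all} children. A sublevel set is not a stationarity event, so these results do not even give your base case, and your claim that low energy forces entry into $3/4$ of the copies is unproven.

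The missing ingredient is a quantitative energy lower bound that holds before the current child is entered. Suppose $b$ children are completed and all undisclosed outer coordinates are below $1/4$. Then the next chain term is cancelled and later ones vanish, so $P\ge0$, $b_i\ge-C_b$ and $-m_K\ge K$ give $g\ge C_0E(K-C_bb)$ \eqref{eq:depth-low-energy-cost}. Hence entry needs $h=\lfloor K/(4C_b)\rfloor$ child entries \eqref{eq:depth-paid-target}, which is a small fraction of $K$, not $3K/4$. Because \Cref{lem:gap-renewal} is stated only for completing all blocks, you then need a partial-completion count. Completing $h$ children within $hn/8$ noncrossing tests forces $3h/4$ fast children, giving probability $2^{-2h}$ \eqref{eq:depth-renewal-tail}. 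The paper takes $K_l\ge32C_b$, so $h\ge8$ and the entry probability is at most $2^{-16}+1/256<1/16$. This fixes the budgets $n_l=\lfloor h_ln_{l-1}/8\rfloor-1$. It is applied already at level two, with the coarse sets $\mathcal P_i$ as child targets.

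Two clauses of your interface are also not established, and the scaling is incomplete.

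\textbf{Localization.} The clause $\|\nabla g\|\le e\Rightarrow g\le\theta E$ is exactly the hypothesis of \Cref{lem:energy-cutoff} one level up. It does not follow from $w_i>1$ and terminal ownership: at such a point $g=C_0E(F_b-m_K)+\bar g$, and you control only the child part. The paper adds strong convexity of the terminal outer phase in compactified coordinates (\eqref{eq:depth-phase-hessian}, using convexity of the one-sided chain on $w_i\ge1$). This gives $F_b-m_K\le20\|\nabla_vF_b\|^2$ \eqref{eq:depth-phase-energy}, and so $g\le20Ee^2/C_0+\theta E\le\theta E_{\rm new}$. The paper also checks that $m_K$ is public, by orthogonal invariance, so that it can be subtracted from the samples.

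\textbf{Public exterior.} This clause is false at every level $\ge2$: the hidden outer frame enters $B(w)$ and the weights $\omega_i$ for arbitrarily large $\|Y\|$. What \eqref{eq:linear-owner-derivatives} actually needs is $\|g_i-\bar g\|_\infty\le C_dE$. The paper propagates this by \eqref{eq:depth-difference}, with $C_d\ge8C_b$ and $C_0$ large enough that the mixture error $CC_d/C_0$ is at most $1/100$. The noise needs the amplitude bound $A_j\le C\sqrt t\,\Lambda_j$, which comes from the bottom exterior alone, so that the outer derivatives $CA_j/E$ sum geometrically.

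\textbf{Scaling.} Cutting only $K_p$ fails for $C_{*,p}L_0^2\le\Delta H<C_{g,p}L_0^2q^{p-2}$, with $q=L_0^2/\epsilon^2$, because there $K_p$ rounds to zero. In that range the paper invokes the depth-$(p-1)$ bound instead.
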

\begin{proof}
Choose $C_0$ in \eqref{eq:linear-samples} sufficiently large once
and for all. Denote its two-level mean by $f_2$, with bottom length
$t$ and outer length $K_2$. For each level, subtract its minimum
from every sample. This minimum is public by the following calculation.
For a bare outer phase write
\[
 P(v)=\frac{S^2}{5}\left(\sqrt{1+\|v\|^2/S^2}-1\right),
 \quad w=U^Tv/\sqrt{1+\|v\|^2/R^2},
 \quad F_b(v)=P(v)+B(w),
\]
where $R=C_R\sqrt K$, $S=C_S\sqrt K$, and $K$ is its length.
The minimum $m_K=\min_vF_b(v)$ is independent of the orthonormal
frame $U$, by orthogonal invariance. The proof of
\eqref{eq:linear-stationary}, with no inner mixture or correction,
puts every minimizer in $w_i>1$. Nonnegative child means and
nonnegative corrections give
\begin{equation}\label{eq:depth-minimum}
 f_2\ge C_0t F_b\ge C_0t m_{K_2},\qquad
 \min f_2=C_0t m_{K_2}.
\end{equation}
For equality, choose a bare minimizer and a minimizer of the public
terminal child: all activations are one, all corrections vanish,
and the terminal child is the unique owner.
Since $P\ge0$ and $B\ge-C_bK$, while the test point $w_i=1$
has $P\le K/5$ and $B=-K\Phi(1)\le-2K$, we have
\begin{equation}\label{eq:depth-bare-min}
 -C_bK\le m_K\le-K.
\end{equation}
Put $g_2=f_2-\min f_2$ and $E_2=g_2(0)$. At zero the first
outer term cancels and the first child has public energy in
$[t,22t]$. Taking $C_0\ge22$ gives
\begin{equation}\label{eq:depth-base-energy}
 C_0K_2t\le E_2\le(C_b+1)C_0K_2t.
\end{equation}

To strengthen the location certificate \eqref{eq:linear-stationary}
to an energy bound, consider the outer phase.
On the span of $U$, use the compactified vector
$y=v/\sqrt{1+\|v\|^2/R^2}$ and set $u=\|y\|^2$,
$D=1-u/R^2$. The inverse is $v=y/\sqrt D$, and
\begin{equation}\label{eq:depth-phase-hessian}
 \begin{aligned}
 \nabla_y(P(y/\sqrt D))&=a(u)y,
 & a(u)&=\frac1{5D^2\sqrt{1+u/(S^2D)}},\\
 \frac{a'(u)}{a(u)}&=\frac2{R^2D}
             -\frac1{2S^2D^2(1+u/(S^2D))}.
 \end{aligned}
\end{equation}
The Hessian is $aI+2a'yy^T$. On $\|y\|\le C_z\sqrt K$,
choose $C_S,C_R$ large relative to $C_z$. Its tangential eigenvalue
is $a\ge1/6$, and its radial eigenvalue is at least
\[
 a\left(1-\frac{u}{S^2D^2(1+u/(S^2D))}\right)\ge1/10.
\]
The same formula applies in all ambient directions. The chain $B$
is convex on $w_i\ge1$: for $s,u\ge1$ and $a=(2s-1)^{-2}$,
\[
 \Psi''(s)<0,\quad\Phi''(u)=-u\Phi'(u)<0,\qquad
 \frac{\Psi'(s)^2}{-\Psi''(s)\Psi(s)}
   =\frac{2a}{3-2a}\le2<\frac{u\Phi(u)}{\Phi'(u)}.
\]
Thus each two-variable Hessian of $-\Psi(s)\Phi(u)$ is positive
definite, as checked in the proof of \Cref{lem:wide-core}.
The terminal phase in $y$ coordinates is therefore $1/10$-strongly
convex on the convex set
$\{\|y\|\le C_z\sqrt K,\ U^Ty\ge\mathbf1\}$.
Both the bare minimizer and every terminal small-gradient point
belong to this set by radial localization. The inverse compactifier
has differential norm $D^{-3/2}\le2$ there. Strong convexity gives
\begin{equation}\label{eq:depth-phase-energy}
 F_b(v)-m_K
 \le5\|\nabla_y(F_b(v(y)))\|^2
 \le20\|\nabla_vF_b(v)\|^2.
\end{equation}
For the first inequality, apply the strong-convexity inequality
between $y$ and its minimizer and maximize
$\langle\nabla_yF_b,y-y_*\rangle-\|y-y_*\|^2/20$ over
$y-y_*$. The second follows from the inverse differential bound.

At a two-level point with $\|\nabla g_2\|\le e\le\epsilon_0$,
\eqref{eq:linear-stationary} selects the public terminal child.
Its gradient is at most $e$, and \Cref{lem:wide-core} places it
in its quadratic core. The outer gradient gives
$\|\nabla_vF_b\|\le e/C_0$. Hence
\begin{equation}\label{eq:depth-base-small-energy}
 g_2\le\frac{20t e^2}{C_0}+\frac{e^2}{2\mu}
       \le\theta C_0K_2t\le\theta E_2.
\end{equation}
Choose a fixed $e\in(0,\epsilon_0]$ small enough for the middle
inequality for all $t\ge4096$ and $K_2\ge1$.
This establishes the geometric base case.

\subsection{Recursive scalar formula}
Suppose a child family $G_{i,j}=g_i+N_{i,j}$ has a nonnegative
coercive $C^2$ mean, the same public initial energy $E$, and
the following properties, uniformly over its independent copies:
\begin{equation}\label{eq:depth-interface}
 \begin{gathered}
 \|\nabla g_i\|\le e\ \Longrightarrow\ g_i\le\theta E,
 \qquad \|g_i-\bar g\|_\infty\le C_dE,\\
 \text{first entry into }\{g_i<2\theta E\}
 \text{ within }n\text{ local tests has probability at most }1/16.
 \end{gathered}
\end{equation}
Here $\bar g$ is one public fixed copy, and the probability statement
allows independent auxiliary information, adaptive sample reuse,
and an arbitrary final candidate. We prove the two-level instance
of this interface and its induction below.

Use the definitions $w,b_i,a_i,s_i,d_i,\omega_i$ from
\Cref{subsec:linear-family}, with outer length $K$ and
$v=Y/E$. Let all children share coordinates $X$, take an independent
outer frame, and use the public noiseless child $g_{K+1}=\bar g$.
Define
\begin{equation}\label{eq:depth-scalar}
 \begin{aligned}
 \chi_i(X)&=\psi(g_i(X)/E),\\
 F_j^{\rm raw}(Y,X)&=C_0E\left[P(Y/E)+B(w)
                   -\sum_{i=1}^K(1-\chi_i(X))d_i(w)\right]
                   +\sum_{i=1}^{K+1}\omega_i(w)G_{i,j}(X),\\
 G_j(Y,X)&=F_j^{\rm raw}(Y,X)-C_0E m_K.
 \end{aligned}
\end{equation}
Its mean is denoted $g$. Every term is a globally defined scalar
function. Nonnegativity and the terminal test used in
\eqref{eq:depth-minimum} show that $\min g=0$. At zero,
$\chi_1=0$, the first outer term cancels, and the first child
is the only owner. Thus the new public energy is
\begin{equation}\label{eq:depth-energy-recursion}
 E_{\rm new}=E-C_0E m_K,\qquad
 C_0KE\le E_{\rm new}\le(C_b+1)C_0KE.
\end{equation}
The public radial part is coercive in $Y$. The convex child mixture
is at least $\bar g-C_dE$ and is coercive in $X$; all remaining
terms are bounded. This proves coercivity also along simultaneous
escape in both coordinate blocks.

The owner derivative estimate \eqref{eq:linear-owner-derivatives}
applied to $g_i-\bar g$ bounds the outer mixture error by
$CC_d/C_0$. Choose $C_0$ so this is at most $1/100$.
The outer radial localization and coordinate argument in
\eqref{eq:linear-outer-sign}--\eqref{eq:linear-stationary}
then apply without change until the single-owner case. In that case
the exact inner gradient is
\[
 \nabla_X g=\nabla g_i+C_0E d_i\nabla\chi_i.
\]
Since $d_i\le0$, \Cref{lem:energy-cutoff} makes $\chi_i=1$ at
a small-gradient point. The outer correction vanishes, restoring
the same coordinate contradiction. Hence
\begin{equation}\label{eq:depth-terminal}
 \|\nabla g\|\le e\quad\Longrightarrow\quad
 w_1,\ldots,w_K>1,\qquad
 g=C_0E(F_b-m_K)+\bar g.
\end{equation}
This proves the stationary certificate from the child energy
property; it does not assume alignment of different child gradients.
At such a point, \eqref{eq:depth-phase-energy} and the child
energy property now give
\begin{equation}\label{eq:depth-small-energy-induction}
 g\le\frac{20Ee^2}{C_0}+\theta E
   \le\theta C_0KE\le\theta E_{\rm new}.
\end{equation}
The middle inequality follows by taking $C_0\ge2$ and decreasing
the same fixed $e$ if necessary. Equations
\eqref{eq:depth-base-small-energy} and
\eqref{eq:depth-small-energy-induction} prove the small-gradient
energy property at every fixed depth.

\paragraph{Differences of copies and regularity.}
In comparing two copies, the public radial term and the public
minimum cancel. Each chain has absolute value at most $C_bK$,
and at most one correction has a nonzero two-jet and magnitude
at most $C_b$. The difference of two convex child mixtures is at
most $2C_dE$. Consequently
\begin{equation}\label{eq:depth-difference}
 \frac{\|g-\bar g_{\rm new}\|_\infty}{E_{\rm new}}
 \le 2C_b(1+1/K)+\frac{2C_d}{C_0K}\le C_d.
\end{equation}
Choose $C_d\ge8C_b$ large enough for the base family, then
$C_0\ge4$ sufficiently large for all preceding requirements.
The base comparison follows by the same estimate with child
differences $O(t)$, supplied by their common exterior.
At higher levels no common exterior is required.

If child gradients and Hessians are bounded by $L,H$, the owner
estimate and \eqref{eq:energy-cutoff-bounds} give
\begin{equation}\label{eq:depth-regularity-recursion}
 \Lip(g)\le C(L+\sqrt K),\qquad
 \|D^2g\|\le C(H+L^2/E+1/E).
\end{equation}
Indeed the correction has inner, mixed, and outer Hessian bounds
$C(H+L^2/E)$, $CL/E$, and $C/E$; the mixed term is absorbed
by $L/E\le(L^2/E+1/E)/2$. The owner mixture has the same
orders after subtracting $\bar g$. The bare outer term contributes
$O(\sqrt K)$ to the gradient and $O(1/E)$ to its Hessian,
using $R\asymp\sqrt K$. Thus for depth $p$,
\begin{equation}\label{eq:depth-mean-budgets}
 \Lip(g_p)^2\le C_p\left(t+\sum_{l=2}^pK_l\right),\qquad
 \|D^2g_p\|\le C_p,
\end{equation}
provided $t\ge4096$, the internal lengths are comparable to $t$,
and $1\le K_p\le Ct$. The energy recursion makes
$L^2/E$ bounded at each level; the finite iteration accounts for
the subscript $p$.

\paragraph{The sample budget.}
Let $\Lambda_j$ be the maximum Lipschitz constant among all bottom
noise fields in array $j$. Their common zero exterior gives a
uniform amplitude bound $A_j\le C\sqrt t\Lambda_j$.
Higher-level noise is a convex mixture, so its amplitude remains
at most $A_j$. Its inner derivative is bounded by the largest
child derivative; its outer derivative is at most $CA_j/E$.
These derivatives occupy orthogonal coordinate blocks, whence
\[
 \Lip(N_{{\rm new},j})^2\le
       \max_i\Lip(N_{i,j})^2+CA_j^2/E^2.
\]
Since energies increase by at least $C_0$ at each recursion,
\[
 \Lip(N_{p,j})^2
 \le\Lambda_j^2+CA_j^2\sum_{l\ge0}(C_0^lt)^{-2}
 \le C\Lambda_j^2.
\]
There are $\prod_{l=2}^pK_l$ bottom blocks. Applying
\eqref{eq:wide-noise-maximum} to their independent arrays yields
\begin{equation}\label{eq:depth-sample-budget}
 \E\Lip(G_{p,j})^2\le C_p\left[t+\sum_{l=2}^pK_l+
 \sigma^2\frac{m+t+\log(2t\prod_{l=2}^pK_l)}m\right].
\end{equation}
All cutoffs use deterministic means, so noise enters linearly.
Negating the complete bottom array therefore negates the complete
noise at the root, as required by the antithetic realization.

\subsection{Charging low-energy entry and completing the induction}
We give the information argument separately from the geometric
energy property. At a level of length $K$, process all current-child
entries at a query before returning its value. Let $b$ be the number
of completed children. If the current child has not been entered
and every undisclosed outer coordinate has magnitude at most $1/4$,
the next outer term cancels, all later terms vanish, and
\begin{equation}\label{eq:depth-low-energy-cost}
 f^{\rm raw}\ge-C_bC_0bE,\qquad
 g\ge C_0E(K-C_bb).
\end{equation}
Here $P\ge0$, the child mixture is nonnegative, the first $b$
chain terms are at least $-C_b$ each, and
$-m_K\ge K$. For $b\le K/(4C_b)$,
\begin{equation}\label{eq:depth-paid-target}
 g\ge\tfrac34C_0KE
   >2\theta(C_b+1)C_0KE\ge2\theta E_{\rm new}.
\end{equation}
Thus entry into $\{g<2\theta E_{\rm new}\}$ requires at least
$h=\lfloor K/(4C_b)\rfloor$ child entries or a projection violation.

At level two, the child entries are the coarse sets
$\mathcal P_i$ of \Cref{lem:wide-coarse-entry}. At subsequent
levels they are the energy sets in \eqref{eq:depth-interface}.
Off the corresponding set the cutoff is zero, and the scalar
response is exactly
\begin{equation}\label{eq:depth-reference}
 A_{i,j}(Y,X)+\omega_i(w)G_{i,j}(X)-C_0E m_K,
\end{equation}
with $A_{i,j}$ depending only on public and completed data.
The derivation is the cancellation calculation
\eqref{eq:linear-scalar-reference}; it uses only the cutoff being
zero and the future activations vanishing. In particular it does
not require a quadratic core at higher levels.

For clarity, a target test in this first-entry experiment returns
the scripted answer ``miss'' until its first hit; the policy is
continued with that scripted answer in the reference process.
We bound the first true hit along these unconditioned reference
queries. We do not condition the reference law on earlier true
misses or on good projections. Before a child is born,
\eqref{eq:depth-reference} and all prior notifications are pointwise
independent of its complete frames and arrays. At birth its prior
is therefore fresh, even if its sample label has already been used
in other children. Completed arrays can be supplied as independent
auxiliary data in a single-child reduction. A query crossing several
targets is processed in order before its scalar answer, exactly as
in \Cref{lem:gap-renewal}.

For the induction, the same counting argument gives a stronger tail.
Suppose each fresh child has probability at most $1/16$ of entry
within $n$ of its tests. In at most $hn/8$ noncrossing tests,
completion of $h$ children, with $n\ge16$, forces at least
$3h/4$ fast children: a slow completion uses at least $n-1$
noncrossing tests and
$(hn/8)/(n-1)\le2h/15<h/4$. Successive conditioning at birth
and a union bound over the possible fast sets give
\begin{equation}\label{eq:depth-renewal-tail}
 \PP(h\text{ completions})\le 2^h16^{-3h/4}=2^{-2h}.
\end{equation}
The final output is one further test, so the budgets below can be
decreased by one without changing their constant order.

Assume internal lengths $K_l\ge32C_b$. Start with
$n_1\ge ctm\sigma^2$ from \Cref{lem:wide-coarse-entry}, and set
\begin{equation}\label{eq:depth-local-budgets}
 h_l=\lfloor K_l/(4C_b)\rfloor,\qquad
 n_l=\lfloor h_l n_{l-1}/8\rfloor-1
       \ge c_lm\sigma^2t\prod_{j=2}^lK_j.
\end{equation}
Choose the dimension so the outer projection error at each local
experiment is at most $1/256$. Since $h_l\ge8$,
\eqref{eq:depth-paid-target} and \eqref{eq:depth-renewal-tail}
give entry probability at most $2^{-16}+1/256<1/16$.
This proves the probabilistic part of \eqref{eq:depth-interface}
at each internal level. At the top, $K_p$ may be one: stationarity
requires all $K_p$ children, and \Cref{lem:gap-renewal} gives
constant failure probability up to
\begin{equation}\label{eq:depth-total-cost}
 N\ge c_pm\sigma^2t\prod_{l=2}^pK_l.
\end{equation}

Allocate one coordinate block to the bottom workspace, one to its
phase, and one to each outer level. Within a level all children
share the same blocks. For fixed $p$ the block dimensions are
comparable to $d$. The sufficient dimension condition is
\begin{equation}\label{eq:depth-dimension}
 d\ge C_p\left(t+\sum_{l=2}^pK_l\right)
                 \log\left(2+\sigma^2t\prod_{l=2}^pK_l\right).
\end{equation}
Indeed a local outer error is bounded by
$2K_l(n_l+K_l+1)\exp[-c r_l/K_l]$, as in
\eqref{eq:linear-outer-error}; the bottom error is
\eqref{eq:wide-entry-error}. Writing
$u=d/(t+\sum_lK_l)$, every prefactor has logarithm at most
$C_p\log(2+\sigma^2t\prod_lK_l)+\log u$, and every exponent
is at least $c_pu$. Increasing $C_p$ makes
$c_pu-\log u$ dominate these prefactors. This also supplies
$m\ge t+\log(2t\prod_lK_l)$ and $n_1\ge16$.
The induction uses a conditional local error at each birth,
not a union bound over every node of the construction tree.

\subsection{Scaling and finite samples}
The geometric implication is
\eqref{eq:depth-small-energy-induction}; the probabilistic interface
is \eqref{eq:depth-local-budgets}.
Let $H_p$ be a finite Hessian bound from
\eqref{eq:depth-mean-budgets}, and let $e>0$ be the common
normalized tolerance. Set
\begin{equation}\label{eq:depth-scaling}
 \lambda=\frac{H_p\epsilon^2}{e^2H},\qquad
 b=\frac{H_p\epsilon}{eH},\qquad
 \widetilde G_j(X)=\lambda G_{p,j}(X/b).
\end{equation}
Then the mean has Hessian at most $H$, and an $\epsilon$-stationary
output gives an $e$-stationary normalized output.

Write $q=L_0^2/\epsilon^2$. Choose a sufficiently small
$\kappa_p>0$, and take
\[
 t=K_2=\cdots=K_{p-1}=\lfloor\kappa_pq\rfloor,
 \qquad \sigma^2=\kappa_pq.
\]
The accuracy constant makes all internal lengths at least
$\max\{4096,32C_b\}$. When
$\Delta H/L_0^2\ge C_{g,p}q^{p-2}$, take
\begin{equation}\label{eq:depth-top-length}
 K_p=\left\lfloor\min\left\{\kappa_pq,
          \frac{\Delta H}{C_{g,p}L_0^2q^{p-2}}\right\}\right\rfloor.
\end{equation}
For $p=2$ these choices are exactly the two-level scaling.
Equations \eqref{eq:depth-energy-recursion},
\eqref{eq:depth-sample-budget}, and \eqref{eq:depth-scaling} give
\begin{align*}
 \widetilde g(0)
 &\le C_p\frac{\epsilon^2}{H}t\prod_{l=2}^pK_l
 \le C_p\frac{L_0^2}{H}q^{p-2}K_p\le\Delta,\\
 \E\Lip(\widetilde G_j)^2
 &\le C_p\epsilon^2\left(t+\sum_{l=2}^pK_l+\sigma^2\right)
 \le C_p\kappa_pL_0^2\le L_0^2/4.
\end{align*}
Choose $\kappa_p$ first for the second line, then $C_{g,p}$
for the first. Since all lengths are $O_p(q)$,
\eqref{eq:depth-dimension} follows from
$d\ge C_pq\log(2+q)$. Floors cost only $p$-dependent factors,
and \eqref{eq:depth-total-cost} becomes
\begin{equation}\label{eq:depth-rate-calculation}
 N\ge c_pdq^pK_p
 \ge c_pdq^2\min\{\Delta H/L_0^2,q^{p-1}\}
 =c_p\frac{dL_0^2}{\epsilon^4}
             \min\{\Delta H,L_0^2q^{p-1}\}.
\end{equation}

For the remaining range
$\Delta H/L_0^2<C_{g,p}q^{p-2}$, invoke the depth-$(p-1)$
result. For $s<C_{g,p}q^{p-2}$,
\[
 \min\{s,q^{p-2}\}\ge
 \frac1{\max\{1,C_{g,p}\}}\min\{s,q^{p-1}\}.
\]
Thus it implies \eqref{eq:depth-rate-calculation} after decreasing
$c_p$. Start this induction at \Cref{thm:extended-gap}; increase
$C_{*,p}$ and the dimension constant, and decrease $c_{e,p}$,
to include the depth-$(p-1)$ hypotheses.

Finally take $M\ge64N^2$ independent complete arrays and their
negatives, and hide their pairing by a uniform permutation.
The mean is unchanged exactly. The finite-label coupling adds
at most $1/64$; the outer projection error contributes at most
$1/64$; and the ideal top completion probability is at most
$1/16$. The sample budget makes admissibility have probability
at least $3/4$. As in \eqref{eq:linear-conditioned-success},
\[
 \PP(\text{success by }N\mid\text{admissible})\le1/8,
 \qquad \sup_{\text{admissible families}}\E[\mathrm{cost}]\ge N/3.
\]
This proves the claimed expected-cost lower bound, including
adaptive label selection, reuse, and unrestricted output.

\end{proof}

\subsection{Dependence on recursion depth}
\begin{proposition}[Loss of the initial gradient at growing depth]
\label[proposition]{prop:depth-loss}
The normalized depth-$p$ mean satisfies
$\|\nabla g_p(0)\|=\Phi'(0)$ and
$\Lip(g_p)^2\ge(C_0^2C_S^2/25)\sum_{l=2}^pK_l$.
Consequently every uniform rescaling satisfying the sample
Lipschitz budget obeys
$\|\nabla\widetilde g(0)\|\le CL_0/\sqrt{p-1}$,
where $C$ is independent of $p$.
\end{proposition}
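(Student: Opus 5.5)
The proof has three parts: evaluate the gradient at the origin by induction on depth, bound the Lipschitz constant from below along radial rays in each outer coordinate block, and combine the two under a common rescaling.

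\textbf{Origin gradient.} At depth two, the origin has $w=0$ and $\chi_1=0$. For $i>1$, $s_i=0$, so $d_i=b_i$ and every $b_i$ with $i>1$ vanishes together with its derivative, because $\Psi$ is flat near $0$. For the first term, $d_1=b_1T_1(s_1)$ with $s_1=0$, and the correction $-(1-\chi_1)d_1$ cancels $b_1$ exactly. This cancellation holds on a neighbourhood of the origin in $Y$, since $T_1$ stays equal to $1$ while $s_1<64^{-2}$; it fails only in the directions that move $\chi_1$. The owner weights are locally constant, with $\omega_1=1$, and $\nabla P_o(0)=0$. Hence the only surviving gradient is the $X$-gradient of the first child, plus $C_0E\,b_1\nabla\chi_1$. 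Here $\nabla\chi_1$ vanishes at the origin whenever the child's energy lies on the flat part of $\psi$ or of the coarse cutoff. I therefore expect $\nabla g_p(0)=\nabla g_{\text{child}}(0)$, and by induction this reduces to the bottom block $g$ of \Cref{lem:wide-core} at its origin. That origin lies in the unmodified source region $r_x=AL>8L$ and far from $p$. There the source gradient is $\nabla_z B_T$ through $Dy(0)=I$, which gives $-\Phi'(w_1)u_1=-\Phi'(0)u_1$, plus the workspace part $\nabla h(0-a)=0$ because $a(0)=0$. So the norm is $\Phi'(0)$. The step I would double-check is that the rectangular cutoff $\chi_{\mathrm{rec}}$ equals $1$ near the origin, so that $g$ coincides with the source there.

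\textbf{Lipschitz lower bound.} Each level $l$ has its own outer block $Y_l$. Fix all other coordinates and send $Y_l=\rho e$ to infinity with $\rho\to\infty$. The compactified $w$ then stays bounded with $D_vw\to0$, so the chain, correction and owner terms contribute vanishing derivatives. Meanwhile the radial derivative of $C_0E\,P(Y/E)$ tends to $C_0S/5=C_0C_S\sqrt{K_l}/5$, as in the earlier argument for $\Lip(F_{\rm new})\ge R/5$. The energy normalization $v=Y/E$ cancels exactly against the prefactor $C_0E$. The rays in the different orthogonal blocks can be combined. The ray directions form an orthonormal set and the asymptotic gradients in different blocks are orthogonal, so I move along the unit diagonal direction and obtain $\Lip(g_p)^2\ge(C_0^2C_S^2/25)\sum_{l=2}^pK_l$. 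The main obstacle is justifying that all cross terms vanish uniformly along the joint escape; in particular, the bounded child-mixture terms must have outer derivative $O(\|D_vw\|)\to0$. For this I would invoke \eqref{eq:linear-owner-derivatives} and \eqref{eq:linear-mean-mixture} together with $\|D_vw\|\le\alpha\to0$.

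\textbf{Rescaling.} A uniform rescaling $\lambda g_p(\cdot/b)$ multiplies every gradient by $\lambda/b$. The sample budget gives $\Lip(\widetilde g)\le\E\Lip(\widetilde G_j)\le L_0$. Hence $\lambda/b\le L_0/\Lip(g_p)$ and
\[
\|\nabla\widetilde g(0)\|\le\frac{\Phi'(0)L_0}{\Lip(g_p)}\le\frac{5\Phi'(0)L_0}{C_0C_S\sqrt{\sum_lK_l}}\le\frac{CL_0}{\sqrt{p-1}},
\]
where the last step uses $K_l\ge1$ and the constant $C$ is independent of $p$.
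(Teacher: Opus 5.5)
Your origin computation and your final rescaling coincide with the paper's. The check you flag is fine: near $0$ one has $\|z\|<R_z$ and $\|x\|<R_x$, so $\chi_{\rm rec}\equiv1$. You should also record that the source correction $C$ has zero two-jet there, because $\chi(x-a_{<i})=\chi(0)=1$. Likewise, the parent cutoffs are flat at $0$: at level two $\|p_1\|\ge128\sqrt t$, and at higher levels $g_1(0)/E=1>2\theta$.

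The gap is in the Lipschitz lower bound, exactly at the obstacle you name, and the tool you propose does not close it. For the top block $Y_p$ your argument is complete. Every non-radial level-$p$ term enters through $w_p$, giving $C_0(D_vw)^{T}\bigl(q_w+\nabla_wM/(C_0E)\bigr)=O(\|D_vw\|\sqrt{K_p})\to0$. For $l<p$, however, $Y_l$ belongs to the inner coordinates $X$ of every higher level, so $\nabla_{Y_l}g_p$ is not the gradient of a single level-$l$ block. Already at level $l+1$ it equals $\sum_i\bigl[\omega_i+C_0d_i\psi'(g_i/E)\bigr]\nabla_{Y_l}g_i$, which is a combination over independent copies with different frames. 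The energy-cutoff contributions $C_0d_i\psi'(g_i/E)\nabla_{Y_l}g_i$ are of the same order as the main term and carry no factor $D_vw$. So ``$D_vw\to0$'' says nothing about them, and your proposal gives no reason why the copies' $Y_l$-gradients share a common limit.

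Two facts repair your route.
\begin{itemize}
\item The radial term $C_0EP(Y_l/E)$ is public and identical in every level-$l$ copy. Each copy's $Y_l$-gradient is $C_0\nabla P(v_l)+O(\|D_vw\|\sqrt{K_l})$ uniformly in all other coordinates, by the global bound on $q_w$ and \eqref{eq:linear-owner-derivatives}. Hence any convex owner mixture has the same limit.
\item Coercivity gives $g\ge C_0E[P(v)-C_bK]-C_0Em_K$, and this lower bound propagates through convex mixtures. It drives every child energy at the levels above $l$ to infinity uniformly. Along your joint escape, therefore, all $\psi(g_i/E)$ vanish together with their derivatives. The level-two spatial cutoffs depend only on bottom coordinates, so their derivatives point only in bottom directions.
\end{itemize}

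The paper sidesteps all of this by choosing a different point. Each phase block has dimension larger than $K_l$, so it follows one first-child path and takes $v_l$ orthogonal to that copy's outer frame. Then $w=0$ nearby, the first child is the sole owner with locally constant weights, and the first chain term cancels identically. With the bottom point placed far away, coercivity makes every cutoff vanish on a neighbourhood. Consequently $\nabla_{Y_l}g_p=C_0v_l/\bigl(5\sqrt{1+\|v_l\|^2/(C_S^2K_l)}\bigr)$ holds exactly, and only the radial limit needs to be taken. Your generic-direction version does not need the dimension surplus in each phase block, but it pays for this with the uniform cross-term analysis above.
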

\begin{proof}

The restriction to fixed depth also follows from the gradient at the
initial point. The bottom block agrees with the source, up to an
additive constant, in a neighborhood of zero. There the activations
and the well gradient vanish, the confinement gradient is zero, and
only $-\Phi(w_1)$ contributes. Consequently its initial gradient has
norm $\Phi'(0)$. At every outer level the first child is the sole
owner near zero, its authentication cutoff is zero, and the first
chain term cancels identically. The new radial gradient also vanishes
at zero. Induction therefore gives
\begin{equation}\label{eq:depth-origin-gradient}
 \|\nabla g_p(0)\|=\Phi'(0).
\end{equation}

By contrast, the outer radial gradients can be simultaneously large.
At each level $l\ge2$, the phase-space allocation in
\eqref{eq:depth-dimension} has dimension larger than $K_l$.
Follow the first-child path and choose each normalized coordinate
$v_l$ orthogonal to that level's outer frame, so $w=0$. All ownership weights
are locally constant, with the first child the sole owner.
First choose the bottom coordinates sufficiently far away. Coercivity
then makes each first-child energy exceed its cutoff threshold;
at level two it also places the bottom point outside the spatial
cutoff. Taking the outer radii large preserves these strict inequalities,
because the radial terms and the minimum-normalizing constants are
nonnegative when the cutoffs are zero. Thus all authentication cutoffs
vanish in a neighborhood of these points. The first chain term cancels,
and the other chain terms vanish. In the coordinate block $Y_l$
of the normalized recursion, differentiation of the radial term gives
\[
 \nabla_{Y_l}g_p
 =\frac{C_0v_l}{5\sqrt{1+\|v_l\|^2/(C_S^2K_l)}}.
\]
The child energy used to scale $Y_l$ cancels its prefactor; at level
two the same cancellation uses $t$. These derivatives occupy
orthogonal coordinate blocks. Taking all $\|v_l\|$ to infinity and
using $\Lip(g_p)=\sup_X\|\nabla g_p(X)\|$ yields
\begin{equation}\label{eq:depth-lipschitz-lower}
 \Lip(g_p)^2
 \ge\lim\sum_{l=2}^p\|\nabla_{Y_l}g_p\|^2
 =\frac{C_0^2C_S^2}{25}\sum_{l=2}^pK_l.
\end{equation}
The argument follows one first-child path; it does not require the
frames of different copies at a given level to share an orthogonal
direction.

For the uniform rescaling $\widetilde g(X)=\lambda g_p(X/b)$,
the sample moment constraint implies
\[
 \frac\lambda b\Lip(g_p)=\Lip(\widetilde g)
 \le\E\Lip(\widetilde G_j)
 \le\bigl(\E\Lip(\widetilde G_j)^2\bigr)^{1/2}\le L_0.
\]
The first equality is the scaling rule, the first inequality follows
by averaging sample differences, and the second is Cauchy--Schwarz.
Combining this with \eqref{eq:depth-origin-gradient} and
\eqref{eq:depth-lipschitz-lower} gives
\[
 \|\nabla\widetilde g(0)\|
 =\frac\lambda b\Phi'(0)
 \le\frac{5\Phi'(0)L_0}{C_0C_S\sqrt{\sum_{l=2}^pK_l}}
 \le\frac{5\Phi'(0)L_0}{C_0C_S\sqrt{p-1}}.
\]
Thus, for this recursion and fixed $L_0/\epsilon$, taking $p$
sufficiently large either violates the sample budget or makes the
public origin an $\epsilon$-stationary point. This is consistent
with the fixed-depth theorem, whose accuracy condition depends on $p$.

\end{proof}

\subsection{Goldstein consequence}
\begin{corollary}[Fixed-depth Goldstein bound]\label[corollary]{cor:depth-goldstein}
For fixed $p\ge2$ and suitable positive constants depending
only on $p$, the conditions
$\epsilon\le c_{e,p}L_0$,
$\Delta\epsilon/\delta\ge C_{*,p}L_0^2$, and
$d\ge C_p(L_0^2/\epsilon^2)\log(2+L_0^2/\epsilon^2)$ imply
\begin{equation}\label{eq:depth-goldstein}
 N=\Omega_p\left(\frac{dL_0^2}{\epsilon^4}
 \min\left\{\frac{\Delta\epsilon}{\delta},
             L_0^2\left(\frac{L_0^2}{\epsilon^2}\right)^{p-1}\right\}\right).
\end{equation}
\end{corollary}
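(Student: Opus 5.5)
The plan is to obtain \Cref{cor:depth-goldstein} from the smooth fixed-depth result, \Cref{cor:fixed-depth}, through the smooth-to-Goldstein comparison of \Cref{lem:goldstein-comparison}. This is exactly the route already used for \Cref{thm:goldstein} and \Cref{cor:large-gap-goldstein}. No new construction is needed: the depth-$p$ hard families of \Cref{prop:fixed-depth-assembly}, after the scaling \eqref{eq:depth-scaling}, are globally $L_0$-Lipschitz in the sample-moment sense. They have global gap at most $\Delta$ and a $C^2$ population mean, so they already belong to the nonsmooth class of \Cref{thm:goldstein}.

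Concretely, set $H=\epsilon/\delta$ and apply \Cref{cor:fixed-depth} with gradient tolerance $\epsilon_s=2\epsilon$. By \eqref{eq:app-Goldstein-ineq}, any Goldstein $(\delta,\epsilon)$-stationary output $\widehat x$ satisfies
\[
\|\nabla f(\widehat x)\|\le g_\delta(\widehat x)+H\delta\le2\epsilon .
\]
Hence a Goldstein algorithm solves the smooth task with tolerance $\epsilon_s$ on the same instance, using the same oracle and the same query count. The expected-cost lower bound of \Cref{cor:fixed-depth} therefore transfers verbatim.

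It then remains to translate the hypotheses and the rate, which is bookkeeping with $p$-dependent constants.

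\emph{Accuracy.} The condition $\epsilon_s\le c_{e,p}L_0$ follows from $\epsilon\le(c_{e,p}/2)L_0$.

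\emph{Gap.} The condition $\Delta H\ge C_{*,p}L_0^2$ becomes $\Delta\epsilon/\delta\ge C_{*,p}L_0^2$.

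\emph{Dimension.} The condition $d\ge C_p(L_0^2/\epsilon_s^2)\log(2+L_0^2/\epsilon_s^2)$ is implied by the stated condition in terms of $\epsilon$, since $\epsilon_s>\epsilon$ makes the required right-hand side smaller.

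\emph{Rate.} Substituting into \eqref{eq:fixed-depth-rate} gives
\[
c_p\frac{dL_0^2}{16\epsilon^4}\min\left\{\frac{\Delta\epsilon}{\delta},\,L_0^2\Bigl(\frac{L_0^2}{4\epsilon^2}\Bigr)^{p-1}\right\}.
\]
Using $\min\{a,b/4^{p-1}\}\ge4^{-(p-1)}\min\{a,b\}$, this yields \eqref{eq:depth-goldstein} after absorbing $16\cdot4^{p-1}$ into a $p$-dependent constant.

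The only point needing care is constant bookkeeping. The factor $4^{p-1}$ grows with $p$, but that is permitted because $p$ is fixed and the constants in \Cref{cor:fixed-depth} already depend on $p$. One should also confirm that the Lipschitz and gap admissibility of the rescaled depth-$p$ family is stated for its population objective in the same global sense as in \Cref{thm:goldstein}. This holds because \Cref{prop:fixed-depth-assembly} gives exactly the sample-moment and global-gap budgets, and the reduction never alters the instance.
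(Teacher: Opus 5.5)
Your proposal is correct and follows the paper's proof. Both set $H=\epsilon/\delta$ and apply \Cref{cor:fixed-depth} at gradient tolerance $2\epsilon$ through the comparison \eqref{eq:goldstein-ineq}, then absorb the factors $16$ and $4^{p-1}$, the halved accuracy constant, and the adjusted dimension and gap constants into $p$-dependent constants.
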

\begin{proof}
Apply \Cref{cor:fixed-depth} with $H=\epsilon/\delta$ and
smooth tolerance $2\epsilon$. Equation \eqref{eq:goldstein-ineq}
validates the reduction. The outer factor loses $16$, and
$(L_0^2/(2\epsilon)^2)^{p-1}=4^{-(p-1)}(L_0^2/\epsilon^2)^{p-1}$; these are
absorbed into $c_p$. Decrease the smooth accuracy constant
$c_{e,p}$ by a factor of two, and enlarge the dimension and gap
constants if needed, to obtain the displayed hypotheses.

\end{proof}

%% file: sections/logarithmic_gap_proofs.tex
\section{Proof of logarithmic gap amplification}
\label[appendix]{app:logarithmic-gap}

We use the common block and its coarse first-entry estimate from
\Cref{lem:wide-core,lem:wide-coarse-entry}.  The inner length is $t$, the number of levels is
$K$, and $m=\lfloor d/2\rfloor$, $r=d-m$.  All constants in this
appendix are universal.  The construction uses the fixed normalized
stationarity threshold $2^{-28}$ and $\mu=1/100$.

\subsection{Exact nesting and global regularity}
\begin{proposition}[Exact nested ownership and regularity]
\label[proposition]{prop:exact-nesting}
Take the common blocks of \Cref{lem:wide-core}. The recursion
\eqref{eq:log-recursion} defines fixed sample functions with the
ownership formula \eqref{eq:log-owner}. Their mean is coercive
and $C^2$, the samples are $C^1$, and the gap, Hessian,
stationary-location and sample-moment bounds are
\eqref{eq:log-geometry}--\eqref{eq:log-moment}.
Outside a child's insertion region no ancestor response depends
on that child's data, including additive constants.
\end{proposition}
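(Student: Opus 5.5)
We would build the recursion \eqref{eq:log-recursion} by exact replacement and never interpolate between blocks. Take $K$ independent copies $G^{(1)}_j,\ldots,G^{(K)}_j$ of the common block of \Cref{lem:wide-core}, with independent frames, arrays and hidden minimizers $p^{(k)}$, all on the same $\R^{r+m}$ coordinates. Put $F^{(1)}_j=G^{(1)}_j$. Given $F^{(k)}_j$, whose innermost hidden minimizer is $\pi_k$ (the image of $p^{(k)}$), define $F^{(k+1)}_j$ in two pieces. Outside $\{\|X-\pi_k\|<L\}$ it is $F^{(k)}_j$. Inside, it is
\[
 \gamma_k+B_0^{-(k)}\bigl[G^{(k+1)}_j\bigl(B_0^{k}(X-\pi_k)\bigr)-c_t\bigr],
\]
with the cumulative contraction written out, and with $\gamma_k$ the public value of the parent core at its minimum. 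At the end we rescale locations and values by the common factor $B_0^{K-1}$.

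First I would verify the seam. By \eqref{eq:wide-core-interface}, every parent sample equals the deterministic quadratic $\tfrac\mu2\|X-\pi_k\|^2$, plus an ancestor constant, on its core ball. By \eqref{eq:wide-public-collar}, every child sample equals $c_t+\tfrac{\mu a_{\rm c}}2\|Y\|^2$ on $B_{\rm out}/2\le\|Y\|\le B_{\rm out}$. Since $a_{\rm c}=1/B_0$, the computation displayed in \Cref{subsec:larger-gaps} shows that the contracted child equals the parent on the open annulus $L/2<\|X-\pi_k\|<L$ (in the cumulative scale). This equality holds sample by sample and includes the noise: $Z_j=0$ on the parent core, and the child noise also vanishes on its collar, because every sample equals the same public quadratic there.

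With the seam in hand, the ownership formula \eqref{eq:log-owner} follows by induction. Each point lies in a unique shell, namely level $k$'s region minus level $k+1$'s insertion ball. On that shell every sample equals one rescaled level-$k$ block plus public ancestor constants. This gives $C^1$ samples and a $C^2$ mean immediately, with jets agreeing on open overlaps. It also gives the last assertion: outside a child's insertion region, nothing depends on that child's frames, arrays or constants. Coercivity comes from the outermost block's public continuation beyond $B_{\rm out}$.

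The bounds \eqref{eq:log-geometry}--\eqref{eq:log-moment} then follow from scaling bookkeeping. A level-$k$ block is contracted by $B_0^{k-1}$ and divided in value by the same factor. After the final expansion by $B_0^{K-1}$ it therefore keeps its original gradient, of order $C_G\sqrt t$, and has Hessian factor $B_0^{k-K}\le1$. Its energy contribution is $g^{(k)}(0)B_0^{K-k}\le22tB_0^{K-k}$. Telescoping the energies from the origin, which lies in level $1$, down to the final minimum gives a gap of order $tB_0^{K-1}$, with the matching lower bound $\ge tB_0^{K-1}$. The Hessian bound is $C_H$ uniformly in $K$.

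For the sample moment, the single-owner property makes $\Lip(F_j)$ at most $\max_k\Lip$ over the rescaled blocks, whose gradients are unchanged by the scaling. Applying \eqref{eq:wide-compiled-noise} to the $K$ independent blocks of label $j$ then gives $\E\Lip(F_j)^2\le C[t+\sigma^2(m+t+\log(2Kt))/m]$. This is the source of the $\log(2K)$ term in the dimension condition \eqref{eq:logarithmic-gap-regime}.

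For stationary localization, suppose a point of gradient at most $\epsilon_0$ (in normalized units) lies in a level-$k$ shell with $k<K$. By \eqref{eq:wide-core-interface} it lies within $\epsilon_0/\mu<L/2$ of $p^{(k)}$ in block coordinates. That point is strictly inside the insertion ball, which is a contradiction. So every such point lies in the level-$K$ core, within $\|\nabla f\|/\mu$ of the final minimizer.

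I expect the main obstacle to be this localization step across shells with different scale factors. The gradient threshold must be compared in each block's own coordinates, and the insertion radius $L$ must strictly exceed the certificate radius at every level. This works because gradients are scale-invariant under the common location/value rescaling. A second delicate point is that the level-$k$ constants $\gamma_k$, $c_t$ must be public, that is, independent of hidden frames. This should follow from the orthogonal invariance already used in \Cref{lem:wide-core} to make $\widehat f(p)$ public.
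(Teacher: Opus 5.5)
Your proposal is correct and follows the paper's proof essentially step for step: the child's public collar agrees sample by sample with the parent's quadratic core on an open annulus, every point has a single owner, the common rescaling preserves gradients and multiplies the Hessian by $B_0^{k-K}\le1$, the Lipschitz bound is a maximum over blocks via \eqref{eq:wide-compiled-noise}, and the certificate gives a contradiction in every non-terminal shell. The differences are cosmetic: you nest forward with cumulative contractions (so your insertion radius must be read as $LB_0^{-(k-1)}$), whereas the paper recurses backward and inserts the already-nested child with a single factor $B_0$; and in the gap count the terms that actually telescope are $B_0^{K-1}g_1(0)$ and the offset drops $c_tB_0^{K-k-1}$, with nonnegativity of the block means placing the infimum at the terminal minimizer, which still gives $O(tB_0^{K-1})$.
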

\begin{proof}

For $1\le j\le K$, let $G_{j,\xi}=g_j+Z_{j,\xi}$ be independent
blocks with centers $p_j$ and the common public collar constant $c_t$.
One label $\xi$ fixes its arrays at every level.  Define
$\mathcal F_{K,\xi}=G_{K,\xi}$ and, backwards for $j<K$, set
\begin{equation}\label{eq:log-recursion}
\mathcal F_{j,\xi}(X)=
\begin{cases}
 B_0^{-1}\mathcal F_{j+1,\xi}\bigl(B_0(X-p_j)\bigr)-B_0^{-1}c_t,
       &\norm{X-p_j}<\sqrt t,\\
 G_{j,\xi}(X),&\norm{X-p_j}\ge\sqrt t.
\end{cases}
\end{equation}
Every modification lies in $B(p_j,\sqrt t)\subset
B(0,B_0\sqrt t/2)$.  Consequently the recursively modified block
retains the original public outer collar.  On
$\sqrt t/2\le\norm{X-p_j}\le\sqrt t$, its two defining expressions
coincide sample by sample, since
\begin{equation}\label{eq:log-collar}
 B_0^{-1}\left[c_t+
       \frac{\mu}{2B_0}\norm{B_0(X-p_j)}^2\right]
       -B_0^{-1}c_t
 =\frac\mu2\norm{X-p_j}^2.
\end{equation}
There is an overlap of positive width; equality there and continuity
of the available derivatives match the jets at the joining boundary.
Thus the samples are $C^1$ and their mean is $C^2$.

Expand both values and locations by the same factor:
\begin{equation}\label{eq:log-expanded}
 \widehat F_\xi(X)=B_0^{K-1}
       \mathcal F_{1,\xi}(X/B_0^{K-1}),
 \qquad \widehat f=\E_\xi\widehat F_\xi.
\end{equation}
To describe ownership, define origins and nested open sets by
\begin{equation}\label{eq:log-origins}
 o_1=0,\qquad o_{j+1}=o_j+B_0^{K-j}p_j,\qquad
 \mathcal C_1=\R^d,\qquad
 \mathcal C_{j+1}=B(o_{j+1},B_0^{K-j}\sqrt t)
       \quad(1\le j\le K).
\end{equation}
The inequality $\norm{p_j}+\sqrt t<B_0\sqrt t/2$ makes these sets
strictly nested. Indeed, for $X\in\mathcal C_{j+1}$,
\[
 \|X-o_j\|
 \le\|X-o_{j+1}\|+\|o_{j+1}-o_j\|
 <B_0^{K-j}(\sqrt t+\|p_j\|)
 <\tfrac12B_0^{K-j+1}\sqrt t.
\]
The first inequality is the triangle inequality, the next uses
the child radius and center increment, and the last is the
common-block center bound. For $j\ge2$ this lies inside
$\mathcal C_j$; $\mathcal C_1=\R^d$ is automatic. The last set $\mathcal C_{K+1}$ is a terminal
target, with no further inserted block.  For $j<K$ and
$X\in\mathcal C_j\setminus\mathcal C_{j+1}$, and for $j=K$ and
$X\in\mathcal C_K$, the exact sample value is
\begin{equation}\label{eq:log-owner}
 \widehat F_\xi(X)=
 -c_t\sum_{i=2}^{j}B_0^{K-i}
 +B_0^{K-j}G_{j,\xi}\left(\frac{X-o_j}{B_0^{K-j}}\right).
\end{equation}
To verify this formula, set the current local coordinate to
$Z=(X-o_j)/B_0^{K-j}$. On insertion, the recursive term transforms as
\begin{align*}
 B_0(Z-p_j)
 &=\frac{X-o_j-B_0^{K-j}p_j}{B_0^{K-j-1}}
 =\frac{X-o_{j+1}}{B_0^{K-j-1}},\\
 B_0^{K-j}\mathcal F_{j,\xi}(Z)
 &=B_0^{K-j-1}\mathcal F_{j+1,\xi}
       \left(\frac{X-o_{j+1}}{B_0^{K-j-1}}\right)
       -c_tB_0^{K-j-1}.
\end{align*}
The first line substitutes $Z$ and \eqref{eq:log-origins};
the second substitutes \eqref{eq:log-recursion} and distributes
the scale. Induction adds the constants in \eqref{eq:log-owner}.
The empty sum is zero.
In particular, outside $\mathcal C_j$ the
sample has no dependence on any data of levels $j,\ldots,K$,
including additive constants.  Boundary conventions do not change
this identity, by \eqref{eq:log-collar}.

The gradient on a region owned by level $j$ equals the local block
gradient, and its mean Hessian equals $B_0^{j-K}D^2g_j$.
The matching collars therefore give
\begin{equation}\label{eq:log-geometry}
 \norm{D^2\widehat f}\le C_H,\qquad
 \widehat f(0)-\inf\widehat f\le C t B_0^{K-1},\qquad
 \norm{\nabla\widehat f(X)}\le2^{-28}
       \Longrightarrow X\in\mathcal C_{K+1}.
\end{equation}
For the gap bound, observe that the origin is
unmodified because $\norm{p_1}\ge128\sqrt t$.  At each insertion the
child minimum is below the nonnegative parent exterior; induction gives
\[
 \inf\E_\xi\mathcal F_{1,\xi}
       =-c_t\sum_{i=1}^{K-1}B_0^{-i},\qquad
 \E_\xi\mathcal F_{1,\xi}(0)=g_1(0)\le22t.
\]
The minimizing point is inside the child's body, so it is retained by
the insertion.  The sum is at most $(B_0-1)^{-1}$ and $c_t\le C_Gt$.
Coercivity follows from the unchanged public exterior.  For $j<K$,
a small gradient at a point owned by $j$ would require its local
distance from $p_j$ to be at most $2^{-28}/\mu<\sqrt t$, whereas
that region has been replaced by the child.  At the last level the
same certificate places the point in $\mathcal C_{K+1}$.

At each point only one block contributes a gradient.  Hence the global
Lipschitz constant is bounded by the maximum of the block Lipschitz
constants, and the maximum-noise estimate in \Cref{lem:wide-core} gives
\begin{equation}\label{eq:log-moment}
 \E\Lip(\widehat F_\xi)^2
 \le C\left[t+\sigma^2\frac{m+t+\log(2Kt)}m\right].
\end{equation}
This uses the finite-union version of \Cref{thm:noise-moment}, already
included in the shared block estimate.  There is no sum of $K$
sample-gradient bounds.

\end{proof}

\subsection{First-entry charging across levels}
\begin{proposition}[Accumulated cost of nested entries]
\label[proposition]{prop:nested-entry}
Under \eqref{eq:log-dimension-normalized}, let $n_*$ be
\eqref{eq:log-local-tests}. In the independent-label experiment,
for $N+1\le Kn_*/4$ the nested family has stationary-success
probability at most $1/16$ by $N$ scalar queries. Queries and
outputs may jump between any regions and may reuse labels.
\end{proposition}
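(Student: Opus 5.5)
The plan follows the two-level charging argument in the proof of \Cref{prop:two-level-assembly}, with the outer Haar chain replaced by the exact nested ownership of \Cref{prop:exact-nesting}. First, I would reduce stationary success to completion of all $K$ levels. By \eqref{eq:log-geometry}, any output with normalized gradient at most $2^{-28}$ lies in the terminal set $\mathcal C_{K+1}$. Since the sets $\mathcal C_j$ are strictly nested, such a point lies in every $\mathcal C_{j+1}$. In the local coordinates $Z=(X-o_j)/B_0^{K-j}$ of level $j$, the condition $X\in\mathcal C_{j+1}$ means $\|Z-p_j\|<\sqrt t$. This region is contained in the coarse target $\mathcal P$ of \Cref{lem:wide-coarse-entry} for block $j$, because $\|x-p_{x}\|<8\sqrt t$. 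Thus a successful output is a candidate that has entered every block's coarse target, in order.

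Second, I would set up the sequential simulator. Each query and the final output are processed from the first uncompleted level $j$. The simulator tests whether the rescaled point enters $\mathcal P_j$ before returning the scalar. A hit completes level $j$, discloses block $j$'s frames (hence $p_j$ and $o_{j+1}$), and the same point is then tested for level $j+1$. By the last assertion of \Cref{prop:exact-nesting}, outside $\mathcal C_{j+1}$ no response depends on data of levels $j+1,\dots,K$, not even through additive constants. By \eqref{eq:log-owner}, the response is then a public affine transform of one evaluation $G_{j,\xi}((X-o_j)/B_0^{K-j})$. The scale factors $B_0^{K-j}$ and the constants $c_t\sum B_0^{K-i}$ are public once $o_j$ is disclosed. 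Points in $\mathcal C_{j+1}\setminus\mathcal P_j$ cannot occur, since $\mathcal C_{j+1}$ corresponds to a subset of $\mathcal P_j$. Hence before the first entry into $\mathcal P_j$, the level-$j$ experiment is exactly the postprocessed single-source experiment covered by the last sentence of \Cref{lem:wide-coarse-entry}. Earlier completed blocks act as independent auxiliary information. Block $j$'s frames and arrays are fresh at its birth even under label reuse, because they never entered earlier responses; this is the same birth-law argument as in \Cref{lem:prefix-simulator}. Therefore, at each birth, completion within $n_*$ local tests has conditional probability below $1/16$.

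Third, I would accumulate the costs. Every paid query or output is noncrossing for at most one current level, since crossings are processed before the scalar is returned. \Cref{lem:gap-renewal} then gives probability at most $1/16$ of completing all $K$ levels whenever $N+1\le Kn_*/4$. Here $n_*$ is \eqref{eq:wide-local-budget} and \eqref{eq:log-local-tests} restates it. The dimension condition \eqref{eq:log-dimension-normalized} should supply the hypotheses of \Cref{lem:wide-coarse-entry} uniformly over levels: $m\ge2t$, $s\ge72$, $\ell\ge16$, and \eqref{eq:wide-local-dimension}. Only the extra $\log(2K)$ term is needed to absorb the number of levels.

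I expect the main obstacle to be the second step. One must verify that jumps directly into deep children are handled, and that the public postprocessing is legitimate without a future-projection coupling across levels. The plan handles the first point by processing nested entries in order at the same point. The second point should hold because nesting is purely spatial: there is no outer Haar frame, so no $p_{\rm out}$ term arises. What remains to check is that the level-$j$ coarse test is measurable from disclosed data, which requires $o_j$ to be known at birth.
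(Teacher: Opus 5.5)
Your proposal is correct and follows essentially the paper's argument. Stationary outputs lie in $\mathcal C_{K+1}$. Queries and the output are processed from the first uncompleted level, with crossings tested before the scalar is returned. The ownership formula \eqref{eq:log-owner} gives a fresh birth law in which ancestor arrays serve as independent auxiliary data. \Cref{lem:wide-coarse-entry} bounds each conditional first entry, and \Cref{lem:gap-renewal} accumulates the costs.

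The differences are cosmetic. The paper takes the insertion ball $\mathcal C_{j+1}$ as the completion event and bounds its first entry through the containing cylinder, whereas you complete at the cylinder itself; both are valid. Also, the $\log(2K)$ term is not actually used in this proposition, since the per-level bounds are applied conditionally at each birth rather than unioned over levels. In the paper it serves the sample-moment bound \eqref{eq:log-moment}.
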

\begin{proof}

For the rest of the argument assume
\begin{equation}\label{eq:log-dimension-normalized}
 d\ge C\left[t\log(2+t\sigma^2)+\log(2K)\right],
 \qquad t\ge4096,\quad\sigma^2\ge1,
\end{equation}
and use the single local test allowance
\begin{equation}\label{eq:log-local-tests}
 n_*=\left\lfloor\frac{\lfloor t/2\rfloor}{8}
       \left\lfloor\frac{(m-t+1)^2\sigma^2}{2048m}\right\rfloor
       \right\rfloor.
\end{equation}
The shared coarse first-entry estimate gives $n_*\ge16$,
$n_*\ge ctm\sigma^2$, and a conditional probability at most
$1/16$ of reaching $B(p_j,\sqrt t)$ within $n_*$ local tests from
an independent birth history.  Its application to the compiled block
is a first-hit argument: the target ball is contained in
$\{\norm{x-128\sum_i v_{j,i}}<8\sqrt t\}$, and before first entering
this cylinder every response is simulated by the primitive with the
same retained label.  Subsequent responses inside the cylinder play
no role in this probability bound.  The Gaussian information step
is exactly \Cref{thm:reuse} through the shared prefix interface.

Work first with independent label arrays.  Process each paid query,
and finally the output without a response, starting at the first level
whose target has not yet been entered.  A candidate outside the current
$\mathcal C_j$ is an ancestor query.  Otherwise test whether it lies
in $\mathcal C_{j+1}$.  A positive test completes level $j$ and starts
level $j+1$ at the same point and with the already selected label.
Continue until a test fails or all levels are completed.  The one
scalar response is the value of the actual owner in
\eqref{eq:log-owner}.  Completed centers and membership notifications
may be disclosed; they only enlarge the algorithm's information.

We verify the birth law in this actual geometric process.
Before the first entry into $\mathcal C_j$, every response is
pointwise independent of all data of levels $j,\ldots,K$, by
\eqref{eq:log-owner}.  Membership in $\mathcal C_j$ depends only on
the preceding levels.  Thus, immediately before testing the local
candidate $(X-o_j)/B_0^{K-j}$ against level $j$, this level's entire
frame and all label-indexed Gaussian arrays retain their original
joint independent law.  The candidate and its selected label are
already part of that history; the scalar response from level $j$
is not.  The assertion also holds if the label was used in ancestors.

During the lifetime of level $j$, ancestor responses are functions of
ancestor arrays and the evolving transcript.  For a local reduction
one may condition on all ancestor arrays in the ideal experiment:
they are independent auxiliary randomness for the new block, and
they simulate these interleaved responses.  Queries in
$\mathcal C_j\setminus\mathcal C_{j+1}$ return exactly the transformed
$G_{j,\xi}$ value.  A query entering $\mathcal C_{j+1}$ is the parent's
terminal candidate, tested before any descendant scalar is returned.
These facts meet the shared first-entry lemma's hypotheses.
In particular, an exceptional parent may expose all its own data
without affecting this argument, and the same reasoning applies
successively when one query crosses several levels.

Each candidate is noncrossing at at most one current level; ancestor
queries do not count as local tests. A completed level uses its
noncrossing tests plus the single crossing candidate. The verified
birth law and \Cref{lem:gap-renewal} therefore give, whenever
$N+1\le Kn_*/4$,
\begin{equation}\label{eq:log-ideal-success}
 \PP(\norm{\nabla\widehat f(\widehat X)}\le2^{-28})\le1/16.
\end{equation}
Indeed \eqref{eq:log-geometry} requires completion of all levels.
The first-entry bound includes the phase-projection error for each
block, and \Cref{lem:gap-renewal} applies to these conditional bounds.

\end{proof}

\subsection{Physical scaling and the finite population}
\begin{proof}[Proof of \Cref{thm:logarithmic-gap}]
\phantomsection\label{proof:logarithmic-bound}

Choose a sufficiently small universal $\kappa>0$ and set
\begin{equation}\label{eq:log-physical-choice}
 t=\left\lfloor\kappa\frac{L_0^2}{\epsilon^2}\right\rfloor,
 \qquad\sigma^2=\kappa\frac{L_0^2}{\epsilon^2},\qquad
 \lambda=\frac{2^{56}C_H\epsilon^2}{H},\qquad
 b=\frac{2^{28}C_H\epsilon}{H}.
\end{equation}
The condition $\epsilon\le c_eL_0$, with $c_e$ small enough after
choosing $\kappa$, ensures $t\ge4096$.  The scaled samples are
$F_\xi(X)=\lambda\widehat F_\xi(X/b)$.
Their mean is $H$-smooth, since $\lambda C_H/b^2=H$, and
$\lambda/b=2^{28}\epsilon$ converts physical
$\epsilon$-stationarity into the normalized certificate above.

For a sufficiently large universal $C_2$, the physical dimension
condition
\begin{equation}\label{eq:log-dimension-physical}
 d\ge C_2\left[
       \frac{L_0^2}{\epsilon^2}
          \log\left(2+\frac{L_0^2}{\epsilon^2}\right)
       +\log(2K)\right]
\end{equation}
implies \eqref{eq:log-dimension-normalized} and
$m\ge t+\log(2K)$.  Indeed, $t\le L_0^2/\epsilon^2$ and
$\log(2+t\sigma^2)\le2\log(2+L_0^2/\epsilon^2)$ after taking
$\kappa\le1$.  It follows from \eqref{eq:log-moment} that
\begin{equation}\label{eq:log-physical-promises}
 \E\Lip(F_\xi)^2
 \le C\,2^{56}\epsilon^2(t+\sigma^2)\le\frac{L_0^2}{4},
 \qquad
 f(0)-\inf f\le C_0\frac{L_0^2}{H}B_0^{K-1},
\end{equation}
where the first inequality is enforced by $\kappa$, and $C_0\ge1$
is then fixed.  Set $C_1=B_0$ and, for $\Delta H\ge C_0L_0^2$, take
\begin{equation}\label{eq:log-level-count}
 K=1+\left\lfloor
      \frac{\log(\Delta H/(C_0L_0^2))}{\log C_1}
      \right\rfloor.
\end{equation}
Then the global gap in \eqref{eq:log-physical-promises} is at most
$\Delta$.

Take $N=\lfloor Kn_*/8\rfloor$.  Since $n_*\ge16$,
$N+1\le Kn_*/4$ and $N\ge Kn_*/16$.
Draw $M\ge64N^2$ independent full-depth noise samples, include both
signs of each, and assign their $2M$ public labels by one independent
hidden uniform permutation.  The signs are paired across the entire
sample function, with the same sign at every level.  Pairing cancels
the noise exactly, so every finite family has the constructed mean.

The hidden-pair part of \Cref{lem:static-coupling} applies to the
complete level-indexed arrays and gives error $N^2/M\le1/64$.
Its unopened arrays remain hidden; the ancestor-array disclosures
above were only a reduction in the independent-label experiment.

Let $E_{\rm adm}$ be the event that the empirical second moment of
the global sample Lipschitz constants is at most $L_0^2$.
Equation \eqref{eq:log-physical-promises} and Markov's inequality give
$\PP(E_{\rm adm})\ge3/4$. Condition the hard-instance distribution
on this event.  The unconditioned bound
\eqref{eq:log-ideal-success} and the coupling error imply
\begin{equation}\label{eq:log-conditioned-success}
 \PP(\text{successful output by }N\mid E_{\rm adm})
 \le\frac{1/16+1/64}{3/4}=\frac5{48}.
\end{equation}
Thus every family in the conditioned distribution meets the global
gap, smoothness, and moment promises.

Finally, $m\ge d/3$, $n_*\ge ctm\sigma^2$, and the floor in
\eqref{eq:log-physical-choice} give
\[
 N\ge cKdt\sigma^2
       \ge cK\frac{dL_0^4}{\epsilon^4}.
\]
The conditional bound \eqref{eq:log-conditioned-success} is below
$1/3$. Equivalently its unconditioned numerator is at most
$1/16+1/64<11/48$, with admissibility probability at least $3/4$.
\Cref{lem:expected-cost} therefore yields a fixed admissible
finite family with expected cost at least $N/3$.
Since $C_1>1$ is
universal, \eqref{eq:log-level-count} concludes the bound
\[
 \Omega\!\left(
   \frac{dL_0^4}{\epsilon^4}
   \left[1+\log\frac{\Delta H}{C_0L_0^2}\right]
 \right)
\]
under the stated accuracy, gap, and dimension conditions.

\end{proof}

\paragraph{AI Methodology Statement.}
Generative AI played a substantial role in the mathematical development of this work. We used GPT-5.6 Sol to explore joint lower-bound constructions and GPT-Astra extensively to develop and refine the global hard instances, information-theoretic arguments, and proofs. These tools materially contributed to the construction and proof sections of the paper. We further employed independent AI agents to critically examine the arguments and Lean to formally verify selected mathematical statements. The authors reviewed and revised the AI-assisted reasoning and take full responsibility for the correctness, originality, and presentation of the final results.

%% file: references.bib
@article{carmon2019,
  author  = {Carmon, Yair and Duchi, John C. and Hinder, Oliver and Sidford, Aaron},
  title   = {Lower Bounds for Finding Stationary Points {I}},
  journal = {Mathematical Programming},
  volume  = {184},
  pages   = {71--120},
  year    = {2020},
  doi     = {10.1007/s10107-019-01406-y}
}

@article{arjevani2023,
  author  = {Arjevani, Yossi and Carmon, Yair and Duchi, John C. and Foster, Dylan J. and Srebro, Nathan and Woodworth, Blake},
  title   = {Lower Bounds for Non-Convex Stochastic Optimization},
  journal = {Mathematical Programming},
  volume  = {199},
  pages   = {165--214},
  year    = {2023},
  doi     = {10.1007/s10107-022-01822-7}
}

@article{kornowski2024,
  author  = {Kornowski, Guy and Shamir, Ohad},
  title   = {An Algorithm with Optimal Dimension-Dependence for Zero-Order Nonsmooth Nonconvex Stochastic Optimization},
  journal = {Journal of Machine Learning Research},
  volume  = {25},
  number  = {122},
  pages   = {1--14},
  year    = {2024},
  url     = {https://jmlr.org/papers/v25/23-1159.html}
}

@misc{zhang2026,
  author = {Zhang, Haihan and Wu, Wendao and Zhang, Chenheng and Li, Yanyi and Zheng, Chunyuan and Fang, Cong and Li, Haoxuan and Lin, Zhouchen},
  title  = {Joint Lower Bounds for Zeroth-Order Nonconvex Optimization on Euclidean Balls},
  year   = {2026},
  eprint = {2610.00275},
  archivePrefix = {arXiv},
  url    = {https://arxiv.org/abs/2610.00275}
}

@book{gpml,
  author    = {Rasmussen, Carl Edward and Williams, Christopher K. I.},
  title     = {Gaussian Processes for Machine Learning},
  publisher = {MIT Press},
  year      = {2006}
}

@article{beckner1993,
  author  = {Beckner, William},
  title   = {Sharp Sobolev Inequalities on the Sphere and the {Moser--Trudinger} Inequality},
  journal = {Annals of Mathematics},
  volume  = {138},
  number  = {1},
  year    = {1993}
}

@misc{duchi2013,
  author = {Duchi, John C. and Wainwright, Martin J.},
  title  = {Distance-Based and Continuum {Fano} Inequalities with Applications to Statistical Estimation},
  year   = {2013},
  eprint = {1311.2669},
  archivePrefix = {arXiv}
}

@article{dudley1967,
  author  = {Dudley, Richard M.},
  title   = {The Sizes of Compact Subsets of {Hilbert} Space and Continuity of Gaussian Processes},
  journal = {Journal of Functional Analysis},
  volume  = {1},
  number  = {3},
  year    = {1967}
}

@article{duchi2015,
  author  = {Duchi, John C. and Jordan, Michael I. and Wainwright, Martin J. and Wibisono, Andre},
  title   = {Optimal Rates for Zero-Order Convex Optimization: The Power of Two Function Evaluations},
  journal = {IEEE Transactions on Information Theory},
  year    = {2015}
}

@article{goldstein1977,
  author = {Goldstein, A. A.},
  title = {Optimization of Lipschitz Continuous Functions},
  journal = {Mathematical Programming}, volume={13}, pages={14--22}, year={1977}
}

@book{clarke1990,
  author={Clarke, Frank H.}, title={Optimization and Nonsmooth Analysis},
  publisher={SIAM}, address={Philadelphia}, year={1990}
}

@inproceedings{lin2022,
  author={Lin, Tianyi and Zheng, Zeyu and Jordan, Michael I.},
  title={Gradient-Free Methods for Deterministic and Stochastic Nonsmooth Nonconvex Optimization},
  booktitle={Advances in Neural Information Processing Systems}, volume={35},
  pages={26160--26175}, year={2022}
}

@inproceedings{chen2023,
  author={Chen, Lesi and Xu, Jing and Luo, Luo},
  title={Faster Gradient-Free Algorithms for Nonsmooth Nonconvex Stochastic Optimization},
  booktitle={Proceedings of the 40th International Conference on Machine Learning},
  series={Proceedings of Machine Learning Research}, volume={202},
  pages={5219--5233}, year={2023}
}

@inproceedings{zhang2020,
  author={Zhang, Jingzhao and Lin, Hongzhou and Jegelka, Stefanie and Sra, Suvrit and Jadbabaie, Ali},
  title={Complexity of Finding Stationary Points of Nonconvex Nonsmooth Functions},
  booktitle={Proceedings of the 37th International Conference on Machine Learning},
  series={Proceedings of Machine Learning Research}, volume={119},
  pages={11173--11182}, year={2020}
}

@inproceedings{cutkosky2023,
  author={Cutkosky, Ashok and Mehta, Harsh and Orabona, Francesco},
  title={Optimal Stochastic Non-smooth Non-convex Optimization through Online-to-Non-convex Conversion},
  booktitle={Proceedings of the 40th International Conference on Machine Learning},
  series={Proceedings of Machine Learning Research}, volume={202},
  pages={6643--6670}, year={2023}
}

@inproceedings{tian2022,
  author={Tian, Lai and Zhou, Kaiwen and So, Anthony Man-Cho},
  title={On the Finite-Time Complexity and Practical Computation of Approximate Stationarity Concepts of Lipschitz Functions},
  booktitle={Proceedings of the 39th International Conference on Machine Learning},
  series={Proceedings of Machine Learning Research}, volume={162},
  pages={21360--21379}, year={2022}
}

@inproceedings{davis2022,
  author={Davis, Damek and Drusvyatskiy, Dmitriy and Lee, Yin Tat and Padmanabhan, Swati and Ye, Guanghao},
  title={A Gradient Sampling Method with Complexity Guarantees for Lipschitz Functions in High and Low Dimensions},
  booktitle={Advances in Neural Information Processing Systems}, volume={35},
  pages={6692--6703}, year={2022}
}

@book{vershynin2018,
  author={Vershynin, Roman}, title={High-Dimensional Probability: An Introduction with Applications in Data Science},
  publisher={Cambridge University Press}, year={2018}, doi={10.1017/9781108231596}
}

@article{ghadimi2013,
  author  = {Ghadimi, Saeed and Lan, Guanghui},
  title   = {Stochastic First- and Zeroth-Order Methods for Nonconvex Stochastic Programming},
  journal = {SIAM Journal on Optimization},
  volume  = {23},
  number  = {4},
  pages   = {2341--2368},
  year    = {2013},
  doi     = {10.1137/120880811}
}

@article{cartis2012,
  author  = {Cartis, Coralia and Gould, Nicholas I. M. and Toint, Philippe L.},
  title   = {On the Oracle Complexity of First-Order and Derivative-Free Algorithms for Smooth Nonconvex Minimization},
  journal = {SIAM Journal on Optimization},
  volume  = {22},
  number  = {1},
  pages   = {66--86},
  year    = {2012},
  doi     = {10.1137/100812276}
}

@misc{dingyuan2026,
  author        = {Ding, Yixi and Yuan, Ya-Xiang},
  title         = {Exponential Deterministic Query Complexity of Goldstein Stationarity},
  year          = {2026},
  eprint        = {2609.38082},
  archivePrefix = {arXiv},
  primaryClass  = {math.OC},
  url           = {https://arxiv.org/abs/2609.38082}
}

@misc{kornowski2026stationarity,
  author={Kornowski, Guy},
  title={The Complexity of Finding Stationary Points in Nonsmooth Nonconvex Optimization},
  year={2026}, eprint={2609.17780}, archivePrefix={arXiv},
  url={https://arxiv.org/abs/2609.17780}
}

@book{kallenberg2021,
  author={Kallenberg, Olav},
  title={Foundations of Modern Probability}, edition={3},
  publisher={Springer}, year={2021},
  series={Probability Theory and Stochastic Modelling}, volume={99},
  doi={10.1007/978-3-030-61871-1}
}

@inproceedings{chewi2023complexity,
  title     = {On the Complexity of Finding Stationary Points
               of Smooth Functions in One Dimension},
  author    = {Chewi, Sinho and Bubeck, S{\'e}bastien
               and Salim, Adil},
  booktitle = {Proceedings of the 34th International
               Conference on Algorithmic Learning Theory},
  series    = {Proceedings of Machine Learning Research},
  volume    = {201},
  pages     = {358--374},
  year      = {2023},
  publisher = {PMLR},
  url       = {https://proceedings.mlr.press/v201/chewi23a.html}
}

@article{hollender2025computational,
  title   = {The Computational Complexity of Finding
             Stationary Points in Non-Convex Optimization},
  author  = {Hollender, Alexandros and Zampetakis, Manolis},
  journal = {Mathematical Programming},
  volume  = {213},
  pages   = {281--341},
  year    = {2025},
  doi     = {10.1007/s10107-024-02139-3}
}
